\documentclass{article} 
\usepackage{iclr2027_conference,times}

\usepackage{amsmath,amsfonts,bm}

\def\eqref#1{equation~\ref{#1}}

\def\1{\bm{1}}

\DeclareMathAlphabet{\mathsfit}{\encodingdefault}{\sfdefault}{m}{sl}
\SetMathAlphabet{\mathsfit}{bold}{\encodingdefault}{\sfdefault}{bx}{n}

\newcommand{\E}{\mathbb{E}}

\usepackage{graphicx}
\graphicspath{{../}}
\usepackage{minitoc}
\usepackage{titletoc}
\usepackage[utf8]{inputenc} 
\usepackage[T1]{fontenc}    
\usepackage{xcolor}
\definecolor{mydarkred}{rgb}{0.6,0,0}
\definecolor{mydarkgreen}{rgb}{0,0.6,0}
\usepackage[colorlinks, linkcolor=mydarkred, citecolor=mydarkgreen]{hyperref}

\usepackage{hyperref}
\usepackage{url}
\usepackage{bbm}
\usepackage{amsthm}
\usepackage{amsmath}
\usepackage{amssymb}
\usepackage{booktabs} 
\usepackage{algorithm}
\usepackage{subfig}
\usepackage{listings}
\usepackage[most]{tcolorbox}
\usepackage{caption}
\usepackage{multirow}
\usepackage{algpseudocode}
\newtheorem{proposition}{Proposition}
\newtheorem{assumption}{Assumption}
\newtheorem{theorem}{Theorem}

\newtheorem{remark}{Remark}
\newtheorem{lemma}{Lemma}
\newtheorem{corollary}{Corollary}

\newif\ifshowrevisions
\showrevisionstrue
\newcommand{\revisioncolor}{%
  \ifshowrevisions\color{red}\ifmmode\else\hypersetup{allcolors=red}\fi\fi}

\newif\ifshowfcpchanges
\showfcpchangestrue
\newcommand{\bluerevisioncolor}{%
  \ifshowfcpchanges\color{blue}\ifmmode\else\hypersetup{allcolors=blue}\fi\fi}

\newcommand{\marknewreferences}{%
  \ifshowrevisions
    \let\originalbibitem\bibitem
    \renewcommand{\bibitem}[2][]{%
      \normalcolor
      \ifcsname revisedbib@##2\endcsname\fi
      \originalbibitem[##1]{##2}}%
  \fi}
\expandafter\def\csname revisedbib@hartoglei2025\endcsname{}
\expandafter\def\csname revisedbib@howard2020\endcsname{}
\expandafter\def\csname revisedbib@fan2015exponential\endcsname{}
\expandafter\def\csname revisedbib@hoeffding1963\endcsname{}

\newcommand{\Pp}{\mathbb P}
\newcommand{\cB}{\mathcal B}
\newcommand{\cC}{\mathcal C}
\newcommand{\cE}{\mathcal E}
\newcommand{\cF}{\mathcal F}

\newcommand{\cH}{\mathcal H}
\newcommand{\cI}{\mathcal I}
\newcommand{\cJ}{\mathcal J}
\newcommand{\cP}{\mathcal P}

\newcommand{\cV}{\mathcal V}
\newcommand{\one}{\bm 1}

\lstdefinestyle{promptstyle}{
    basicstyle=\ttfamily\small,
    breaklines=true,
    breakatwhitespace=false,
    columns=fullflexible,
    frame=none,
    xleftmargin=0pt,
    xrightmargin=0pt,
    showstringspaces=false,
    keepspaces=true,
}

\newtcblisting{promptbox}{
    colback=white,
    colframe=black,
    boxrule=0.5pt,
    arc=0pt,       
    sharp corners,
    listing only,
    listing options={style=promptstyle},
    left=6pt, right=6pt, top=4pt, bottom=4pt,
}
\title{Anytime-Valid LLM Leaderboards via Benchmark-weighted and Block-Factorized $e$-Processes}

\author{
Hongfu Gao$^{1}$,
Songxin Zhang$^{2}$,
Zejian Xie$^{2}$,
Bingyi Jing$^{3}$,
Wang Zhou$^{1}$\thanks{Corresponding authors: \texttt{wangzhou@nus.edu.sg} and \texttt{liuyiming@jnu.edu.cn}.},
Yiming Liu$^{4*}$ \\
$^{1}$National University of Singapore \\
$^{2}$Southern University of Science and Technology \\
$^{3}$The Chinese University of Hong Kong, Shenzhen \\
$^{4}$Jinan University
}

\iclrfinalcopy 
\begin{document}
\doparttoc

\maketitle

\begin{abstract}
Large language model (LLM) leaderboards compare model capabilities by ranking models according to their mean performance on fixed benchmarks.
However, variability in evaluation outcomes across runs may produce unsupported claims of model superiority on the benchmark, a risk compounded by leaderboard updates.
In this paper, we propose BB-EDGE (Benchmark-Weighted and Block-Factorized $e$-processes for Directed Graph Evaluation), a principled framework that represents an LLM leaderboard as a directed graph whose edges certify pairwise mean-performance advantages, with anytime-valid family-wise error rate (FWER) control.
Concretely, for each direction, BB-EDGE constructs an empirical-Bernstein $e$-process by factorizing evidence over protocol-defined blocks and assigning stakes proportional to the corresponding block weights, then applies direct $e$-Holm across these $e$-processes to certify directional advantages as edges.
Theoretically, we characterize weight-proportional linear stakes under heterogeneous benchmark-average nulls and prove anytime FWER control under arbitrary within-block and cross-pair dependence.
BB-EDGE further supports anytime-valid Top-$k$ certification and simultaneous rank intervals.
Extensive experiments on synthetic data and four real-world benchmarks demonstrate that BB-EDGE maintains anytime FWER control while achieving high efficiency.
\end{abstract}

\section{Introduction}

Large language model (LLM) leaderboards have become the dominant means of comparing LLM capabilities.
They rank models by mean performance on a fixed benchmark under a standardized protocol and are often updated over successive evaluation runs \citep{biderman2024lessons,madaan2024quantifying}.
Due to stochastic factors such as sampled decoding \citep{gonzalez2025repetitions,du2026stability} and LLM-based judging \citep{chen2024humans,divekar2026precise}, LLM responses and their evaluation scores vary across runs.
Consequently, a higher empirical mean does not by itself establish that one model has higher expected performance than another \citep{ye2024benchmarking,vashurin2025benchmarking,neuhof2026rank}.
Treating such score differences as definitive evidence of model superiority can lead to insufficiently supported claims, thereby undermining the credibility of leaderboards.
This highlights the need for \textit{anytime-valid} LLM leaderboards whose model comparisons remain statistically guaranteed across all updates.


Several approaches represent leaderboards as confidence graphs \citep{wang2024confidence}, quantify ranking uncertainty through rank intervals \citep{li2026low,neuhof2026rank}, or support model selection through Top-$k$ identification \citep{fan2025ranking}.
However, these methods target a fixed evaluation budget and do not provide anytime-valid guarantees under repeated monitoring or data-dependent stopping.
Other work has developed online evaluation procedures for evaluating individual LLMs \citep{hsu2026efficient,zhou2026celeus} and inferring pairwise human preferences \citep{chiang2024chatbot,gu2026anytimevalid}, typically under an independent and identically distributed item stream.
Simulations show that an i.i.d.\ item-stream procedure can falsely
certify benchmark-average advantages when item-level model differences
are heterogeneous despite a near-zero benchmark-average difference,
or when observations within multi-turn dialogues are dependent.
Same-model comparisons also reveal false certifications on real-world benchmarks.

In this paper, we develop a principled framework for efficiently constructing anytime-valid confidence graphs that certify pairwise model advantages as edges while controlling the family-wise error rate (FWER), i.e., the probability of ever reporting a false edge across all replicates.
Constructing such confidence graphs presents two key challenges: arbitrary dependence within multi-turn dialogues or multi-step trajectories, and heterogeneity in item-level model advantages.
To address both, we propose Benchmark-weighted and Block-Factorized $e$-processes for Directed Graph Evaluation (\textbf{BB-EDGE}).
In particular, BB-EDGE factorizes empirical-Bernstein evidence across protocol-defined blocks and uses weight-proportional stakes to preserve the heterogeneous benchmark-average null. It then applies direct \(e\)-Holm to certify mean-performance advantages with anytime FWER control. We further provide a theoretical analysis showing that BB-EDGE's weight-proportional stakes are sufficient and necessary for nonpositive linear drift over the full benchmark-average null, and that the resulting procedure guarantees anytime-valid FWER control.

Extensive experiments on both synthetic and real-world datasets validate both the anytime validity and efficiency of BB-EDGE. The simulation results demonstrate that BB-EDGE maintains anytime-valid FWER control while achieving the highest power among FWER-controlling baselines across the examined scenarios. In real-world evaluations, we apply BB-EDGE to ten LLMs across four benchmarks, the single-turn benchmarks MMLU-Pro \citep{wang2024mmlu} and GSM8K \citep{cobbe2021gsm8k}, the multi-turn dialogue benchmark MT-Bench \citep{zheng2023judging}, and ACEBench \citep{chen2025acebench}. For example, on GSM8K, BB-EDGE certifies 41 of the 45 pairwise comparisons (91.1\%) after just 2 replicates, using $67\%$ fewer replicates than the strongest competing baseline.
Moreover, BB-EDGE extends to anytime-valid Top-\(k\) certification and simultaneous rank intervals enabling certified model selection and ranking uncertainty quantification.

Our contributions are summarized as follows:
\begin{enumerate}
    \item We introduce BB-EDGE, an anytime-valid and efficient framework for certifying LLM leaderboards.
    It uses block-factorized empirical-Bernstein $e$-processes with weight-proportional stakes under heterogeneous block means and arbitrary within-block dependence, while preserving the benchmark-average estimand.
    \item We theoretically show that proportionality is necessary and sufficient among nonnegative stake allocations for nonpositive linear drift over the full benchmark-average null. 
    Combining the resulting $e$-processes with direct $e$-Holm yields anytime FWER control across all directions and completed replicates.
    \item We conduct extensive experiments on synthetic and real-world datasets, showing that BB-EDGE controls anytime FWER while achieving high power.
    We further extend BB-EDGE to anytime-valid Top-$k$ certification and simultaneous rank intervals.
\end{enumerate}

\section{Related Work}

\paragraph{LLM Leaderboards.}
Existing work on LLM leaderboards largely focuses on benchmark construction and evaluation protocols, spanning single-turn knowledge and reasoning tasks \citep{peng2024humaneval,zhu2024dyval}, multi-turn dialogues \citep{zheng2023judging}, and multi-step trajectories \citep{chen2025acebench,nakash2026efficient}. 
Evaluation outcomes may also vary across repeated runs because of sampled decoding \citep{gonzalez2025repetitions,du2026stability} and stochastic LLM-based judging \citep{chen2024humans,divekar2026precise}. Nevertheless, common practice summarizes each model by a single average score under a fixed protocol and reports the resulting ranking \citep{ye2024benchmarking,frick2025prompttoleaderboard,wang2025rankings,zhou2026lost}, without quantifying which pairwise performance advantages are statistically supported.

\paragraph{Model Ranking.}
A separate line of work develops statistical methods for model ranking and uncertainty quantification. Some methods replace the raw mean score with alternative ranking criteria designed to improve reliability \citep{wei2024diff,hariri2026don}. Others provide inferential guarantees using Holm-adjusted paired one-sided $t$-tests for simultaneous pairwise comparisons \citep{chandrahas2026evalci} or conformal methods for rank uncertainty quantification \citep{neuhof2026rank}. These methods generally target inference at a fixed evaluation budget and therefore do not protect against repeated examination of the ranking. Related work also ranks models using estimated pairwise preference probabilities \citep{chatzi2024predictionpowered,haghtalab2026pluralistic,huang2026dropping,li2026llm,menendez2026prompt}, with extensions to online preference data \citep{chiang2024chatbot,gu2026anytimevalid}. This estimand differs from that of objectively scored benchmarks such as MMLU-Pro \citep{wang2024mmlu} and GSM8K \citep{cobbe2021gsm8k}, where models are evaluated against fixed ground truth and ranked by their mean task scores.

\paragraph{Anytime-Valid Inference.}
Sequential data collection is common in many real-world LLM evaluation \citep{wang2025cer,singh2026leaderboard}. Anytime-valid procedures preserve statistical validity under continuous monitoring and data-dependent stopping \citep{ramdas2023game,xu2024active}. Recent work applies $e$-processes to reduce the number of items required for evaluating individual LLMs \citep{hsu2026efficient,zhou2026celeus} or to infer pairwise preference probabilities from human preferences arriving online \citep{gu2026anytimevalid}. These methods formulate evaluation as a stream of i.i.d.\ items. This formulation does not directly accommodate fixed benchmarks with heterogeneous item-level model advantages, where one model may outperform another on some items but underperform it on others, nor arbitrary dependence within multi-turn dialogues or multi-step trajectories.
In contrast, this paper targets pairwise mean-performance differences on a fixed benchmark and controls the probability of reporting any false edge across all model pairs and replicates.

\section{Statistical Framework for Anytime-Valid LLM Leaderboards}
We consider a sequential evaluation setting in which LLMs are evaluated on a fixed benchmark across multiple replicates. In this section, we formalize this setting, represent LLM leaderboards through certified edges, and outline our goal of achieving \textit{anytime-valid} and \textit{efficient} certification.

\paragraph{Sequential Evaluation for LLM Leaderboards.}
An LLM leaderboard compares $L$ candidate models on a fixed benchmark $\mathcal{B}=\{x_1,\ldots,x_N\}$, where each $x_i$ denotes an evaluation item. All models are evaluated under a standardized protocol $\pi$ that specifies the prompts, decoding configuration, available tools, grading rules, and dataset structure.

In the sequential setting, the benchmark is evaluated over successive complete replicates $r = 1, 2, \ldots$, where each replicate evaluates every model on all $N$ items in $\mathcal{B}$. Let $S_{i,r}^{(\ell)}\in[0,1]$ be the evaluation score of model $\ell$ on item $i$ in the $r$-th complete replicate. We define the mean performance of model $\ell$ on $\mathcal{B}$ as
\[
\theta_\ell^{\cB}=\E\left[\frac1N\sum_{i=1}^N S_{i,r}^{(\ell)}\middle|\cB,\pi\right].
\]
where the expectation is taken over the randomized execution of the fixed protocol $\pi$.
Leaderboards rank models by the empirical mean $\hat\theta_{\ell,r}=\frac{1}{rN}\sum_{s=1}^{r}\sum_{i=1}^{N}S_{i,s}^{(\ell)}$, often computed from a single run ($r=1$), which is an unbiased estimate of $\theta_\ell^{\cB}$; thus $\theta_\ell^{\cB}$ is precisely the quantity that leaderboards implicitly estimate.
However, this estimate varies across runs, so single-run rankings can change from one run to the next.
For example, on GSM8K, Figure~\ref{fig:rank-stability} shows that the single-run ranking differs from the 20-replicate ranking in 45\% of replicates (9 of 20), with all reversals occurring among three closely matched models.
We therefore need to certify comparisons of $\theta_\ell^{\cB}$ with statistical guarantees rather than relying on point estimates, and the margin $\tau$ introduced below lets practitioners require practically meaningful advantages rather than merely nonzero ones.
Comparisons of $\theta_\ell^{\cB}$ concern performance on $\cB$ under $\pi$, not generalization to unseen items.


\paragraph{Ranking via Certified Edges.}
Let $\overrightarrow{\mathcal{P}}=\{(a,b)\in[L]^2\mid a\neq b\}$ denote the ordered pairs of distinct models, where $[L]=\{1,\ldots,L\}$. For each $(a,b)\in\overrightarrow{\mathcal{P}}$, define the mean-performance difference on $\cB$ as $\Delta_{a,b}^{\cB}=\theta_a^{\cB}-\theta_b^{\cB}$. For a prespecified margin $\tau\in[0,1)$, each direction $e=(a\to b)$ is tested via
\[
H_e^{(\tau)}:
\Delta_{a,b}^{\cB}\leq\tau\qquad\text{against}\qquad\Delta_{a,b}^{\cB}>\tau.
\]
Rejecting $H_e^{(\tau)}$ certifies the edge $a\to b$, indicating that model $a$ outperforms model $b$ on the fixed benchmark $\mathcal{B}$ by more than $\tau$.  If neither direction is certified, the pair remains \textit{unresolved}, that is, the evidence is insufficient to establish that either model outperforms the other by more than $\tau$. The certified edge set $\mathcal{E}$ assembles into a directed comparison graph $\mathcal{G}=(\mathcal{V},\mathcal{E})$ over the $L$ models, which serves as the leaderboard. A directed path $a\rightsquigarrow b$ of length $d$ certifies $\Delta_{a,b}^{\cB}>d\tau$, so the graph reports a partial order in which every reachability relation is statistically supported.



\paragraph{Anytime Validity and Power.} 
Let $\mathcal{E}_r=\{e\in\overrightarrow{\mathcal{P}}: H_e^{(\tau)} \text{ is rejected at replicate } r\}$
denote the set of directly certified edges at replicate $r$, $R$ the maximum number of
complete replicates, and $\alpha\in(0,1)$ the target level. For $1\le r\le R$, define the
anytime family-wise error rate
\begin{equation}
\operatorname{FWER}_r=\mathbb{P}\left(\exists s\le r,\ \exists e=(a\to b)\in\mathcal{E}_s
\text{ such that } \Delta^{\mathcal{B}}_{a,b}\le\tau \,\middle|\, \mathcal{B},\pi\right).
\label{eq:anytime-fwer-target}
\end{equation}
We say that a procedure is \textit{anytime-valid} if $\operatorname{FWER}_r\le\alpha$ for every $r\le R$.
It permits the graph to be examined after every completed replicate \(r\le R\) while maintaining simultaneous error control over all such monitoring replicates.


To measure efficiency, we also evaluate $\operatorname{Power}_r$, the expected proportion of true edges certified:
\begin{equation}
\operatorname{Power}_r=\mathbb E\!\left[\frac{\left|\left\{e=(a\to b)\in\mathcal E_r:\Delta_{a,b}^{\mathcal B}>\tau\right\}\right|}{\left|\left\{e=(a\to b)\in\overrightarrow{\mathcal P}:\Delta_{a,b}^{\mathcal B}>\tau\right\}\right|}\,\middle|\,\mathcal B,\pi\right].
\end{equation}
Ideally, a practical approach should simultaneously ensure that $\operatorname{FWER}_r \leq \alpha$ at every replicate $r$ while attaining as high a $\mathrm{Power}_r$ as possible.

\section{Methodology}
\label{sec:methodology}

In this section, we first describe the statistical challenges of fixed-benchmark inference. We then propose BB-EDGE for constructing anytime-valid and efficient leaderboards.

\paragraph{Dependence and Heterogeneity.} 
Certifying mean-performance advantages on a fixed benchmark $\mathcal B$ and protocol $\pi$ presents two key statistical challenges:
\begin{itemize}
    \item Dependence. Evaluation scores arising from the same multi-turn dialogue or multi-step trajectory may be arbitrarily dependent.
    \item Heterogeneity. Item-level performance differences may vary across the benchmark, while the directional null constrains only their benchmark-weighted average.
\end{itemize}
Both features are common in modern LLM benchmarks.
For example, Figure~\ref{figure:heterogeneity} (a) shows that Gemma-4-E4B-it and MiMo-7B-RL have opposing item-level advantages on GSM8K, despite a near-zero benchmark-average gap of \(-0.008\). 
In Section~\ref{section:simulation_studies}, we empirically show that either feature can cause item-level online methods to inflate the FWER.
To address both, we propose BB-EDGE, which factorizes evidence across protocol-defined blocks and places weight-proportional stakes that preserve the heterogeneous benchmark-average null.
We present its construction in three steps.

\paragraph{Design-Determined Resolution.}
\label{sec:block-evidence}
We first partition the fixed benchmark $\mathcal B$ into $M$ prespecified blocks $\mathcal C=(C_1,\ldots,C_M)$ based on the dependence structure induced by the fixed protocol $\pi$. For each direction $e=(a\to b)$, define the item-level score difference $D^{e}_{i,r} = S^{(a)}_{i,r} - S^{(b)}_{i,r}$, its rescaled version $X^{e}_{i,r} = \frac{D^{e}_{i,r}+1}{2} \in [0,1]$, and the null reference point $\mu_0 = \frac{1+\tau}{2}$.  Then the directional null $H_e^{(\tau)}:\Delta_{a,b}^{\mathcal B}\leq\tau$ is equivalently written as $\frac{1}{N}\sum_{i=1}^{N}\mathbb E\!\left[X_{i,r}^{e}\mid\mathcal B,\pi\right]\leq \mu_0.$ For each block \(C_m\), let \(w_m=|C_m|/N\) and define the block-level observation $Y_{e,m,r}=\frac{1}{|C_m|}\sum_{i\in C_m}X^e_{i,r}.$ These block summaries preserve the original benchmark-level target because
\begin{equation}
\sum_{m=1}^M w_mY_{e,m,r}=\frac1N\sum_{i=1}^NX^e_{i,r}.
\label{eq:block-average-identity}
\end{equation}
Thus, the partition changes only the evidence resolution, not the estimand. Here, $M=1$ permits arbitrary within-replicate dependence, $M=N$ enables item-level factorization under conditional independence, and intermediate choices accommodate within-block dependence.

\paragraph{Benchmark-Weighted Block-Factorized $e$-Process.}
Let $W_{i,r}=(S_{i,r}^{(1)},\ldots,S_{i,r}^{(L)})$ collect the scores of all $L$ models on item $i$, and let $Z_r=(W_{1,r},\ldots,W_{N,r})$ denote the complete evaluation panel at replicate $r$. Let $\mathcal Q$ collect all pre-inference information, including the directional family, stake grid, mixture weights, and any independent pilot data used to specify them. We define $\mathcal F_r=\sigma(\mathcal B,\pi,\mathcal Q,Z_1,\ldots,Z_r)$ and, for each direction $e=(a\to b)$, let $\nu_{e,m,r}=\mathbb E[Y_{e,m,r}\mid\mathcal F_{r-1}]$ denote the conditional mean of block $m$ before replicate $r$. Under \(H_e^{(\tau)}\), the block means need not individually satisfy \(\nu_{e,m,r}\leq\mu_0\). The null constrains only their benchmark-weighted average
\begin{equation}
\sum_{m=1}^M w_m\nu_{e,m,r}=\frac{1+\Delta_{a,b}^{\cB}}2\le\mu_0.
\label{eq:conditional-average-null}
\end{equation}

To preserve the benchmark-weighted constraint in (\ref{eq:conditional-average-null}), let $\Lambda=\{\lambda_1,\ldots,\lambda_G\}\subset[0,1)$ be a prespecified stake grid and define $w_\star=\max_{m\in[M]}w_m$, 
\begin{equation}
\lambda_{g,m}=\lambda_g\frac{w_m}{w_\star}.
\label{eq:proportional-stakes}
\end{equation}
The linear drift satisfies $\sum_{m=1}^{M}\lambda_{g,m}(\nu_{e,m,r}-\mu_0)=\frac{\lambda_g}{w_\star}\left(\sum_{m=1}^{M}w_m\nu_{e,m,r}-\mu_0\right)\leq0$ under $H_e^{(\tau)}$. Because the benchmark-average null does not require \(\nu_{e,m,r}\leq\mu_0\) for each block, naive blockwise \(e\)-processes with arbitrary block-specific stakes can induce positive drift under \(H_e^{(\tau)}\). Proposition~\ref{prop:stake-necessity-iclr} characterizes the linear drift requirement used in our factorized construction: among nonnegative blockwise stake allocations, proportionality to the benchmark weights is necessary and sufficient for this drift to be nonpositive under every admissible vector of block means.

Before replicate $r$, let $\widehat Y_{e,m,r}\in[0,1]$ be a predictable estimate of $Y_{e,m,r}$. For each $\lambda_g\in\Lambda$, building on the predictable empirical-Bernstein construction of \citet{smith2024estimating}, define the block-factorized empirical-Bernstein contribution
\begin{equation}
F_{e,r}^{\cC}(\lambda_g)=\exp\left\{\sum_{m=1}^{M}\left[\lambda_{g,m}(Y_{e,m,r}-\mu_0)-\psi_E(\lambda_{g,m})(Y_{e,m,r}-\widehat Y_{e,m,r})^2\right]\right\},
\label{eq:block-factor}
\end{equation}
where $\psi_E(\lambda)=-\log(1-\lambda)-\lambda$. Let $\rho_1,\ldots,\rho_G$ be prespecified mixture weights satisfying $\rho_g\geq0$ and $\sum_{g=1}^{G}\rho_g=1$. The corresponding evidence process is
\begin{equation}
E_{e,r}^{\cC}(\lambda_g)=\prod_{s=1}^{r}F_{e,s}^{\cC}(\lambda_g),
\qquad
E_{e,r}^{\cC}=\sum_{g=1}^{G}\rho_gE_{e,r}^{\cC}(\lambda_g).
\label{eq:block-mixture}
\end{equation} 
with $E_{e,0}^{\cC}(\lambda_g)=E_{e,0}^{\cC}=1$. Together, (\ref{eq:block-factor}) and (\ref{eq:block-mixture}) define the evidence process used by BB-EDGE. The proportional stakes ensure nonpositive drift under the heterogeneous benchmark-average null, while the block-specific predictable estimates provide variance adaptation without changing the target hypothesis. Section~\ref{sec:theory} establishes the validity of this construction.

\paragraph{Confidence Graph Construction.}
At each completed replicate $r$, we apply direct $e$-Holm \citep{hartoglei2025} to $\{E_{e,r}^{\cC}:e\in\overrightarrow{\mathcal P}\}$. Specifically, define $\mathcal J_r=\left\{e\in\overrightarrow{\mathcal P}:E_{e,r}^{\cC}<\frac{1}{\alpha}\right\}$ and the rejection threshold $c_r = \frac{1}{\alpha} + \sum_{e \in \mathcal{J}_r}\left(\frac{1}{\alpha} - E_{e,r}^{\mathcal{C}}\right)$. The set of directly certified edges is
\begin{equation}
\mathcal E_r=\left\{e\in\overrightarrow{\mathcal P}:E_{e,r}^{\mathcal C}\ge c_r\right\}.
\label{eq:eholm-rejections}
\end{equation}
Direct $e$-Holm exploits the evidence across the entire directional family and remains valid under arbitrary dependence among the pairwise $e$-processes.

The resulting confidence graph is $\mathcal G_r=(\mathcal V,\mathcal E_r)$, where an edge $a\to b$ certifies $\Delta_{a,b}^{\cB}>\tau$.  A directed path \(a\rightsquigarrow_r b\) of length \(d\) implies \(\Delta_{a,b}^{\mathcal B}>d\tau\), inducing a partial ranking in which unreachable pairs remain unresolved. The edge sets need not be nested over time, and Section~\ref{sec:theory} establishes anytime FWER control across all model pairs and replicates. 
The confidence graph further supports anytime-valid \textbf{Top-$k$ certification} and \textbf{rank intervals} (Appendices~\ref{sec:pilot-topk} and~\ref{section_rank_interval}).

\section{Theoretical Analysis}
\label{sec:theory}
We establish validity using nonnegative supermartingales \citep{howard2020}.  The benchmark-level null imposes a constraint on the weight-averaged block means; proportional stakes preserve this constraint and yield a valid $e$-process for each direction. Direct $e$-Holm then gives anytime false-edge control for the full graph.

\begin{assumption}
\label{ass:design-iclr}
The following assumptions hold conditional on the fixed benchmark and protocol:
\begin{enumerate}
\item Conditional on \((\mathcal{Q},\cB,\pi)\), the complete panels \(Z_1,\ldots,Z_R\) are independent and identically distributed across runs. 
\item Within each run, the block vectors \(V_{m,r}=(W_{i,r}\mid i\in C_m)\) are mutually independent conditional on the history \((\cF_{r-1},\cB,\pi)\). Their distributions may differ across blocks.
\item Let \(\mathcal{Q}\) denote the information used to specify the partition, directional family, stake grid, and mixture weights. These objects are \(\mathcal{Q}\)-measurable, and \(\mathcal{Q}\) is independent of the inference panels \(Z_1,\ldots,Z_R\) conditional on \((\cB,\pi)\). 
In particular, they are fixed before the inference runs or selected using independent data.
\item Each prediction \(\widehat Y_{e,m,r}\) is \(\cF_{r-1}\)-measurable and lies in \([0,1]\).
\end{enumerate}
\end{assumption}
For a finite budget \(R\), all guarantees are restricted to \(1\le r\le R\). Statements covering every \(r\ge1\) require the same conditions for the entire infinite sequence of inference panels.
\begin{remark}
The independence conditions concern the data collection procedure and cannot be verified from the completed score table alone. Different random seeds do not establish independence if runs share a provider state, a cache, or a source of evaluator randomness. If the protocol does not justify independence across the selected blocks, the theorem-backed fallback is the one-block complete-run analysis.  Singleton blocks require independent randomization of each item vector, but they do not require the item vectors to be exchangeable or identically distributed.
\end{remark}

Under a true directional null, (\ref{eq:conditional-average-null}) bounds only the benchmark-weighted average of the block means. Individual blocks may favor either model, which is why unrestricted block-specific stakes can turn a valid benchmark-level null into a different claim.

\begin{proposition}
\label{prop:stake-necessity-iclr}
Let \(w_m>0\), \(\sum_m w_m=1\), and \(\mu_0\in(0,1)\). Suppose a nonnegative
stake vector \(\eta\) satisfies
\begin{equation}
\eta^\top(\nu-\mu_0\one)\le0
\quad\text{whenever}\quad
\nu\in[0,1]^M
\text{ and }w^\top\nu\le\mu_0.
\label{eq:stake-drift-condition}
\end{equation}
Then \(\eta=cw\) for some \(c\ge0\). Conversely, every vector of this form
satisfies
\[
\eta^\top(\nu-\mu_0\one)
=c\,w^\top(\nu-\mu_0\one)\le0
\]
for every \(\nu\in[0,1]^M\) with \(w^\top\nu\le\mu_0\).
\end{proposition}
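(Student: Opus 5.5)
The converse is immediate: if \(\eta=cw\) with \(c\ge0\), then \(\eta^\top(\nu-\mu_0\one)=c(w^\top\nu-\mu_0 w^\top\one)=c(w^\top\nu-\mu_0)\le0\), since \(w^\top\one=1\). So the work is in the forward direction. There I will test condition (\ref{eq:stake-drift-condition}) on two-coordinate perturbations of the constant vector \(\nu=\mu_0\one\), which lies in \([0,1]^M\) and satisfies \(w^\top\nu=\mu_0\) with equality.

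Fix two indices \(j\neq k\). For a small \(t>0\), set
\[
\nu=\mu_0\one+t\Bigl(\frac{e_j}{w_j}-\frac{e_k}{w_k}\Bigr).
\]
Then \(w^\top\nu=\mu_0+t-t=\mu_0\), so \(\nu\) satisfies the null constraint. Because \(\mu_0\in(0,1)\) lies strictly inside the interval, \(\nu\in[0,1]^M\) for all \(t\le t_0\) with
\[
t_0=\min\{w_j(1-\mu_0),\,w_k\mu_0\}>0,
\]
using \(w_j,w_k>0\).

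Plugging this \(\nu\) into (\ref{eq:stake-drift-condition}) gives \(t(\eta_j/w_j-\eta_k/w_k)\le0\), hence \(\eta_j/w_j\le\eta_k/w_k\). Swapping the roles of \(j\) and \(k\) gives the reverse inequality, so \(\eta_j/w_j=\eta_k/w_k\) for all \(j,k\). Calling this common ratio \(c\), we get \(\eta=cw\), and \(c\ge0\) because \(\eta\ge0\) and \(w>0\). In fact the argument never uses nonnegativity of \(\eta\) except to conclude \(c\ge0\).

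The only delicate point is the box constraint \(\nu\in[0,1]^M\), which is why the perturbation must be small. It needs the interior assumption \(\mu_0\in(0,1)\) and strict positivity of every \(w_m\), so that the division by \(w_j,w_k\) is allowed and \(t_0>0\). The case \(M=1\) is trivial, since then \(w=1\) and \(\eta=\eta_1 w\) directly. An alternative route is to note that the feasible set contains a relative neighborhood of \(\mu_0\one\) in the hyperplane \(w^\top\nu=\mu_0\), so the linear functional \(\eta^\top(\nu-\mu_0\one)\) must vanish on \(w^\perp\); this forces \(\eta\in\mathrm{span}(w)\). The explicit two-coordinate perturbation is the concrete version of that argument.
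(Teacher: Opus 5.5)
Your proof is correct and follows essentially the same route as the paper: both perturb $\mu_0\one$ in two opposite directions inside the hyperplane $w^\top\nu=\mu_0$, which forces $\eta$ to annihilate $w^\perp$. The paper does this for an arbitrary $d\in w^\perp$ and finishes with the orthogonal decomposition $\eta=cw+h$, whereas you use the spanning vectors $e_j/w_j-e_k/w_k$ and read off the common ratio $\eta_j/w_j$ directly; this difference is only cosmetic.
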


Thus, proportional stakes are necessary for nonpositive linear drift over the full benchmark-level null, not merely a convenient choice. The proof is given in Appendix~\ref{app:sec3-proofs}.

\begin{theorem}
\label{thm:bedge-main-iclr}
Suppose Assumption~\ref{ass:design-iclr} holds. For every true directional null, the process \(\{E_{e,r}^{\cC}\}_{r=0}^R\) in (\ref{eq:block-mixture}) is a nonnegative supermartingale.
Let \(\mathcal N_0=\{e=(a\to b)\in\overrightarrow{\cP}:\Delta_{a,b}^{\cB}\le\tau\}\) denote the set of true directional nulls. 
If the edge sets \(\mathcal E_r\) are obtained by direct $e$-Holm, then
\begin{equation}
{\Pp\left(
\exists r\le R,\ \exists e\in\mathcal N_0:\ e\in\cE_r
\middle|\cB,\pi
\right)\le\alpha.}
\label{eq:main-anytime-fwer}
\end{equation}
On the complementary event of probability at least \(1-\alpha\), every direct edge is a true margin claim, every path \(a=v_0\to\cdots\to v_d=b\) implies \(\Delta_{a,b}^{\cB}>d\tau\), and the graph is acyclic. 
\end{theorem}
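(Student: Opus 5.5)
The proof has three steps: a one-step bound for each factor, Ville's inequality combined with the direct $e$-Holm closure argument, and a deterministic consequence on the good event.

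\textbf{Supermartingale property.} Fix a true null $e\in\mathcal N_0$ and a grid point $\lambda_g$. By Assumption~\ref{ass:design-iclr}(2), the block vectors $V_{m,r}$ are conditionally independent given $\mathcal F_{r-1}$. Each $Y_{e,m,r}$ is a function of $V_{m,r}$ alone, and each $\widehat Y_{e,m,r}$ is $\mathcal F_{r-1}$-measurable by part (4), so
\[
\E\!\left[F^{\cC}_{e,r}(\lambda_g)\mid\mathcal F_{r-1}\right]=\prod_{m=1}^M \E\!\left[\exp\{\lambda_{g,m}(Y_{e,m,r}-\mu_0)-\psi_E(\lambda_{g,m})(Y_{e,m,r}-\widehat Y_{e,m,r})^2\}\mid\mathcal F_{r-1}\right].
\]
Next I use the deterministic empirical-Bernstein inequality (Fan et al.; Howard et al.): for $\lambda\in[0,1)$ and $|\xi|\le1$,
\[
\exp\{\lambda\xi-\psi_E(\lambda)\xi^2\}\le1+\lambda\xi.
\]
Write $Y-\mu_0=(Y-\nu)+(\nu-\mu_0)$, with $\nu=\nu_{e,m,r}$. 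Then $\xi=Y-\widehat Y$ and $Y-\nu$ differ only by the predictable shift $\widehat Y-\nu$.

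The standard route (Waudby-Smith--Ramdas, Smith et al.) rewrites the exponent with $\xi=Y-\widehat Y$ and bounds the $m$-th factor by $\exp\{\lambda_{g,m}(\nu-\mu_0)\}$ times a conditional expectation at most one. This is the step I expect to need the most care: the centering involves $\widehat Y$ rather than $\nu$, and the predictable shift has to be absorbed. The inequality is applied with $\xi=Y-\widehat Y$ and the linear term $\lambda(\widehat Y-\mu_0)$ factored out. I would follow Smith et al.'s lemma directly and take it as the cited ingredient.

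Multiplying over $m$ gives
\[
\E\!\left[F^{\cC}_{e,r}(\lambda_g)\mid\mathcal F_{r-1}\right]\le \exp\Big\{\sum_m\lambda_{g,m}(\nu_{e,m,r}-\mu_0)\Big\}.
\]
By the proportional-stake drift identity from Proposition~\ref{prop:stake-necessity-iclr}, together with (\ref{eq:conditional-average-null}), the exponent is nonpositive. The identity in (\ref{eq:conditional-average-null}) uses Assumption~\ref{ass:design-iclr}(1),(3): because the panels are i.i.d.\ given $\mathcal Q$ and $\mathcal Q$ is independent of the panels, $\sum_m w_m\nu_{e,m,r}$ equals $(1+\Delta^{\cB}_{a,b})/2$. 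Hence each $E^{\cC}_{e,r}(\lambda_g)$ is a nonnegative supermartingale. The stake grid and mixture weights $\rho_g$ are $\mathcal F_0$-measurable, so the mixture $E^{\cC}_{e,r}$ is a nonnegative supermartingale with initial value $1$.

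\textbf{FWER via Ville and $e$-Holm.} Let $K_r=\sum_{e\in\mathcal N_0}E^{\cC}_{e,r}$, a nonnegative supermartingale with $K_0=|\mathcal N_0|$. Sums of supermartingales need no cross-pair independence, which gives validity under arbitrary dependence. The proof that direct $e$-Holm is a closed procedure (Hartog and Lei) shows that a null $e\in\mathcal N_0$ is rejected at $r$ only if the intersection $e$-value $K_r/|\mathcal N_0|\ge1/\alpha$. Concretely, if some null has $E_{e,r}\ge c_r$, then the nulls below $1/\alpha$ contribute deficits bounded by the excess of the rejected one, which forces $\sum_{\mathcal N_0}E\ge|\mathcal N_0|/\alpha$. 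I would verify this counting inequality explicitly from the definition of $c_r$. Ville's inequality then gives
\[
\Pp\left(\exists r\le R:\ K_r/|\mathcal N_0|\ge1/\alpha\right)\le\alpha,
\]
which is (\ref{eq:main-anytime-fwer}).

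\textbf{Consequences on the good event.} On the complement, every edge in $\cE_r$ satisfies $\Delta_{v_{j-1},v_j}^{\cB}>\tau\ge0$. Summing along a path telescopes to $\Delta^{\cB}_{a,b}>d\tau$. A cycle would give $0=\sum\Delta>0$, a contradiction, so the graph is acyclic.
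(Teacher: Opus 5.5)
Your proposal is correct and follows essentially the same route as the paper. It uses blockwise conditional factorization, the empirical-Bernstein bound $\exp\{\lambda\xi-\psi_E(\lambda)\xi^2\}\le 1+\lambda\xi$ with $\xi=Y-\widehat Y$, the proportional-stake drift identity, the average of the true-null $e$-processes, the direct $e$-Holm counting inequality, Ville's inequality, and the telescoping path and cycle argument. The step you defer to the citation closes in one line, exactly as in the paper's Lemma: after factoring out $\lambda(\widehat Y-\mu_0)$, the conditional expectation is at most $\exp\{\lambda(\widehat Y-\mu_0)\}\{1+\lambda(\nu-\widehat Y)\}\le\exp\{\lambda(\nu-\mu_0)\}$ by $1+x\le e^x$.
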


The complete proof is given in Appendix~\ref{app:proof-main-theorem}.
The displayed FWER bound conditions on the benchmark and protocol, averaging over any independent pilot used to fix the design.
It does not require independence among the directional $e$-processes.
The theorem applies to every resolution fixed in advance and justified by the evaluation design. 
One block gives the robust complete-run graph, singleton blocks give the item-factorized graph when item-level independence is justified, and intermediate partitions provide the corresponding intermediate resolutions. 
The resolution cannot be selected after observing which graph reports more edges.

Validity alone does not determine certification speed. For an alternative $e=(a\to b)$, let $\delta_e=(\Delta_{a,b}^{\mathcal B}-\tau)/2>0$.
The conditional expected log increment of grid component $g$ is $a_{e,g,r}=\frac{\lambda_g}{w_\star}\delta_e-\sum_{m=1}^{M}\psi_E(\lambda_{g,m})\mathbb E\!\left[(Y_{e,m,r}-\widehat Y_{e,m,r})^2 \mid\mathcal F_{r-1}\right].$
The first term captures the benchmark-average advantage, while the second accounts for blockwise prediction error. 
Theorem~\ref{thm:power-consistency-iclr} shows that a positive-weight grid component with persistently positive average conditional log growth yields eventual certification.


\section{Experiments}
\subsection{Synthetic Data Analysis}
\label{section:simulation_studies}

\paragraph{Data-Generating Process.}
We simulate $L=10$ models on a fixed benchmark of $N=100$ items.
Their benchmark-level means are \((\theta_1,\theta_1,\ldots,\theta_{L/2},\theta_{L/2})\), with \(\theta_1,\ldots,\theta_{L/2}\) evenly spaced over \([0.41,0.59]\).
To specify ground-truth fixed-benchmark performance, we define the probability that model $\ell$ correctly answers item $i$ as $p_i^{(\ell)}=\theta_\ell+c_i+h_{i,\ell}$, where $c_i$ is a fixed item-difficulty offset, equally spaced over $[-0.08,0.08]$, and $h_{i,\ell}$ is a model-item effect constructing heterogeneous benchmark-average nulls. Both effects are centered so that $\sum_{i=1}^{N}c_i=\sum_{i=1}^{N}h_{i,\ell}=0$ for every $\ell$.

We generate the LLMs' synthetic outcomes via a latent Gaussian variable $Z_{i,\ell,r}=\sqrt{\kappa_1}\,U^{\mathrm{block}}_{b(i),\ell,r}+\sqrt{\kappa_2}\,U^{\mathrm{item}}_{i,r}+\sqrt{1-\kappa_1-\kappa_2}\,\varepsilon_{i,\ell,r}$, where $U^{\text{block}}$, $U^{\text{item}}$, and $\varepsilon$ are independent standard Gaussian variables, and $\kappa_1$ and $\kappa_2$ control within-block and within-item cross-model dependence, respectively.
We set $\kappa_1=0.5$ and $\kappa_2=0.1$. 
Let $S_{i,r}^{(\ell)} = \mathbbm{1}\{Z_{i,\ell,r} \le\Phi^{-1}(p_i^{(\ell)})\}$ denote whether model $\ell$ answers item $i$ correctly at replicate $r$. 
Each configuration is repeated 500 times.

\textbf{Baseline Methods.} 
We compare BB-EDGE with external methods and three internal baselines. 
External baselines: EMR performs uncorrected empirical-mean comparisons;
CBTL \citep{wang2024confidence} combines context-dependent Bradley–Terry–Luce model with a Gaussian multiplier bootstrap; and $t$-Holm \citep{chandrahas2026evalci} applies Holm's procedure to paired one-sided $t$-tests. SERPANT \citep{gu2026anytimevalid} is an online method for pairwise preferences based on $e$-processes under item-level independence. 
We include three internal baselines.
BEB-Holm uses the same block observations but performs fixed-time empirical-Bernstein inference followed by Holm's procedure, providing a fixed-time baseline.
BFH-$e$-Holm differs from BB-EDGE only in replacing the empirical-Bernstein penalty with a Hoeffding penalty, isolating the benefit of variance adaptation.
CR-EDGE differs from BB-EDGE only in treating each complete replicate as a single block, isolating the benefit of block factorization.
Full details of synthetic experiments are in Appendix \ref{section_appendix_baseline}.

\begin{figure*}[t]
\centering
\includegraphics[width=0.99\textwidth]{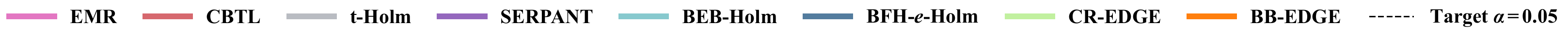}\\[-2ex]
\subfloat[$|C_m|=1$, $h_{i,\ell}=0$]{%
    \includegraphics[width=0.48\textwidth]{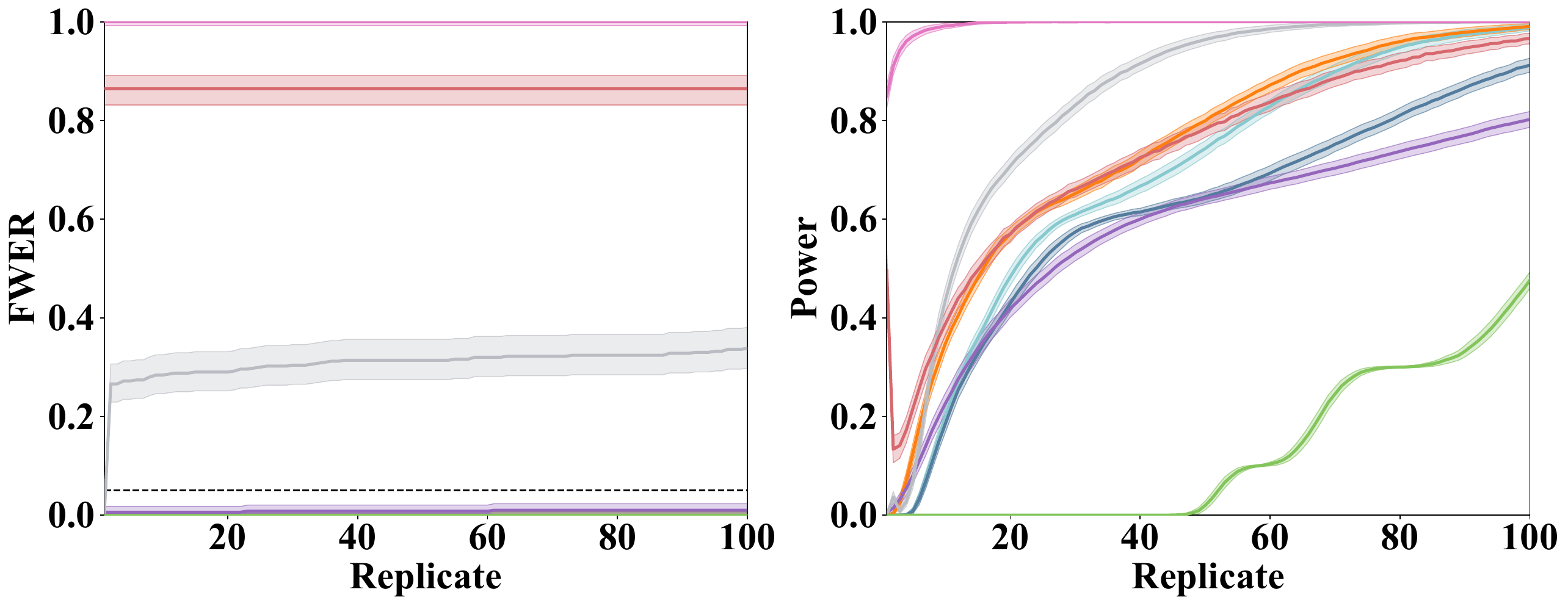}
}
\hfill
\subfloat[$|C_m|=5$, $h_{i,\ell}=0$]{%
    \includegraphics[width=0.48\textwidth]{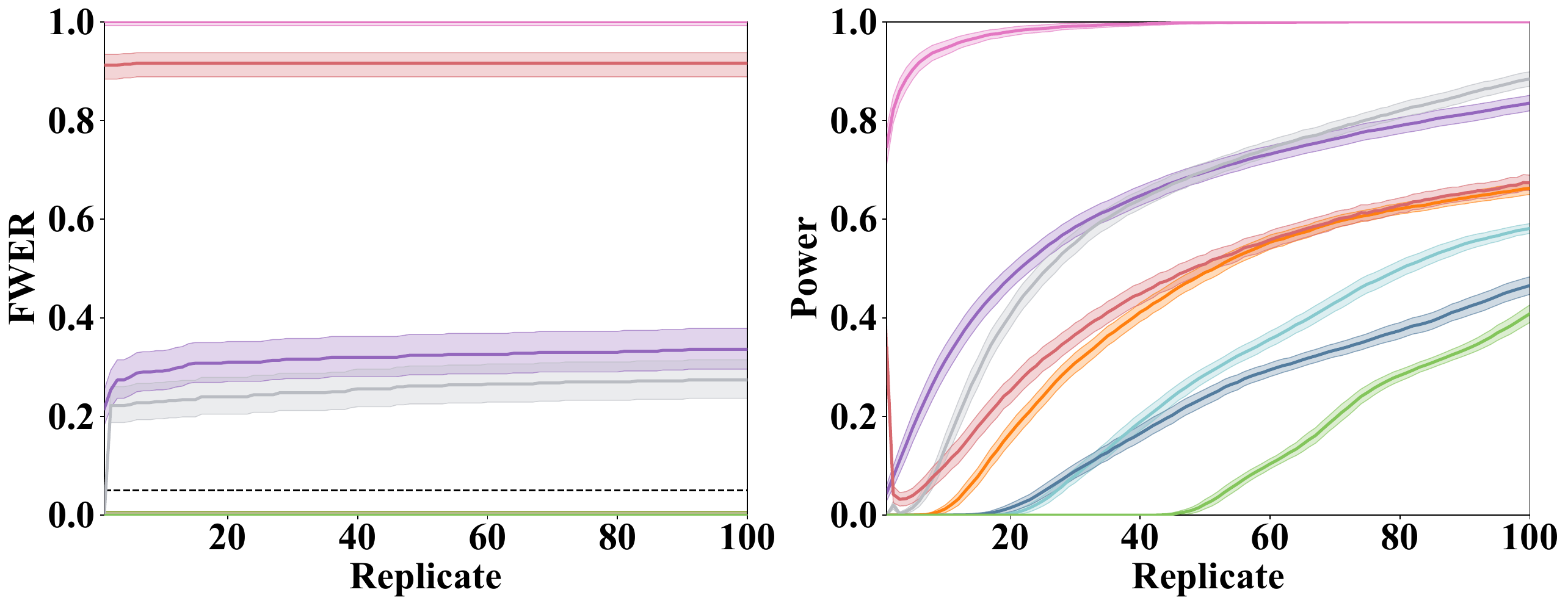}
}
\hfill
\subfloat[$|C_m|=1$, $h_{i,\ell}=0.2$]{%
    \includegraphics[width=0.48\textwidth]{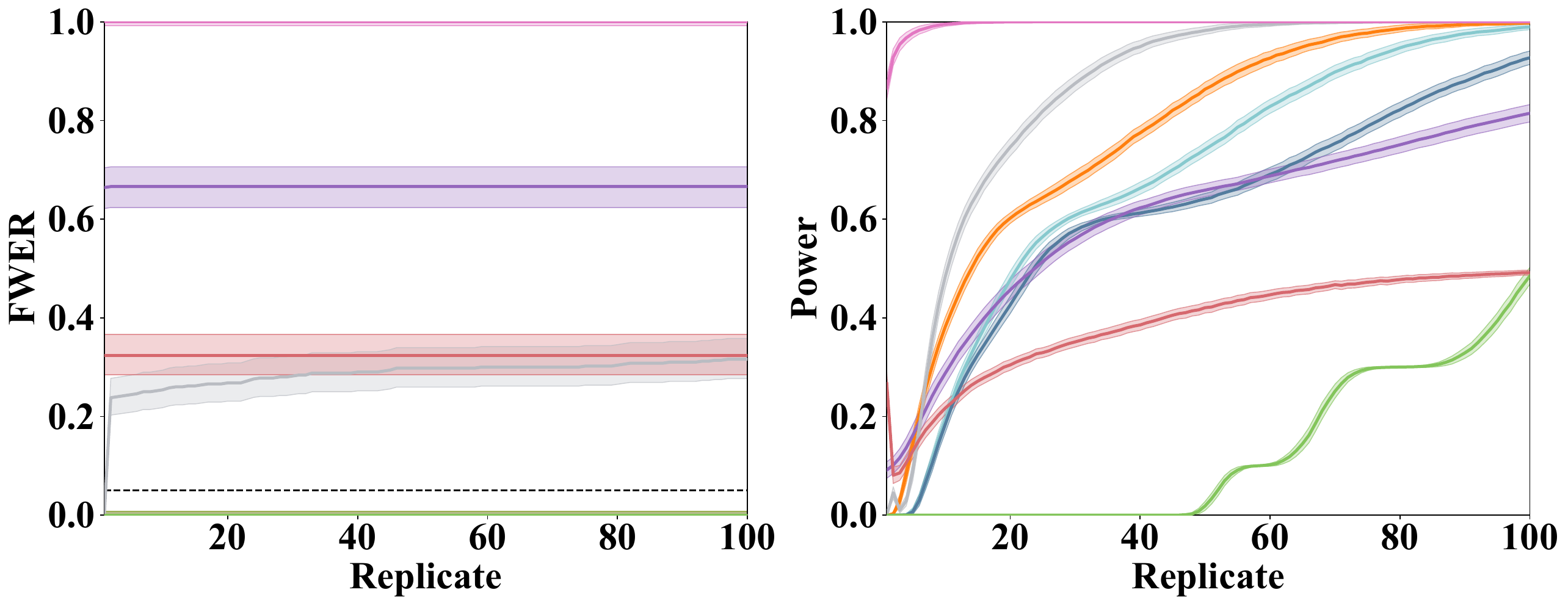}
}
\hfill
\subfloat[$|C_m|=5$, $h_{i,\ell}=0.2$]{%
    \includegraphics[width=0.48\textwidth]{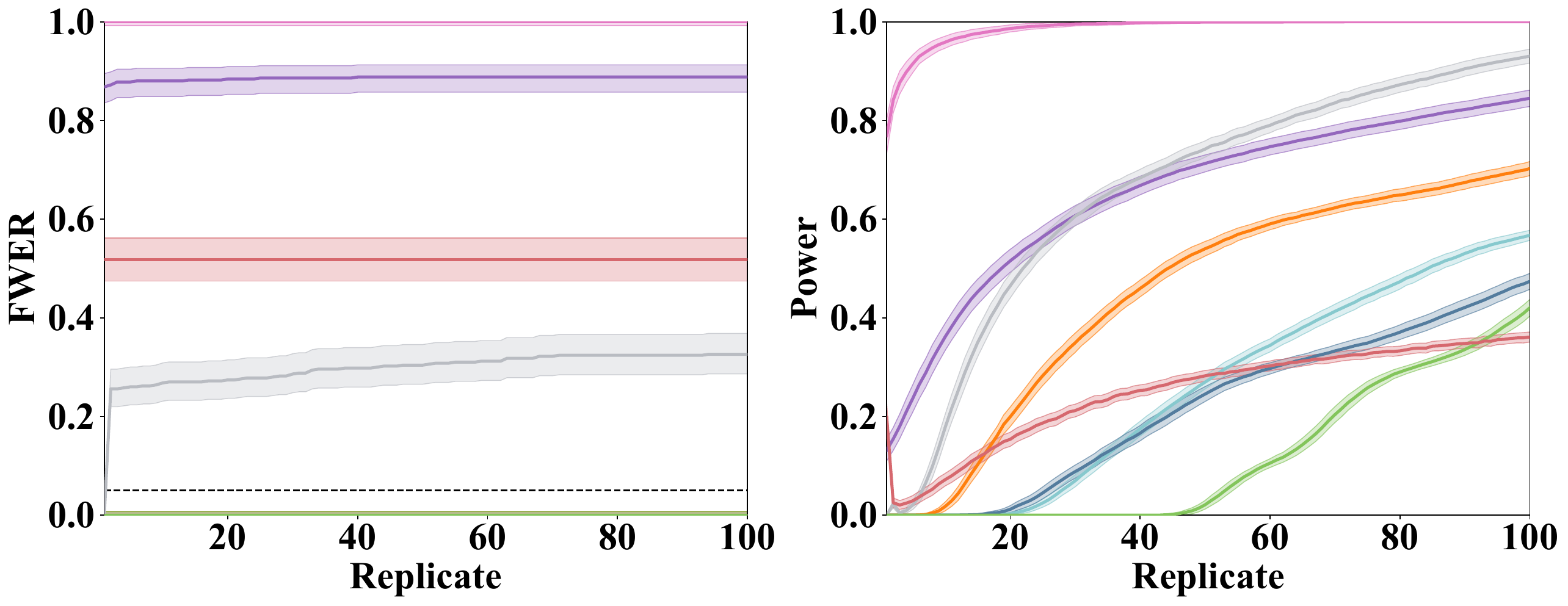}
}
\caption{Empirical $\operatorname{FWER}_r$ and $\operatorname{Power}_r$ (with 95\% confidence intervals) versus replicate $r$.
Panels (a)--(d) correspond to the i.i.d., dependence-only, heterogeneity-only, and combined dependence-and-heterogeneity settings, respectively.
All settings use $\tau=0$, $N=100$, and $L=10$.
The dashed line marks the target level $\alpha=0.05$.
}
\label{figure:1}
\end{figure*}

\textbf{BB-EDGE Controls $\operatorname{FWER}_r$ with High $\operatorname{Power}_r$.}
Figure~\ref{figure:1} first examines error control under the fixed-benchmark target. 
BB-EDGE, BFH-$e$-Holm, and CR-EDGE maintain empirical anytime $\operatorname{FWER}_r$ control across the examined settings. 
The item-level SERPANT adaptation controls error in the i.i.d.\ item setting but can report false benchmark-average advantages under item heterogeneity or within-block dependence.
This diagnostic illustrates the consequence of applying item-stream monitoring to the fixed-benchmark target. 
Additionally, three offline baselines EMR, CBTL, and $t$-Holm significantly exceed the target level.
BEB-Holm is included as a fixed-time reference; its pointwise behavior does not constitute an anytime guarantee.
Figure~\ref{figure:1} also presents that BB-EDGE exhibits faster power growth than CR-EDGE and BFH-$e$-Holm in the examined settings.
For example, under item-level heterogeneity, BB-EDGE reaches 90\% power within 56 replicates, compared with 95 replicates for BFH-$e$-Holm.
These comparisons isolate the effects of block factorization and empirical-Bernstein variance adaptation, respectively.
The conditional log-growth expression in Section~\ref{sec:theory} identifies how blockwise prediction error enters evidence accumulation.


\textbf{Ablation Study.}
We conduct a series of ablation studies under diverse experimental conditions.
First, BB-EDGE continues to perform well with a larger model set ($L=20$; Figure~\ref{figure:a5}), across benchmark sizes $N\in\{50,500\}$ (Figures~\ref{figure:a3}--\ref{figure:a4}), and under prespecified margins $\tau\in\{0.01,0.03\}$ (Figures~\ref{figure:tau1}--\ref{figure:tau3}).
Second, Figure~\ref{figure:grid} compares BB-EDGE with BB-EDGE-PI (plug-in stake), AsympCS-Bonf (non-factorized asymptotic confidence sequences), and SERPANT-R (SERPANT evaluated at completed replicates): BB-EDGE-PI performs similarly, whereas AsympCS-Bonf and, under within-block dependence, SERPANT-R exceed the target level.
Third, Figure~\ref{figure:block_heterogeneity} varies the within-block correlation from $0.1$ to $0.7$ (panel~(a)) and the block size from $1$ to $10$ (panel~(b)); BB-EDGE controls $\operatorname{FWER}_r$ with the highest power, since the weight-proportional stakes require prespecified independent blocks but neither equal sizes nor equal correlations (Theorem~\ref{thm:bedge-main-iclr}).

\subsection{Real Data Analysis}
\label{section:real_data}
\paragraph{Models and Datasets.}
We evaluate BB-EDGE on four benchmarks: knowledge reasoning on MMLU-Pro~\citep{wang2024mmlu}; mathematical reasoning on GSM8K~\citep{cobbe2021gsm8k}; multi-turn dialogues on MT-Bench~\citep{zheng2023judging}; and multi-turn tool calling on ACEBench~\citep{chen2025acebench}.
We use accuracy for MMLU-Pro, exact match for GSM8K, AST-based accuracy for ACEBench, and the LLM-judge score for MT-Bench~\citep{zheng2023judging} as the evaluation metrics.
For MT-Bench, we use MiniMax-M3~\citep{lai2026minimax} as the LLM judge and rescale its ratings to $[0,1]$.
 We evaluate ten open-source LLMs: DeepSeek-R1-Distill-Qwen-7B \citep{guo2025deepseek}, Falcon-H1R-7B \citep{chaabane2026falcon}, Gemma-4-E4B-it \citep{team2026gemma}, Granite-4.1-8B \citep{padhi2024graniteguardian}, Intern-S1-Mini \citep{bai2025intern}, Llama-3.1-8B-Instruct \citep{grattafiori2024llama}, MiMo-7B-RL \citep{xiaomi2025mimo}, Mistral-7B-Instruct-v0.3 \citep{jiang2023mistral7b}, Phi-4 \citep{abdin2024phi}, and Qwen3.5-9B \citep{qwen3.5}. Details of the datasets, prompt templates, block sizes, and other protocol configurations are provided in Appendix~\ref{section_experiments_details}.

\textbf{Results.}
We present two representative examples---GSM8K and ACEBench---and defer the MMLU-Pro, MT-Bench, and API results to Appendix~\ref{section_read_data_results} due to space constraints. 
Figure~\ref{figure:2} presents the confidence graphs for GSM8K and ACEBench, while Tables~\ref{table:a3} and~\ref{table:a5} report exact match and AST-based accuracy, respectively. 
On GSM8K, the confidence graph contains 10 nodes and is acyclic.
Granite-4.1-8B occupies the top position and is certified above Phi-4, followed by Qwen3.5-9B and Intern-S1-Mini.
Gemma-4-E4B-It, Llama-3.1-8B-Instruct, and MiMo-7B-RL remain mutually unresolved, while all three are certified above DeepSeek-R1-Distill-Qwen-7B.
The graph then forms a strictly resolved chain from DeepSeek-R1-Distill-Qwen-7B through Mistral-7B-Instruct-v0.3 to Falcon-H1R-7B.
Additionally, when we increase the prespecified margin from $\tau=0$ to $\tau=0.01$, Figure~\ref{figure:real_data_tau} shows that the edge from Granite-4.1-8B to Phi-4 disappears because their exact-match difference, $0.0071$, does not exceed the required margin. 
On ACEBench, Gemma-4-E4B-It and Qwen3.5-9B have the highest mean scores, but BB-EDGE does not certify either model as better than the other.
Both models are certified above Intern-S1-Mini and Phi-4, followed by a descending chain ending with Falcon-H1R-7B.

\begin{figure*}[t]
\centering
\subfloat[GSM8K]{%
    \includegraphics[width=0.49\textwidth]{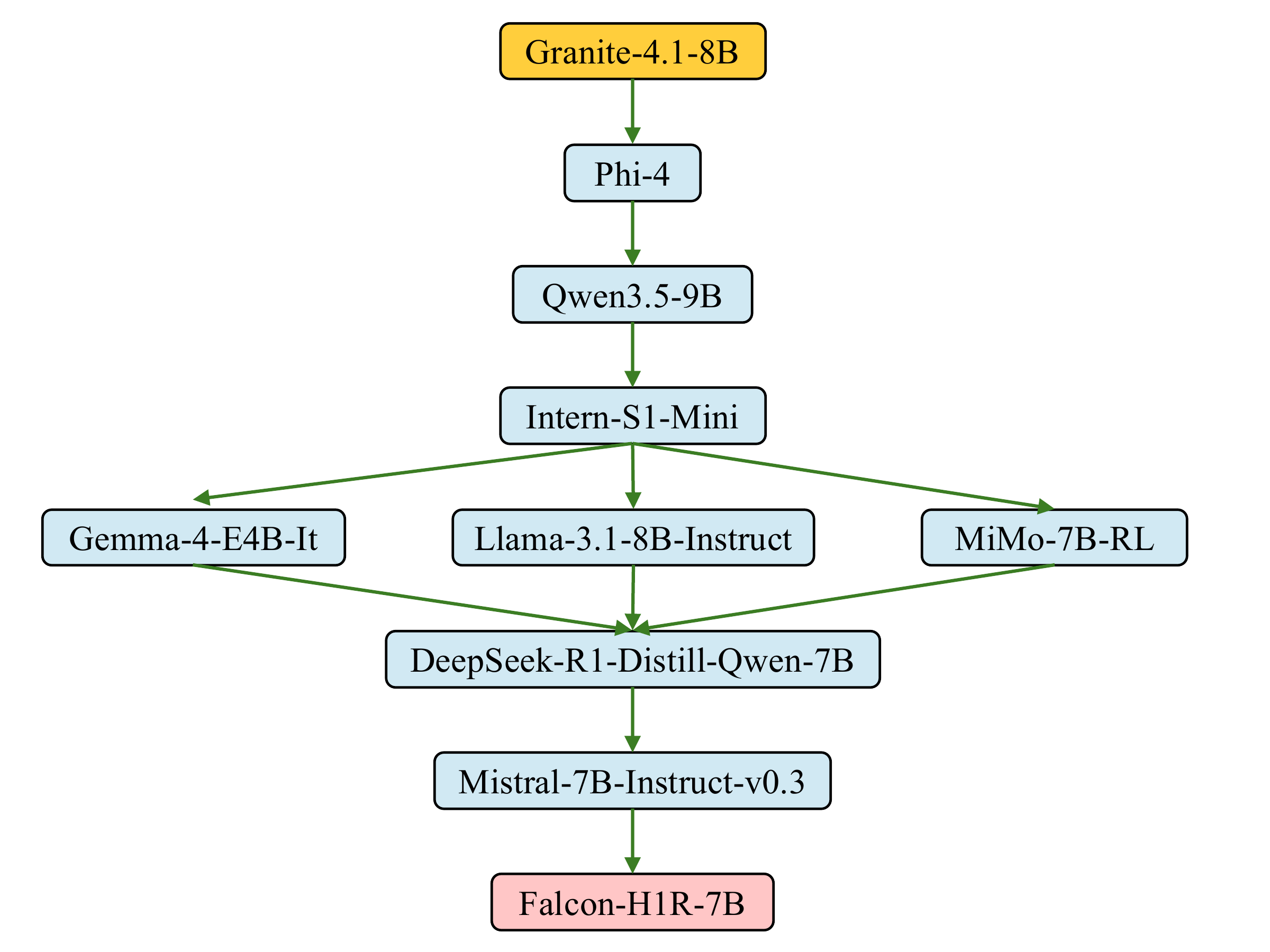}
}
\subfloat[ACEBench]{%
    \includegraphics[width=0.49\textwidth]{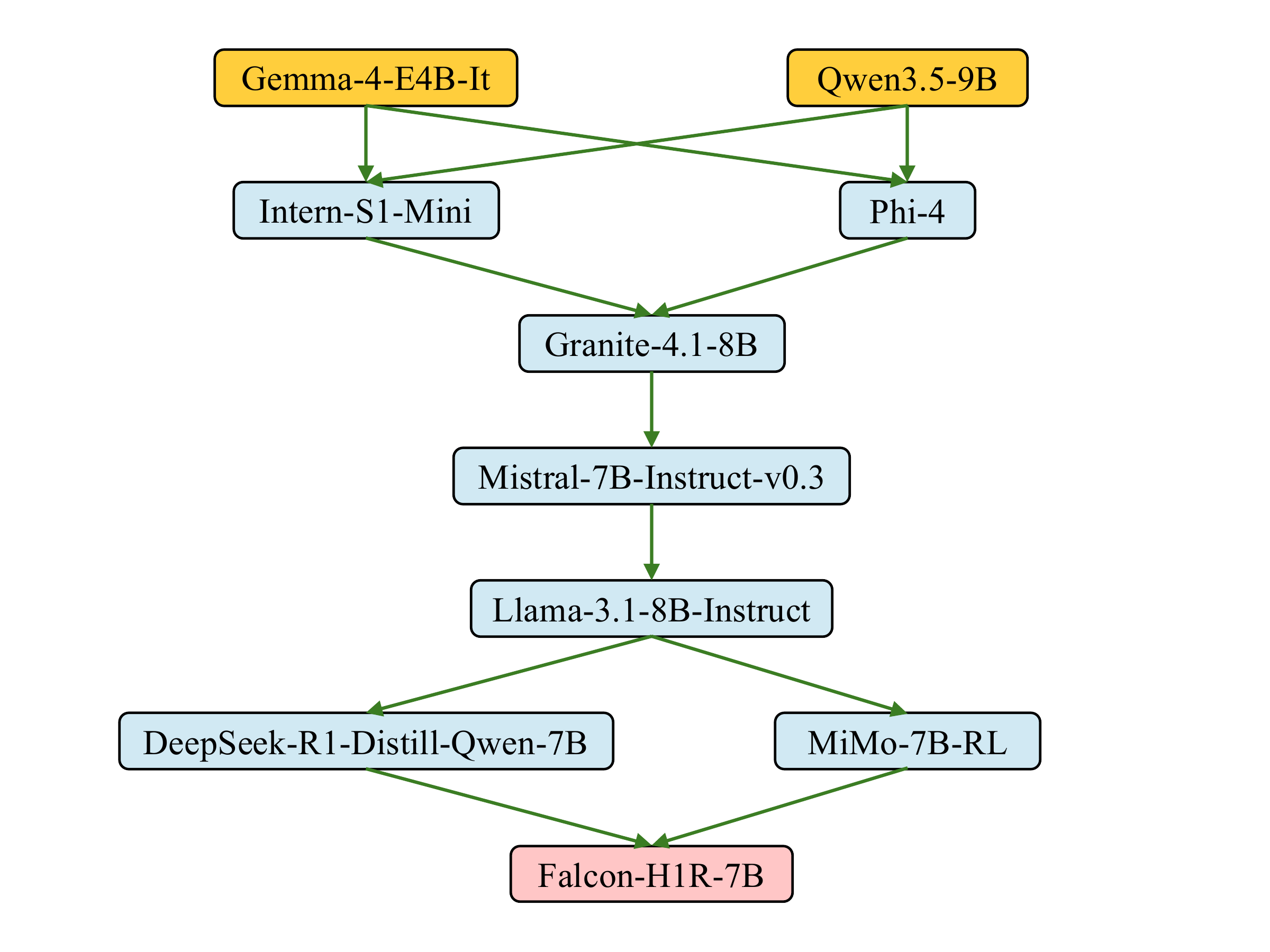}
}
\caption{Certified comparison graphs at replicate $r=20$ on GSM8K and ACEBench, respectively.
An edge \(A\to B\) indicates that model $A$ is certified to outperform model $B$ on the benchmark under the evaluation protocol. Pairs not connected by a directed path remain unresolved.}
\label{figure:2}
\end{figure*}

Figure~\ref{figure:3} reports a same-model null diagnostic on GSM8K and ACEBench. For each model, we randomly split its 20 observed replicates into two groups of 10 and compare them as pseudo-models. Both groups have the same expected score on every item, so any directional certification is false. Across these splits, the SERPANT adaptation produces more false certifications than BB-EDGE. Because this diagnostic reuses the same 20 replicates, it does not estimate repeated-evaluation FWER or isolate item-level heterogeneity; Section~\ref{section:simulation_studies} examines the latter through simulations.
Meanwhile, Figure~\ref{figure:3} also presents that BB-EDGE exhibits substantially faster gains under small evaluation budgets. 
Under the prespecified singleton-block analysis for GSM8K, BB-EDGE therefore uses 66.7\% fewer queries than BFH-$e$-Holm (six replicates, 79,140 queries), the fastest compared baseline with an anytime FWER guarantee for the same benchmark-average hypotheses.
Appendix~\ref{sec:real_data_topk} reports that BB-EDGE certifies Top-$k$ sets with fewer replicates than the baselines. 
For example, on ACEBench, BB-EDGE certifies the Top-4 set after 4 replicates, whereas BFH-$e$-Holm requires 17 and CR-EDGE issues no certificate within 20 replicates.
Appendix~\ref{sec:real_data_interval} reports rank intervals, which are narrower for BB-EDGE and indicate less uncertainty about model rankings.

\begin{figure*}[t]
\centering
\subfloat[GSM8K]{%
    \includegraphics[width=0.49\textwidth]{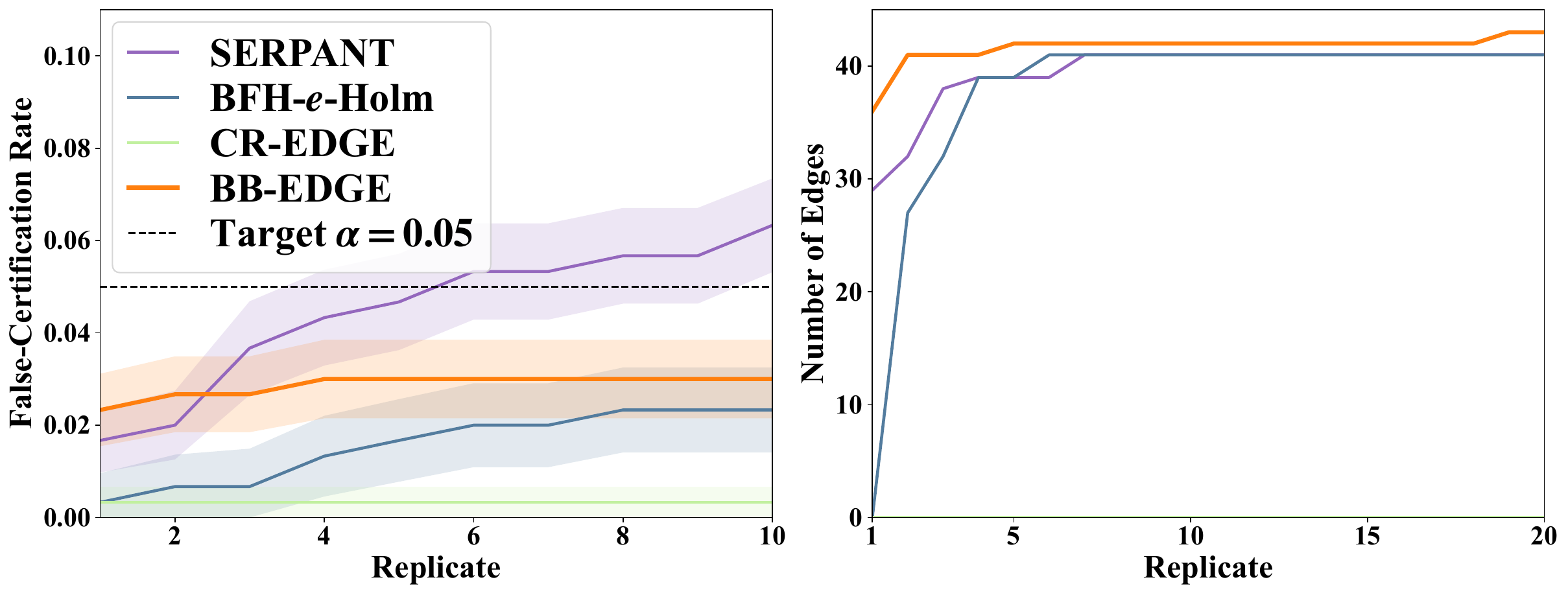}
}
\subfloat[ACEBench]{%
    \includegraphics[width=0.49\textwidth]{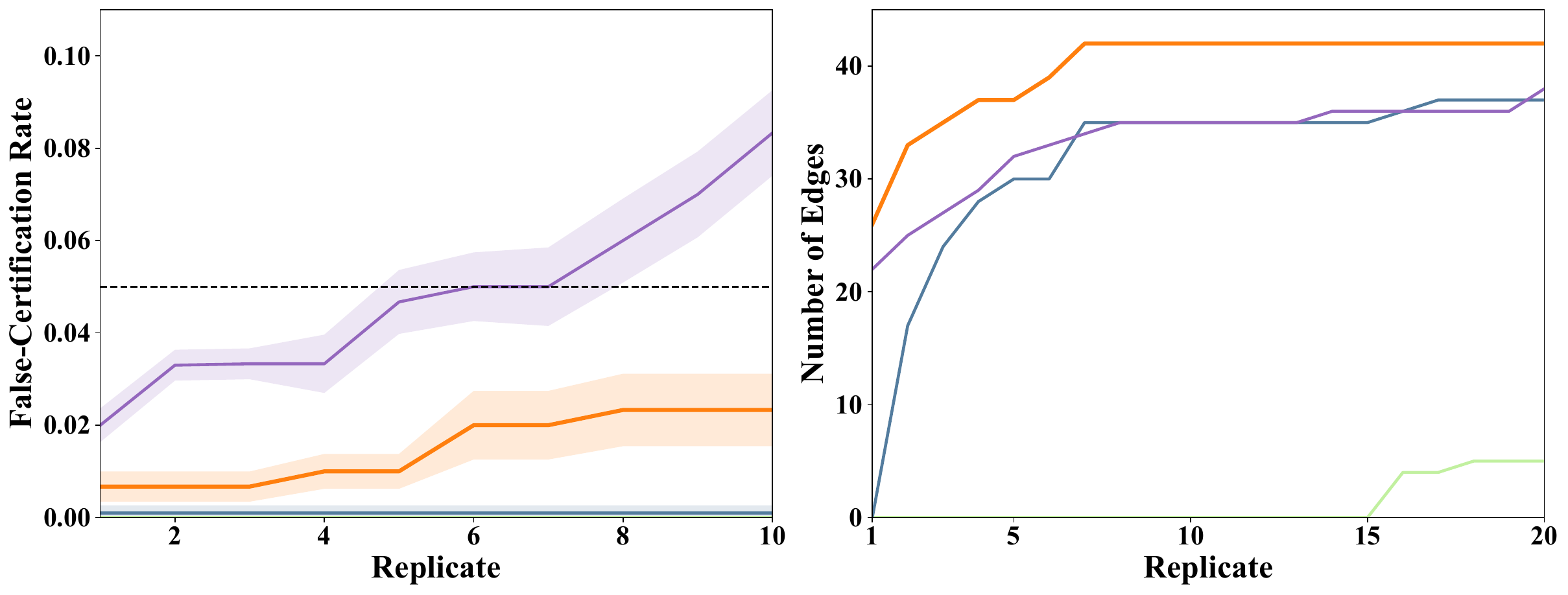}
}
\caption{
Same-model null diagnostic and certification efficiency on GSM8K and ACEBench after $r$ replicates.
Left: mean false-certification rate across 100 random 10-versus-10 splits of the same 20 observed replicates; Monte Carlo interval across random splits, conditional on the observed replicates.
Right: number of distinct model pairs resolved by reachability in the current confidence graph after each complete replicate. At most $\binom{10}{2}=45$ pairs can be resolved.
}
\label{figure:3}
\end{figure*}

\section{Conclusion}
This paper proposes BB-EDGE, an anytime-valid and efficient method for certifying model mean-performance advantages on fixed-benchmark leaderboards. 
Our approach begins by partitioning the benchmark according to the evaluation protocol. 
We then construct block-factorized empirical-Bernstein $e$-processes with weight-proportional stakes and apply direct $e$-Holm to certify mean-performance advantages. 
Furthermore, we provide a theoretical analysis demonstrating that BB-EDGE controls the probability of ever reporting a false edge across all model pairs and replicates at level $\alpha$. 
Extensive experiments on both synthetic and real-world benchmarks demonstrate that BB-EDGE achieves anytime-valid FWER control with the highest statistical power. 

\textbf{Limitation.} BB-EDGE's anytime-valid guarantees are conditional on the fixed benchmark and protocol, and do not extend to out-of-distribution tasks. 
Developing item-level anytime-valid inference that accommodates heterogeneous item effects and cross-item dependence is left for future work.




\clearpage

\begingroup
\marknewreferences
\bibliography{iclr2027_conference,proof_references}

@article{chandrahas2026evalci,
  title={evalci: A Python Library for Statistically Rigorous Comparison of Language Model Evaluations},
  author={Chandrahas, Shreyas K},
  journal={arXiv preprint arXiv:2607.04429},
  year={2026}
}

@inproceedings{peng2024humaneval,
  title={Humaneval-xl: A multilingual code generation benchmark for cross-lingual natural language generalization},
  author={Peng, Qiwei and Chai, Yekun and Li, Xuhong},
  booktitle={Proceedings of the 2024 joint international conference on computational linguistics, language resources and evaluation (LREC-COLING 2024)},
  pages={8383--8394},
  year={2024}
}

@article{wang2024confidence,
  title={Confidence Diagram of Nonparametric Ranking for Uncertainty Assessment in Large Language Models Evaluation},
  author={Wang, Zebin and Han, Yi and Fang, Ethan X and Wang, Lan and Lu, Junwei},
  journal={arXiv preprint arXiv:2412.05506},
  year={2024}
}

@inproceedings{chen2025acebench,
    title = "{ACEB}ench: A Comprehensive Evaluation of {LLM} Tool Usage",
    author = "Chen, Chen  and
      Hao, Xinlong  and
      Liu, Weiwen  and
      Huang, Xu  and
      Zeng, Xingshan  and
      Yu, Shuai  and
      Li, Dexun  and
      Huang, Yuefeng  and
      Liu, Xiangcheng  and
      Xinzhi, Wang  and
      Liu, Wu",
      booktitle = "Findings of the Association for Computational Linguistics: EMNLP 2025",
    year = "2025",
    pages = "12970--12998",
}

@inproceedings{zhu2024dyval,
  title={Dyval: Dynamic evaluation of large language models for reasoning tasks},
  author={Zhu, Kaijie and Chen, Jiaao and Wang, Jindong and Gong, Neil and Yang, Diyi and Xie, Xing},
  booktitle={International Conference on Learning Representations},
  volume={2024},
  pages={18091--18128},
  year={2024}
}

@inproceedings{
frick2025prompttoleaderboard,
title={Prompt-to-Leaderboard: Prompt-Adaptive {LLM} Evaluations},
author={Evan Frick and Connor Chen and Joseph Tennyson and Tianle Li and Wei-Lin Chiang and Anastasios Nikolas Angelopoulos and Ion Stoica},
booktitle={Forty-second International Conference on Machine Learning},
year={2025},
}

@article{madaan2024quantifying,
  title={Quantifying variance in evaluation benchmarks},
  author={Madaan, Lovish and Singh, Aaditya K and Schaeffer, Rylan and Poulton, Andrew and Koyejo, Sanmi and Stenetorp, Pontus and Narang, Sharan and Hupkes, Dieuwke},
  journal={arXiv preprint arXiv:2406.10229},
  year={2024}
}

@article{biderman2024lessons,
  title={Lessons from the trenches on reproducible evaluation of language models},
  author={Biderman, Stella and Schoelkopf, Hailey and Sutawika, Lintang and Gao, Leo and Tow, Jonathan and Abbasi, Baber and Aji, Alham Fikri and Ammanamanchi, Pawan Sasanka and Black, Sidney and Clive, Jordan and others},
  journal={arXiv preprint arXiv:2405.14782},
  year={2024}
}

@article{ye2024benchmarking,
  title={Benchmarking llms via uncertainty quantification},
  author={Ye, Fanghua and Yang, Mingming and Pang, Jianhui and Wang, Longyue and Wong, Derek F and Yilmaz, Emine and Shi, Shuming and Tu, Zhaopeng},
  journal={Advances in Neural Information Processing Systems},
  volume={37},
  pages={15356--15385},
  year={2024}
}

@misc{qwen3.5,
    title  = {{Qwen3.5}: Towards Native Multimodal Agents},
    author = {{Qwen Team}},
    month  = {February},
    year   = {2026},
    url    = {https://qwen.ai/blog?id=qwen3.5}
}

@article{li2026low,
  title={Low Rank for Rank: Uncertainty-Aware Task-Specific LLM Ranking under Sparse Pairwise Comparisons},
  author={Li, Jiachun and Simchi-Levi, David and Sun, Will Wei},
  journal={arXiv preprint arXiv:2605.29395},
  year={2026}
}

@article{abdin2024phi,
  title={Phi-4 technical report},
  author={Abdin, Marah and Aneja, Jyoti and Behl, Harkirat and Bubeck, S{\'e}bastien and Eldan, Ronen and Gunasekar, Suriya and Harrison, Michael and Hewett, Russell J and Javaheripi, Mojan and Kauffmann, Piero and others},
  journal={arXiv preprint arXiv:2412.08905},
  year={2024}
}

@misc{jiang2023mistral7b,
      title={Mistral 7B}, 
      author={Albert Q. Jiang and Alexandre Sablayrolles and Arthur Mensch and Chris Bamford and Devendra Singh Chaplot and Diego de las Casas and Florian Bressand and Gianna Lengyel and Guillaume Lample and Lucile Saulnier and Lélio Renard Lavaud and Marie-Anne Lachaux and Pierre Stock and Teven Le Scao and Thibaut Lavril and Thomas Wang and Timothée Lacroix and William El Sayed},
      year={2023},
      eprint={2310.06825},
      archivePrefix={arXiv},
      primaryClass={cs.CL},
      url={https://arxiv.org/abs/2310.06825}, 
}

@article{xiaomi2025mimo,
  title={MiMo: Unlocking the Reasoning Potential of Language Model--From Pretraining to Posttraining},
  author={Xiaomi, LLM and Xia, Bingquan and Shen, Bowen and Zhu, Dawei and Zhang, Di and Wang, Gang and Zhang, Hailin and Liu, Huaqiu and Xiao, Jiebao and Dong, Jinhao and others},
  journal={arXiv preprint arXiv:2505.07608},
  year={2025}
}

@article{grattafiori2024llama,
  title={The llama 3 herd of models},
  author={Grattafiori, Aaron and Dubey, Abhimanyu and Jauhri, Abhinav and Pandey, Abhinav and Kadian, Abhishek and Al-Dahle, Ahmad and Letman, Aiesha and Mathur, Akhil and Schelten, Alan and Vaughan, Alex and others},
  journal={arXiv preprint arXiv:2407.21783},
  year={2024}
}

@article{bai2025intern,
  title={Intern-s1: A scientific multimodal foundation model},
  author={Bai, Lei and Cai, Zhongrui and Cao, Yuhang and Cao, Maosong and Cao, Weihan and Chen, Chiyu and Chen, Haojiong and Chen, Kai and Chen, Pengcheng and Chen, Ying and others},
  journal={arXiv preprint arXiv:2508.15763},
  year={2025}
}

@misc{padhi2024graniteguardian,
      title={Granite Guardian}, 
      author={Inkit Padhi and Manish Nagireddy and Giandomenico Cornacchia and Subhajit Chaudhury and Tejaswini Pedapati and Pierre Dognin and Keerthiram Murugesan and Erik Miehling and Martín Santillán Cooper and Kieran Fraser and Giulio Zizzo and Muhammad Zaid Hameed and Mark Purcell and Michael Desmond and Qian Pan and Zahra Ashktorab and Inge Vejsbjerg and Elizabeth M. Daly and Michael Hind and Werner Geyer and Ambrish Rawat and Kush R. Varshney and Prasanna Sattigeri},
      year={2024},
      eprint={2412.07724},
      archivePrefix={arXiv},
      primaryClass={cs.CL},
      url={https://arxiv.org/abs/2412.07724}, 
}

@article{team2026gemma,
  title={Gemma 4 technical report},
  author={Team, Gemma and Abd, Sherif El and Aggarwal, Vaibhav and Algayres, Robin and Andreev, Alek and Bachem, Olivier and Ballantyne, Ian and Brick, Cormac and C{\u{a}}rbune, Victor and Casbon, Michelle and others},
  journal={arXiv preprint arXiv:2607.02770},
  year={2026}
}

@article{chaabane2026falcon,
  title={Falcon-H1R: Pushing the reasoning frontiers with a hybrid model for efficient test-time scaling},
  author={Chaabane, Iheb and Khanna, Puneesh and Mohmad, Suhail and Frikha, Slim and Hu, Shi and Abubaker, Abdalgader and Alami, Reda and Lubinets, Mikhail and Seddik, Mohamed El Amine and Hacid, Hakim and others},
  journal={arXiv preprint arXiv:2601.02346},
  year={2026}
}

@article{guo2025deepseek,
  title={Deepseek-r1: Incentivizing reasoning capability in llms via reinforcement learning},
  author={Guo, Daya and Yang, Dejian and Zhang, Haowei and Song, Junxiao and Wang, Peiyi and Zhu, Qihao and Xu, Runxin and Zhang, Ruoyu and Ma, Shirong and Bi, Xiao and others},
  journal={arXiv preprint arXiv:2501.12948},
  year={2025}
}

@article{zhou2026celeus,
  title={CELEUS: Certifiable and Efficient LLM Evaluation via E-Processes},
  author={Zhou, Zhijian and Ye, Zesheng and Chen, Zhaorun and Li, Bo and Liu, Feng},
  journal={arXiv preprint arXiv:2606.20820},
  year={2026}
}

@article{menendez2026prompt,
  title={Prompt-Dependent Ranking of Large Language Models with Uncertainty Quantification},
  author={Menendez, Angel Rodrigo Avelar and Liu, Yufeng and Dai, Xiaowu},
  journal={arXiv preprint arXiv:2603.03336},
  year={2026}
}

@inproceedings{huang2026dropping,
  title={Dropping just a handful of preferences can change top large language model rankings},
  author={Huang, Jenny and Shen, Yunyi and Wei, Dennis and Broderick, Tamara},
  booktitle={International Conference on Learning Representations},
  volume={2026},
  pages={90685--90721},
  year={2026}
}

@article{wei2024diff,
  title={Diff-erank: A novel rank-based metric for evaluating large language models},
  author={Wei, Lai and Tan, Zhiquan and Li, Chenghai and Wang, Jindong and Huang, Weiran},
  journal={Advances in Neural Information Processing Systems},
  volume={37},
  pages={39501--39521},
  year={2024}
}

@article{du2026stability,
  title={On the Stability of Prompt Ranking in Large Language Model Evaluation},
  author={Du, Shaoshuai and Liang, Penghao and Shen, Yixian and Shi, Chuanqi and Zhang, Hang and Wang, Lun},
  journal={arXiv preprint arXiv:2606.24381},
  year={2026}
}

@article{chatzi2024predictionpowered,
  title={Prediction-powered ranking of large language models},
  author={Chatzi, Ivi and Straitouri, Eleni and Thejaswi, Suhas and Rodriguez, Manuel G},
  journal={Advances in Neural Information Processing Systems},
  pages={113096--113133},
  year={2024}
}

@article{wang2025rankings,
  title   = {From Rankings to Insights: Evaluation Should Shift Focus from Leaderboard to Feedback},
  author  = {Wang, Zongqi and Gu, Tianle and Gong, Chen and Tian, Xin and Bao, Siqi and Yang, Yujiu},
  journal = {arXiv preprint arXiv:2505.06698},
  year    = {2025}
}

@article{gonzalez2025repetitions,
  title={Do repetitions matter? Strengthening reliability in LLM evaluations},
  author={Gonzalez, Miguel Angel Alvarado and Hernandez, Michelle Bruno and Perez, Miguel Angel Pe{\~n}aloza and Orozco, Bruno Lopez and Soto, Jesus Tadeo Cruz and Malagon, Sandra},
  journal={arXiv preprint arXiv:2509.24086},
  year={2025}
}

@article{li2026llm,
  title={LLM Evaluation as Tensor Completion: Low Rank Structure and Semiparametric Efficiency},
  author={Li, Jiachun and Simchi-Levi, David and Sun, Will Wei},
  journal={arXiv preprint arXiv:2604.05460},
  year={2026}
}

@misc{openai2026gpt56luna,
  author       = {{OpenAI}},
  title        = {{GPT-5.6 Luna Model}},
  year         = {2026},
  howpublished = {\url{https://developers.openai.com/api/docs/models/gpt-5.6-luna}},
  note         = {Accessed: 2026-09-09}
}

@misc{qwen3.8flashnext,
    title  = {{Qwen3.8-Flash-Next}: A New Architecture, Towards Ultimate Cost-Efficiency},
    author = {{Qwen Team}},
    month  = {August},
    year   = {2026},
    url    = {https://qwen.ai/blog?id=qwen3.8-flash-next}
}

@misc{glm5team2026glm5vibecodingagentic,
      title={GLM-5: from Vibe Coding to Agentic Engineering},
      author={GLM-5-Team},
      year={2026},
      eprint={2602.15763},
      archivePrefix={arXiv},
      primaryClass={cs.LG},
      url={https://arxiv.org/abs/2602.15763},
}

@article{xu2026deepseek,
  title={Deepseek-v4: Towards highly efficient million-token context intelligence},
  author={Xu, Anyi and Lin, Bangcai and Xue, Bing and Wang, Bingxuan and Xu, Bingzheng and Wu, Bochao and Zhang, Bowei and Lin, Chaofan and Dong, Chen and Ling, Chenchen and others},
  journal={arXiv preprint arXiv:2606.19348},
  year={2026}
}

@article{vashurin2025benchmarking,
  title={Benchmarking uncertainty quantification methods for large language models with lm-polygraph},
  author={Vashurin, Roman and Fadeeva, Ekaterina and Vazhentsev, Artem and Rvanova, Lyudmila and Vasilev, Daniil and Tsvigun, Akim and Petrakov, Sergey and Xing, Rui and Sadallah, Abdelrahman and Grishchenkov, Kirill and others},
  journal={Transactions of the Association for Computational Linguistics},
  volume={13},
  pages={220--248},
  year={2025},
  publisher={MIT Press 255 Main Street, 9th Floor, Cambridge, Massachusetts 02142, USA~…}
}

@article{fan2025ranking,
  title={Ranking inferences based on the top choice of multiway comparisons},
  author={Fan, Jianqing and Lou, Zhipeng and Wang, Weichen and Yu, Mengxin},
  journal={Journal of the American Statistical Association},
  volume={120},
  number={549},
  pages={237--250},
  year={2025},
  publisher={Taylor \& Francis}
}

@article{lai2026minimax,
  title={Minimax sparse attention},
  author={Lai, Xunhao and Xu, Weiqi and Yang, Yufeng and Chen, Qiaorui and Xu, Yang and Zeng, Lunbin and Li, Xiaolong and Sun, Haohai and Zhu, Haichao and Zhang, Vito and others},
  journal={arXiv preprint arXiv:2606.13392},
  year={2026}
}

@article{zheng2023judging,
  title={Judging llm-as-a-judge with mt-bench and chatbot arena},
  author={Zheng, Lianmin and Chiang, Wei-Lin and Sheng, Ying and Zhuang, Siyuan and Wu, Zhanghao and Zhuang, Yonghao and Lin, Zi and Li, Zhuohan and Li, Dacheng and Xing, Eric and others},
  journal={Advances in Neural Information Processing Systems},
  volume={36},
  pages={46595--46623},
  year={2023}
}

@misc{eval-harness,
  author       = {Gao, Leo and Tow, Jonathan and Abbasi, Baber and Biderman, Stella and Black, Sid and DiPofi, Anthony and Foster, Charles and Golding, Laurence and Hsu, Jeffrey and Le Noac'h, Alain and Li, Haonan and McDonell, Kyle and Muennighoff, Niklas and Ociepa, Chris and Phang, Jason and Reynolds, Laria and Schoelkopf, Hailey and Skowron, Aviya and Sutawika, Lintang and Tang, Eric and Thite, Anish and Wang, Ben and Wang, Kevin and Zou, Andy},
  title        = {The Language Model Evaluation Harness},
  month        = 07,
  year         = 2024,
  publisher    = {Zenodo},
  version      = {v0.4.3},
  doi          = {10.5281/zenodo.12608602},
  url          = {https://zenodo.org/records/12608602}
}

@inproceedings{
gu2026anytimevalid,
title={Anytime-Valid Inference for Online Ranking of Large Language Models},
author={Runzhe Gu and Wenguang Sun and Bowen Gang and Xintao Xia},
booktitle={Forty-third International Conference on Machine Learning},
year={2026},
}

@article{hsu2026efficient,
  title={Efficient Sequential Evaluation of Large Language Models},
  author={Hsu, Chia-Yu and Shekhar, Shubhanshu},
  journal={arXiv preprint arXiv:2607.17409},
  year={2026}
}

@inproceedings{zhou2026lost,
  title={Lost in benchmarks? rethinking large language model benchmarking with item response theory},
  author={Zhou, Hongli and Huang, Hui and Zhao, Ziqing and Han, Lvyuan and Wang, Huicheng and Chen, Kehai and Yang, Muyun and Bao, Wei and Dong, Jian and Xu, Bing and others},
  booktitle={Proceedings of the AAAI Conference on Artificial Intelligence},
  volume={40},
  number={41},
  pages={35085--35093},
  year={2026}
}

@article{smith2024estimating,
    author = {Waudby-Smith, Ian and Ramdas, Aaditya},
    title = {Estimating means of bounded random variables by betting},
    journal = {Journal of the Royal Statistical Society Series B: Statistical Methodology},
    volume = {86},
    number = {1},
    pages = {1-27},
    year = {2024},
}

@article{neuhof2026rank,
  title={Rank Intervals for Leaderboards: A Hierarchical Framework for Model Evaluation},
  author={Neuhof, Bitya and Benjamini, Yuval},
  journal={arXiv preprint arXiv:2606.08679},
  year={2026}
}

@article{cobbe2021gsm8k,
  title={Training Verifiers to Solve Math Word Problems},
  author={Cobbe, Karl and Kosaraju, Vineet and Bavarian, Mohammad and Chen, Mark and Jun, Heewoo and Kaiser, Lukasz and Plappert, Matthias and Tworek, Jerry and Hilton, Jacob and Nakano, Reiichiro and Hesse, Christopher and Schulman, John},
  journal={arXiv preprint arXiv:2110.14168},
  year={2021}
}

@inproceedings{nakash2026efficient,
  title={Efficient Agent Evaluation via Diversity-Guided User Simulation},
  author={Nakash, Itay and Kour, George and Tavor, Ateret Anaby},
  booktitle={Proceedings of the 64th Annual Meeting of the Association for Computational Linguistics (ACL 2026)},
  pages={1627--1648},
  year={2026}
}

@inproceedings{chen2024humans,
  title={Humans or LLMs as the judge? a study on judgement bias},
  author={Chen, Guiming Hardy and Chen, Shunian and Liu, Ziche and Jiang, Feng and Wang, Benyou},
  booktitle={Proceedings of the 2024 Conference on Empirical Methods in Natural Language Processing},
  pages={8301--8327},
  year={2024}
}

@article{wang2025cer,
  title={Cer-eval: Certifiable and cost-efficient evaluation framework for llms},
  author={Wang, Ganghua and Chen, Zhaorun and Li, Bo and Xu, Haifeng},
  journal={arXiv preprint arXiv:2505.03814},
  year={2025}
}

@article{haghtalab2026pluralistic,
  title={Pluralistic Leaderboards},
  author={Haghtalab, Nika and Procaccia, Ariel D and Shao, Han and Wang, Serena Lutong and Yang, Kunhe},
  journal={arXiv preprint arXiv:2606.02547},
  year={2026}
}

@inproceedings{divekar2026precise,
  title={PRECISE: Reducing the bias of LLM evaluations using prediction-powered ranking estimation},
  author={Divekar, Abhishek and Majumder, Anirban},
  booktitle={Proceedings of the AAAI Conference on Artificial Intelligence},
  pages={39929--39938},
  year={2026}
}

@article{singh2026leaderboard,
  title={The leaderboard illusion},
  author={Singh, Shivalika and Nan, Yiyang and Wang, Alex and Dsouza, Daniel and Kapoor, Sayash and {\"U}st{\"u}n, Ahmet and Koyejo, Sanmi and Deng, Yuntian and Longpre, Shayne and Smith, Noah and others},
  journal={Advances in Neural Information Processing Systems},
  volume={38},
  year={2026}
}

@article{ramdas2023game,
  title={Game-theoretic statistics and safe anytime-valid inference},
  author={Ramdas, Aaditya and Gr{\"u}nwald, Peter and Vovk, Vladimir and Shafer, Glenn},
  journal={Statistical Science},
  volume={38},
  number={4},
  pages={576--601},
  year={2023},
  publisher={Institute of Mathematical Statistics}
}

@article{xu2024active,
  title={Active, anytime-valid risk controlling prediction sets},
  author={Xu, Ziyu and Karampatziakis, Nikos and Mineiro, Paul},
  journal={Advances in Neural Information Processing Systems},
  volume={37},
  pages={60110--60132},
  year={2024}
}

@inproceedings{hariri2026don,
  title={Don’t pass@ k: A bayesian framework for large language model evaluation},
  author={Hariri, Mohsen and Samandar, Amirhossein and Hinczewski, Michael and Chaudhary, Vipin},
  booktitle={International Conference on Learning Representations},
  volume={2026},
  pages={148539--148579},
  year={2026}
}

@article{chiang2024chatbot,
  title={Chatbot arena: An open platform for evaluating llms by human preference},
  author={Chiang, Wei-Lin and Zheng, Lianmin and Sheng, Ying and Angelopoulos, Anastasios Nikolas and Li, Tianle and Li, Dacheng and Zhang, Hao and Zhu, Banghua and Jordan, Michael and Gonzalez, Joseph E and others},
  journal={arXiv preprint arXiv:2403.04132},
  year={2024}
}

@article{wang2024mmlu,
  title={Mmlu-pro: A more robust and challenging multi-task language understanding benchmark},
  author={Wang, Yubo and Ma, Xueguang and Zhang, Ge and Ni, Yuansheng and Chandra, Abhranil and Guo, Shiguang and Ren, Weiming and Arulraj, Aaran and He, Xuan and Jiang, Ziyan and others},
  journal={Advances in Neural Information Processing Systems},
  volume={37},
  pages={95266--95290},
  year={2024}
}

@article{hartoglei2025,
  title={Family-wise error rate control with e-values},
  author={Hartog, Will and Lei, Lihua},
  journal={arXiv preprint arXiv:2501.09015},
  year={2025}
}

@article{howard2020,
  author = {Howard, Steven R. and Ramdas, Aaditya and McAuliffe, Jon and Sekhon, Jasjeet},
  title = {Time-uniform {Chernoff} bounds via nonnegative supermartingales},
  journal = {Probability Surveys},
  volume = {17},
  pages = {257--317},
  year = {2020},
  doi = {10.1214/18-PS321}
}

@article{fan2015exponential,
  author = {Fan, Xiequan and Grama, Ion and Liu, Quansheng},
  title = {Exponential inequalities for martingales with applications},
  journal = {Electronic Journal of Probability},
  volume = {20},
  number = {1},
  pages = {1--22},
  year = {2015},
  doi = {10.1214/EJP.v20-3496}
}

@article{hoeffding1963,
  author = {Hoeffding, Wassily},
  title = {Probability inequalities for sums of bounded random variables},
  journal = {Journal of the American Statistical Association},
  volume = {58},
  number = {301},
  pages = {13--30},
  year = {1963},
  doi = {10.1080/01621459.1963.10500830}
}
\endgroup
\bibliographystyle{iclr2027_conference}

\clearpage
\appendix

\section*{Appendix}



\section{Anytime-Valid Inference for Top-$k$ Certification}
\label{sec:pilot-topk}
When the number of models is large, we may care only about identifying the Top-$k$ models, motivating a more efficient procedure that tests only the relevant cross-set comparisons.
Specifically, write \(a\rightsquigarrow_r b\) if model \(b\) is reachable from model \(a\) in the current confidence graph \(\mathcal G_r=(\mathcal V,\mathcal E_r)\).
A set $\mathcal{T}\subseteq\cV$ with $|\mathcal{T}|=k$ is certified as a Top-$k$ set at replicate $r$ if
\[
a\leadsto_r b\qquad\text{for every }a\in\mathcal{T}\text{ and }b\in\cV\setminus\mathcal{T}.
\]
On the no-false-edge event in (\ref{eq:anytime-fwer-target}), every path from $a$ to $b$ has length $d\ge1$ and satisfies $\theta_a^{\cB}-\theta_b^{\cB}>d\tau\ge\tau$. Hence Theorem~\ref{thm:bedge-main-iclr} implies that, with probability at least $1-\alpha$, every Top-$k$ certificate issued from the full graph by replicate $R$ satisfies $\min_{a\in\mathcal T}\theta_a^{\cB}>\max_{b\notin\mathcal T}\theta_b^{\cB}+\tau$. This conclusion permits searching the certified graph for a set; it does not permit selecting a new testing family from the same confirmatory data.
No ordering is asserted among the models within $\mathcal{T}$ or within $\cV\setminus\mathcal{T}$; the certificate concerns only the between-set comparison.

Our procedure consists of three steps: selecting a candidate set using independent pilot replicates, testing the corresponding cross-set directions, and certifying the candidate set using confirmatory evidence.
Fix a pilot sample size $R_0\ge1$. For pilot run $s$, let
$Z_s^{(0)}=(S_{i,s}^{(\ell,0)}:1\le i\le N,\ 1\le\ell\le L)$
contain the scores of all models on all benchmark items, and write
$\mathcal D_0=(Z_1^{(0)},\ldots,Z_{R_0}^{(0)})$.
These pilot data are independent of the subsequent confirmatory runs
$(Z_1,\ldots,Z_R)$ conditional on $(\cB,\pi)$.
The pre-inference information in Assumption~\ref{ass:design-iclr} comprises
$\mathcal D_0$ and all design choices fixed before confirmatory sampling.

\paragraph{Step 1: Candidate selection.}
Compute the pilot mean of model $\ell$ as
\begin{equation}
\widehat\theta_\ell^{(0)}
=\frac1{NR_0}\sum_{s=1}^{R_0}\sum_{i=1}^N S_{i,s}^{(\ell,0)}.
\label{eq:pilot-score}
\end{equation}
Order the models by decreasing $\widehat\theta_\ell^{(0)}$, breaking
equal pilot means by increasing model index in $\cV=\{1,\ldots,L\}$.
Let $\mathcal T_0$ contain the first $k$ models in this order.
The model indices are fixed before the pilot, and $\mathcal T_0$ remains
fixed throughout the confirmatory phase. Pilot scores determine this
candidate set but are not included as observations in the products
in (\ref{eq:block-mixture}), which start from one at confirmatory run zero.

\paragraph{Step 2: Cross-set testing.} 
During the confirmatory evaluation, we test whether each model in $\mathcal{T}_0$ outperforms every model outside $\mathcal{T}_0$ by more than the prespecified margin $\tau$. Accordingly, we restrict the directional hypothesis family to
\[
\cH(\mathcal{T}_0)=\left\{H_e^{(\tau)}\;\middle|\;e=(a\to b),\ a\in\mathcal{T}_0,\ b\in\cV\setminus\mathcal{T}_0\right\},
\]
which contains the $k(L-k)$ cross-set directions required to certify $\mathcal{T}_0$.

\paragraph{Step 3: Top-$k$ certification.}
After each completed replicate, we update $E_{e,r}^{\cC}$ for every $H_e^{(\tau)}\in\cH(\mathcal{T}_0)$ and apply direct $e$-Holm at level $\alpha$ to this family. 
We certify $\mathcal{T}_0$ as the Top-$k$ set once all hypotheses in $\cH(\mathcal{T}_0)$ are rejected; otherwise, the result remains unresolved.

The pilot determines only the targeted hypothesis family, while all certification evidence is obtained from the independent confirmatory replicates. 
Theorem \ref{thm:pilot-topk} shows that restricting inference to this pilot-selected family preserves anytime FWER control and yields a valid Top-$k$ certificate, whose proof is given in Appendix~\ref{app:proof-pilot-topk}.

\begin{theorem}
\label{thm:pilot-topk}
Under Assumption~\ref{ass:design-iclr}, the pilot-targeted procedure with
$1\le k<L$ satisfies
\begin{equation}
\Pp\!\left(
\exists r\in\{1,\ldots,R\},\ \exists(a\to b)\in\cE_r
\text{ with }\Delta_{a,b}^{\cB}\le\tau
\,\middle|\,\mathcal D_0,\cB,\pi
\right)\le\alpha
\quad\text{a.s.}
\label{eq:topk-anytime-fwer}
\end{equation}
With conditional probability at least $1-\alpha$ given
$(\mathcal D_0,\cB,\pi)$, every certificate issued by replicate $R$ therefore satisfies
\[
\min_{a\in\mathcal T_0}\theta_a^{\cB}
>\max_{b\in\cV\setminus\mathcal T_0}\theta_b^{\cB}+\tau.
\]
\end{theorem}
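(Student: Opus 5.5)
The plan is to condition on the pilot data and reduce everything to Theorem~\ref{thm:bedge-main-iclr} applied to a fixed, pilot-determined hypothesis family. Throughout, I would work under the conditional law $\Pp(\cdot\mid\mathcal D_0,\cB,\pi)$.

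\textbf{Step 1: the pilot does not disturb the confirmatory distribution.} The candidate set $\mathcal T_0$ is a deterministic function of $\mathcal D_0$; the tie-breaking rule is fixed before the pilot. Hence the family $\cH(\mathcal T_0)$ is $\mathcal Q$-measurable, with $\mathcal D_0\subseteq\mathcal Q$. By Assumption~\ref{ass:design-iclr}(3), $\mathcal Q$ is independent of $(Z_1,\ldots,Z_R)$ given $(\cB,\pi)$. It follows that, conditional on $(\mathcal D_0,\cB,\pi)$, the confirmatory panels keep their law: i.i.d.\ across runs, with independent blocks within each run. The filtration $\cF_r$ already contains $\mathcal Q$, so predictable estimates $\widehat Y_{e,m,r}$ may depend on the pilot without breaking measurability. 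Conditioning on $\mathcal D_0$ also fixes the parameters: $\Delta_{a,b}^{\cB}$ depends only on $(\cB,\pi)$, so the set of true nulls inside $\cH(\mathcal T_0)$ is fixed.

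\textbf{Step 2: supermartingale property and $e$-Holm on the restricted family.} Apply the supermartingale part of Theorem~\ref{thm:bedge-main-iclr} to each true null $e\in\cH(\mathcal T_0)$. Its proof is a conditional one-step computation given $\cF_{r-1}$, and it goes through verbatim under the regular conditional law given $\mathcal D_0$. The key ingredients are:
\begin{itemize}
\item the proportional stakes of Proposition~\ref{prop:stake-necessity-iclr}, which give nonpositive drift;
\item the empirical-Bernstein inequality $\exp\{\lambda x-\psi_E(\lambda)x^2\}\le1+\lambda x$, applied blockwise;
\item within-run block independence, which lets the conditional expectation of the product factor across blocks.
\end{itemize}
Together these give that $\{E^{\cC}_{e,r}\}_{r\le R}$ is a nonnegative supermartingale with $E^{\cC}_{e,0}=1$.

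Next I would invoke the anytime FWER guarantee of direct $e$-Holm \citep{hartoglei2025}. This is the same step used for \eqref{eq:main-anytime-fwer}, now applied to the finite, fixed family $\cH(\mathcal T_0)$ with arbitrary dependence among its members. The argument is a Ville-type bound. A false rejection at some $r$ forces $\sum_{e\in\mathcal N_0\cap\cH(\mathcal T_0)}E^{\cC}_{e,r}\ge 1/\alpha$. This sum of nonnegative supermartingales is itself a supermartingale with initial value at most $|\mathcal N_0\cap\cH(\mathcal T_0)|$; the direct $e$-Holm threshold $c_r$ is what sharpens this to level $\alpha$. I would reuse that argument exactly as it is given in the proof of Theorem~\ref{thm:bedge-main-iclr}. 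Doing so yields \eqref{eq:topk-anytime-fwer} almost surely in $\mathcal D_0$.

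\textbf{Step 3: the certificate.} On the complementary event, every rejected $e=(a\to b)\in\cH(\mathcal T_0)$ satisfies $\Delta_{a,b}^{\cB}>\tau$. A certificate is issued only when all $k(L-k)$ cross-set directions are rejected. In that case $\theta_a^{\cB}>\theta_b^{\cB}+\tau$ for every $a\in\mathcal T_0$ and $b\notin\mathcal T_0$, and taking the minimum over $a$ and the maximum over $b$ gives the claim.

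\textbf{Main obstacle.} The delicate point is Step 1. One must justify that conditioning on $\mathcal D_0$ preserves both the supermartingale property and the validity of $e$-Holm, and that a data-selected family is treated as fixed. I would handle this through the conditional independence in Assumption~\ref{ass:design-iclr}(3), together with regular conditional distributions. The key observation is that the null/non-null status of each member of $\cH(\mathcal T_0)$ is determined by $(\cB,\pi)$ alone, not by the selection.
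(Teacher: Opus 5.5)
Your proposal is correct and follows essentially the same route as the paper: the paper fixes the pilot-selected family and its true-null set by conditioning on the larger $\cF_0\supseteq\sigma(\mathcal D_0,\cB,\pi)$ and then applies the tower property, which sidesteps regular conditional laws. As in your sketch, Assumption~\ref{ass:design-iclr}(3) keeps the drift identity $\sum_m w_m\nu_{e,m,r}=(1+\Delta^{\cB}_e)/2$ intact, and the supermartingale, direct $e$-Holm/Ville, and min--max certificate steps are then rerun on $\cH(\mathcal T_0)$. One correction to your $e$-Holm step: a false rejection forces $\sum_{e}E^{\cC}_{e,r}\ge h_0/\alpha$ over the $h_0$ true nulls in $\cH(\mathcal T_0)$, i.e.\ their average crosses $1/\alpha$, not merely $\ge 1/\alpha$; this stronger inclusion, derived from the cutoff $c_r$, is what turns the initial value $h_0$ into level $\alpha$.
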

Theorem~\ref{thm:pilot-topk} controls the false Top-$k$ certification probability, denoted by $\operatorname{FCP}_r$ in Appendix~\ref{section:evaluation_metric}, for the fixed pilot candidate. The bound holds for almost every pilot outcome and after averaging over the pilot. An incorrect pilot candidate is allowed, but its probability of being certified by replicate $R$ is at most $\alpha$. This does not bound the error probability conditional on a certificate being issued. Equation~(\ref{eq:pilot-certificate-error}) and the calculation following it make the distinction explicit. Full-graph certification instead uses Theorem~\ref{thm:bedge-main-iclr} and the path implication above; it is not an application of the reduced-family pilot procedure.

Table~\ref{table:1} reports the Top-4 certification results under the i.i.d., heterogeneity, dependence, and combined settings.
Both BB-EDGE variants maintain estimated $\operatorname{FWER}_{R}=\operatorname{FCP}_{R}=0$ throughout. 
Although several baselines certify earlier, their FWER is substantially inflated; notably, SERPANT's $\operatorname{FWER}_{R}$ increases from $0.006$ under i.i.d.\ items to $0.574$--$0.896$ under heterogeneity or dependence.
Relative to the strongest baseline, full-graph BB-EDGE reduces the mean CR by $44.7\%$--$69.9\%$, while the pilot-targeted BB-EDGE reduces the mean confirmatory CR by $62.3\%$--$83.7\%$ by testing only the $k(L-k)=24$ cross-set directions.

\begin{table*}[htbp]
    \centering
    \small
    \setlength{\tabcolsep}{3.5pt}
    \resizebox{\textwidth}{!}{
    \begin{tabular}{ccccccccc}
        \toprule
        & \multicolumn{4}{c}{$|C_m|=1, h_{i,\ell}=0$}
        & \multicolumn{4}{c}{$|C_m|=5,h_{i,\ell}=0$} \\
        \cmidrule(lr){2-5}
        \cmidrule(lr){6-9}
        Method
        & $\operatorname{FWER}_{R}\downarrow$
        & $\operatorname{FCP}_{R}\downarrow$
        & $\operatorname{CertProb}_{R}\uparrow$
        & $\operatorname{CR}\downarrow$
        & $\operatorname{FWER}_{R}\downarrow$
        & $\operatorname{FCP}_{R}\downarrow$
        & $\operatorname{CertProb}_{R}\uparrow$
        & $\operatorname{CR}\downarrow$ \\
        \midrule
        EMR
        & $1.000_{\pm0.000}$ & $0.096_{\pm0.295}$
        & $1.000_{\pm0.000}$ & $1.042_{\pm0.211}$
        & $1.000_{\pm0.000}$ & $0.532_{\pm0.499}$
        & $1.000_{\pm0.000}$ & $1.052_{\pm0.240}$ \\

        CBTL
        & $0.796_{\pm0.403}$ & $0.006_{\pm0.077}$
        & $1.000_{\pm0.000}$ & $8.222_{\pm5.473}$
        & $0.850_{\pm0.357}$ & $0.000_{\pm0.000}$
        & $1.000_{\pm0.000}$ & $38.152_{\pm14.999}$ \\

        $t$-Holm
        & $0.248_{\pm0.432}$ & $0.000_{\pm0.000}$
        & $1.000_{\pm0.000}$ & $9.764_{\pm2.068}$
        & $0.236_{\pm0.425}$ & $0.000_{\pm0.000}$
        & $1.000_{\pm0.000}$ & $24.176_{\pm7.386}$ \\

        SERPANT
        & $0.006_{\pm0.077}$ & $0.000_{\pm0.000}$
        & $1.000_{\pm0.000}$ & $\underline{18.708_{\pm5.294}}$
        & $0.386_{\pm0.487}$ & $0.028_{\pm0.165}$
        & $1.000_{\pm0.000}$ & $18.802_{\pm9.705}$ \\

        BEB-Holm
        & $0.000_{\pm0.000}$ & $0.000_{\pm0.000}$
        & $1.000_{\pm0.000}$ & $56.006_{\pm3.841}$
        & $0.000_{\pm0.000}$ & $0.000_{\pm0.000}$
        & $0.888_{\pm0.316}$ & $84.968_{\pm9.812}$ \\

        BFH-$e$-Holm
        & $0.000_{\pm0.000}$ & $0.000_{\pm0.000}$
        & $0.532_{\pm0.499}$ & $96.562_{\pm4.842}$
        & $0.000_{\pm0.000}$ & $0.000_{\pm0.000}$
        & $0.266_{\pm0.442}$ & $97.426_{\pm5.730}$ \\
        CR-EDGE
& $0.000_{\pm0.000}$ & $0.000_{\pm0.000}$
& $1.000_{\pm0.000}$ & $75.050_{\pm3.783}$
& $0.000_{\pm0.000}$ & $0.000_{\pm0.000}$
& $0.944_{\pm0.230}$ & $83.868_{\pm8.562}$ \\
        
        \hline

        \textbf{BB-EDGE}
        & $0.000_{\pm0.000}$ & $0.000_{\pm0.000}$
        & $1.000_{\pm0.000}$ & $23.780_{\pm2.752}$
        & $0.000_{\pm0.000}$ & $0.000_{\pm0.000}$
        & $1.000_{\pm0.000}$ & $\underline{46.966_{\pm10.143}}$ \\

        \textbf{+Pilot}
        & $0.000_{\pm0.000}$ & $0.000_{\pm0.000}$
        & $1.000_{\pm0.000}$ & $\mathbf{12.252_{\pm2.024}}$
        & $0.000_{\pm0.000}$ & $0.000_{\pm0.000}$
        & $0.920_{\pm0.272}$ & $\mathbf{32.010_{\pm21.491}}$ \\

\midrule
        & \multicolumn{4}{c}{$|C_m|=1, h_{i,\ell}=0.2$}
        & \multicolumn{4}{c}{$|C_m|=5,h_{i,\ell}=0.2$} \\
                \cmidrule(lr){2-5}
        \cmidrule(lr){6-9}
        Method
        & $\operatorname{FWER}_{R}\downarrow$
        & $\operatorname{FCP}_{R}\downarrow$
        & $\operatorname{CertProb}_{R}\uparrow$
        & $\operatorname{CR}\downarrow$
        & $\operatorname{FWER}_{R}\downarrow$
        & $\operatorname{FCP}_{R}\downarrow$
        & $\operatorname{CertProb}_{R}\uparrow$
        & $\operatorname{CR}\downarrow$ \\
        \midrule
        EMR
        & $1.000_{\pm0.000}$ & $0.042_{\pm0.201}$
        & $1.000_{\pm0.000}$ & $1.030_{\pm0.171}$
        & $1.000_{\pm0.000}$ & $0.516_{\pm0.500}$
        & $1.000_{\pm0.000}$ & $1.078_{\pm0.276}$ \\

        CBTL
        & $0.010_{\pm0.100}$ & $0.000_{\pm0.000}$
        & $0.000_{\pm0.000}$ & $100.000_{\pm0.000}$
        & $0.196_{\pm0.397}$ & $0.000_{\pm0.000}$
        & $0.000_{\pm0.000}$ & $100.000_{\pm0.000}$ \\

        $t$-Holm
        & $0.302_{\pm0.460}$ & $0.000_{\pm0.000}$
        & $1.000_{\pm0.000}$ & $8.996_{\pm1.674}$
        & $0.212_{\pm0.409}$ & $0.000_{\pm0.000}$
        & $1.000_{\pm0.000}$ & $20.444_{\pm6.027}$ \\

        SERPANT
        & $0.574_{\pm0.495}$ & $0.028_{\pm0.165}$
        & $0.986_{\pm0.118}$ & $18.554_{\pm11.314}$
        & $0.896_{\pm0.306}$ & $0.174_{\pm0.379}$
        & $0.932_{\pm0.252}$ & $20.978_{\pm23.038}$ \\

        BEB-Holm
        & $0.000_{\pm0.000}$ & $0.000_{\pm0.000}$
        & $1.000_{\pm0.000}$ & $79.880_{\pm4.743}$
        & $0.000_{\pm0.000}$ & $0.000_{\pm0.000}$
        & $0.552_{\pm0.498}$ & $95.032_{\pm6.652}$ \\

        BFH-$e$-Holm
        & $0.000_{\pm0.000}$ & $0.000_{\pm0.000}$
        & $0.578_{\pm0.494}$ & $96.524_{\pm4.410}$
        & $0.000_{\pm0.000}$ & $0.000_{\pm0.000}$
        & $0.310_{\pm0.463}$ & $97.110_{\pm5.787}$ \\ 
        
CR-EDGE
& $0.000_{\pm0.000}$ & $0.000_{\pm0.000}$
& $1.000_{\pm0.000}$ & $74.624_{\pm3.350}$
& $0.000_{\pm0.000}$ & $0.000_{\pm0.000}$
& $0.980_{\pm0.140}$ & $82.146_{\pm7.666}$ \\
        \hline

        \textbf{BB-EDGE}
        & $0.000_{\pm0.000}$ & $0.000_{\pm0.000}$
        & $1.000_{\pm0.000}$ & $\underline{24.072_{\pm2.487}}$
        & $0.000_{\pm0.000}$ & $0.000_{\pm0.000}$
        & $1.000_{\pm0.000}$ & $\underline{42.464_{\pm8.497}}$ \\

        \textbf{+Pilot}
        & $0.000_{\pm0.000}$ & $0.000_{\pm0.000}$
        & $1.000_{\pm0.000}$ & $\mathbf{13.010_{\pm1.742}}$
        & $0.000_{\pm0.000}$ & $0.000_{\pm0.000}$
        & $0.952_{\pm0.214}$ & $\mathbf{26.788_{\pm17.537}}$ \\

        \bottomrule
    \end{tabular}
    }
    \caption{
Monte Carlo means and standard deviations of
$\operatorname{FWER}_{R}$, $\operatorname{FCP}_{R}$,
$\operatorname{CertProb}_{R}$, and the censored certification replicate
$\operatorname{CR}$ for Top-4 certification.
Formal definitions of the four metrics are defined in Appendix~\ref{section:evaluation_metric}.
All experiments use $\tau=0$, $R=100$, $N=100$, and $L=10$.
A repetition without certification by $R$ is assigned $\operatorname{CR}=R$ ($R+R_0$ for +Pilot).
Among methods with $\operatorname{FWER}_{R}\leq0.05$, the smallest and second-smallest mean $\operatorname{CR}$ in each setting are shown in bold and underlined, respectively.
}
    \label{table:1}
\end{table*}

\section{Anytime-Valid Inference for Rank Intervals}
\label{section_rank_interval}
Rank intervals quantify ranking uncertainty and avoid reporting a potentially misleading unique ranking when performance differences are not statistically significant.
At each completed replicate \(r\), BB-EDGE applies direct \(e\)-Holm to the pairwise \(e\)-processes to construct the current confidence graph \(\mathcal G_r=(\mathcal V,\mathcal E_r)\).
In the transitive closure of $\mathcal{G}_r$, let $\mathcal{A}_{\ell,r}$ and $\mathcal{D}_{\ell,r}$ denote the strict ancestors and descendants of model $\ell$, respectively.
The rank interval of model $\ell$ is defined as
\begin{equation}
\mathcal{I}_{\ell,r}=\left[1+\left|\mathcal{A}_{\ell,r}\right|,\;L-\left|\mathcal{D}_{\ell,r}\right|\right].
\label{eq:graph-rank-interval}
\end{equation}
Its lower endpoint counts the models certified to outperform $\ell$, while its upper endpoint excludes those certified to be outperformed by $\ell$.

\begin{corollary}
\label{cor:anytime_rank_intervals}
Define the true benchmark rank of model $\ell$ as $\operatorname{rank}_{B}(\ell)=1+\sum_{j\neq\ell}\mathbbm{1}\!\left\{\theta_j^B>\theta_\ell^B\right\}$ where rank $1$ corresponds to the highest benchmark mean.
Suppose $\tau\geq 0$ and the conditions of Theorem~\ref{thm:bedge-main-iclr} hold.
Then the rank intervals in~(\ref{eq:graph-rank-interval}) satisfy
\[
\mathbb{P}\!\left(\forall\,1\le r\le R,\ \forall \ell\in\mathcal{V}:\operatorname{rank}_{B}(\ell)\in\mathcal{I}_{\ell,r}\,\middle|\,B,\pi\right)\geq 1-\alpha.
\]
\end{corollary}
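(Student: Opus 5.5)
The plan is to reduce the corollary to the no-false-edge event of Theorem~\ref{thm:bedge-main-iclr} and then argue deterministically on that event. Let
\[
\Omega_0=\bigl\{\forall r\le R,\ \forall (a\to b)\in\mathcal E_r:\ \Delta^{\cB}_{a,b}>\tau\bigr\}.
\]
By (\ref{eq:main-anytime-fwer}), $\Pp(\Omega_0\mid\cB,\pi)\ge 1-\alpha$. It then suffices to show that on $\Omega_0$, for every $r\le R$ and every $\ell$, we have $\operatorname{rank}_B(\ell)\in\mathcal I_{\ell,r}$. No further probabilistic argument is needed, because the inclusion will hold simultaneously for all $r$ and $\ell$ on a single event.

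The first step is path transfer. On $\Omega_0$, every edge $a\to b$ in $\mathcal E_r$ satisfies $\theta_a^{\cB}-\theta_b^{\cB}>\tau\ge 0$. If $j\in\mathcal A_{\ell,r}$, there is a directed path $j=v_0\to\cdots\to v_d=\ell$ with $d\ge1$. Summing the edge inequalities along this path gives $\theta_j^{\cB}>\theta_\ell^{\cB}+d\tau\ge\theta_\ell^{\cB}$. The inequality is strict because each edge inequality is strict; this is where $\tau\ge0$ is used, so that strictness survives even when $\tau=0$. By the same argument, every $j\in\mathcal D_{\ell,r}$ satisfies $\theta_j^{\cB}<\theta_\ell^{\cB}$. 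In particular $\ell\notin\mathcal A_{\ell,r}\cup\mathcal D_{\ell,r}$, and $\mathcal A_{\ell,r}\cap\mathcal D_{\ell,r}=\emptyset$, which is consistent with the acyclicity statement of the theorem.

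The second step is counting. Since $\mathcal A_{\ell,r}\subseteq\{j\ne\ell:\theta_j^B>\theta_\ell^B\}$, we get $\operatorname{rank}_B(\ell)\ge 1+|\mathcal A_{\ell,r}|$. For the upper endpoint, $\mathcal D_{\ell,r}\subseteq\{j\ne\ell:\theta_j^B<\theta_\ell^B\}$, and this set is disjoint from $\{j:\theta_j^B>\theta_\ell^B\}$. Both sets lie in $\mathcal V\setminus\{\ell\}$, which has $L-1$ elements. Hence
\[
|\{j:\theta_j^B>\theta_\ell^B\}|\le L-1-|\mathcal D_{\ell,r}|,
\]
so $\operatorname{rank}_B(\ell)\le L-|\mathcal D_{\ell,r}|$. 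Ties among the $\theta$'s fall in neither set, so they are harmless for both bounds.

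The main point needing care is that both ancestor and descendant counts must use the strict inequalities produced by the path argument, and these must match the strict-inequality definition of $\operatorname{rank}_B$. Once that is checked, the corollary follows for all $r$ and $\ell$ simultaneously on $\Omega_0$. This gives the stated bound of at least $1-\alpha$.
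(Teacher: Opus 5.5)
Your proposal is correct and follows essentially the same route as the paper's proof. Both restrict to the no-false-edge event from Theorem~\ref{thm:bedge-main-iclr}, use path summation to show that ancestors of $\ell$ have strictly larger benchmark means and descendants strictly smaller ones, and then count inside $\mathcal V\setminus\{\ell\}$. The only extra step in the paper is a brief check that the joint coverage event is $\mathcal F_R$-measurable, since reachability is a finite union of intersections of the edge events $\{E_{e,r}^{\cC}\ge c_r\}$; this justifies passing from the event inclusion on your $\Omega_0$ to the conditional probability bound.
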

The proof, including tied benchmark means, is given in Appendix~\ref{app:proof-rank-intervals}. 
If the assumptions hold for the entire infinite sequence, the same bound holds simultaneously for every \(r\ge1\).
Thus, with probability at least $1-\alpha$, the intervals simultaneously cover the true benchmark ranks of all models at every completed replicate $1\le r\le R$, conditional on $(B,\pi)$.

We generate the synthetic data in Section~\ref{section:simulation_studies}, and the evaluation metrics are described in Section~\ref{section:evaluation_metric}.
We additionally compare BB-EDGE with two rank-interval baselines, TLRCI \citep{neuhof2026rank} and MB-Rank \citep{li2026low}.
The results show that, among the methods maintaining the desired anytime coverage level, BB-EDGE produces the narrowest rank intervals.
For example, under item-level heterogeneity at replicate $r=15$, BB-EDGE attains an anytime coverage of $1.00$ with a mean rank-interval width of $1.633$, reducing the width by $62.6\%$ relative to the strongest valid baseline, BEB-Holm ($4.369$).

\begin{figure}[htbp]
\centering
\includegraphics[width=0.99\textwidth]{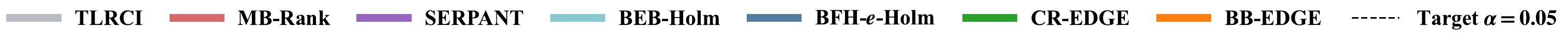}\\[-2ex]
\subfloat[$|C_m|=1$, $h_{i,\ell}=0$]{%
    \includegraphics[width=0.48\textwidth]{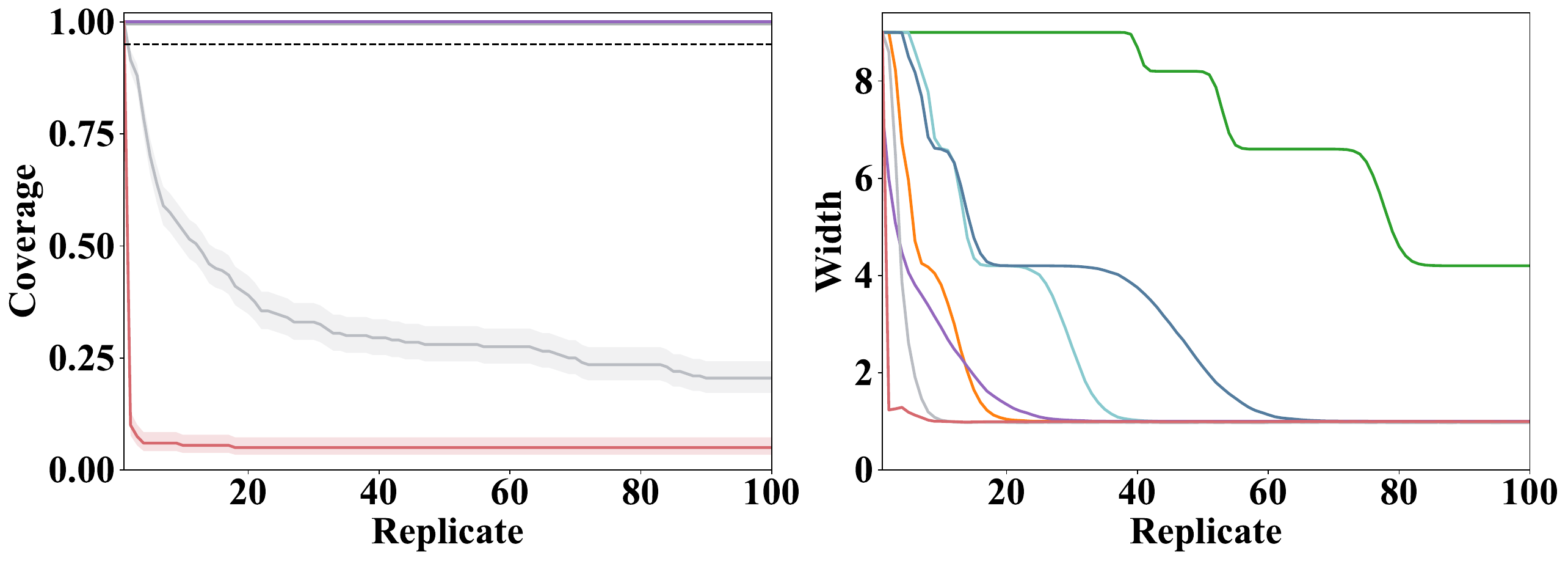}
}
\hfill
\subfloat[$|C_m|=5$, $h_{i,\ell}=0$]{%
    \includegraphics[width=0.48\textwidth]{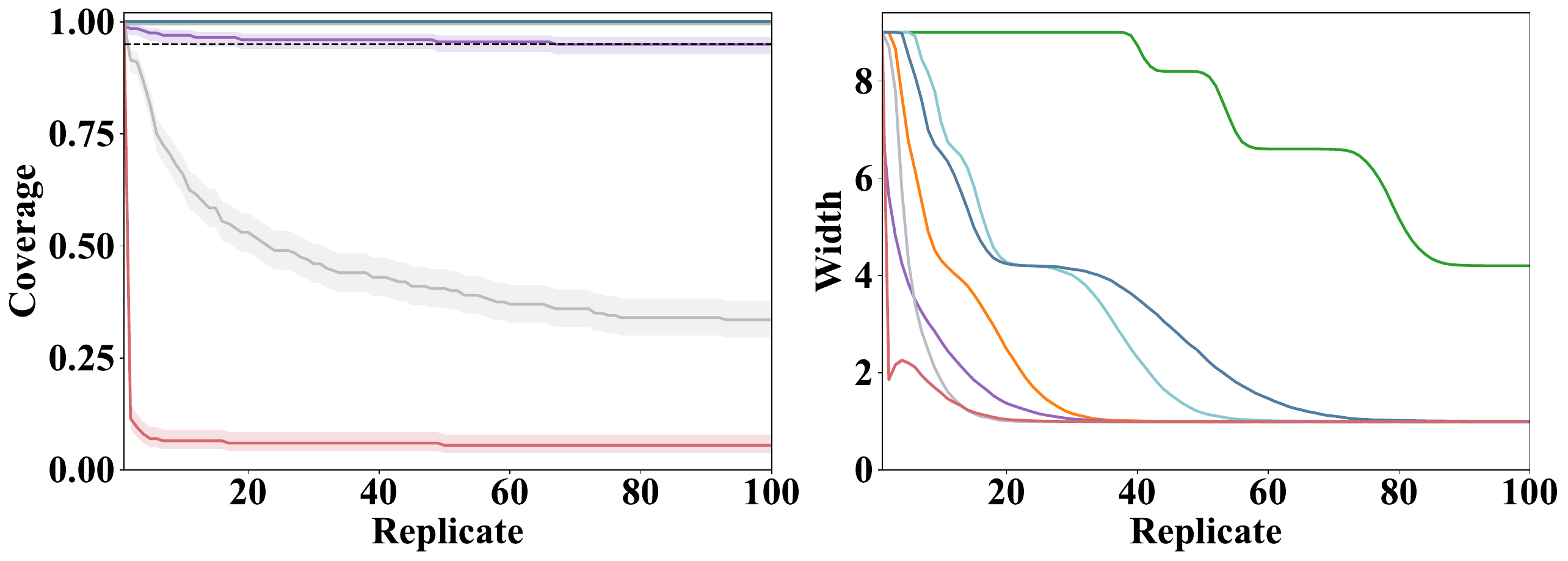}
}
\hfill
\subfloat[$|C_m|=1$, $h_{i,\ell}=0.2$]{%
    \includegraphics[width=0.48\textwidth]{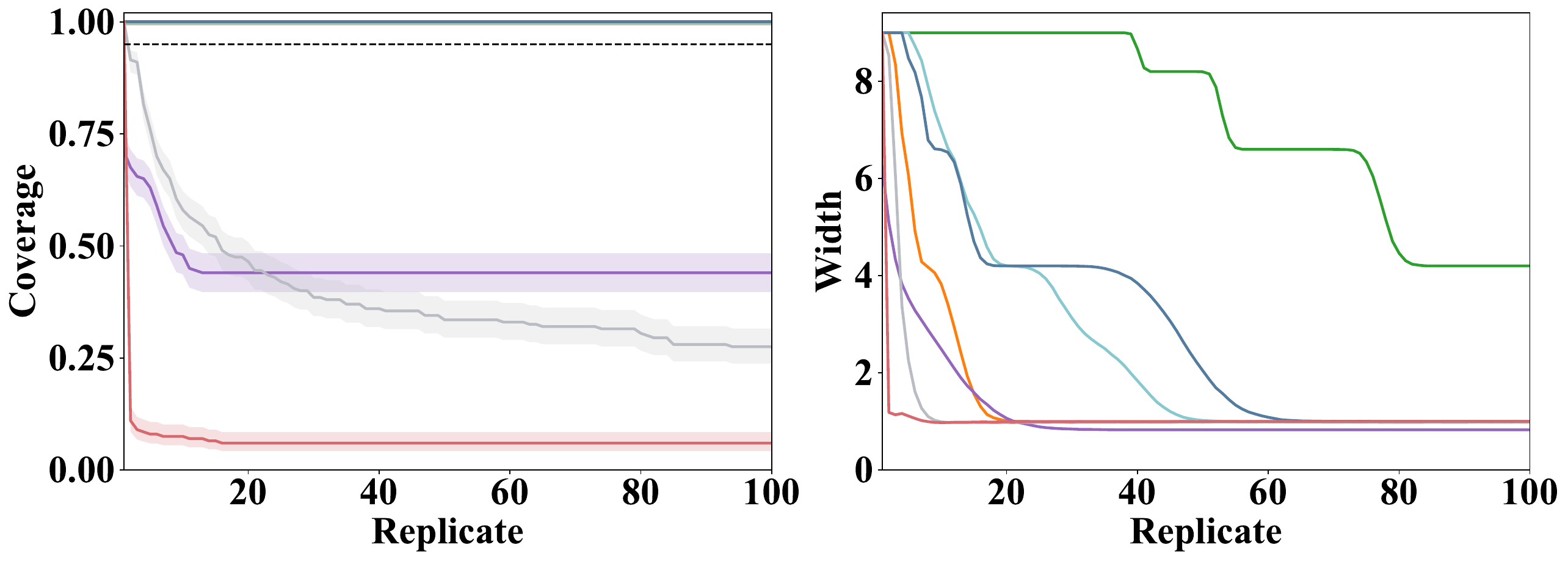}
}
\hfill
\subfloat[$|C_m|=5$, $h_{i,\ell}=0.2$]{%
    \includegraphics[width=0.48\textwidth]{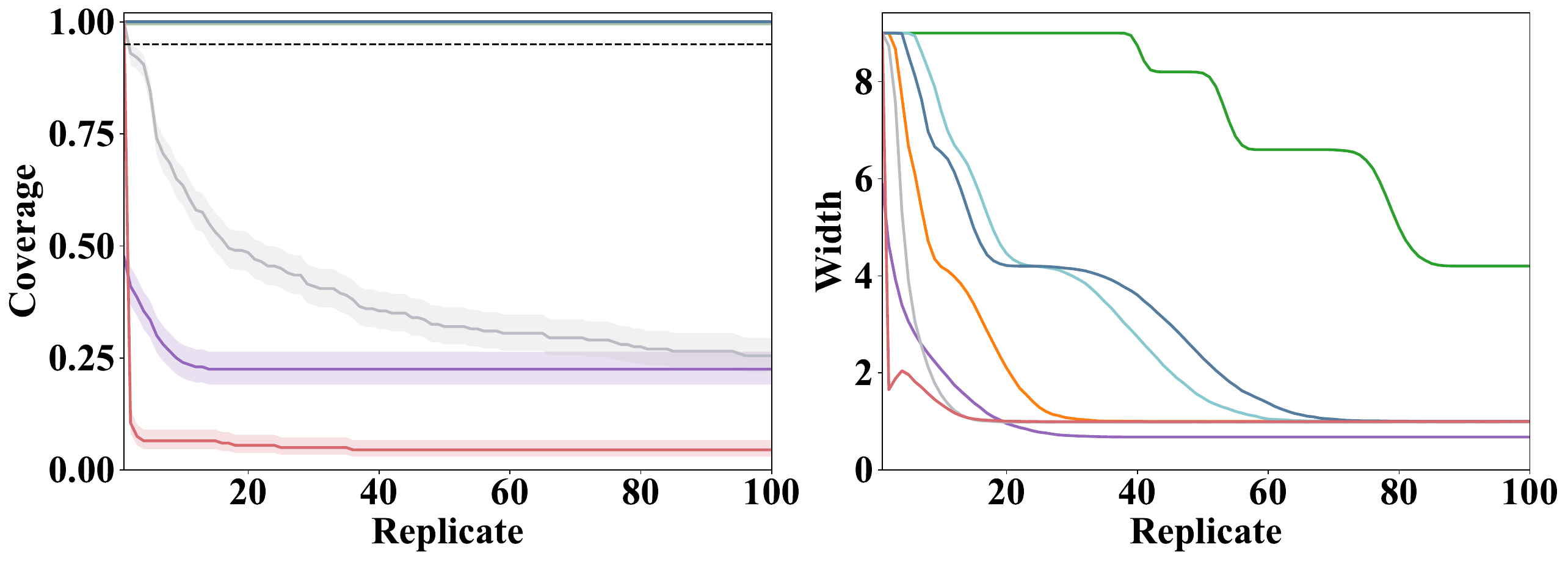}
}
\caption{Empirical $\operatorname{Coverage}_r$ and $\operatorname{Width}_r$ (with 95\% confidence intervals) versus replicate $r$. 
Panels (a)--(d) correspond to the i.i.d., dependence-only, heterogeneity-only, and combined dependence-and-heterogeneity settings, respectively.
All settings use  $\tau=0$, $N=100$, and $L=10$.
The dashed line marks the target level $1-\alpha=0.95$.}
\label{figure:rank_interval}
\end{figure}

\section{Anytime-Valid Inference under Updated Benchmarks}
\label{sec:epoch-reset-procedure}
\subsection{Updated Benchmark Items}
As benchmarks are updated with newly added items, the performance target changes. Valid inference must account for this change rather than treating evidence for an earlier version as evidence for the updated benchmark.
We propose Epoch-reset BB-EDGE, an extension of BB-EDGE that supports anytime-valid inference under updated benchmarks.
Let $\mathcal K_{v-1}$ be the sigma-field of all information available before the fresh replicates of epoch $v$, including its selected design.
Epoch-reset BB-EDGE consists of three steps: 

\paragraph{Step 1: Benchmark expansion.}
At update epoch $v$, let $\mathcal{A}^{(v)}$ denote the set of newly added items. We expand the preceding benchmark to obtain $B^{(v)}=B^{(v-1)}\cup\mathcal{A}^{(v)}=\{x_1,\ldots,x_{N_v}\}$ with $N_v=\lvert B^{(v)}\rvert$.
\paragraph{Step 2: Benchmark Freezing.}
After expansion, we freeze $B^{(v)}$ throughout epoch $v$. The frozen benchmark defines the version-specific performance estimand
\[
\theta_{\ell}^{B^{(v)}}=\frac{1}{N_v}\sum_{i\in B^{(v)}}\mathbb{E}\!\left[S_{i,r}^{(\ell)}\,\middle|\,B^{(v)},\pi\right],\qquad\Delta_{a,b}^{B^{(v)}}=\theta_a^{B^{(v)}}-\theta_b^{B^{(v)}}.
\]
The finite benchmark, finite testing family $\mathcal H^{(v)}$,
partition, finite stake grid, mixture weights, and budget
$\alpha_v\in[0,1)$ are $\mathcal K_{v-1}$-measurable and remain fixed
within the epoch.
\paragraph{Step 3: $e$-Process Restart and Cross-Epoch Control.}
For each $H_e^{(v,\tau)}:\Delta_{a,b}^{B^{(v)}}\le\tau$, we restart $E_{e,0}^{(v)}=1$ using fresh replicates on $B^{(v)}$, preserving the supermartingale property under the new estimand, and apply direct $e$-Holm at level $\alpha_v$ with $\sum_v \alpha_v\le\alpha$.
Every grid component starts at one. 
Predictions are predictable with respect to $\mathcal F^{(v)}_r= \sigma(\mathcal K_{v-1},Z^{(v)}_1,\ldots,Z^{(v)}_r)$.
Let $\mathcal E^{(v)}_r$ denote the edge sets after replicate $r$. 
Set $\mathcal E^{(v)}_r=\varnothing$ outside completed runs or when $\alpha_v=0$; an epoch not entered has budget zero.
For a finite run budget, the assumptions need to hold only through that budget. 
A guarantee for every $r\ge1$ requires them for the entire sequence of fresh panels.

Because each epoch uses a newly initialized $e$-process constructed for its frozen benchmark, the anytime-validity of BB-EDGE applies separately within every epoch.
Allocating a version-specific error budget $\alpha_v$ and requiring the cumulative budget to satisfy $\sum_{v\ge1}\alpha_v\le\alpha$ then extends this guarantee across benchmark versions.
The resulting bound averages over the adaptive history and covers
all reports actually issued. Each claim remains specific to the
benchmark version on which it was tested.
Theorem \ref{thm:update_benchmark} formalizes the resulting FWER control, whose proof is given in Appendix~\ref{app:proof-epochs}.

\begin{theorem}
\label{thm:update_benchmark}
Suppose that, conditional on $\mathcal K_{v-1}$, the fresh panels in each
epoch satisfy Assumption~\ref{ass:design-iclr} under the fixed execution law
for $(B^{(v)},\pi)$ defining $\theta_\ell^{B^{(v)}}$.
If $\sum_{v\ge1}\alpha_v\le\alpha$ almost surely, then Epoch-reset BB-EDGE satisfies
\[
\Pr\!\left(
\exists v\ge 1,\ \exists r\ge 1,\ 
\exists e=(a\to b)\in\mathcal{E}_{r}^{(v)}
\text{ such that }
\Delta_{a,b}^{B^{(v)}}\le\tau
\right)
\le\alpha.
\]
\end{theorem}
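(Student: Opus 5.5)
The proof reduces the cross-epoch statement to a per-epoch application of Theorem~\ref{thm:bedge-main-iclr}, made conditional on the pre-epoch history $\mathcal K_{v-1}$, and then combines the epochs with a union bound over the budgets $\alpha_v$. Define the epoch-$v$ bad event
\[
A_v=\bigl\{\exists r\ge1,\ \exists e=(a\to b)\in\mathcal E_r^{(v)}:\ \Delta_{a,b}^{B^{(v)}}\le\tau\bigr\}.
\]
The target event is $\bigcup_v A_v$. It therefore suffices to show that
\[
\Pr(A_v\mid\mathcal K_{v-1})\le\alpha_v \quad\text{a.s.}
\]
Granting this, the tower property gives $\Pr(A_v)\le\E[\alpha_v]$. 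Countable subadditivity then yields
\[
\Pr\Bigl(\bigcup_v A_v\Bigr)\le\sum_v\E[\alpha_v]=\E\Bigl[\sum_v\alpha_v\Bigr]\le\alpha,
\]
where the interchange of sum and expectation uses monotone convergence and the last step uses the almost-sure budget constraint. Epochs that are never entered, or that have $\alpha_v=0$, produce no edges, so $A_v=\varnothing$ for them and they contribute nothing.

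For the per-epoch bound, I work on a regular conditional probability given $\mathcal K_{v-1}$. All design objects for epoch $v$ are $\mathcal K_{v-1}$-measurable: the frozen benchmark $B^{(v)}$, the family $\mathcal H^{(v)}$, the partition, the stake grid, the mixture weights, and $\alpha_v$. Under this conditional law they are therefore constants. By hypothesis, the fresh panels $Z_1^{(v)},Z_2^{(v)},\ldots$ satisfy Assumption~\ref{ass:design-iclr} under the execution law of $(B^{(v)},\pi)$. The predictions are $\mathcal F_r^{(v)}$-predictable, and the processes restart at $E_{e,0}^{(v)}=1$ with every grid component equal to one.

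With $\mathcal K_{v-1}$ playing the role of $\mathcal Q$, the first part of Theorem~\ref{thm:bedge-main-iclr} applies. For each true null $e$ under $\Delta^{B^{(v)}}$, the process $E_{e,r}^{(v)}$ is a nonnegative supermartingale with respect to $\mathcal F_r^{(v)}$. The FWER part of that theorem, with direct $e$-Holm run at level $\alpha_v$, then gives $\Pr(A_v\mid\mathcal K_{v-1})\le\alpha_v$. When the epoch has infinitely many replicates, I first bound the event restricted to $r\le R$ for each finite $R$ and then let $R\to\infty$ using continuity from below. The degenerate case $\alpha_v=0$ is trivial because the edge sets are empty by convention.

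The main obstacle is justifying the transfer of Theorem~\ref{thm:bedge-main-iclr} to the conditional law. That theorem was stated for a fixed $(\mathcal B,\pi)$ with $\mathcal Q$ independent of the panels, whereas here $\mathcal K_{v-1}$ contains all earlier-epoch data. Those earlier data may have influenced which items were added, and when and whether the epoch starts. I would handle this in two steps. First, I would verify that the supermartingale inequality $\E[F^{(v)}_{e,r}(\lambda_g)\mid\mathcal F^{(v)}_{r-1}]\le1$ uses only the conditional independence of blocks given $\mathcal F^{(v)}_{r-1}$, the predictability of the predictions, the boundedness of the scores, and the null average constraint in \eqref{eq:conditional-average-null} for the frozen estimand. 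All of these are exactly what the hypothesis grants conditionally on $\mathcal K_{v-1}$. Second, I would check that Ville's inequality and the $e$-Holm argument are applied pathwise in $\mathcal K_{v-1}$. Because the estimand $\theta^{B^{(v)}}$ is defined through the fixed execution law given $B^{(v)}$, the selection of $B^{(v)}$ from past data does not bias the null set $\mathcal N_0^{(v)}$: that set is itself $\mathcal K_{v-1}$-measurable.
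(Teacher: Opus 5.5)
Your proposal is correct and follows essentially the same route as the paper. It establishes $\Pr(F_v\mid\mathcal K_{v-1})\le\alpha_v$ almost surely by running the block-factor supermartingale, direct $e$-Holm inclusion, and stopped-sum (Ville) argument under the within-epoch filtration $\mathcal F^{(v)}_r=\sigma(\mathcal K_{v-1},Z^{(v)}_1,\ldots,Z^{(v)}_r)$, and then combines epochs via the tower property, countable subadditivity, and Tonelli with $\E[\sum_v\alpha_v]\le\alpha$. The only difference is presentational: instead of citing Theorem~\ref{thm:bedge-main-iclr} through a regular conditional probability, the paper re-derives each step with conditional expectations given $\mathcal K_{v-1}$, using the execution-law identity $\E[h(Z^{(v)}_r)\mid\mathcal F^{(v)}_{r-1}]=\E[h(Z^{(v)}_r)\mid\mathcal K_{v-1}]=\int h\,dP_{B^{(v)},\pi}$ in place of Assumption~\ref{ass:design-iclr}(3), which is exactly the transfer you flag as the main obstacle.
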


Following the data-generating process in Section~\ref{section:simulation_studies}, we expand the benchmark from 40 to 70 and 100 items at replicates $r=10$ and $r=20$, respectively.
Epoch-reset BB-EDGE restarts at each update using $\alpha_v=\alpha/3$, whereas fixed BB-EDGE continues on the initial benchmark at level $\alpha$. 
Figure~\ref{figure:update_benchmark} shows that both procedures maintain empirical FWER below $0.05$ in all four settings. 
Epoch-reset BB-EDGE has lower power immediately after each reset because it discards previously accumulated evidence and uses a smaller per-version error budget; however, the expanded benchmarks provide more evidence per replicate, allowing it to catch up with and eventually outperform the fixed-benchmark reference, with the improvement being most pronounced under dependence and combined dependence and heterogeneity.

\begin{figure}[htbp]
\centering
\subfloat[$|C_m|=1$, $h_{i,\ell}=0$]{%
    \includegraphics[width=0.48\textwidth]{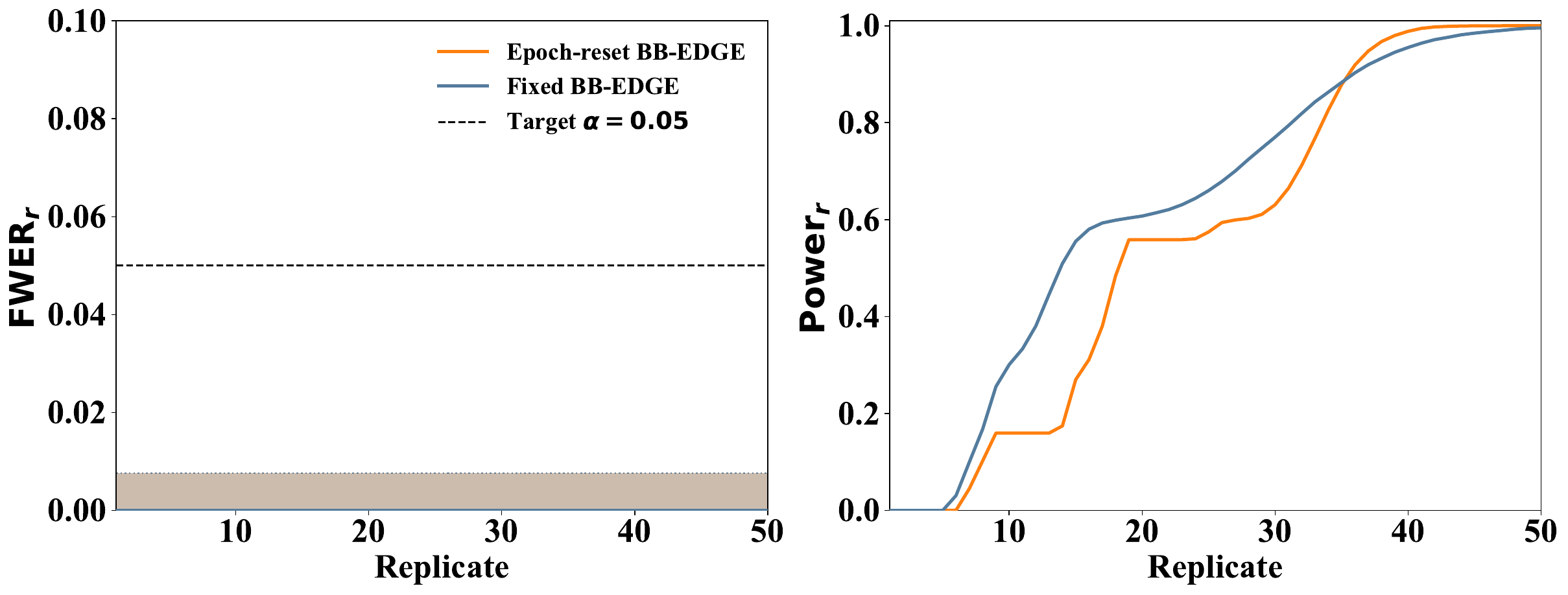}
}
\hfill
\subfloat[$|C_m|=5$, $h_{i,\ell}=0$]{%
    \includegraphics[width=0.48\textwidth]{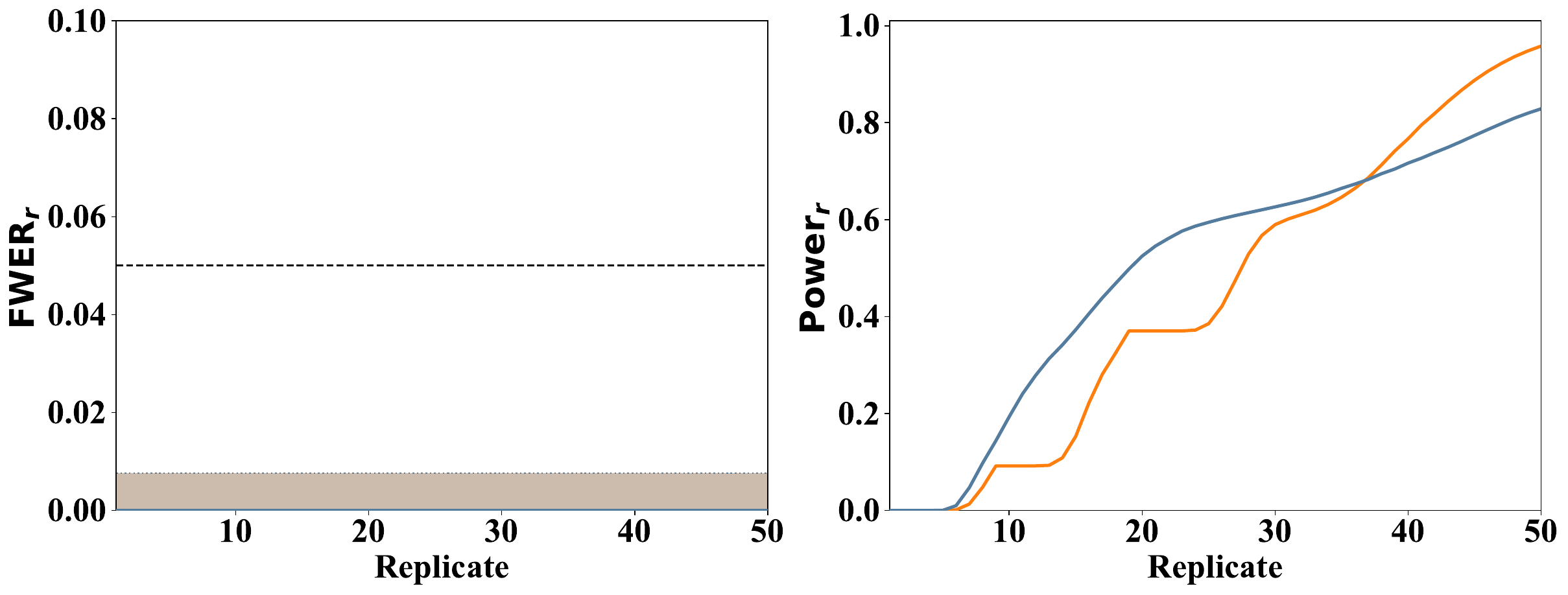}
}
\hfill
\subfloat[$|C_m|=1$, $h_{i,\ell}=0.2$]{%
    \includegraphics[width=0.48\textwidth]{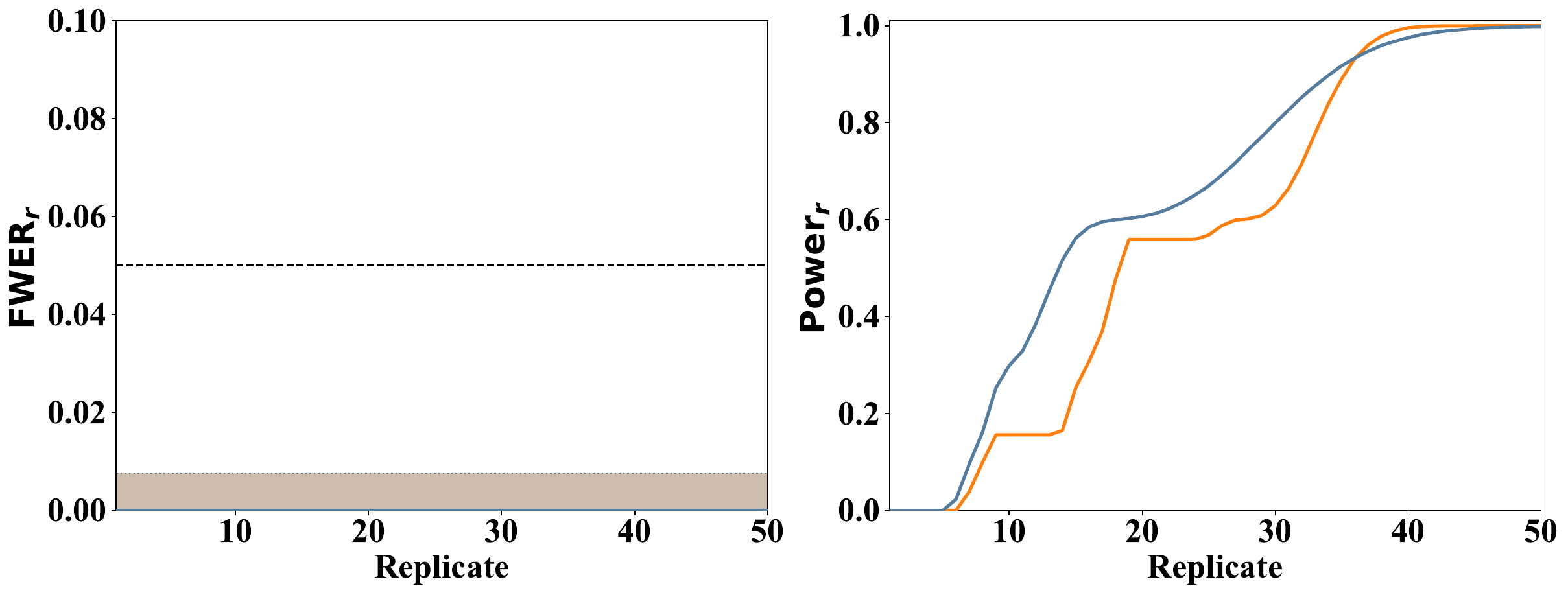}
}
\hfill
\subfloat[$|C_m|=5$, $h_{i,\ell}=0.2$]{%
    \includegraphics[width=0.48\textwidth]{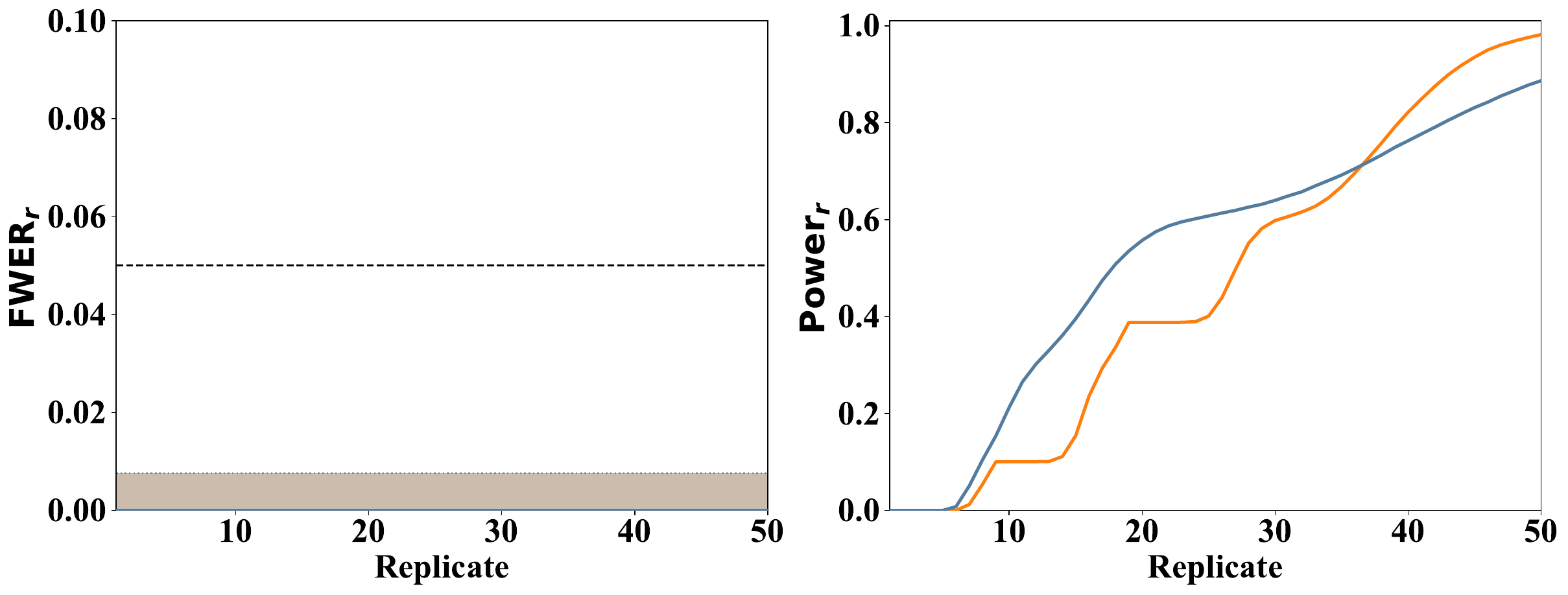}
}
\caption{Empirical $\operatorname{FWER}_r$ and $\operatorname{Power}_r$ versus replicate $r$. 
Panels (a)--(d) correspond to the i.i.d., dependence-only, heterogeneity-only, and combined dependence-and-heterogeneity settings, respectively.
All settings use  $\tau=0$, $N=100$, and $L=10$.
The dashed line marks the target level $\alpha=0.05$.}
\label{figure:update_benchmark}
\end{figure}

\subsection{Newly Added Models}
\label{sec:new-models}

BB-EDGE can accommodate newly arriving models without reevaluating the existing models. To allow an unknown number of future model additions, we prespecify an error-spending sequence $(\alpha_v)_{v\ge0}$ satisfying $\sum_{v\ge0}\alpha_v\le\alpha,$ where $\alpha_0$ is allocated to the initial leaderboard and $\alpha_v$ is allocated to the $v$th model-addition stage for $v\ge1$. The procedure consists of two steps.

\paragraph{Step 1: Model addition and panel augmentation}
When the inclusion of a new model $\ell_v$ is determined independently of
the inference panels, we set $\mathcal V^{(v)}=\mathcal V^{(v-1)}\cup\{\ell_v\}.$
We evaluate $\ell_v$ on the same benchmark $\cB$ and under the same protocol $\pi$. According to a pairing rule fixed before inference, its outcomes are combined with the stored outcomes to form $\widetilde Z_r^{(v)}=\left(Z_r^{(v-1)},\{S_{i,r}^{(\ell_v)}:i\in\cB\}\right),$ where $r=1,\ldots,R^{(v)}$ and $R^{(v)}$ is the number of available paired replicates at stage $v$.
The augmented panels are replayed in replicate order, with each predictable quantity at replicate $r$ depending only on the preceding augmented panels.

\paragraph{Step 2: New-direction inference}
We initialize $e$-processes only for the newly introduced directional family $\cH_{\mathrm{new}}^{(v)}=\left\{H_{\ell_v\to\ell}^{(\tau)},H_{\ell\to\ell_v}^{(\tau)}:\ell\in\mathcal V^{(v-1)}\right\}$ and apply direct $e$-Holm at level $\alpha_v$. 
Let $\cE_{r,\mathrm{new}}^{(v)}$ denote the resulting edge set at replicate $r\le R^{(v)}$. Previously certified edges are retained; their errors are already accounted for by the budgets allocated to the earlier stages.

Corollary~\ref{cor:new-models} establishes FWER control for the combined graph across the initial leaderboard and all subsequent model additions.

\begin{corollary}
\label{cor:new-models}
Let $\cE_{r,\mathrm{new}}^{(0)}=\cE_r$ denote the edge set of the initial
leaderboard, constructed over $R^{(0)}$ replicates at level $\alpha_0$.
Suppose that the initial panels $Z_1,\ldots,Z_{R^{(0)}}$ satisfy Assumption~\ref{ass:design-iclr}. For each $v\ge1$, suppose that
model inclusion, the pairing rule, and all stage-$v$ design choices are fixed independently of the augmented inference panels conditional on $(\cB,\pi)$. 
Suppose further that $\widetilde Z_1^{(v)},\ldots,\widetilde Z_{R^{(v)}}^{(v)}$ satisfy Assumption~\ref{ass:design-iclr} and are processed through the non-anticipating replay described above.
If $(\alpha_v)_{v\ge0}$ is a prespecified error-spending sequence satisfying
$\sum_{v\ge0}\alpha_v\le\alpha$, then
\[
\Pp\!\left(
\exists v\ge0,\ 
\exists r\le R^{(v)},\ 
\exists e\in\cE_{r,\mathrm{new}}^{(v)}
\text{ such that }
\Delta_e^{\cB}\le\tau
\,\middle|\,
\cB,\pi
\right)
\le\alpha.
\]
On the complementary event, all path implications, simultaneous rank intervals, and Top-$k$ certificates derived from the combined graph are valid.
\end{corollary}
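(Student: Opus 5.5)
The plan is to split the combined event into stages and bound each stage by its own error budget. Write $\mathcal A_v$ for the event that some $r\le R^{(v)}$ and some $e\in\cE_{r,\mathrm{new}}^{(v)}$ have $\Delta_e^{\cB}\le\tau$. The target event is $\bigcup_{v\ge0}\mathcal A_v$. By countable subadditivity,
\[
\Pp\Bigl(\textstyle\bigcup_{v\ge0}\mathcal A_v\Bigm|\cB,\pi\Bigr)\le\sum_{v\ge0}\Pp(\mathcal A_v\mid\cB,\pi).
\]
It therefore suffices to show $\Pp(\mathcal A_v\mid\cB,\pi)\le\alpha_v$ for every $v$; the claim then follows from $\sum_v\alpha_v\le\alpha$. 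Stages with $\alpha_v=0$ or no replicates contribute nothing, because no edges are reported there. The stage bounds may be proved one at a time: subadditivity holds under any dependence between stages, and stages do in fact share the stored panels.

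For $v=0$, the bound $\Pp(\mathcal A_0\mid\cB,\pi)\le\alpha_0$ is exactly Theorem~\ref{thm:bedge-main-iclr} with $\alpha$ replaced by $\alpha_0$. The initial panels satisfy Assumption~\ref{ass:design-iclr} by hypothesis.

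For $v\ge1$, I will apply Theorem~\ref{thm:bedge-main-iclr} again at level $\alpha_v$, to the model set $\mathcal V^{(v)}$, the augmented panels $\widetilde Z^{(v)}_r$, and the directional family $\cH^{(v)}_{\mathrm{new}}$. This needs two small points.
\begin{enumerate}
\item \textbf{The theorem covers a restricted family.} Its proof uses only two facts: each null $e$-process is a nonnegative supermartingale, and direct $e$-Holm controls FWER over whatever finite family it is applied to. The Top-$k$ procedure already relies on this, since Theorem~\ref{thm:pilot-topk} restricts to a prespecified subfamily. So direct $e$-Holm on $\cH^{(v)}_{\mathrm{new}}$ gives FWER at most $\alpha_v$ over that family.
\item \textbf{The stage-$v$ setup matches Assumption~\ref{ass:design-iclr}.} The augmented panels satisfy the assumption by hypothesis. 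The inclusion of $\ell_v$, the pairing rule and the stage design are independent of the augmented panels given $(\cB,\pi)$, so they can be placed in the pre-inference information $\mathcal Q^{(v)}$. The non-anticipating replay makes every prediction $\widehat Y_{e,m,r}$ measurable with respect to $\widetilde{\mathcal F}^{(v)}_{r-1}=\sigma(\cB,\pi,\mathcal Q^{(v)},\widetilde Z^{(v)}_1,\ldots,\widetilde Z^{(v)}_{r-1})$.
\end{enumerate}
Under these two points the conditional-mean identity (\ref{eq:conditional-average-null}) and the drift argument behind Proposition~\ref{prop:stake-necessity-iclr} go through unchanged. Each new-direction $e$-process is therefore a nonnegative supermartingale under its null, and Theorem~\ref{thm:bedge-main-iclr} yields $\Pp(\mathcal A_v\mid\cB,\pi)\le\alpha_v$.

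The main obstacle is the filtration in the second point. The stored panels $Z^{(v-1)}_r$ were already used at earlier stages, so later stages condition on data that earlier decisions have seen. The concern is that this reuse, or a data-dependent decision to add $\ell_v$, could break validity.

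The answer is that validity at each stage is a statement about the stage-$v$ filtration alone. The stage-$v$ supermartingale never conditions on the outcomes of earlier stage-wise tests. The hypothesis that model inclusion is fixed independently of the augmented panels rules out the selection effect. Earlier stages may depend on the same panels, but that dependence is absorbed by the union bound, which requires no independence across stages.

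For the final sentence of the corollary, on the complement of $\bigcup_v\mathcal A_v$ every edge in the combined graph is a true margin claim. The path implication, acyclicity, the rank-interval coverage argument of Corollary~\ref{cor:anytime_rank_intervals} and the Top-$k$ reachability argument use only this no-false-edge property. They therefore transfer verbatim to the combined graph.
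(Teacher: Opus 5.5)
Your proposal is correct and follows essentially the same argument as the paper. Both proofs obtain the per-stage bound $\Pp(\mathcal A_v\mid\cB,\pi)\le\alpha_v$ from Theorem~\ref{thm:bedge-main-iclr}, applied on the replay filtration with the stage-$v$ inclusion rule, pairing rule and design placed in the pre-inference information. They then apply countable subadditivity, which absorbs the dependence from reused stored panels, and transfer the path, rank-interval and Top-$k$ consequences from the no-false-edge event.
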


The current experiments allocate the full error budget to the initial
leaderboard; the staged allocation above applies prospectively when future
model additions are anticipated. The independence condition is satisfied
when model additions are driven by external events, such as new model
releases, rather than selected using the certified graph itself.

\section{Proofs}
\label{app:sec3-proofs}
\renewcommand{\theHequation}{proof.\arabic{equation}}

The fixed-benchmark bounds in Theorem~\ref{thm:bedge-main-iclr} and Corollary~\ref{cor:anytime_rank_intervals} condition on $(\cB,\pi)$;
Theorem~\ref{thm:pilot-topk} additionally conditions on the pilot data $\mathcal D_0$. Theorem~\ref{thm:update_benchmark} first controls each epoch conditional on its past and then averages over that past.
All evidence processes are monitored at completed-replicate boundaries, using the common filtration containing every model's observations.

\subsection{Proof of Proposition~\ref{prop:stake-necessity-iclr}}

\begin{proof}
Take any $d\in\mathbb R^M$ with $w^\top d=0$. If $d\ne0$, choose
\[
0<\varepsilon<\frac{\min\{\mu_0,1-\mu_0\}}{\max_m|d_m|}.
\]
Then both vectors $\nu_+=\mu_0\one+\varepsilon d$ and $\nu_-=\mu_0\one-\varepsilon d$ belong to $[0,1]^M$.
Since $w^\top\one=1$, they satisfy $w^\top\nu_+=w^\top\nu_-=\mu_0$.
Applying (\ref{eq:stake-drift-condition}) to $\nu_+$ and $\nu_-$ gives
\[
\eta^\top(\nu_+-\mu_0\one)=\varepsilon\eta^\top d\le0,\qquad\eta^\top(\nu_--\mu_0\one)=-\varepsilon\eta^\top d\le0.
\]
Since $\varepsilon>0$, these inequalities imply $\eta^\top d=0$.
This equality also holds for $d=0$. We have therefore proved
\begin{equation}
w^\top d=0\quad\Longrightarrow\quad\eta^\top d=0,\qquad d\in\mathbb R^M.
\label{eq:proof-orthogonality}
\end{equation}

Set $c=(\eta^\top w)/(w^\top w)$ and $h=\eta-cw$.
The denominator is positive because every $w_m>0$, and
\[
w^\top h=w^\top\eta-cw^\top w
=w^\top\eta-\eta^\top w=0.
\]
Equation (\ref{eq:proof-orthogonality}), applied with $d=h$, now yields
\[
0=\eta^\top h=(cw+h)^\top h=\|h\|_2^2.
\]
Consequently $h=0$ and $\eta=cw$. The nonnegativity of $\eta$ and positivity of $w$ imply $c\ge0$. 
Conversely, if $\eta=cw$ with
$c\ge0$, then for every admissible $\nu$,
\[
\eta^\top(\nu-\mu_0\one)=c(w^\top\nu-\mu_0)\le0.
\]
This proves both directions, including $c=0$ and $M=1$. 
The necessity concerns precisely the nonpositive linear drift condition in the proposition.
\end{proof}

\subsection{The conditional empirical-Bernstein inequality}
\label{app:proof-eb-inequality}

The following bounded-variable inequality is the ingredient needed for the block factors, which is the predictable empirical-Bernstein construction of \citet{smith2024estimating}. We below give its scalar argument, following \citet{fan2015exponential}, to specify the normalization used here.

\begin{lemma}
\label{lem:conditional-eb-proof}
Let $\mathcal A$ be a sigma-field, let $Y\in[0,1]$, and write $\nu=\E[Y\mid\mathcal A]$. If $p\in[0,1]$ and $u\in[0,1)$ are $\mathcal A$-measurable, then
\begin{equation}
\E\!\left[\exp\{u(Y-\nu)-\psi_E(u)(Y-p)^2\}\middle|\mathcal A\right]\le1,\qquad \psi_E(u)=-\log(1-u)-u.
\label{eq:proof-conditional-eb}
\end{equation}
\end{lemma}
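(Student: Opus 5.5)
The plan is the standard Fan--Grama--Liu scalar inequality, applied conditionally. Since $u$, $p$, and $\nu$ are $\mathcal A$-measurable, we may treat them as constants and work with a fixed realization. The key deterministic fact to invoke is
\[
\exp\{u\xi-\psi_E(u)\xi^2\}\le 1+u\xi\qquad\text{for all }\xi\ge-1,\ u\in[0,1).
\]
This is equivalent to $\log(1+u\xi)\ge u\xi-\psi_E(u)\xi^2$, which is the inequality of \citet{fan2015exponential}. To verify it, fix $u$ and study
\[
f(\xi)=\log(1+u\xi)-u\xi+\psi_E(u)\xi^2 .
\]
We have $f(0)=0$. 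At $\xi=-1$, $f(-1)=\log(1-u)+u+\psi_E(u)=0$. A convexity/derivative argument in $\xi$ then shows $f\ge0$ on $[-1,\infty)$. Concretely, $f'(\xi)=\xi\,[2\psi_E(u)-u^2/(1+u\xi)]$ changes sign appropriately, because $2\psi_E(u)\ge u^2$. Checking this sign pattern is the step I expect to be the main technical obstacle, so I would write it out carefully.

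Next, set $\xi=Y-p\in[-1,1]$. I want to pass from centering at $\nu$ to centering at $p$. Write
\[
u(Y-\nu)=u(Y-p)+u(p-\nu),
\]
so the integrand equals
\[
e^{u(p-\nu)}\exp\{u\xi-\psi_E(u)\xi^2\}\le e^{u(p-\nu)}\bigl(1+u(Y-p)\bigr).
\]
Taking conditional expectations gives
\[
e^{u(p-\nu)}\bigl(1+u(\nu-p)\bigr)=e^{-x}(1+x)\le1,\qquad x=u(\nu-p).
\]
The last step uses $1+x\le e^{x}$ for all real $x$, which holds even when $1+x<0$, since then the left side is already negative.

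Integrability is immediate because the integrand is bounded: $Y,p\in[0,1]$ and $u<1$ is fixed. This yields the inequality in (\ref{eq:proof-conditional-eb}).
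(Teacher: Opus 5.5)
Your proposal is correct and follows the paper's proof: the same scalar Fan--Grama--Liu inequality applied with $\xi=Y-p$, the same recentering $u(Y-\nu)=u(Y-p)+u(p-\nu)$, and the same closing bound. The paper writes that bound as $e^{z}(1-z)\le1$ with $z=u(p-\nu)$, which is your $e^{-x}(1+x)\le1$.

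The only real difference is how you verify the scalar inequality. The paper splits into $x>0$ (via $\log(1+z)\ge z-z^2/2$ and $\psi_E(u)\ge u^2/2$) and $x<0$ (via the termwise bound $\psi_E(uv)\le v^2\psi_E(u)$). Your single-function argument also works, but you need to state explicitly that the bracket $2\psi_E(u)-u^2/(1+u\xi)$ is increasing in $\xi$. Then $f$ rises and then falls on $[-1,0]$, and $f(-1)=f(0)=0$ gives $f\ge0$ there. Convexity alone would not suffice, since for $0<u<1$ you have $f''(-1)=2\psi_E(u)-u^2/(1-u)^2<0$.
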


\begin{proof}
First we prove the deterministic inequality
\begin{equation}
\exp\{ux-\psi_E(u)x^2\}\le1+ux,\qquad 0\le u<1,\quad -1\le x\le1,
\label{eq:proof-scalar-eb}
\end{equation}
which is an equality if $u=0$ or $x=0$. For $x>0$, let $z=ux$.
The function $f(z)=\log(1+z)-z+z^2/2$ satisfies $f(0)=0$ and $f'(z)=z^2/(1+z)\ge0$ for $z\ge0$. 
Since $\psi_E(u)=\sum_{j=2}^\infty u^j/j\ge u^2/2$, this gives
\[
\log(1+ux)\ge ux-u^2x^2/2\ge ux-\psi_E(u)x^2.
\]
For $x=-v$ with $0<v\le1$, absolute convergence and nonnegativity
of the series terms give
\[
\psi_E(uv)=\sum_{j=2}^\infty\frac{u^jv^j}{j}\le v^2\sum_{j=2}^\infty\frac{u^j}{j}=v^2\psi_E(u).
\]
Hence $\log(1-uv)=-uv-\psi_E(uv)\ge-uv-v^2\psi_E(u)$. As $1+ux\ge1-u>0$, exponentiation proves (\ref{eq:proof-scalar-eb}) in both cases.
To apply (\ref{eq:proof-scalar-eb}) with $x=Y-p$, write
\[
u(Y-\nu)=u(Y-p)+u(p-\nu).
\]
Here $u(p-\nu)$ is $\mathcal A$-measurable because $u$, $p$, and
$\nu=\E[Y\mid\mathcal A]$ are $\mathcal A$-measurable. Thus
\begin{align}
\exp\{u(Y-\nu)-\psi_E(u)(Y-p)^2\}
&=\exp\{u(p-\nu)\}\exp\{u(Y-p)-\psi_E(u)(Y-p)^2\}\nonumber\\
&\le\exp\{u(p-\nu)\}\{1+u(Y-p)\}.
\label{eq:proof-centering-bound}
\end{align}
In particular, $\exp\{u(p-\nu)\}$ and $u$ can be taken outside a
conditional expectation given $\mathcal A$. The exponential on the
left satisfies
\[
0<\exp\{u(Y-\nu)-\psi_E(u)(Y-p)^2\}
\le\exp\{u(Y-\nu)\}\le\exp(1),
\]
because $\psi_E(u)\ge0$, $Y-\nu\le1$, and $u<1$. It is therefore
integrable. Taking conditional expectations in
(\ref{eq:proof-centering-bound}) gives
\[
\begin{aligned}
&\E\!\left[\exp\{u(Y-\nu)-\psi_E(u)(Y-p)^2\}
              \middle|\mathcal A\right]\\
&\quad\le\exp\{u(p-\nu)\}\{1+u(\nu-p)\}=\exp(z)(1-z),
\qquad z=u(p-\nu)<1.
\end{aligned}
\]
The inequality $\log t\le t-1$ for $t>0$, applied to $t=1-z$,
gives $z+\log(1-z)\le0$. Thus $\exp(z)(1-z)\le1$, as required.
The argument does not require $p=\nu$.
\end{proof}

\subsection{Proof of Theorem~\ref{thm:bedge-main-iclr}}
\label{app:proof-main-theorem}

\begin{proof}
Recall that the inference filtration is
\[
\cF_r
=
\sigma(\cB,\pi,\mathcal Q,Z_1,\ldots,Z_r),
\qquad r=0,1,\ldots,R,
\]
with the convention
\[
\cF_0=\sigma(\cB,\pi,\mathcal Q).
\]
Here, $\mathcal Q$ collects all information determined before the confirmatory inference runs, including the directional testing family, the benchmark partition, the stake grid, the mixture weights, and any independent pilot data used to select these quantities. Thus, $\cF_{r-1}$ represents all information available immediately before the $r$th complete replicate $Z_r$ is observed.
We first work conditionally on $\cF_0$. Under this conditional law, the benchmark, protocol, and all design quantities contained in $\mathcal Q$ are treated as fixed, while the confirmatory panels $Z_1,\ldots,Z_R$ remain random. We prove the supermartingale property and the corresponding FWER bound conditional on $\cF_0$. The resulting conditional bound holds almost surely with respect to the random pre-inference design information. We then use the tower property to average over this initial randomness and obtain the stated guarantee conditional on $(\cB,\pi)$.

Fix $e=(a\to b)$ and $1\le r\le R$. We first prove
\[
\sum_{m=1}^M w_m\E[Y_{e,m,r}\mid\cF_{r-1}]
=\frac{1+\Delta_{a,b}^{\cB}}2.
\]
Write $\mathcal L(U\mid\mathcal A)$ for the conditional distribution
of a random variable $U$ given a sigma-field $\mathcal A$.
Since $\cF_{r-1}=\sigma(\cF_0,Z_1,\ldots,Z_{r-1})$,
Assumption~\ref{ass:design-iclr}(1) and (3) give, respectively,
\begin{equation}
\mathcal L(Z_r\mid\cF_{r-1})
=\mathcal L(Z_r\mid\cF_0)
=\mathcal L(Z_r\mid\cB,\pi),
\label{eq:proof-conditional-law}
\end{equation}
where the first equality uses independence across runs conditional on
$\cF_0$; the second uses independence of $\mathcal Q$ and the confirmatory panels conditional on $(\cB,\pi)$. 
Integrating any bounded measurable panel function $h$ against these equal distributions yields
\begin{equation}
\E[h(Z_r)\mid\cF_{r-1}]
=\E[h(Z_r)\mid\cF_0]
=\E[h(Z_r)\mid\cB,\pi].
\label{eq:proof-execution-law}
\end{equation}
Identical distribution across runs in Assumption~\ref{ass:design-iclr}(1)
makes the last expectation independent of $r$.

Recall that $w_m=|C_m|/N$. Section~\ref{sec:theory}, immediately
before (\ref{eq:conditional-average-null}), defines
$\nu_{e,m,r}=\E[Y_{e,m,r}\mid\cF_{r-1}]$.
For each realized partition, (\ref{eq:block-average-identity}) holds, and each $w_m$ is $\cF_0$-measurable, hence $\cF_{r-1}$-measurable. Linearity of conditional expectation and (\ref{eq:proof-execution-law}) give
\begin{align}
\sum_m w_m\nu_{e,m,r}
&=\E\!\left[\sum_m w_mY_{e,m,r}\middle|\cF_{r-1}\right]
=\E\!\left[\frac1N\sum_i X^e_{i,r}\middle|\cF_{r-1}\right]
\nonumber\\
&=\frac12+\frac12\E\!\left[
 \frac1N\sum_i(S_{i,r}^{(a)}-S_{i,r}^{(b)})
 \middle|\cF_{r-1}\right]
=\frac{1+\Delta_{a,b}^{\cB}}2.
\label{eq:proof-average-identity}
\end{align}
Under $H_e^{(\tau)}$, $\Delta_{a,b}^{\cB}\le\tau$, so
$(1+\Delta_{a,b}^{\cB})/2\le(1+\tau)/2=\mu_0$.

For a grid point $\lambda_g$, each stake $u_m=\lambda_{g,m}=\lambda_gw_m/w_\star$ lies in $[0,1)$ and is fixed before the run. 
Decompose the factor as
\begin{align}
F_{e,r}^{\cC}(\lambda_g)
&=\exp\!\left\{\sum_m u_m(\nu_{e,m,r}-\mu_0)\right\}\prod_m Q_{m,r},\nonumber\\
Q_{m,r}&=\exp\!\left\{u_m(Y_{e,m,r}-\nu_{e,m,r})-\psi_E(u_m)(Y_{e,m,r}-\widehat Y_{e,m,r})^2\right\}.
\label{eq:proof-factor-decomposition}
\end{align}
Recall the block vectors in Assumption~\ref{ass:design-iclr}(2),
\[
V_{m,r}=(W_{i,r}:i\in C_m),\qquad
W_{i,r}=(S_{i,r}^{(1)},\ldots,S_{i,r}^{(L)}).
\]
The observation entering $Q_{m,r}$ is
\[
Y_{e,m,r}=\frac1{|C_m|}\sum_{i\in C_m}\frac{S_{i,r}^{(a)}-S_{i,r}^{(b)}+1}{2},
\]
which is a function of $V_{m,r}$. In contrast,
$u_m$ is $\cF_0$-measurable,
$\widehat Y_{e,m,r}$ is $\cF_{r-1}$-measurable by
Assumption~\ref{ass:design-iclr}(4), and
$\nu_{e,m,r}=\E[Y_{e,m,r}\mid\cF_{r-1}]$ is
$\cF_{r-1}$-measurable by the definition of conditional expectation.
Thus, after conditioning on $\cF_{r-1}$, the parameters
$u_m$, $\widehat Y_{e,m,r}$, and $\nu_{e,m,r}$ in $Q_{m,r}$ are
fixed. Only $Y_{e,m,r}$ depends on the new run. We do not assume that $Y_{e,m,r}$ is $\cF_{r-1}$-measurable.
Mutual conditional independence of $V_{1,r},\ldots,V_{M,r}$ implies mutual conditional independence of $Q_{1,r},\ldots,Q_{M,r}$.
Applying Lemma~\ref{lem:conditional-eb-proof} with $\mathcal A=\cF_{r-1}$, $Y=Y_{e,m,r}$, $p=\widehat Y_{e,m,r}$, and $u=u_m$ gives $\E[Q_{m,r}\mid\cF_{r-1}]\le1$ for each $m$. 
Also,
\[
\sum_m u_m(\nu_{e,m,r}-\mu_0)
=\frac{\lambda_g}{w_\star}
 \left(\sum_m w_m\nu_{e,m,r}-\mu_0\right)\le0
\]
under the null. Taking conditional expectations in
(\ref{eq:proof-factor-decomposition}) consequently yields
\begin{equation}
\E[F_{e,r}^{\cC}(\lambda_g)\mid\cF_{r-1}]
=\exp\!\left\{\sum_m u_m(\nu_{e,m,r}-\mu_0)\right\}
 \prod_m\E[Q_{m,r}\mid\cF_{r-1}]
\le1.
\label{eq:proof-one-run-validity}
\end{equation}

For each $g$ and $m$, $0\le u_m<1$ implies $0\le\psi_E(u_m)<\infty$. 
Since $|Y_{e,m,r}-\mu_0|\le1$ and $(Y_{e,m,r}-\widehat Y_{e,m,r})^2\le1$, the exponent in
(\ref{eq:block-factor}) satisfies
\[
-\sum_{m=1}^M\{u_m+\psi_E(u_m)\}\le\log F_{e,r}^{\cC}(\lambda_g)\le\sum_{m=1}^M u_m\le M\le N.
\]
Both bounds are finite for each initial design. 
Exponentiating gives $0<F_{e,r}^{\cC}(\lambda_g)\le\exp(N)$.
The component product and the mixture in (\ref{eq:block-mixture}) consequently obey, for every finite $r$,
\begin{equation}
\begin{aligned}
0<E_{e,r}^{\cC}(\lambda_g)
&=\prod_{s=1}^rF_{e,s}^{\cC}(\lambda_g)\le\exp(rN),\\
0<E_{e,r}^{\cC}
&=\sum_{g=1}^G\rho_gE_{e,r}^{\cC}(\lambda_g)
 \le\exp(rN)\sum_{g=1}^G\rho_g=\exp(rN).
\end{aligned}
\label{eq:proof-evidence-bounds}
\end{equation}
Here $\rho_1,\ldots,\rho_G$ are the mixture coefficients, not the
benchmark block weights $w_m$. Since $\rho_g\ge0$ and
$\sum_g\rho_g=1$,  at least one coefficient is positive.
Since $N$ is finite and fixed given the benchmark,
(\ref{eq:proof-evidence-bounds}) implies
\[
\E[|E_{e,r}^{\cC}(\lambda_g)|\mid\cF_0]\le\exp(rN)<\infty,
\qquad
\E[|E_{e,r}^{\cC}|\mid\cF_0]\le\exp(rN)<\infty.
\]
The same bounds hold conditional on $(\cB,\pi)$ by the tower
property, which proves the required finite-time integrability under the fixed-benchmark law.

The recursion of $E_{e,r}^{\cC}(\lambda_g) =E_{e,r-1}^{\cC}(\lambda_g)F_{e,r}^{\cC}(\lambda_g)$ and the $\cF_{r-1}$-measurability of the preceding product yield
\begin{align*}
\E[E_{e,r}^{\cC}(\lambda_g)\mid\cF_{r-1}]
&=E_{e,r-1}^{\cC}(\lambda_g)
  \E[F_{e,r}^{\cC}(\lambda_g)\mid\cF_{r-1}]\\
&\le E_{e,r-1}^{\cC}(\lambda_g).
\end{align*}
Assumption~\ref{ass:design-iclr}(3) makes each $\rho_g$
$\cF_0$-measurable and keeps it unchanged across runs. As
$\cF_0\subseteq\cF_{r-1}$ and the sum has $G<\infty$ terms,
\begin{align*}
\E[E_{e,r}^{\cC}\mid\cF_{r-1}]
&=\sum_{g=1}^G\rho_g
       \E[E_{e,r}^{\cC}(\lambda_g)\mid\cF_{r-1}]\\
&\le\sum_{g=1}^G\rho_gE_{e,r-1}^{\cC}(\lambda_g)
=E_{e,r-1}^{\cC}.
\end{align*}
Moreover, $E_{e,0}^{\cC}=\sum_{g=1}^G\rho_g=1$.
This proves the directional supermartingale assertion. Zero stakes and
zero mixture weights cause no difficulty.

For the FWER claim (\ref{eq:main-anytime-fwer}), define
\[
\mathcal N_0
=\{e=(a\to b)\in\overrightarrow{\cP}:\Delta_{a,b}^{\cB}\le\tau\},
\qquad
F_R=\bigcup_{r=1}^R\{\cE_r\cap\mathcal N_0\ne\varnothing\}.
\]
We will show $\Pp(F_R\mid\cF_0)\le\alpha$ and then average over
the initial design to obtain $\Pp(F_R\mid\cB,\pi)\le\alpha$.
Membership in $\mathcal N_0$ depends only on the tested directional family $\overrightarrow{\cP}$ and on the benchmark means $\theta_a^{\cB}$ and $\theta_b^{\cB}$.
The tested directional family is $\cF_0$-measurable by Assumption~\ref{ass:design-iclr}(3), while the benchmark means are determined by $(\cB,\pi)$. Therefore, for each direction $e$, $\mathbbm1\{e\in\mathcal N_0\}$ is $\cF_0$-measurable, and so is $h_0=|\mathcal N_0|$.
If $h_0=0$, $F_R$ is empty. For an initial design with $h_0\ge1$, write $t=1/\alpha$ and define
\[
A_r=\frac1{h_0}\sum_{e\in\mathcal N_0}E_{e,r}^{\cC}.
\]
All directional processes use the same filtration $\cF_r$, so
\[
\E[A_r\mid\cF_{r-1}]
=\frac1{h_0}\sum_{e\in\mathcal N_0}
       \E[E_{e,r}^{\cC}\mid\cF_{r-1}]
\le\frac1{h_0}\sum_{e\in\mathcal N_0}E_{e,r-1}^{\cC}
=A_{r-1}.
\]
Hence $A_r$ is a nonnegative supermartingale with $A_0=1$. The displayed equality is linearity of conditional expectation for a finite sum; the directional processes may be dependent. The cutoff in (\ref{eq:eholm-rejections}) is
\[
\cJ_r=\{e\in\overrightarrow{\cP}:E_{e,r}^{\cC}<t\},
\qquad
c_r=t+\sum_{e\in\cJ_r}(t-E_{e,r}^{\cC}).
\]

Suppose some $e_\star\in\mathcal N_0$ is rejected at time $r$. Since $c_r\ge t$, this direction does not belong to $\cJ_r$. For every $e\in\cJ_r$, the definition of $\cJ_r$ gives $t-E_{e,r}^{\cC}>0$. Splitting the cutoff sum therefore gives
\begin{align*}
E_{e_\star,r}^{\cC}\ge c_r
&=t+\sum_{e\in\mathcal N_0\cap\cJ_r}(t-E_{e,r}^{\cC})
    +\sum_{e\in\cJ_r\setminus\mathcal N_0}(t-E_{e,r}^{\cC})\\
&\ge t+\sum_{e\in\mathcal N_0\cap\cJ_r}(t-E_{e,r}^{\cC}).
\end{align*}
Add the e-values in $\mathcal N_0\cap\cJ_r$ to both sides. Each remaining true-null direction other than $e_\star$ is outside $\cJ_r$ and thus has e-value at least $t$. Adding those e-values gives
\begin{equation}
\sum_{e\in\mathcal N_0}E_{e,r}^{\cC}
\ge(1+|\mathcal N_0\cap\cJ_r|)t
 +(h_0-|\mathcal N_0\cap\cJ_r|-1)t
=h_0t.
\label{eq:proof-rejection-implication}
\end{equation}
Dividing (\ref{eq:proof-rejection-implication}) by $h_0$ proves the pointwise event inclusion
\begin{equation}
\{\cE_r\cap\mathcal N_0\ne\varnothing\}
\subseteq\{A_r\ge1/\alpha\},\qquad 1\le r\le R.
\label{eq:proof-false-edge-crossing}
\end{equation}
This is the deterministic implication underlying the direct e-Holm rule of \citet[Theorem~4.2]{hartoglei2025}.

For completeness, the needed maximal inequality follows by a finite stopping argument. Let $T=\inf\{1\le r\le R:A_r\ge1/\alpha\}$, with $T=\infty$ if there is no crossing. For any finite $n\le R$,
\[
A_{T\wedge n}
=A_0+\sum_{s=1}^n\mathbbm1\{T\ge s\}(A_s-A_{s-1}).
\]
For $1\le s\le n$, the indicator multiplying $A_s-A_{s-1}$ is
\[
\mathbbm1\{T\ge s\}
=\prod_{q=1}^{s-1}\mathbbm1\{A_q<1/\alpha\}\in\{0,1\},
\]
where the empty product for $s=1$ equals one. Since each $A_q$ is $\cF_q$-measurable and $\cF_q\subseteq\cF_{s-1}$ for $q<s$, $\mathbbm1\{T\ge s\}$ is $\cF_{s-1}$-measurable. Applying the tower property to the finite stopped sum gives
\begin{align*}
\E[A_{T\wedge n}\mid\cF_0]
&=1+\sum_{s=1}^n\E\!\left[
 \mathbbm1\{T\ge s\}
 \E[A_s-A_{s-1}\mid\cF_{s-1}]
 \middle|\cF_0\right]\\
&\le1,
\end{align*}
because the indicator is nonnegative and each inner conditional expectation is nonpositive. Nonnegativity of $A_{T\wedge n}$ and the definition of $T$ imply
\[
A_{T\wedge n}\ge\alpha^{-1}\mathbbm1\{T\le n\},
\qquad
\Pp(T\le n\mid\cF_0)\le\alpha.
\]
By (\ref{eq:proof-false-edge-crossing}), $F_R\subseteq\{T\le R\}$ when $h_0>0$; when $h_0=0$, $F_R$ is empty. Therefore $\Pp(F_R\mid\cF_0)\le\alpha$ almost surely in both cases.
Since $\sigma(\cB,\pi)\subseteq\cF_0$, the tower property gives
\[
\Pp(F_R\mid\cB,\pi)
=\E[\Pp(F_R\mid\cF_0)\mid\cB,\pi]\le\alpha.
\]
This is exactly (\ref{eq:main-anytime-fwer}), which averages over any independent initial design selection rather than changing the target. If the assumptions hold for every run, apply this bound for each positive integer $n$. The events $\{\max_{1\le r\le n}A_r\ge1/\alpha\}$ increase to the event of any finite-time crossing, giving the same
bound over all times. This is Ville's argument in the nonnegative-supermartingale framework \citep{howard2020}.

For the path inequalities, acyclicity, rank coverage, and top-set certificates in Theorem~\ref{thm:bedge-main-iclr}, consider
\[
\Omega_R=F_R^c
=\bigcap_{r=1}^R\{\cE_r\cap\mathcal N_0=\varnothing\}.
\]
We have proved $\Pp(\Omega_R\mid\cB,\pi)\ge1-\alpha$. Fix an outcome in $\Omega_R$ and a completed run $1\le r\le R$. By the definitions of $\Omega_R$ and $\mathcal N_0$, each $(a\to b)\in\cE_r$ satisfies $\theta_a^{\cB}-\theta_b^{\cB}>\tau$. For a path $a=v_0\to v_1\to\cdots\to v_d=b$ of $d\ge1$ such edges, summing the $d$ inequalities gives
\begin{equation}
\theta_a^{\cB}-\theta_b^{\cB}
=\sum_{j=0}^{d-1}
 (\theta_{v_j}^{\cB}-\theta_{v_{j+1}}^{\cB})>d\tau.
\label{eq:proof-path-margin}
\end{equation}
For a directed cycle, $v_d=v_0$, and (\ref{eq:proof-path-margin}) would give $0>d\tau\ge0$, a contradiction. Thus $(\cV,\cE_r)$ is acyclic. For any distinct vertices linked by a path, $d\ge1$ and $\tau\ge0$ imply $\Delta_{a,b}^{\cB}>d\tau\ge\tau$, so the derived edge is a true margin claim as well.

For a model $\ell$, let $B_\ell=\{j\ne\ell:\theta_j^{\cB}>\theta_\ell^{\cB}\}$. Recall that $A_{\ell,r}$ contains the vertices with a directed path to $\ell$, and $D_{\ell,r}$ contains the vertices reached by a directed path from $\ell$, in both cases excluding $\ell$ itself. Equation (\ref{eq:proof-path-margin}) gives
\[
A_{\ell,r}\subseteq B_\ell,\qquad
D_{\ell,r}\subseteq\{j\ne\ell:\theta_j^{\cB}<\theta_\ell^{\cB}\}.
\]
The second set is disjoint from $B_\ell$, so $|A_{\ell,r}|\le|B_\ell|$ and $|B_\ell|+|D_{\ell,r}|\le L-1$. With the true rank defined by $\operatorname{rank}_{B}(\ell)=1+|B_\ell|$, these inequalities give
\begin{equation}
1+|A_{\ell,r}|
\le\operatorname{rank}_{B}(\ell)
\le L-|D_{\ell,r}|,
\qquad
\operatorname{rank}_{B}(\ell)\in\cI_{\ell,r},
\label{eq:proof-main-rank-coverage}
\end{equation}
where $\cI_{\ell,r}=[1+|A_{\ell,r}|,L-|D_{\ell,r}|]$ is the interval in (\ref{eq:graph-rank-interval}). This is the rank definition used in Corollary~\ref{cor:anytime_rank_intervals}, which counts strictly better means and permits ties.

Finally, if a nonempty proper set $\mathcal T$ is certified, every $a\in\mathcal T$ reaches every $b\notin\mathcal T$. Applying (\ref{eq:proof-path-margin}) to every such pair and using $\tau\ge0$ gives
\[
\min_{a\in\mathcal T}\theta_a^{\cB}
>\max_{b\notin\mathcal T}\theta_b^{\cB}.
\]
Thus $\mathcal T$ is the top set of its stated size. Ties within either set are allowed. A tie across their boundary cannot satisfy this certificate on $\Omega_R$. All these implications hold pointwise on $\Omega_R$ for every $1\le r\le R$, model, and reported set, so no additional multiplicity correction is needed for these graph consequences.
\end{proof}

\subsection{Proof of Theorem~\ref{thm:pilot-topk}}
\label{app:proof-pilot-topk}

\begin{proof}
In Section~\ref{sec:pilot-topk}, the pilot data are $\mathcal D_0=(Z_1^{(0)},\ldots,Z_{R_0}^{(0)})$, and the confirmatory data are the independent subsequent runs $Z_1,\ldots,Z_R$. The time-zero sigma-field $\cF_0$ of the confirmatory filtration contains the pilot and the fixed design, so $\sigma(\mathcal D_0,\cB,\pi)\subseteq\cF_0$, and $\cF_r=\sigma(\cF_0,Z_1,\ldots,Z_r)$. Let $(i_1,\ldots,i_L)$ be the permutation obtained by sorting the pairs $(-\widehat\theta_\ell^{(0)},\ell)$ in increasing lexicographic order, with pilot means defined in (\ref{eq:pilot-score}). Thus larger pilot means come first, and a smaller model index comes first when two pilot means are equal. The rule gives a unique permutation by finitely many comparisons of pilot-measurable scores. Consequently, $\mathcal T_0=\{i_1,\ldots,i_k\}$ is measurable with respect to $\sigma(\mathcal D_0,\cB,\pi)$ and has exactly $k$ members. Define
\[
\mathcal U_0=\{(a\to b):a\in\mathcal T_0,\ b\notin\mathcal T_0\},
\qquad
\mathcal U_{00}=\{e\in\mathcal U_0:\Delta_e^{\cB}\le\tau\},
\]
where $\Delta_e^{\cB}=\Delta_{a,b}^{\cB}$ for $e=(a\to b)$. The family $\cH(\mathcal T_0)$ consists of the hypotheses indexed by $\mathcal U_0$, and $\mathcal U_{00}$ contains precisely its true nulls. Both sets are measurable with respect to $\cF_0$, i.e.,  membership in $\mathcal U_0$ is determined by the pilot, and the benchmark means are fixed given $(\cB,\pi)$. In particular, $|\mathcal U_0|=k(L-k)>0$, while $h_0=|\mathcal U_{00}|$ is allowed to be zero. We first work under the conditional law given $\cF_0$, for almost every initial design.

For any bounded measurable function $h$ of a complete panel, Assumption~\ref{ass:design-iclr}(1) and then (3) imply
\begin{align}
\E[h(Z_r)\mid\cF_{r-1}]
&=\E[h(Z_r)\mid\cF_0]
 =\E[h(Z_r)\mid\cB,\pi],
\label{eq:pilot-execution-law-detail}
\end{align}
where the first equality uses conditional independence of the new run from the preceding confirmatory runs, and the second uses independence of the entire initial information, including the pilot, from the confirmatory panels given $(\cB,\pi)$. Conditional identical distribution across runs makes the last expectation independent of $r$. Thus, for each fixed direction $e=(a\to b)$ and its block means $\nu_{e,m,r}=\E[Y_{e,m,r}\mid\cF_{r-1}]$,
\begin{align}
\sum_{m=1}^M w_m\nu_{e,m,r}
&=\E\!\left[\sum_{m=1}^M w_mY_{e,m,r}\middle|\cF_{r-1}\right]
 =\E\!\left[\frac1N\sum_{i=1}^N X_{i,r}^{e}
             \middle|\cF_{r-1}\right]\nonumber\\
&=\frac12+\frac{1}{2N}\sum_{i=1}^N
 \E[S_{i,r}^{(a)}-S_{i,r}^{(b)}\mid\cB,\pi]
 =\frac{1+\Delta_e^{\cB}}2,
\label{eq:pilot-mean-detail}
\end{align}
where the first equality uses predictable weights, the second the partition identity, and the third (\ref{eq:pilot-execution-law-detail}).
These equalities hold simultaneously for all directions in the finite model universe, hence also for the pilot-selected directions. Under $e\in\mathcal U_{00}$, their common value is at most $\mu_0$. No blockwise inequality $\nu_{e,m,r}\le\mu_0$ has been imposed.

We verify the evidence property for this conditional null family.
For a fixed grid component set $u_m=\lambda_gw_m/w_\star\in[0,1)$ and $p_{m,r}=\widehat Y_{e,m,r}$. Define
\[
Q_{m,r}=\exp\{u_m(Y_{e,m,r}-\nu_{e,m,r})
                   -\psi_E(u_m)(Y_{e,m,r}-p_{m,r})^2\}.
\]
Assumption~\ref{ass:design-iclr}(4) makes $p_{m,r}$ predictable and bounded, so Lemma~\ref{lem:conditional-eb-proof} gives $\E[Q_{m,r}\mid\cF_{r-1}]\le1$. Conditional on this history, the $Q_{m,r}$ are functions of the separate block vectors. Their mutual conditional independence follows from Assumption~\ref{ass:design-iclr}(2). Consequently, for $e\in\mathcal U_{00}$,
\begin{align}
\E[F_{e,r}^{\cC}(\lambda_g)\mid\cF_{r-1}]
&=\exp\!\left\{\frac{\lambda_g}{w_\star}
       \left(\sum_m w_m\nu_{e,m,r}-\mu_0\right)\right\}
       \prod_m\E[Q_{m,r}\mid\cF_{r-1}]\nonumber\\
&\le1.
\label{eq:pilot-factor-detail}
\end{align}
For clarity, $F_{e,r}^{\cC}(\lambda_g)$ is the exponential factor in (\ref{eq:block-factor}), evaluated on confirmatory run $r$. The bounds preceding (\ref{eq:proof-evidence-bounds}) apply here and give
\[
-\sum_{m=1}^M\{u_m+\psi_E(u_m)\}
\le\log F_{e,r}^{\cC}(\lambda_g)\le N.
\]
Since $u_m<1$, the lower bound is finite, and hence $0<F_{e,r}^{\cC}(\lambda_g)\le\exp(N)$. For a single grid value $\lambda_g$, the corresponding process and the mixture over the grid are, respectively,
\[
E_{e,r}^{\cC}(\lambda_g)=\prod_{s=1}^rF_{e,s}^{\cC}(\lambda_g),
\qquad
E_{e,r}^{\cC}=\sum_{g=1}^G\rho_gE_{e,r}^{\cC}(\lambda_g).
\]
The mixture coefficients satisfy $\rho_g\ge0$ and $\sum_g\rho_g=1$. Thus both process values lie in $(0,\exp(rN)]$, which gives
\[
\E[|E_{e,r}^{\cC}(\lambda_g)|\mid\cF_0]\le\exp(rN)<\infty,
\qquad
\E[|E_{e,r}^{\cC}|\mid\cF_0]\le\exp(rN)<\infty.
\]
This is the finite-time integrability required for the conditional expectations below. Each $\rho_g$ is $\cF_0$-measurable, and $E_{e,r-1}^{\cC}(\lambda_g)$ is $\cF_{r-1}$-measurable. For $e\in\mathcal U_{00}$, the product recursion and (\ref{eq:pilot-factor-detail}) therefore give
\begin{align*}
\E[E_{e,r}^{\cC}\mid\cF_{r-1}]
&=\sum_g\rho_gE_{e,r-1}^{\cC}(\lambda_g)\E[F_{e,r}^{\cC}(\lambda_g)\mid\cF_{r-1}]\\
&\le\sum_g\rho_gE_{e,r-1}^{\cC}(\lambda_g)=E_{e,r-1}^{\cC}.
\end{align*}
At $r=0$, the empty products equal one, so $E_{e,0}^{\cC}=1$. For $s\ge1$, the observed block mean $Y_{e,m,s}$ in each factor is computed from $Z_s$, not from any pilot run $Z_j^{(0)}$. In particular, no factor computed from a pilot run is multiplied into $E_{e,r}^{\cC}(\lambda_g)$. The pilot-selected family and any allowed initial design information are fixed when conditioning on $\cF_0$, and they are not additional confirmatory observations.

If $h_0=0$, no true cross-set null can be rejected. Suppose henceforth that $h_0\ge1$, and define
\[
A_r=\frac1{h_0}\sum_{e\in\mathcal U_{00}}E_{e,r}^{\cC}.
\]
Because $h_0$ and $\mathcal U_{00}$ are $\cF_0$-measurable, linearity of conditional expectation gives
\[
\E[A_r\mid\cF_{r-1}]
=\frac1{h_0}\sum_{e\in\mathcal U_{00}}
       \E[E_{e,r}^{\cC}\mid\cF_{r-1}]
\le\frac1{h_0}\sum_{e\in\mathcal U_{00}}E_{e,r-1}^{\cC}
=A_{r-1}.
\]
Also $A_0=h_0^{-1}\sum_{e\in\mathcal U_{00}}1=1$. The calculation concerns a sum, not a product, so dependence among the directional processes does not change the equality. For $t=1/\alpha$, the restricted-family cutoff is
\[
J_r=\{e\in\mathcal U_0:E_{e,r}^{\cC}<t\},\qquad
c_r=t+\sum_{e\in J_r}(t-E_{e,r}^{\cC}).
\]
Suppose a true-null direction $e_*\in\mathcal U_{00}$ is rejected at $r$. Then $E_{e_*,r}^{\cC}\ge c_r\ge t$, so $e_*\notin J_r$. Set $J_{0,r}=J_r\cap\mathcal U_{00}$ and $j_0=|J_{0,r}|$. For every $e\in J_r$, $E_{e,r}^{\cC}<t$, so $t-E_{e,r}^{\cC}>0$. Hence
\begin{align*}
E_{e_*,r}^{\cC}\ge c_r
&=t+\sum_{e\in J_{0,r}}(t-E_{e,r}^{\cC})
    +\sum_{e\in J_r\setminus J_{0,r}}(t-E_{e,r}^{\cC})\\
&\ge t+\sum_{e\in J_{0,r}}(t-E_{e,r}^{\cC}),
\end{align*}
where the second sum is nonnegative, including when it is empty. There are $h_0-j_0-1$ directions in $\mathcal U_{00}\setminus(J_r\cup\{e_*\})$. Each satisfies $E_{e,r}^{\cC}\ge t$ because it is outside $J_r$. Therefore
\begin{align}
\sum_{e\in\mathcal U_{00}}E_{e,r}^{\cC}
&\ge E_{e_*,r}^{\cC}+\sum_{e\in J_{0,r}}E_{e,r}^{\cC}+(h_0-j_0-1)t\nonumber\\
&\ge t+j_0t+(h_0-j_0-1)t=h_0t.
\label{eq:pilot-cutoff-detail}
\end{align}
Thus any false rejection requires $A_r\ge t$. This derives the needed direct e-Holm implication rather than assuming independent tests \citep[Theorem~4.2]{hartoglei2025}.

Let $T=\inf\{1\le r\le R:A_r\ge t\}$, with $\inf\varnothing=\infty$. For $1\le n\le R$, the stopped process satisfies the pathwise identity
\[
A_{T\wedge n}=1+
 \sum_{s=1}^n\mathbbm1\{T\ge s\}(A_s-A_{s-1}).
\]
Since $\{T\ge s\}=\bigcap_{j=1}^{s-1}\{A_j<t\}\in\cF_{s-1}$, the tower property and the supermartingale inequalities give
\begin{align*}
\E[A_{T\wedge n}\mid\cF_0]
&=1+\sum_{s=1}^n\E\!\left[
 \mathbbm1\{T\ge s\}\E[A_s-A_{s-1}\mid\cF_{s-1}]
 \middle|\cF_0\right]\le1.
\end{align*}
Nonnegativity gives $A_{T\wedge n}\ge t\mathbbm1\{T\le n\}$, and hence $\Pp(T\le n\mid\cF_0)\le1/t=\alpha$. Let $F_R$ denote the false-rejection event in (\ref{eq:topk-anytime-fwer}). Taking $n=R$ and using (\ref{eq:pilot-cutoff-detail}) gives $\Pp(F_R\mid\cF_0)\le\alpha$. This finite stopped-sum argument is the form of Ville's inequality needed here \citep{howard2020}.
Since $\sigma(\mathcal D_0,\cB,\pi)\subseteq\cF_0$, the tower property gives
\[
\Pp(F_R\mid\mathcal D_0,\cB,\pi)
=\E[\Pp(F_R\mid\cF_0)\mid\mathcal D_0,\cB,\pi]\le\alpha,
\]
which proves (\ref{eq:topk-anytime-fwer}) for almost every pilot realization. Averaging once more over the pilot data yields
\[
\Pp(F_R\mid\cB,\pi)
=\E[\Pp(F_R\mid\mathcal D_0,\cB,\pi)\mid\cB,\pi]\le\alpha,
\]
including pilot outcomes for which $h_0=0$.

For the Top-$k$ certificate, let $C_r=\{\mathcal U_0\subseteq\cE_r\}$. On $F_R^c\cap C_r$, every cross-set gap exceeds $\tau$. The two sets are finite and nonempty, so choose $a_*\in\arg\min_{a\in\mathcal T_0}\theta_a^{\cB}$ and $b_*\in\arg\max_{b\notin\mathcal T_0}\theta_b^{\cB}$. Then
\begin{equation}
\min_{a\in\mathcal T_0}\theta_a^{\cB}
-\max_{b\notin\mathcal T_0}\theta_b^{\cB}
=\Delta_{a_*,b_*}^{\cB}>\tau\ge0.
\label{eq:pilot-topset-detail}
\end{equation}
The same argument applies to every $1\le r\le R$ on $F_R^c$.
Conversely, if $\min_{a\in\mathcal T_0}\theta_a^{\cB}  \le\max_{b\notin\mathcal T_0}\theta_b^{\cB}+\tau$,
then $\Delta_{a_*,b_*}^{\cB}\le\tau$. If $C_r$ also occurs, $(a_*\to b_*)\in\mathcal U_{00}\cap\cE_r$, so $F_R$ occurs.
This proves
\begin{equation}
\left\{\exists r\le R:C_r\ \text{and}\quad
 \min_{a\in\mathcal T_0}\theta_a^{\cB}
 \le\max_{b\notin\mathcal T_0}\theta_b^{\cB}+\tau\right\}
\subseteq F_R.
\label{eq:pilot-certificate-inclusion}
\end{equation}
Taking complements and using the conditional error bound proves the precise certificate assertion
\[
\Pp\!\left(
 \forall\,1\le r\le R:\ C_r\ \Longrightarrow\quad
 \min_{a\in\mathcal T_0}\theta_a^{\cB}
 >\max_{b\notin\mathcal T_0}\theta_b^{\cB}+\tau
 \,\middle|\,\mathcal D_0,\cB,\pi\right)\ge1-\alpha.
\]
Since $\tau\ge0$, every member of a certified set has a larger benchmark mean than every model outside it on $F_R^c$.

For $1\le r\le R$, write
$C_{\le r}=\bigcup_{s=1}^r C_s$, where
$C_s=\{\mathcal U_0\subseteq\cE_s\}$. Thus $C_{\le R}$ is the event
that a certificate is issued by run $R$. Define the candidate's
margin-failure event by
\[
B_0=\left\{
\min_{a\in\mathcal T_0}\theta_a^{\cB}
\le\max_{b\notin\mathcal T_0}\theta_b^{\cB}+\tau\right\}.
\]
The inclusion (\ref{eq:pilot-certificate-inclusion}) gives
\begin{equation}
\Pp(C_{\le R}\cap B_0\mid\mathcal D_0,\cB,\pi)\le\alpha.
\label{eq:pilot-certificate-error}
\end{equation}
Given $(\mathcal D_0,\cB,\pi)$, the candidate and its true
benchmark means are fixed, so $B_0$ is already determined. For any
such pilot outcome in $B_0$, (\ref{eq:pilot-certificate-error}) reduces to
$\Pp(C_{\le R}\mid\mathcal D_0,\cB,\pi)\le\alpha$.
An incorrect pilot selection is therefore allowed without assuming that
the selected set satisfies the required margin.

For this fixed-candidate procedure, the event $A_{\le r}$ in
(\ref{eq:metric-certificate-events}) is exactly $C_{\le r}\cap B_0$.
Since $C_{\le r}\subseteq C_{\le R}$, monotonicity of conditional
expectation and (\ref{eq:pilot-certificate-error}) give, almost surely,
\[
\Pp(C_{\le r}\cap B_0\mid\mathcal D_0,\cB,\pi)
\le\Pp(C_{\le R}\cap B_0\mid\mathcal D_0,\cB,\pi)\le\alpha.
\]
Averaging over the pilot data with the tower property gives the
error metric in (\ref{eq:metric-fcp}),
\begin{align*}
\operatorname{FCP}_r
&=\Pp(C_{\le r}\cap B_0\mid\cB,\pi)\\
&=\E\!\left[\Pp(C_{\le r}\cap B_0\mid\mathcal D_0,\cB,\pi)
\,\middle|\,\cB,\pi\right]\le\alpha.
\end{align*}
If $\operatorname{CertProb}_r=\Pp(C_{\le r}\mid\cB,\pi)>0$, then
\[
\Pp(B_0\mid C_{\le r},\cB,\pi)
=\frac{\Pp(C_{\le r}\cap B_0\mid\cB,\pi)}
{\Pp(C_{\le r}\mid\cB,\pi)}
=\frac{\operatorname{FCP}_r}{\operatorname{CertProb}_r}.
\]
Therefore, the guarantee $\operatorname{FCP}_r\le\alpha$ controls the probability of issuing an incorrect certificate, rather than the conditional error probability given that certification occurs. This distinction is particularly clear after conditioning on the pilot. That is, given $(\mathcal D_0,\cB,\pi)$, the event $B_0$ is fixed. Hence, if $B_0$ holds for a particular pilot realization and certification occurs with positive conditional probability, then
\[
\Pp(B_0\mid C_{\le R},\mathcal D_0,\cB,\pi)=1.
\]

Finally, if some $a\in\mathcal T_0$ and $b\notin\mathcal T_0$
have $\theta_a^{\cB}=\theta_b^{\cB}$, then
$\Delta_{a,b}^{\cB}=0\le\tau$. On $F_R^c$, this direction is
never rejected, so the requirement $\mathcal U_0\subseteq\cE_r$
cannot be met at any $r\le R$. This concerns a tie in true benchmark
means across the two groups, not a tie in the pilot scores.
\end{proof}

\subsection{Proof of Corollary~\ref{cor:anytime_rank_intervals}}
\label{app:proof-rank-intervals}

\begin{proof}
Fix an integer $R\ge1$ and consider the first $R$ completed runs.
For the true-null set $\mathcal N_0$ in
Theorem~\ref{thm:bedge-main-iclr}, define
\begin{equation}
\begin{aligned}
\Omega_R
&=\bigcap_{r=1}^R\{\cE_r\cap\mathcal N_0=\varnothing\}\\
&=\left\{\forall\,1\le r\le R,\ \forall(a\to b)\in\cE_r:
\theta_a^{\cB}-\theta_b^{\cB}=\Delta_{a,b}^{\cB}>\tau\right\}.
\end{aligned}
\label{eq:rank-true-edge-event}
\end{equation}
By (\ref{eq:main-anytime-fwer}),
$\Pp(\Omega_R\mid\cB,\pi)\ge1-\alpha$.
For the interval in (\ref{eq:graph-rank-interval}), write
\[
\mathcal I_{\ell,r}=\{x\in\mathbb R:1+|\mathcal A_{\ell,r}|\le x\le L-|\mathcal D_{\ell,r}|\}.
\]
Define the joint coverage event
\[
C_R=\bigcap_{r=1}^R\bigcap_{\ell\in\mathcal V}\{\operatorname{rank}_{B}(\ell)\in\mathcal I_{\ell,r}\}.
\]
We will prove $\Omega_R\subseteq C_R$.

Fix $\omega\in\Omega_R$, $1\le r\le R$, and $\ell\in\mathcal V$.
All graph quantities in the following calculation are evaluated at
$\omega$. A directed cycle
$v_0\to v_1\to\cdots\to v_d=v_0$ with $d\ge1$ would imply
\[
0=\theta_{v_0}^{\cB}-\theta_{v_d}^{\cB}
 =\sum_{j=0}^{d-1}
  (\theta_{v_j}^{\cB}-\theta_{v_{j+1}}^{\cB})
 >d\tau\ge0,
\]
This gives $0>0$, a contradiction. Thus $(\mathcal V,\cE_r)$ is acyclic.

Define the true sets of strictly better, strictly worse, and tied models by
\begin{align*}
\mathcal B_\ell&=\{j\ne\ell:\theta_j^{\cB}>\theta_\ell^{\cB}\},\\
\mathcal W_\ell&=\{j\ne\ell:\theta_j^{\cB}<\theta_\ell^{\cB}\},\\
\mathcal O_\ell&=\{j\ne\ell:\theta_j^{\cB}=\theta_\ell^{\cB}\}.
\end{align*}
They are pairwise disjoint and satisfy
\[
\mathcal B_\ell\cup\mathcal W_\ell\cup\mathcal O_\ell
=\mathcal V\setminus\{\ell\},\qquad
|\mathcal B_\ell|+|\mathcal W_\ell|+|\mathcal O_\ell|=L-1.
\]
If
$j\in\mathcal A_{\ell,r}$, there exists a path
$j=v_0\to\cdots\to v_d=\ell$ of length $d\ge1$.
Summing its true edge inequalities gives
\[
\theta_j^{\cB}-\theta_\ell^{\cB}
=\sum_{q=0}^{d-1}
 (\theta_{v_q}^{\cB}-\theta_{v_{q+1}}^{\cB})>d\tau\ge0.
\]
Thus $j\in\mathcal B_\ell$. If instead
$j\in\mathcal D_{\ell,r}$, choose a path
$\ell=u_0\to\cdots\to u_q=j$, where $q\ge1$. Then
\[
\theta_\ell^{\cB}-\theta_j^{\cB}
=\sum_{s=0}^{q-1}
 (\theta_{u_s}^{\cB}-\theta_{u_{s+1}}^{\cB})>q\tau\ge0,
\]
so $j\in\mathcal W_\ell$. We have proved the two inclusions
\begin{equation}
\mathcal A_{\ell,r}\subseteq\mathcal B_\ell,
\qquad
\mathcal D_{\ell,r}\subseteq\mathcal W_\ell
\subseteq(\mathcal V\setminus\{\ell\})\setminus\mathcal B_\ell.
\label{eq:rank-set-inclusions-detail}
\end{equation}
In particular,
\[
\mathcal A_{\ell,r}\cap\mathcal D_{\ell,r}=\varnothing,
\qquad
(\mathcal A_{\ell,r}\cup\mathcal D_{\ell,r})
                  \cap\mathcal O_\ell=\varnothing.
\]
The rank in the corollary is
$\operatorname{rank}_{B}(\ell)=1+|\mathcal B_\ell|$.
Using (\ref{eq:rank-set-inclusions-detail}) and the partition above,
the differences between this rank and the two interval endpoints are
\begin{align}
\operatorname{rank}_{B}(\ell)-(1+|\mathcal A_{\ell,r}|)
&=|\mathcal B_\ell|-|\mathcal A_{\ell,r}|
 =|\mathcal B_\ell\setminus\mathcal A_{\ell,r}|\ge0,\nonumber\\
(L-|\mathcal D_{\ell,r}|)-\operatorname{rank}_{B}(\ell)
&=L-1-|\mathcal B_\ell|-|\mathcal D_{\ell,r}|\nonumber\\
&=|\mathcal W_\ell\setminus\mathcal D_{\ell,r}|
   +|\mathcal O_\ell|\ge0.
\label{eq:rank-endpoint-differences}
\end{align}
Consequently,
\begin{equation}
\begin{aligned}
1+|\mathcal A_{\ell,r}|
\le1+|\mathcal B_\ell|
=\operatorname{rank}_{B}(\ell)
\le L-|\mathcal D_{\ell,r}|,\\
\operatorname{rank}_{B}(\ell)\in
\mathcal I_{\ell,r}
=[1+|\mathcal A_{\ell,r}|,\ L-|\mathcal D_{\ell,r}|]
\ne\varnothing.
\end{aligned}
\label{eq:rank-counting-detail}
\end{equation}
The calculation holds for every $r\le R$ and every $\ell$ at the
fixed outcome $\omega$. Since $\omega\in\Omega_R$ was arbitrary,
\begin{equation}
\Omega_R\subseteq C_R,
\qquad
\mathbbm1\{\Omega_R\}\le\mathbbm1\{C_R\}.
\label{eq:rank-coverage-event-detail}
\end{equation}

To justify taking conditional expectations in
(\ref{eq:rank-coverage-event-detail}), note first that
$E_{e,r}^{\cC}$ and the cutoff $c_r$ are $\cF_r$-measurable.
Thus $\{e\in\cE_r\}=\{E_{e,r}^{\cC}\ge c_r\}\in\cF_r$
for every tested direction; an untested direction has an empty edge
event. For distinct vertices $a,b$, reachability satisfies
\begin{equation}
\{a\leadsto_r b\}
=\bigcup_{d=1}^{L-1}
 \ \bigcup_{\substack{v_0=a,\ v_d=b\\
                       v_0,\ldots,v_d\ \mathrm{distinct}}}
       \ \bigcap_{j=0}^{d-1}
       \{(v_j\to v_{j+1})\in\cE_r\}
\in\cF_r.
\label{eq:rank-reachability-measurable}
\end{equation}
Indeed, deleting a segment between repeated vertices shortens a
walk without changing its endpoints; successive deletions give a
path with at most $L-1$ edges. The unions in
(\ref{eq:rank-reachability-measurable}) are therefore finite. It follows that
\[
|\mathcal A_{\ell,r}|=\sum_{j\ne\ell}\mathbbm1\{j\leadsto_r\ell\},
\qquad
|\mathcal D_{\ell,r}|=\sum_{j\ne\ell}\mathbbm1\{\ell\leadsto_r j\}
\]
are $\cF_r$-measurable. The true rank is $\cF_0$-measurable, so
\begin{align*}
\{\operatorname{rank}_{B}(\ell)\in\mathcal I_{\ell,r}\}
&=\{1+|\mathcal A_{\ell,r}|\le\operatorname{rank}_{B}(\ell)\}\\
&\quad\cap\{\operatorname{rank}_{B}(\ell)\le L-|\mathcal D_{\ell,r}|\}
\in\cF_r.
\end{align*}
Hence $C_R,\Omega_R\in\cF_R$. Taking conditional expectations of
the indicator inequality in (\ref{eq:rank-coverage-event-detail}) gives
\begin{align}
\Pp(C_R\mid\cB,\pi)
&=\E[\mathbbm1\{C_R\}\mid\cB,\pi]\nonumber\\
&\ge\E[\mathbbm1\{\Omega_R\}\mid\cB,\pi]
 =\Pp(\Omega_R\mid\cB,\pi)\ge1-\alpha.
\label{eq:rank-finite-coverage}
\end{align}
This proves the corollary for the first $R$ completed runs.

For the additional statement following the corollary, suppose
Assumption~\ref{ass:design-iclr} holds for one sequence $(Z_r)_{r\ge1}$.
For every integer $n\ge1$, (\ref{eq:rank-finite-coverage}) applies to
its first $n$ runs. The corresponding events satisfy
\[
C_{n+1}\subseteq C_n,\qquad
C_\infty:=\bigcap_{n=1}^\infty C_n
=\bigcap_{r=1}^\infty\bigcap_{\ell\in\mathcal V}
       \{\operatorname{rank}_{B}(\ell)\in\mathcal I_{\ell,r}\}.
\]
Since $\mathbbm1\{C_n^c\}\uparrow\mathbbm1\{C_\infty^c\}$,
conditional monotone convergence gives
\begin{align*}
\Pp(C_\infty\mid\cB,\pi)
&=1-\E[\mathbbm1\{C_\infty^c\}\mid\cB,\pi]\\
&=1-\lim_{n\to\infty}\E[\mathbbm1\{C_n^c\}\mid\cB,\pi]
 =\lim_{n\to\infty}\Pp(C_n\mid\cB,\pi)\ge1-\alpha
 \quad\text{a.s.}
\end{align*}
The bounds $\Pp(C_n\mid\cB,\pi)\ge1-\alpha$ in
(\ref{eq:rank-finite-coverage}) hold simultaneously for all $n$
outside a countable union of probability-zero sets.
\end{proof}

\subsection{Proof of Theorem~\ref{thm:update_benchmark}}
\label{app:proof-epochs}

\begin{proof}
For an epoch $v$ that is entered, $\mathcal K_{v-1}$ is the sigma-field containing all information available before its fresh runs, as defined in Section~\ref{sec:epoch-reset-procedure}.
Its within-epoch filtration is
\[
\mathcal F^{(v)}_0=\mathcal K_{v-1},\qquad
\mathcal F^{(v)}_r
=\sigma(\mathcal K_{v-1},Z^{(v)}_1,\ldots,Z^{(v)}_r).
\]
Here $Z^{(v)}_r$ contains all scores on the frozen benchmark
$B^{(v)}$. In this proof, $S_{i,r}^{(\ell)}$ denotes its component
for item $i$ and model $\ell$ in epoch $v$.
We first condition on $\mathcal K_{v-1}$. By the design in
Section~\ref{sec:epoch-reset-procedure},
\[
\sigma\!\left(B^{(v)},\mathcal H^{(v)},\mathcal C^{(v)},
\Lambda^{(v)},(\rho_g^{(v)})_g,\alpha_v\right)
\subseteq\mathcal K_{v-1},
\]
where $\mathcal C^{(v)}$ is the partition and $\Lambda^{(v)}$ is
the stake grid in epoch $v$. Define
\[
\mathcal U^{(v)}=\{e:H_e^{(v,\tau)}\in\mathcal H^{(v)}\},
\qquad
\mathcal N^{(v)}_0
=\{e\in\mathcal U^{(v)}:\Delta_e^{B^{(v)}}\le\tau\},
\]
\[
\Delta_e^{B^{(v)}}=\Delta_{a,b}^{B^{(v)}}\quad(e=(a\to b)),
\qquad h_v=|\mathcal N^{(v)}_0|.
\]
Since the benchmark means are functions of $(B^{(v)},\pi)$,
\[
\mathbbm1\{e\in\mathcal N^{(v)}_0\}
=\mathbbm1\{e\in\mathcal U^{(v)}\}
 \mathbbm1\{\Delta_e^{B^{(v)}}\le\tau\}
\]
is $\mathcal K_{v-1}$-measurable for each direction $e$, as is $h_v$.

Write $P_{b,\pi}$ for the fixed execution law of a fresh complete
panel on benchmark $b$. The execution-law condition in the theorem
means that, for every bounded measurable panel function $h$,
\begin{equation}
\E[h(Z^{(v)}_r)\mid\mathcal F^{(v)}_{r-1}]
=\E[h(Z^{(v)}_r)\mid\mathcal K_{v-1}]
=\int h(z)\,P_{B^{(v)},\pi}(dz).
\label{eq:epoch-execution-law-detail}
\end{equation}
The first equality uses conditional run independence in
Assumption~\ref{ass:design-iclr}(1); the second uses the fixed
execution law in Theorem~\ref{thm:update_benchmark}.
Because $\sigma(B^{(v)},\pi)\subseteq\mathcal K_{v-1}$, the
tower property also gives
\begin{align*}
\E[h(Z^{(v)}_r)\mid B^{(v)},\pi]
&=\E\!\left[\E[h(Z^{(v)}_r)\mid\mathcal K_{v-1}]
                       \middle|B^{(v)},\pi\right]\\
&=\E\!\left[\int h(z)P_{B^{(v)},\pi}(dz)
                       \middle|B^{(v)},\pi\right]
 =\int h(z)P_{B^{(v)},\pi}(dz).
\end{align*}
Thus the conditional expectations in
(\ref{eq:epoch-execution-law-detail}) use the same law as the
benchmark means $\theta_\ell^{B^{(v)}}$.

Let the epoch partition have $M_v$ nonempty blocks with
$w^{(v)}_m=|C^{(v)}_m|/N_v$ and
$w^{(v)}_\star=\max_m w^{(v)}_m>0$. For $e=(a\to b)$, define
\[
Y^{(v)}_{e,m,r}
=\frac1{|C_m^{(v)}|}\sum_{i\in C_m^{(v)}}
          \frac{S_{i,r}^{(a)}-S_{i,r}^{(b)}+1}{2},
\qquad
\nu^{(v)}_{e,m,r}
=\E[Y^{(v)}_{e,m,r}\mid\mathcal F^{(v)}_{r-1}].
\]
The partition identity and
(\ref{eq:epoch-execution-law-detail}) give
\begin{align}
\sum_{m=1}^{M_v}w^{(v)}_m\nu^{(v)}_{e,m,r}
&=\E\!\left[\frac1{N_v}\sum_{i\in B^{(v)}}
 \frac{S_{i,r}^{(a)}-S_{i,r}^{(b)}+1}{2}
 \middle|\mathcal F^{(v)}_{r-1}\right]\nonumber\\
&=\frac{1+\Delta_e^{B^{(v)}}}{2}\le\mu_0,
\qquad e\in\mathcal N^{(v)}_0.
\label{eq:epoch-average-null-detail}
\end{align}
Here $e\in\mathcal N^{(v)}_0$ implies
$(1+\Delta_e^{B^{(v)}})/2\le(1+\tau)/2=\mu_0$.

For each grid value $\lambda_g^{(v)}$, define
$u^{(v)}_{g,m}=\lambda^{(v)}_gw^{(v)}_m/w^{(v)}_\star\in[0,1)$.
The prediction $\widehat Y^{(v)}_{e,m,r}$ is
$\mathcal F^{(v)}_{r-1}$-measurable and belongs to $[0,1]$ by
Assumption~\ref{ass:design-iclr}(4). Write
\[
Q^{(v)}_{g,m,r}
=\exp\!\left\{u^{(v)}_{g,m}(Y^{(v)}_{e,m,r}-\nu^{(v)}_{e,m,r})
 -\psi_E(u^{(v)}_{g,m})
    (Y^{(v)}_{e,m,r}-\widehat Y^{(v)}_{e,m,r})^2\right\}.
\]
Lemma~\ref{lem:conditional-eb-proof} gives
$\E[Q^{(v)}_{g,m,r}\mid\mathcal F^{(v)}_{r-1}]\le1$.
Given $\mathcal F^{(v)}_{r-1}$, each $Q^{(v)}_{g,m,r}$ is a
function of the block score vector
$V^{(v)}_{m,r}=(S_{i,r}^{(\ell)}:i\in C_m^{(v)},\ 1\le\ell\le L)$.
Assumption~\ref{ass:design-iclr}(2) therefore yields
\[
\E\!\left[\prod_{m=1}^{M_v}Q^{(v)}_{g,m,r}
               \middle|\mathcal F^{(v)}_{r-1}\right]
=\prod_{m=1}^{M_v}
   \E[Q^{(v)}_{g,m,r}\mid\mathcal F^{(v)}_{r-1}]\le1.
\]
For $e\in\mathcal N^{(v)}_0$, (\ref{eq:epoch-average-null-detail}) gives
\begin{equation}
\sum_m u^{(v)}_{g,m}(\nu^{(v)}_{e,m,r}-\mu_0)
=\frac{\lambda^{(v)}_g}{w^{(v)}_\star}
  \left(\sum_m w^{(v)}_m\nu^{(v)}_{e,m,r}-\mu_0\right)\le0.
\label{eq:epoch-centering-sum}
\end{equation}
Decomposing the factor in (\ref{eq:block-factor}) accordingly gives
\begin{align}
\E[F^{(v)}_{e,r}(\lambda^{(v)}_g)\mid\mathcal F^{(v)}_{r-1}]
&=\exp\!\left\{\sum_m u^{(v)}_{g,m}
                      (\nu^{(v)}_{e,m,r}-\mu_0)\right\}
 \prod_m\E[Q^{(v)}_{g,m,r}\mid\mathcal F^{(v)}_{r-1}]
\le1.
\label{eq:epoch-factor-validity}
\end{align}
For grid value $\lambda^{(v)}_g$, the process is
$E^{(v)}_{e,r}(\lambda^{(v)}_g)
=\prod_{s=1}^rF^{(v)}_{e,s}(\lambda^{(v)}_g)$, and its mixture
coefficient $\rho^{(v)}_g\ge0$ is fixed at the start of epoch $v$,
with $\sum_g\rho^{(v)}_g=1$.
The bounds in (\ref{eq:proof-evidence-bounds}), applied to the
epoch partition, give
\[
0<E^{(v)}_{e,r}(\lambda^{(v)}_g)\le\exp(rN_v),\qquad
0<E^{(v)}_{e,r}=\sum_g\rho^{(v)}_g
E^{(v)}_{e,r}(\lambda^{(v)}_g)\le\exp(rN_v).
\]
Since $N_v$ is finite and $\mathcal K_{v-1}$-measurable,
\[
\E[|E^{(v)}_{e,r}(\lambda^{(v)}_g)|\mid\mathcal K_{v-1}]
\le\exp(rN_v)<\infty,\qquad
\E[|E^{(v)}_{e,r}|\mid\mathcal K_{v-1}]\le\exp(rN_v)<\infty.
\]
The product $E^{(v)}_{e,r-1}(\lambda^{(v)}_g)$ and the coefficient
$\rho^{(v)}_g$ are $\mathcal F^{(v)}_{r-1}$-measurable. The
product recursion and (\ref{eq:epoch-factor-validity}) give, for
$e\in\mathcal N^{(v)}_0$,
\begin{align}
\E[E^{(v)}_{e,r}\mid\mathcal F^{(v)}_{r-1}]
&=\sum_g\rho^{(v)}_g E^{(v)}_{e,r-1}(\lambda^{(v)}_g)
 \E[F^{(v)}_{e,r}(\lambda^{(v)}_g)
                         \mid\mathcal F^{(v)}_{r-1}]\nonumber\\
&\le\sum_g\rho^{(v)}_g E^{(v)}_{e,r-1}(\lambda^{(v)}_g)
 =E^{(v)}_{e,r-1},\qquad
E^{(v)}_{e,0}=\sum_g\rho^{(v)}_g=1.
\label{eq:epoch-supermartingale-detail}
\end{align}
Define the epoch error event by
\[
F_v=\bigcup_{r=1}^\infty
       \{\mathcal E_r^{(v)}\cap\mathcal N_0^{(v)}\ne\varnothing\}.
\]
If $\alpha_v=0$, then $\mathcal E_r^{(v)}=\varnothing$ for all $r$,
so $F_v=\varnothing$. If $h_v=0$, then
$\mathcal N_0^{(v)}=\varnothing$, again giving $F_v=\varnothing$.
It remains to consider $\alpha_v>0$ and $h_v\ge1$. Define
\[
A^{(v)}_r=\frac1{h_v}\sum_{e\in\mathcal N^{(v)}_0}E^{(v)}_{e,r},
\qquad t_v=1/\alpha_v.
\]
Since $h_v$ and $\mathcal N^{(v)}_0$ are
$\mathcal K_{v-1}$-measurable, (\ref{eq:epoch-supermartingale-detail}) gives
\[
\E[A^{(v)}_r\mid\mathcal F^{(v)}_{r-1}]
=\frac1{h_v}\sum_{e\in\mathcal N^{(v)}_0}
    \E[E^{(v)}_{e,r}\mid\mathcal F^{(v)}_{r-1}]
\le A^{(v)}_{r-1},\qquad A^{(v)}_0=1.
\]
Set
\[
J^{(v)}_r=\{e\in\mathcal U^{(v)}:E^{(v)}_{e,r}<t_v\},
\qquad
c^{(v)}_r=t_v+\sum_{e\in J^{(v)}_r}(t_v-E^{(v)}_{e,r}).
\]
If some $e_*\in\mathcal N^{(v)}_0$ is rejected, it lies outside
$J^{(v)}_r$ and satisfies $E^{(v)}_{e_*,r}\ge c^{(v)}_r$. With
$J^{(v)}_{0,r}=J^{(v)}_r\cap\mathcal N^{(v)}_0$, we have
\begin{align*}
E^{(v)}_{e_*,r}\ge c^{(v)}_r
&=t_v+\sum_{e\in J^{(v)}_{0,r}}(t_v-E^{(v)}_{e,r})
     +\sum_{e\in J^{(v)}_r\setminus J^{(v)}_{0,r}}
                                      (t_v-E^{(v)}_{e,r})\\
&\ge t_v+\sum_{e\in J^{(v)}_{0,r}}(t_v-E^{(v)}_{e,r}),
\end{align*}
because $E^{(v)}_{e,r}<t_v$ for every $e\in J^{(v)}_r$.
Each of the $h_v-|J^{(v)}_{0,r}|-1$ directions in
$\mathcal N^{(v)}_0\setminus(J^{(v)}_r\cup\{e_*\})$
satisfies $E^{(v)}_{e,r}\ge t_v$. Therefore
\begin{align*}
\sum_{e\in\mathcal N^{(v)}_0}E^{(v)}_{e,r}
&\ge E^{(v)}_{e_*,r}
 +\sum_{e\in J^{(v)}_{0,r}}E^{(v)}_{e,r}
 +(h_v-|J^{(v)}_{0,r}|-1)t_v\\
&\ge t_v+\sum_{e\in J^{(v)}_{0,r}}(t_v-E^{(v)}_{e,r})
 +\sum_{e\in J^{(v)}_{0,r}}E^{(v)}_{e,r}
 +(h_v-|J^{(v)}_{0,r}|-1)t_v=h_vt_v.
\end{align*}
Dividing by $h_v$ proves
\begin{equation}
\{\mathcal E^{(v)}_r\cap\mathcal N^{(v)}_0\ne\varnothing\}
\subseteq\{A^{(v)}_r\ge1/\alpha_v\}.
\label{eq:epoch-error-inclusion}
\end{equation}

Fix an integer $n\ge1$ through which the epoch assumptions hold, and let
\[
T_{v,n}=\inf\{1\le r\le n:A^{(v)}_r\ge t_v\},
\qquad\inf\varnothing=\infty.
\]
For $1\le s\le n$,
\[
\{T_{v,n}\ge s\}
=\bigcap_{j=1}^{s-1}\{A^{(v)}_j<t_v\}
\in\mathcal F^{(v)}_{s-1},
\qquad
A^{(v)}_{T_{v,n}\wedge n}
=1+\sum_{s=1}^n\mathbbm1\{T_{v,n}\ge s\}
                    (A^{(v)}_s-A^{(v)}_{s-1}).
\]
Taking conditional expectations of this identity gives
\begin{align*}
\E[A^{(v)}_{T_{v,n}\wedge n}\mid\mathcal K_{v-1}]
&=1+\sum_{s=1}^n\E\!\left[
 \mathbbm1\{T_{v,n}\ge s\}
 \E[A^{(v)}_s-A^{(v)}_{s-1}\mid\mathcal F^{(v)}_{s-1}]
 \middle|\mathcal K_{v-1}\right]\le1.
\end{align*}
Nonnegativity and $t_v=1/\alpha_v$ imply
\[
\mathbbm1\{T_{v,n}\le n\}
\le\alpha_v A^{(v)}_{T_{v,n}\wedge n}.
\]
Since $\alpha_v$ is $\mathcal K_{v-1}$-measurable,
\begin{equation}
\begin{aligned}
\Pp\!\left(\max_{1\le r\le n}A^{(v)}_r\ge1/\alpha_v
              \middle|\mathcal K_{v-1}\right)
&=\Pp(T_{v,n}\le n\mid\mathcal K_{v-1})\\
&\le\alpha_v\E[A^{(v)}_{T_{v,n}\wedge n}\mid\mathcal K_{v-1}]
\le\alpha_v.
\end{aligned}
\label{eq:epoch-crossing-detail}
\end{equation}

For a finite run budget $n_v\ge1$ fixed at the start of epoch $v$,
the reporting convention
gives $\mathcal E_r^{(v)}=\varnothing$ for $r>n_v$. Hence
\[
F_v=\bigcup_{r=1}^{n_v}
       \{\mathcal E_r^{(v)}\cap\mathcal N_0^{(v)}\ne\varnothing\}
\subseteq\left\{\max_{1\le r\le n_v}A^{(v)}_r\ge t_v\right\},
\]
by (\ref{eq:epoch-error-inclusion}). Applying
(\ref{eq:epoch-crossing-detail}) with $n=n_v$ bounds
$\Pp(F_v\mid\mathcal K_{v-1})$ by $\alpha_v$.

If the assumptions instead hold for the entire potential sequence
$(Z_r^{(v)})_{r\ge1}$, define
\[
D_{v,n}=\bigcup_{r=1}^n\{A^{(v)}_r\ge t_v\},\qquad
D_{v,n}\uparrow D_{v,\infty}:=\bigcup_{r=1}^\infty
                                     \{A^{(v)}_r\ge t_v\}.
\]
Again (\ref{eq:epoch-error-inclusion}) gives $F_v\subseteq D_{v,\infty}$,
including when evaluation stops after a data-dependent number of runs,
because no reports are issued after that stop. Conditional monotone
convergence and (\ref{eq:epoch-crossing-detail}) yield
\begin{align*}
\Pp(F_v\mid\mathcal K_{v-1})
&\le\E[\mathbbm1\{D_{v,\infty}\}\mid\mathcal K_{v-1}]\\
&=\lim_{n\to\infty}\E[\mathbbm1\{D_{v,n}\}\mid\mathcal K_{v-1}]
\le\alpha_v.
\end{align*}
If $h_v=0$, $F_v=\varnothing$. If $\alpha_v=0$ or epoch $v$
is not entered, $\mathcal E_r^{(v)}=\varnothing$ for all $r$,
so $F_v=\varnothing$ as well. Combining these cases proves
\begin{equation}
\E[\mathbbm1\{F_v\}\mid\mathcal K_{v-1}]
=\Pp(F_v\mid\mathcal K_{v-1})\le\alpha_v
\quad\text{almost surely}.
\label{eq:epoch-conditional-error-detail}
\end{equation}
Now remove the conditioning separately for each epoch. The tower
property, countable subadditivity, and Tonelli's theorem for the
nonnegative budgets give
\begin{align}
\Pp\!\left(\bigcup_{v\ge1}F_v\right)
&\le\sum_{v\ge1}\Pp(F_v)
 =\sum_{v\ge1}\E\!\left[
       \E[\mathbbm1\{F_v\}\mid\mathcal K_{v-1}]\right]
 \nonumber\\
&\le\sum_{v\ge1}\E[\alpha_v]
 =\E\!\left[\sum_{v\ge1}\alpha_v\right]
 \le\alpha.
\label{eq:epoch-global-error-detail}
\end{align}
The final inequality follows from $\sum_{v\ge1}\alpha_v\le\alpha$
almost surely. The calculation uses no factorization of probabilities
across epochs.

By the definition of $\mathcal N_0^{(v)}$,
\[
\bigcup_{v\ge1}F_v
=\left\{\exists v\ge1,\ \exists r\ge1,\quad
 \exists(a\to b)\in\mathcal E_r^{(v)}:
 \Delta_{a,b}^{B^{(v)}}\le\tau\right\}.
\]
Substituting this identity into (\ref{eq:epoch-global-error-detail})
proves Theorem~\ref{thm:update_benchmark}.
\end{proof}

\subsection{Proof of Corollary~\ref{cor:new-models}}
\label{app:proof-new-models}

\begin{proof}
For the initial stage $v=0$, the initial panels satisfy
Assumption~\ref{ass:design-iclr}. Therefore,
Theorem~\ref{thm:bedge-main-iclr} gives $\Pp(F_0\mid\cB,\pi)\le\alpha_0,$ where $F_0=\left\{\exists r\le R^{(0)},\exists e\in\cE_{r,\mathrm{new}}^{(0)}\text{ such that }\Delta_e^{\cB}\le\tau\right\}.$

Fix a stage $v\ge1$. Let $\mathcal Q_v$ collect the model-inclusion rule,
pairing rule, directional family, partition, stake grid, mixture weights,
and all other stage-$v$ design choices. Define the replay filtration
\[
\cF_r^{(v)}
=
\sigma\!\left(
\cB,\pi,\mathcal Q_v,
\widetilde Z_1^{(v)},\ldots,\widetilde Z_r^{(v)}
\right),
\qquad r=0,\ldots,R^{(v)},
\]
with $\cF_0^{(v)}=\sigma(\cB,\pi,\mathcal Q_v).$

By assumption, $\mathcal Q_v$ is independent of the augmented inference panels conditional on $(\cB,\pi)$, and the augmented panels satisfy
Assumption~\ref{ass:design-iclr} with respect to $(\cF_r^{(v)})_{r=0}^{R^{(v)}}$.

Although the stored outcomes are available when stage $v$ begins, the new $e$-processes replay the augmented panels in replicate order. 
Hence every prediction and stake used at replicate $r$ is
$\cF_{r-1}^{(v)}$-measurable and does not depend on stored outcomes from replicate $r$ or later. 
The supermartingale argument in Theorem~\ref{thm:bedge-main-iclr} therefore applies with respect to the replay filtration.

The proof of Theorem~\ref{thm:bedge-main-iclr} applies to any directional family fixed before inference and therefore applies to
$\cH_{\mathrm{new}}^{(v)}$. Defining
\[
F_v=\left\{\exists r\le R^{(v)},\ \exists e\in\cE_{r,\mathrm{new}}^{(v)}\text{ such that }\Delta_e^{\cB}\le\tau\right\},
\]
we obtain $\Pp(F_v\mid\cB,\pi)\le\alpha_v.$

The events $(F_v)_{v\ge0}$ may be arbitrarily dependent because stored
outcomes can be reused across stages. Countable subadditivity nevertheless
gives
\[
\Pp\!\left(\bigcup_{v\ge0}F_v\,\middle|\,\cB,\pi\right)\le\sum_{v\ge0}\Pp(F_v\mid\cB,\pi)\le\sum_{v\ge0}\alpha_v\le\alpha.
\]
On the complementary event, every retained or newly certified edge is a true margin claim. Therefore, all path implications, simultaneous rank intervals, and Top-$k$ certificates derived from the combined graph are valid.
\end{proof}

\subsection{The grid-dependent power statement}
\label{app:proof-power}

This subsection supplies the condition and proof for the power statement
already used in the main text, which concerns the same fixed grid and
predictions, not a different evidence construction. For a tested
alternative $e=(a\to b)$, set $\delta_e=(\Delta_{a,b}^{\cB}-\tau)/2>0$
and define
\begin{align}
q_{e,m,r}
&=\E[(Y_{e,m,r}-\widehat Y_{e,m,r})^2\mid\cF_{r-1}],\nonumber\\
a_{e,g,r}
&=\frac{\lambda_g}{w_\star}\delta_e
 -\sum_m\psi_E(\lambda_{g,m})q_{e,m,r}.
\label{eq:proof-growth-drift}
\end{align}
The quantity $a_{e,g,r}$ is the conditional expected log increment,
as verified in the proof below.

\begin{theorem}
\label{thm:power-consistency-iclr}
Suppose Assumption~\ref{ass:design-iclr} holds for the entire infinite
sequence, with a finite family and a fixed finite grid. Fix a tested
direction $e$ with $\Delta_{a,b}^{\cB}>\tau$. If for some grid
component $g$ with $\rho_g>0$,
\begin{equation}
\liminf_{r\to\infty}\frac1r\sum_{s=1}^r a_{e,g,s}>0
\quad\text{almost surely},
\label{eq:proof-positive-growth-condition}
\end{equation}
then $E_{e,r}^{\cC}\to\infty$ almost surely, and direct e-Holm
rejects $H_e^{(\tau)}$ at every sufficiently large completed replicate
almost surely. In particular, its probability of rejection tends to one.
\end{theorem}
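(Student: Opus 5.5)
Since every mixture component is nonnegative, $E_{e,r}^{\cC}\ge\rho_gE_{e,r}^{\cC}(\lambda_g)$ for the grid component $g$ with $\rho_g>0$. It therefore suffices to show that $\log E_{e,r}^{\cC}(\lambda_g)=\sum_{s=1}^r\log F_{e,s}^{\cC}(\lambda_g)$ diverges to $+\infty$ almost surely.

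Write $\log F_{e,s}^{\cC}(\lambda_g)=\xi_s$ and decompose $\xi_s=\E[\xi_s\mid\cF_{s-1}]+(\xi_s-\E[\xi_s\mid\cF_{s-1}])$. First I verify that $\E[\xi_s\mid\cF_{s-1}]=a_{e,g,s}$. By linearity,
\[
\E[\xi_s\mid\cF_{s-1}]=\sum_m\lambda_{g,m}(\nu_{e,m,s}-\mu_0)-\sum_m\psi_E(\lambda_{g,m})q_{e,m,s}.
\]
Using the identity (\ref{eq:proof-average-identity}) from the proof of Theorem~\ref{thm:bedge-main-iclr}, the first sum equals
\[
\frac{\lambda_g}{w_\star}\Bigl(\frac{1+\Delta_{a,b}^{\cB}}{2}-\frac{1+\tau}{2}\Bigr)=\frac{\lambda_g}{w_\star}\delta_e .
\]
This confirms the claim after (\ref{eq:proof-growth-drift}).

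The martingale difference $D_s=\xi_s-a_{e,g,s}$ is uniformly bounded. The bounds preceding (\ref{eq:proof-evidence-bounds}) give $|\xi_s|\le\sum_m\{\lambda_{g,m}+\psi_E(\lambda_{g,m})\}=:K<\infty$, a deterministic constant given $\cF_0$ because the grid is finite and $\lambda_g<1$. Hence $|D_s|\le 2K$. A strong law for martingales with bounded increments then gives $\frac1r\sum_{s\le r}D_s\to0$ almost surely. I would cite either Azuma--Hoeffding with Borel--Cantelli, or the martingale SLLN via $\sum_s s^{-2}\E[D_s^2]<\infty$. Combined with (\ref{eq:proof-positive-growth-condition}), this yields $\liminf_r\frac1r\log E_{e,r}^{\cC}(\lambda_g)>0$ almost surely, so $E_{e,r}^{\cC}\to\infty$, in fact exponentially fast.

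The main obstacle is converting divergence of $E_{e,r}^{\cC}$ into rejection by direct $e$-Holm. The cutoff $c_r=t+\sum_{e'\in\cJ_r}(t-E_{e',r}^{\cC})$ is random, but each term of the sum is at most $t$, so $c_r\le t(1+|\overrightarrow{\cP}|)=:\bar c$, a finite constant because the family is finite. Once $E_{e,r}^{\cC}\ge\bar c$, which happens for all sufficiently large $r$ on the almost-sure event above, we have $E_{e,r}^{\cC}\ge c_r$, so $e\in\cE_r$ at every such completed replicate. If the paper works conditionally on $\cF_0$, I would note that $K$ and $\bar c$ are $\cF_0$-measurable and finite, so the argument holds conditionally and then unconditionally.

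Finally, almost-sure eventual rejection implies that $\Pp(e\in\cE_r)\to1$. This follows from dominated convergence applied to the indicators $\mathbbm1\{e\in\cE_r\}$, which converge to one almost surely.
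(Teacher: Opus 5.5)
Your proposal is correct and follows the paper's proof essentially step for step. You lower-bound the mixture by its $\rho_g$-component and identify the conditional mean of the log-increment as $a_{e,g,r}$ via the average identity. You then obtain $\frac1r\sum_s(\log F_{e,s}^{\cC}(\lambda_g)-a_{e,g,s})\to0$ from bounded martingale differences (Azuma--Hoeffding plus Borel--Cantelli, exactly as the paper does), bound the direct $e$-Holm cutoff by a finite constant, and finish with bounded convergence. The only cosmetic difference is the cutoff bound: the paper gets the slightly sharper $c_r\le h/\alpha$ by noting $e\notin\cJ_r$ once $E_{e,r}^{\cC}\ge h/\alpha$, while your bound $t(1+|\overrightarrow{\cP}|)$ is cruder but equally sufficient.
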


\begin{proof}
Let $\cF_0=\sigma(\cB,\pi,\mathcal Q)$, and fix the grid index $g$
in the theorem, for which $\rho_g>0$. Define
\[
G_g=\left\{\liminf_{r\to\infty}\frac1r
                        \sum_{s=1}^r a_{e,g,s}>0\right\}.
\]
Assumption (\ref{eq:proof-positive-growth-condition}) gives
$\Pp(G_g^c)=0$. By the tower property,
\[
0=\Pp(G_g^c)=\E[\Pp(G_g^c\mid\cF_0)],\qquad
\Pp(G_g^c\mid\cF_0)\ge0,
\]
so $\Pp(G_g\mid\cF_0)=1$ almost surely. We work under the
conditional law given $\cF_0$ for such initial designs. Define
$\ell_{g,r}=\log F_{e,r}^{\cC}(\lambda_g)$. Taking conditional
expectations in (\ref{eq:block-factor}) gives
\begin{align}
\E[\ell_{g,r}\mid\cF_{r-1}]
&=\sum_m\lambda_{g,m}(\nu_{e,m,r}-\mu_0)
  -\sum_m\psi_E(\lambda_{g,m})q_{e,m,r}\nonumber\\
&=\frac{\lambda_g}{w_\star}\delta_e
  -\sum_m\psi_E(\lambda_{g,m})q_{e,m,r}
=a_{e,g,r},
\label{eq:power-log-mean}
\end{align}
where the second equality is (\ref{eq:proof-average-identity}).
Both $\nu_{e,m,r}$ and $\widehat Y_{e,m,r}$ are
$\cF_{r-1}$-measurable. Since
$\E[Y_{e,m,r}-\nu_{e,m,r}\mid\cF_{r-1}]=0$,
\begin{align*}
q_{e,m,r}
&=\E\!\left[
 \{(Y_{e,m,r}-\nu_{e,m,r})
       +(\nu_{e,m,r}-\widehat Y_{e,m,r})\}^2
 \middle|\cF_{r-1}\right]\\
&=\E[(Y_{e,m,r}-\nu_{e,m,r})^2\mid\cF_{r-1}]\\
&\quad+2(\nu_{e,m,r}-\widehat Y_{e,m,r})
      \E[Y_{e,m,r}-\nu_{e,m,r}\mid\cF_{r-1}]
      +(\nu_{e,m,r}-\widehat Y_{e,m,r})^2\\
&=\operatorname{Var}(Y_{e,m,r}\mid\cF_{r-1})
      +(\nu_{e,m,r}-\widehat Y_{e,m,r})^2.
\end{align*}

To pass from conditional expected growth to realized growth, write
\[
\xi_r=\ell_{g,r}-a_{e,g,r},\qquad \mathsf M_r=\sum_{s=1}^r\xi_s,
\qquad
C_g=\sum_m\{\lambda_{g,m}+\psi_E(\lambda_{g,m})\}.
\]
Since $Y_{e,m,r},\widehat Y_{e,m,r},\mu_0\in[0,1]$, we have
\[
|\ell_{g,r}|\le C_g,\qquad
|a_{e,g,r}|=|\E[\ell_{g,r}\mid\cF_{r-1}]|
\le C_g,\qquad |\xi_r|\le2C_g.
\]
Here $C_g<\infty$ is $\cF_0$-measurable because the grid is
finite and $\lambda_{g,m}<1$ for all $m$. By (\ref{eq:power-log-mean}),
\[
\E[\xi_r\mid\cF_{r-1}]
=\E[\ell_{g,r}\mid\cF_{r-1}]-a_{e,g,r}=0.
\]
On $\{C_g=0\}$, $a_{e,g,r}=0$ for every $r$, so
$\{C_g=0\}\subseteq G_g^c$. Thus $C_g>0$ almost surely under
the designs considered here.

The conditional form of Hoeffding's bounded-variable lemma
\citep{hoeffding1963}, applied to the centered variable
$\xi_r\in[-2C_g,2C_g]$, gives, for each real $t$,
\[
\E[\exp(t\xi_r)\mid\cF_{r-1}]\le\exp(2t^2C_g^2).
\]
Since $\mathsf M_{r-1}$ is $\cF_{r-1}$-measurable, the tower property gives
\begin{align*}
\E[\exp(t\mathsf M_r)\mid\cF_0]
&=\E\!\left[\exp(t\mathsf M_{r-1})
             \E[\exp(t\xi_r)\mid\cF_{r-1}]\middle|\cF_0\right]\\
&\le\exp(2t^2C_g^2)\E[\exp(t\mathsf M_{r-1})\mid\cF_0]
\le\cdots\le\exp(2rt^2C_g^2),
\end{align*}
where $M_0=0$. For $\varepsilon>0$, exponential Markov's inequality
and minimization over $t>0$ give
\[
\Pp(\mathsf M_r\ge r\varepsilon\mid\cF_0)
\le\inf_{t>0}\exp\{-tr\varepsilon+2rt^2C_g^2\}
=\exp\!\left\{-\frac{r\varepsilon^2}{8C_g^2}\right\}.
\]
The minimizing value is $t=\varepsilon/(4C_g^2)$, with $C_g$
fixed under the conditional law. Applying the same bound to $-\mathsf M_r$
and adding the two tail probabilities yields
\begin{equation}
\Pp(|\mathsf M_r|\ge r\varepsilon\mid\cF_0)
\le2\exp\!\left\{-\frac{r\varepsilon^2}{8C_g^2}\right\}.
\label{eq:power-martingale-tail}
\end{equation}
For every integer $j\ge1$, (\ref{eq:power-martingale-tail}) implies
\begin{align*}
\Pp\!\left(\bigcup_{r=n}^\infty
             \{|\mathsf M_r|\ge r/j\}\middle|\cF_0\right)
&\le2\sum_{r=n}^\infty
           \exp\!\left\{-\frac{r}{8j^2C_g^2}\right\}\\
&=\frac{2\exp\{-n/(8j^2C_g^2)\}}
        {1-\exp\{-1/(8j^2C_g^2)\}}\longrightarrow0.
\end{align*}
The events on the left decrease to
$\bigcap_{n\ge1}\bigcup_{r\ge n}\{|\mathsf M_r|\ge r/j\}$,
which therefore has conditional probability zero. Taking the
countable union over $j$ gives
\[
\Pp\!\left(\limsup_{r\to\infty}\frac{|\mathsf M_r|}{r}>0
                     \middle|\cF_0\right)=0,
\qquad
\frac{\mathsf M_r}{r}\longrightarrow0\quad\text{a.s.}
\]

The product definition of the component process now gives
\[
\frac1r\log E_{e,r}^{\cC}(\lambda_g)
=\frac1r\sum_{s=1}^r a_{e,g,s}+\frac{\mathsf M_r}{r}.
\]
On $G_g\cap\{\mathsf M_r/r\to0\}$, whose conditional probability is one,
\[
d_g:=\liminf_{r\to\infty}\frac1r
                   \log E_{e,r}^{\cC}(\lambda_g)
=\liminf_{r\to\infty}\frac1r\sum_{s=1}^r a_{e,g,s}>0.
\]
The coefficient $\rho_g>0$ is fixed given $\cF_0$, so
\[
\frac1r\log E_{e,r}^{\cC}
\ge\frac{\log\rho_g}{r}
  +\frac1r\log E_{e,r}^{\cC}(\lambda_g).
\]
Since $(\log\rho_g)/r\to0$, the last inequality yields
\[
\liminf_{r\to\infty}\frac1r\log E_{e,r}^{\cC}\ge d_g>0.
\]
For each such outcome, set $\gamma=d_g/2>0$. There exists an
integer $r_0$ such that
$E_{e,r}^{\cC}\ge\exp(\gamma r)$ for every $r\ge r_0$.
Thus $\Pp(E_{e,r}^{\cC}\to\infty\mid\cF_0)=1$.

Let $h=|\cH|\ge1$. Whenever $E_{e,r}^{\cC}\ge h/\alpha$,
the direction $e$ is not in $\cJ_r$. Thus $|\cJ_r|\le h-1$ and
\[
c_r=\frac1\alpha+
 \sum_{j\in\cJ_r}\left(\frac1\alpha-E_{j,r}^{\cC}\right)
\le\frac1\alpha+\frac{h-1}{\alpha}
=\frac h\alpha\le E_{e,r}^{\cC}.
\]
Consequently, for every $r$,
\[
\{E_{e,r}^{\cC}\ge h/\alpha\}\subseteq\{e\in\cE_r\}.
\]
Because $h/\alpha<\infty$ is fixed given $\cF_0$, divergence gives
\[
\Pp\!\left(\bigcup_{m=1}^\infty\bigcap_{r=m}^\infty
                   \{e\in\cE_r\}\middle|\cF_0\right)=1.
\]
In particular, $\mathbbm1\{e\in\cE_r\}\to1$ almost surely.
Since these indicators lie in $[0,1]$, conditional bounded convergence gives
\[
\Pp(e\in\cE_r\mid\cF_0)
=\E[\mathbbm1\{e\in\cE_r\}\mid\cF_0]\longrightarrow1
\quad\text{a.s.}
\]
Applying the tower property and bounded convergence once more yields
\begin{align*}
\Pp(e\in\cE_r\mid\cB,\pi)
&=\E[\Pp(e\in\cE_r\mid\cF_0)\mid\cB,\pi]\longrightarrow1
\quad\text{a.s.},\\
\Pp(e\in\cE_r)
&=\E[\Pp(e\in\cE_r\mid\cF_0)]\longrightarrow1.
\end{align*}
\end{proof}

\section{Ablation Study}

\begin{figure}[htbp]
\centering
\includegraphics[width=0.99\textwidth]{figure/Figure0.pdf}\\[-2ex]
\subfloat[$|C_m|=1$, $h_{i,\ell}=0$]{%
    \includegraphics[width=0.48\textwidth]{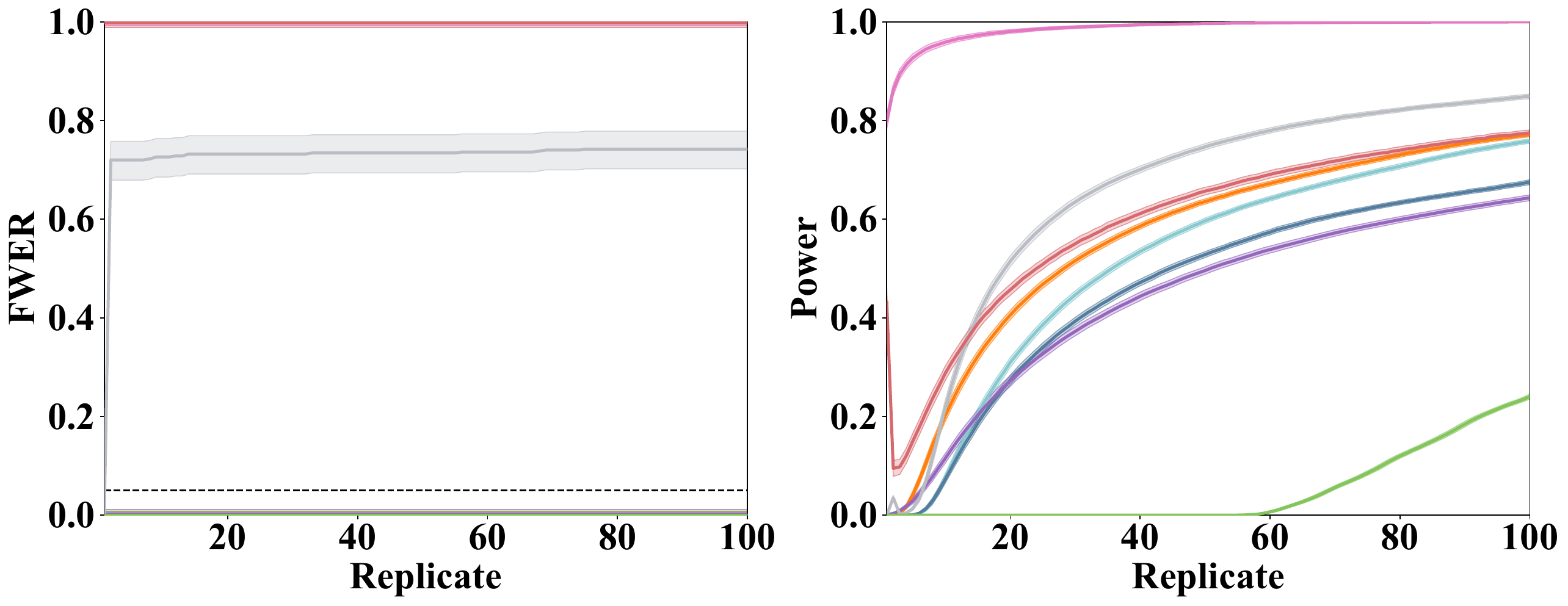}
}
\hfill
\subfloat[$|C_m|=5$, $h_{i,\ell}=0$]{%
    \includegraphics[width=0.48\textwidth]{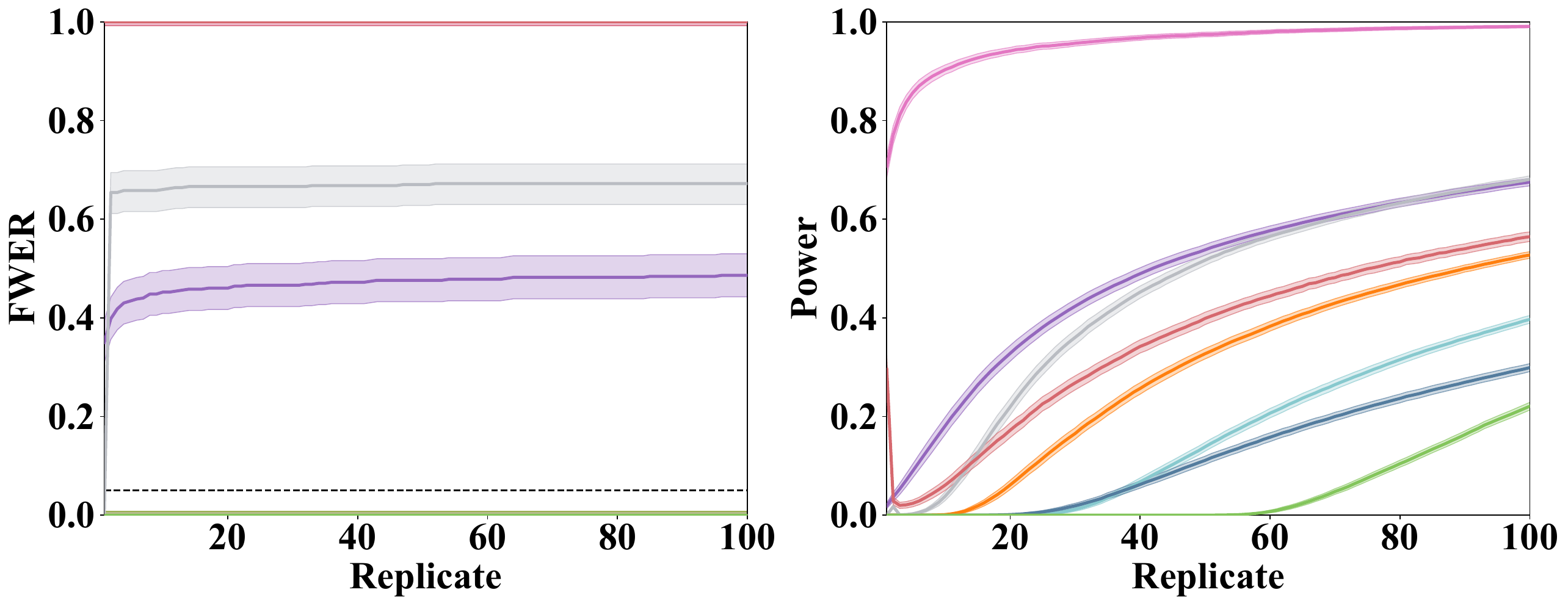}
}
\hfill
\subfloat[$|C_m|=1$, $h_{i,\ell}=0.2$]{%
    \includegraphics[width=0.48\textwidth]{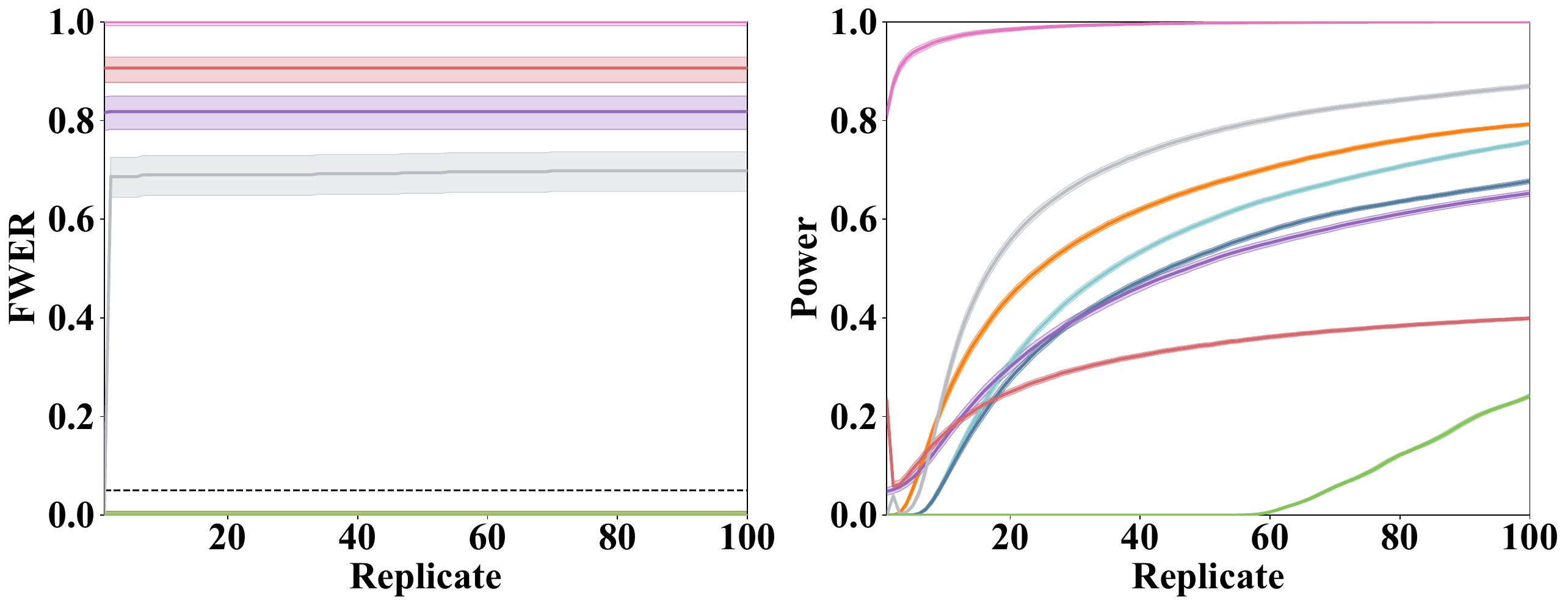}
}
\hfill
\subfloat[$|C_m|=5$, $h_{i,\ell}=0.2$]{%
    \includegraphics[width=0.48\textwidth]{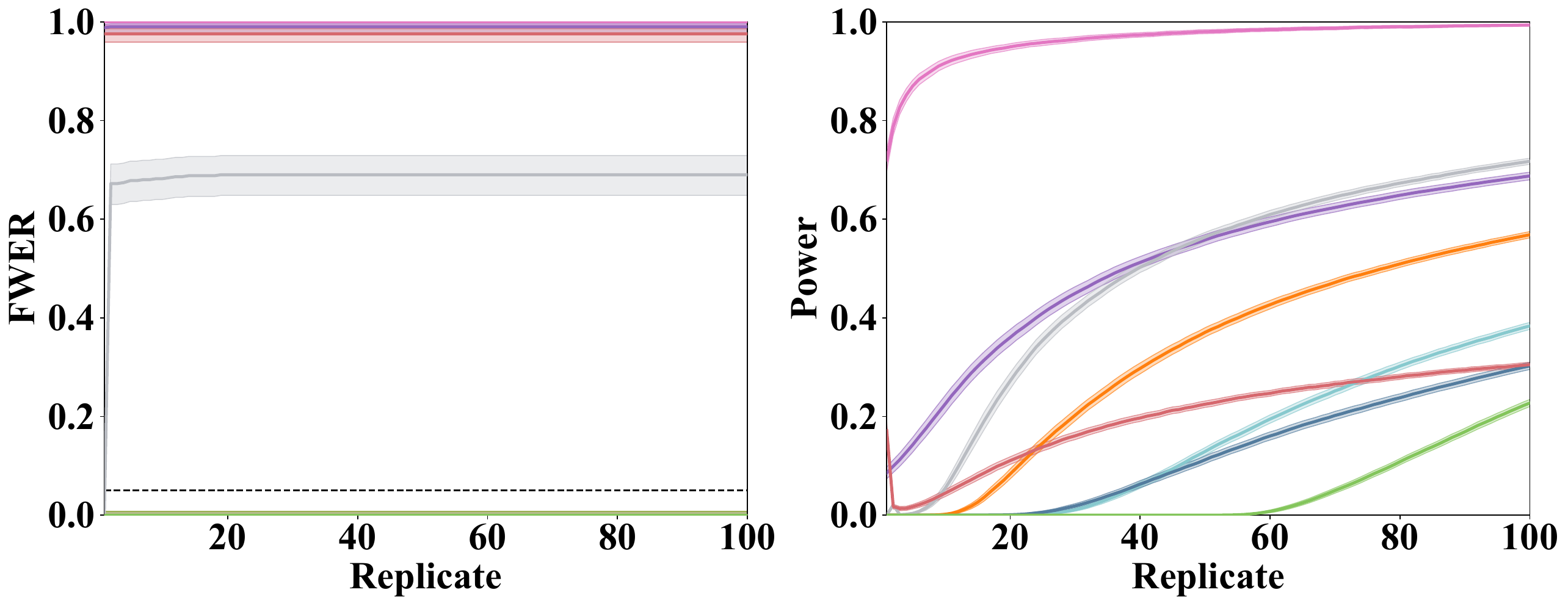}
}
\caption{Ablation study on the number of models ($L=20$).}
\label{figure:a5}
\end{figure}

\begin{figure}[htbp]
\centering
\includegraphics[width=0.99\textwidth]{figure/Figure0.pdf}\\[-2ex]
\subfloat[$|C_m|=1$, $h_{i,\ell}=0$]{%
    \includegraphics[width=0.48\textwidth]{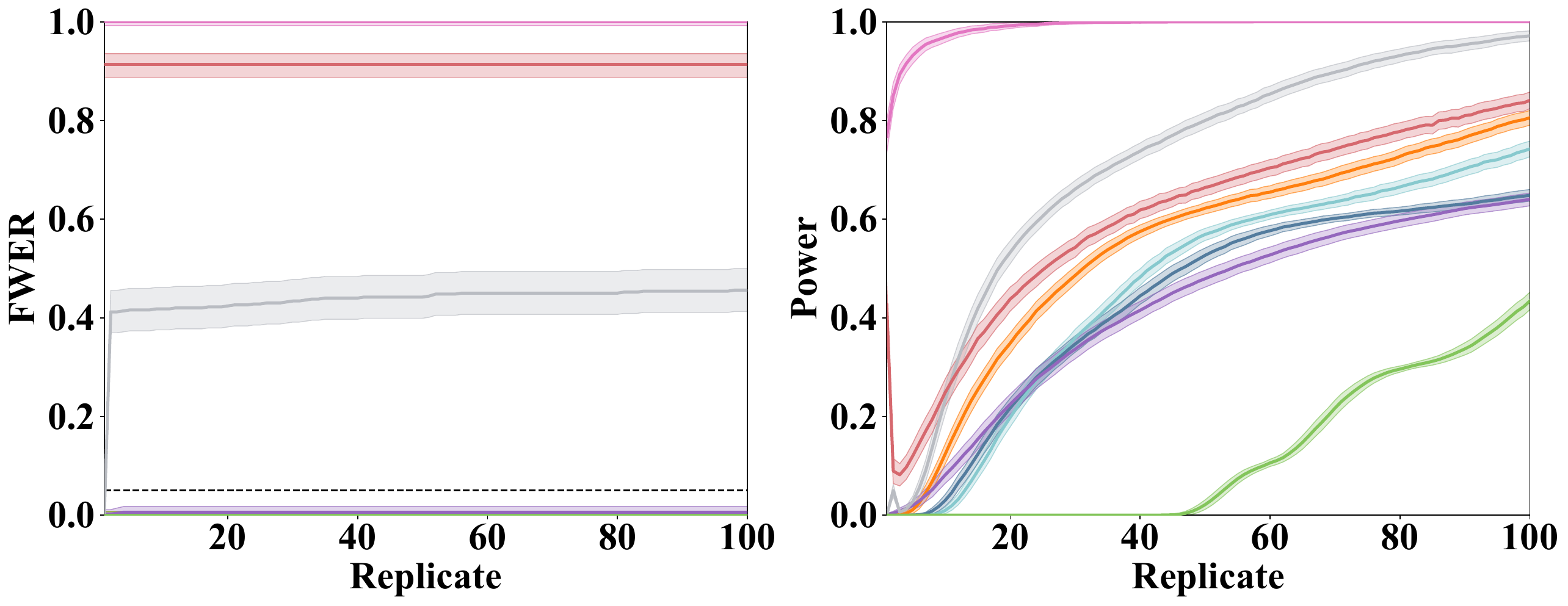}
}
\hfill
\subfloat[$|C_m|=5$, $h_{i,\ell}=0$]{%
    \includegraphics[width=0.48\textwidth]{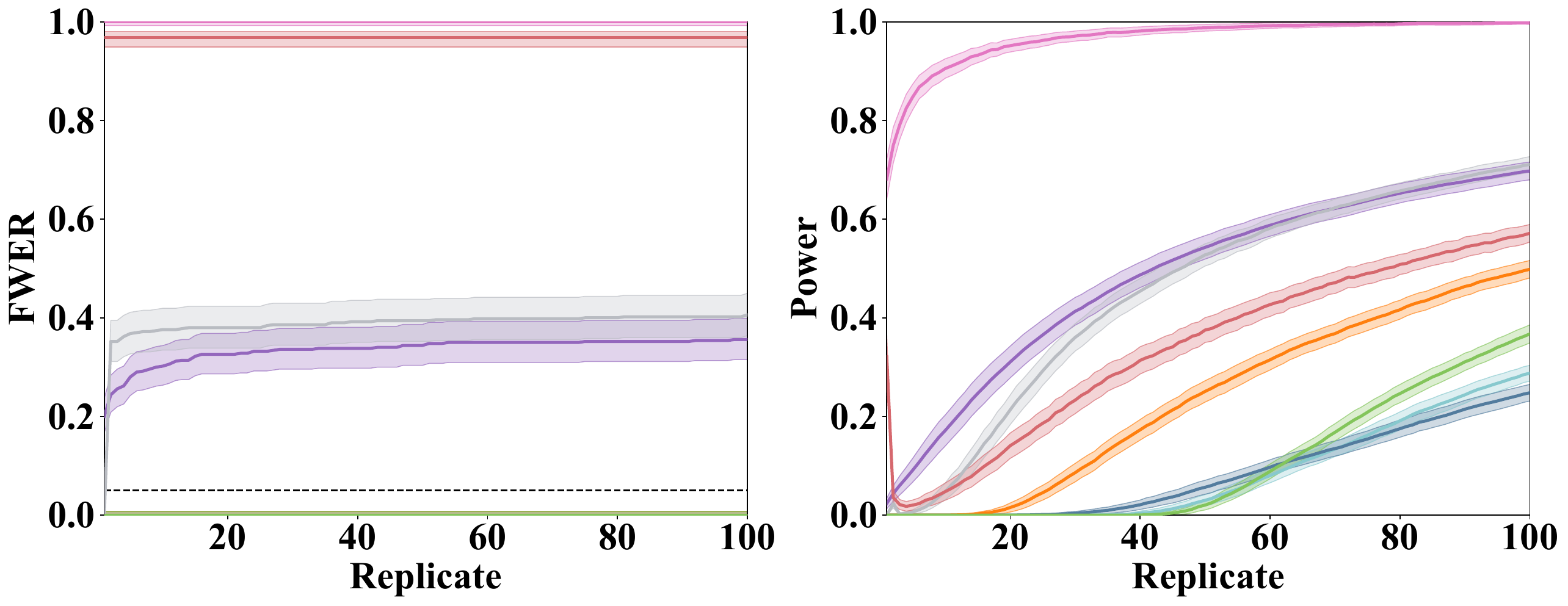}
}
\hfill
\subfloat[$|C_m|=1$, $h_{i,\ell}=0.2$]{%
    \includegraphics[width=0.48\textwidth]{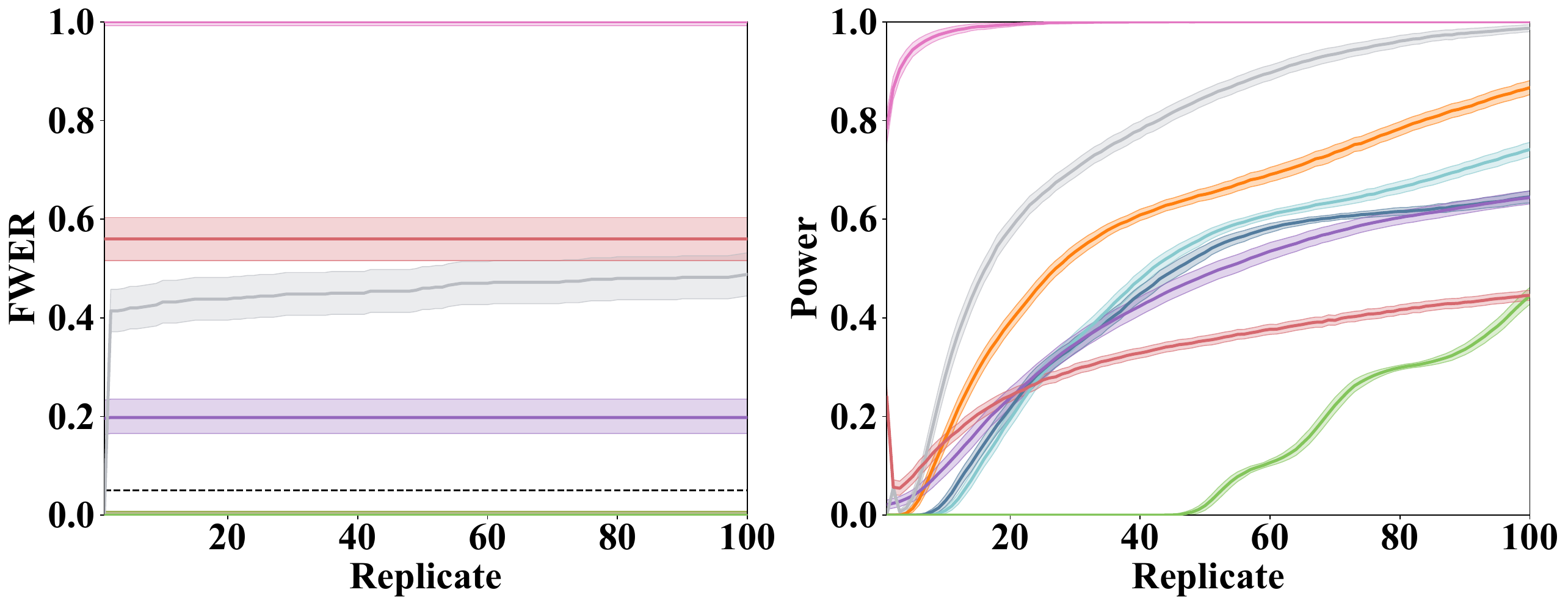}
}
\hfill
\subfloat[$|C_m|=5$, $h_{i,\ell}=0.2$]{%
    \includegraphics[width=0.48\textwidth]{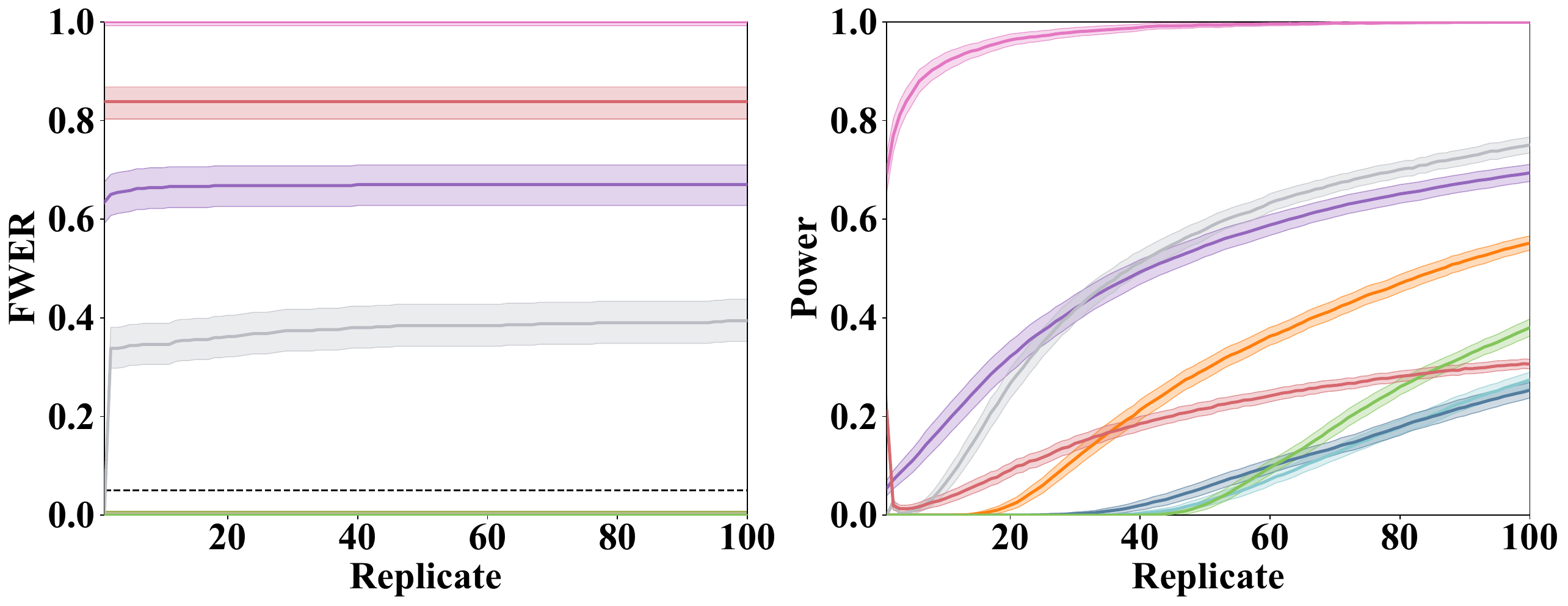}
}
\caption{Ablation study on the effect of item size ($N=50$).}
\label{figure:a3}
\end{figure}

\begin{figure}[htbp]
\centering
\includegraphics[width=0.99\textwidth]{figure/Figure0.pdf}\\[-2ex]
\subfloat[$|C_m|=1$, $h_{i,\ell}=0$]{%
    \includegraphics[width=0.48\textwidth]{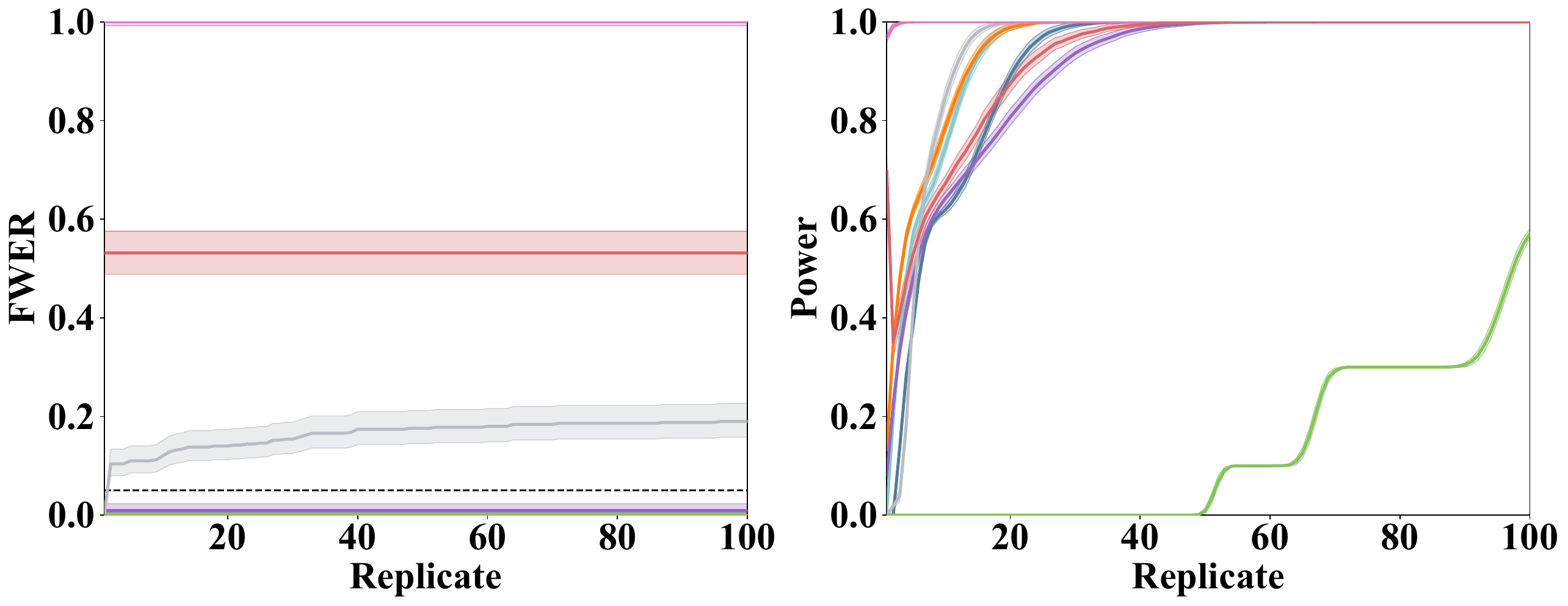}
}
\hfill
\subfloat[$|C_m|=5$, $h_{i,\ell}=0$]{%
    \includegraphics[width=0.48\textwidth]{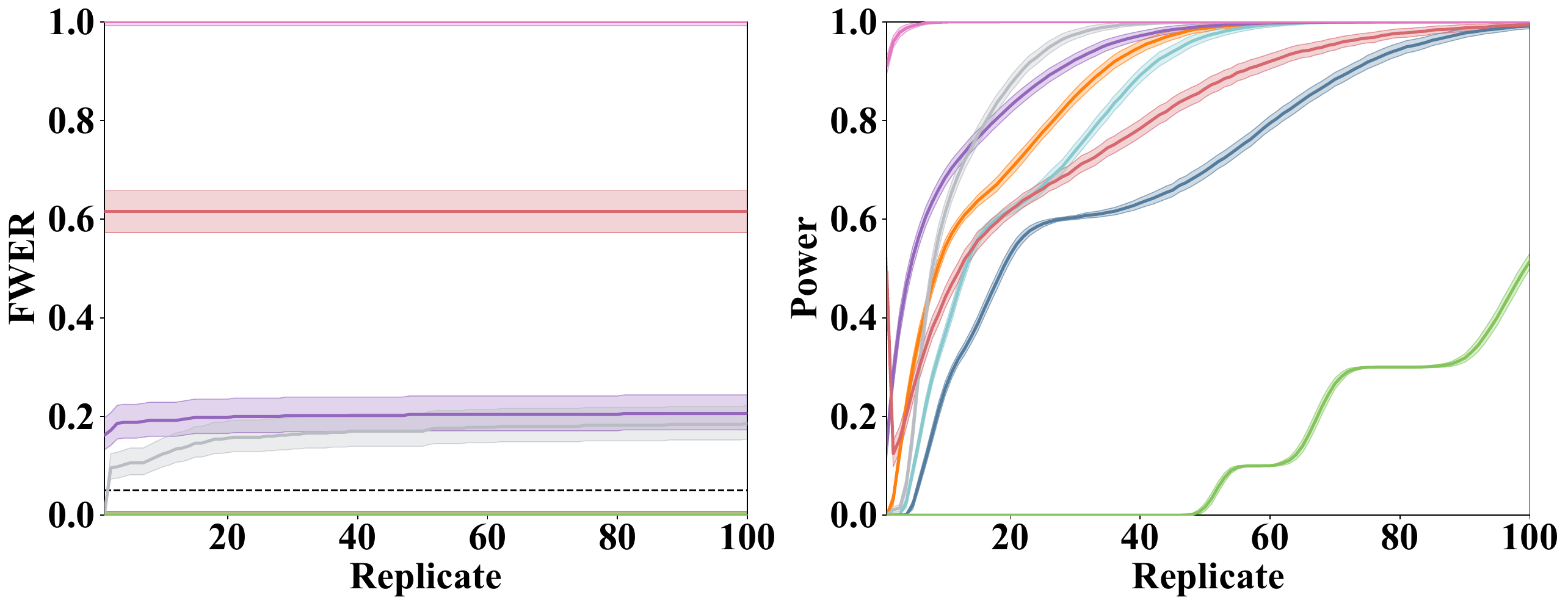}
}
\hfill
\subfloat[$|C_m|=1$, $h_{i,\ell}=0.2$]{%
    \includegraphics[width=0.48\textwidth]{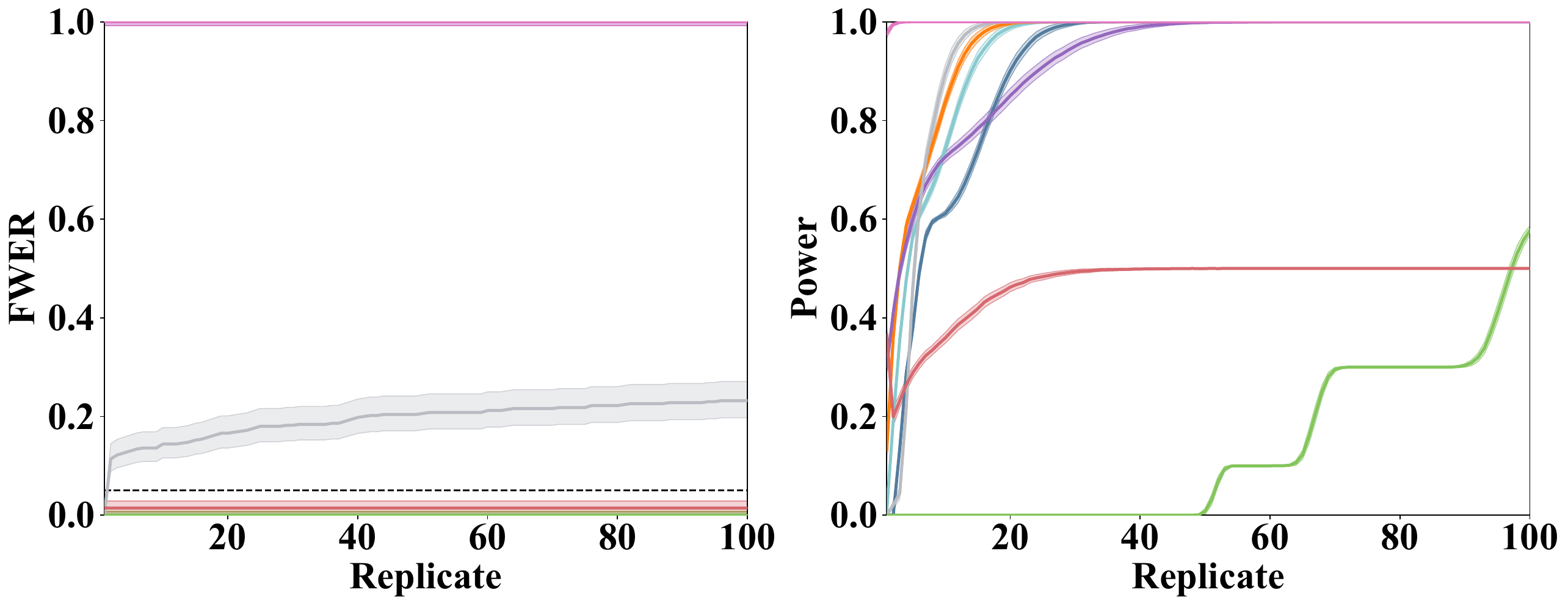}
}
\hfill
\subfloat[$|C_m|=5$, $h_{i,\ell}=0.2$]{%
    \includegraphics[width=0.48\textwidth]{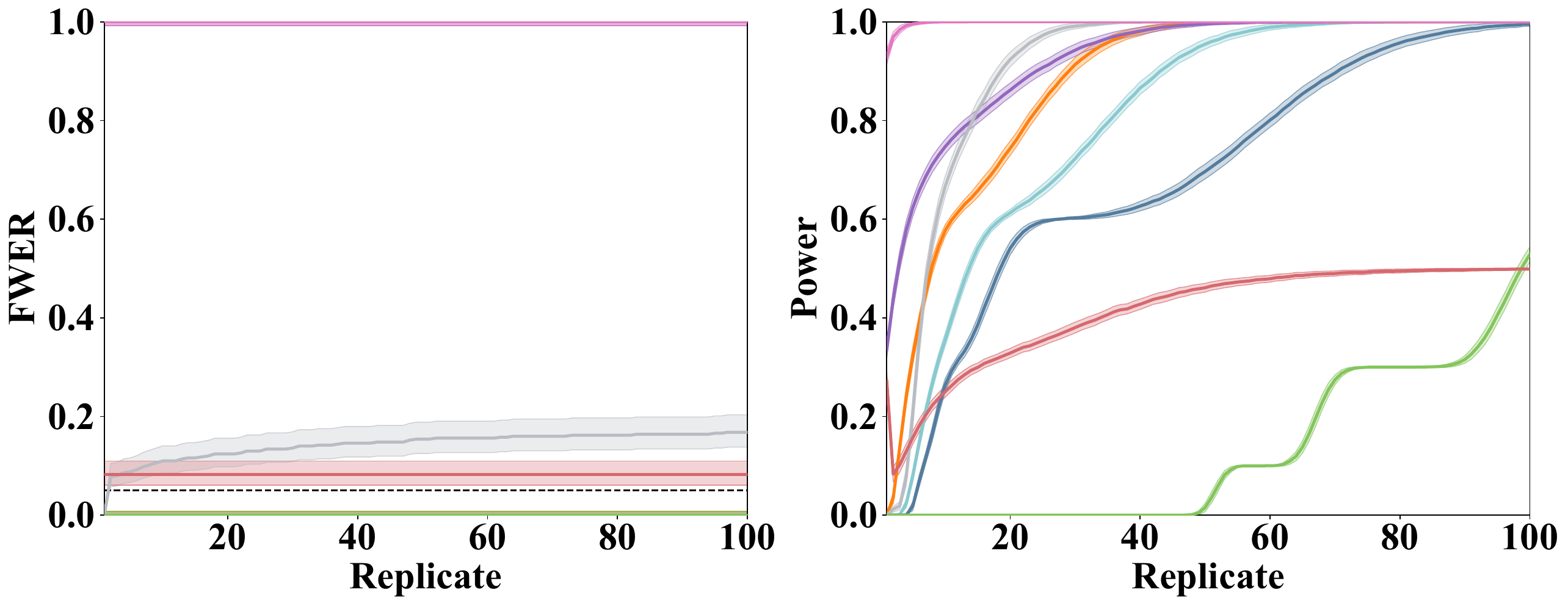}
}
\caption{Ablation study on the effect of item size ($N=500$).}
\label{figure:a4}
\end{figure}


\begin{figure}[htbp]
\centering
\includegraphics[width=0.99\textwidth]{figure/Figure0.pdf}\\[-2ex]
\subfloat[$|C_m|=1$, $h_{i,\ell}=0$]{%
    \includegraphics[width=0.48\textwidth]{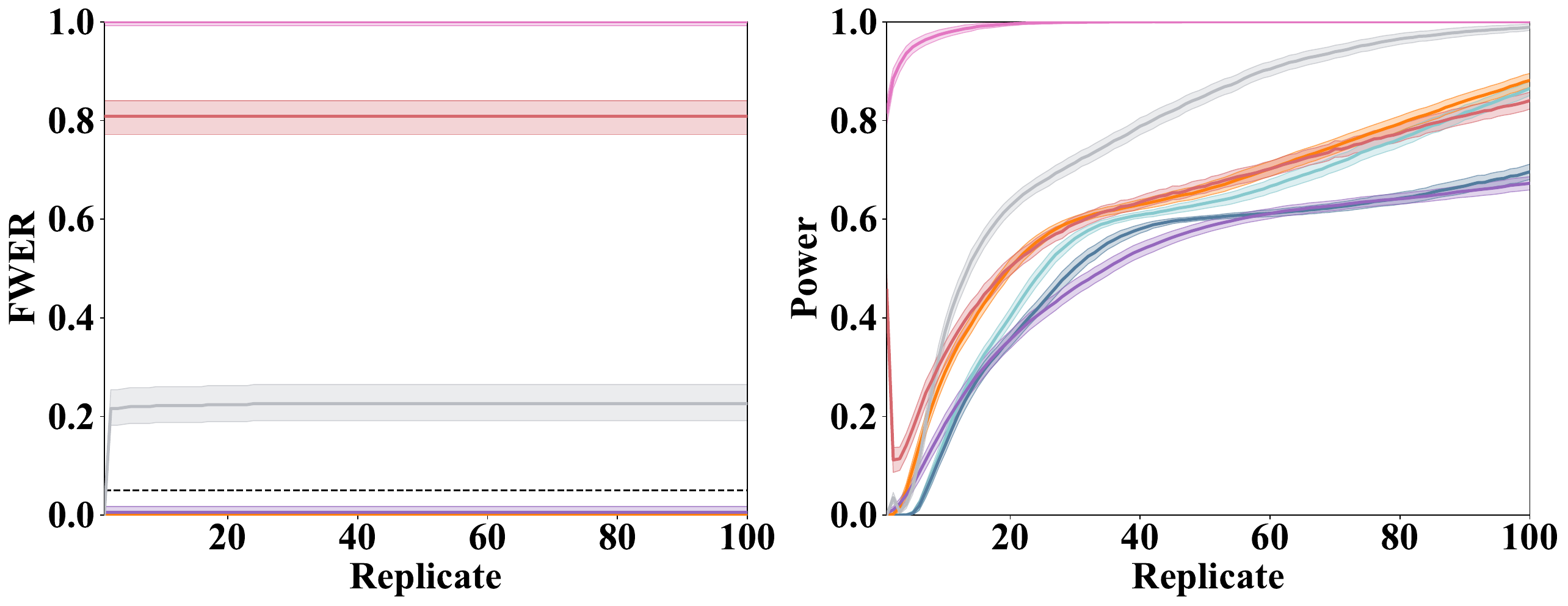}
}
\hfill
\subfloat[$|C_m|=5$, $h_{i,\ell}=0$]{%
    \includegraphics[width=0.48\textwidth]{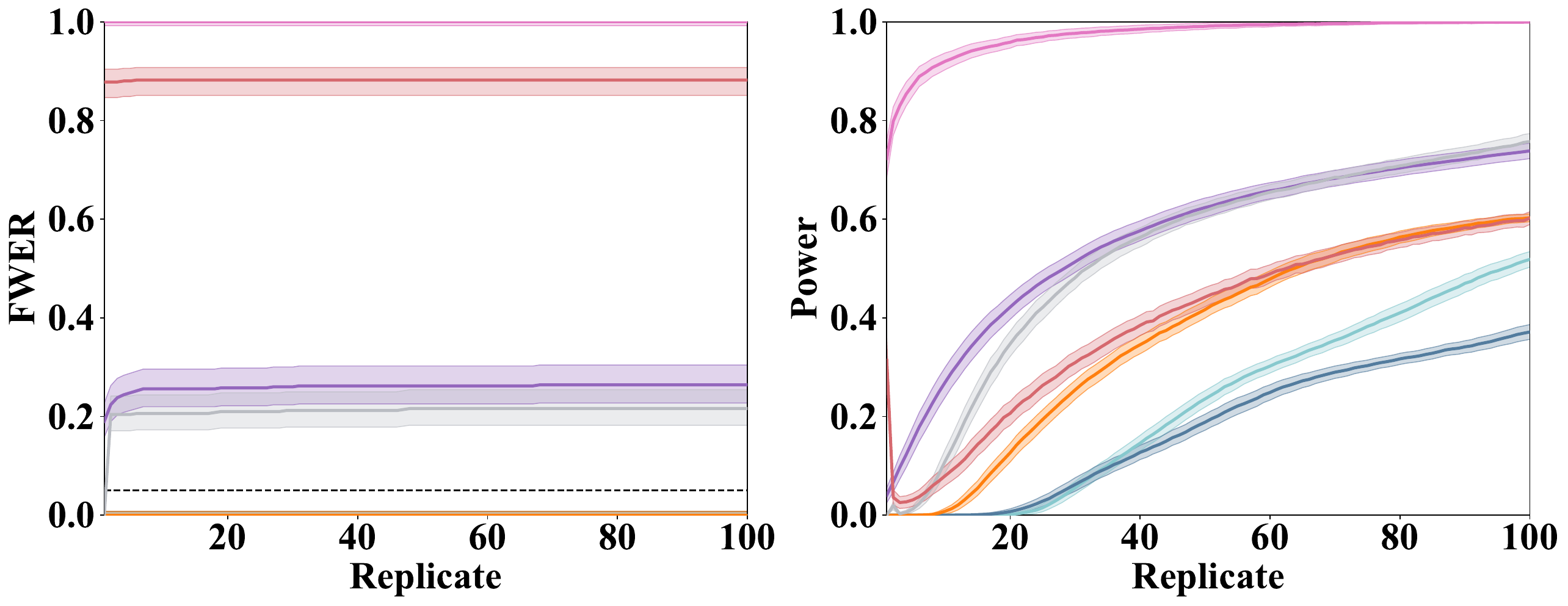}
}
\hfill
\subfloat[$|C_m|=1$, $h_{i,\ell}=0.2$]{%
    \includegraphics[width=0.48\textwidth]{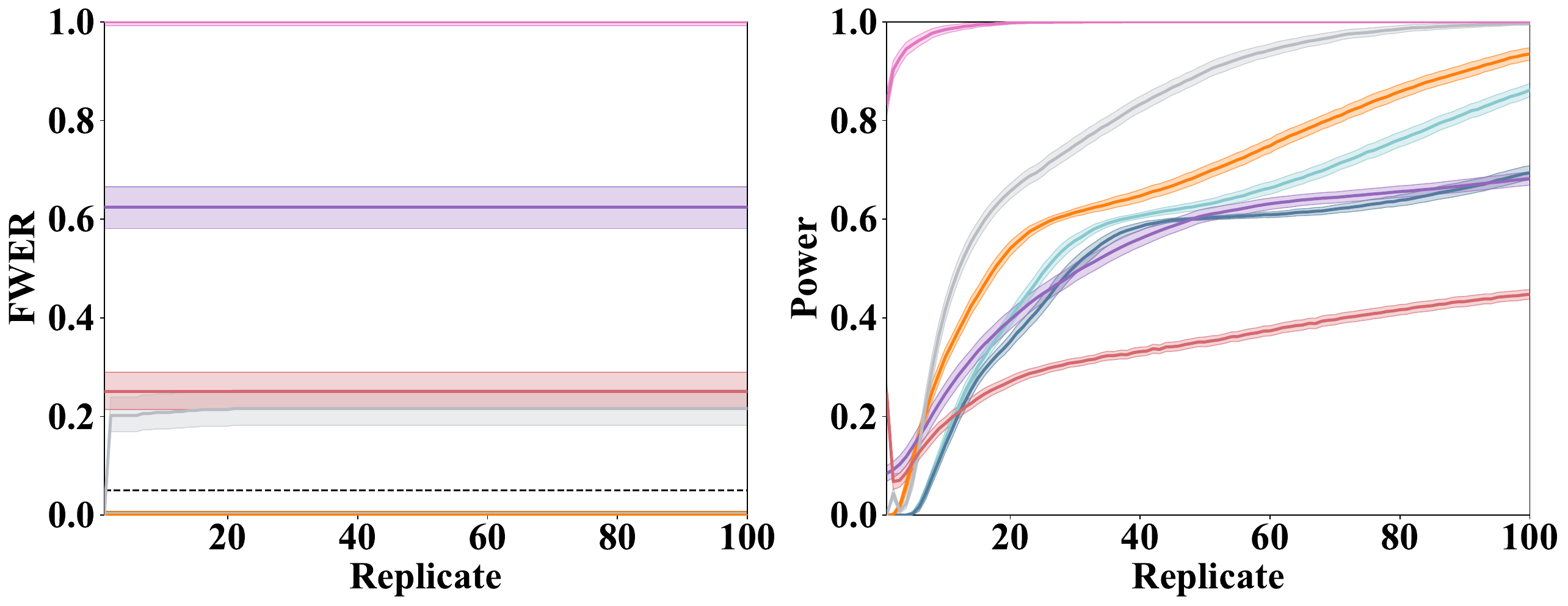}
}
\hfill
\subfloat[$|C_m|=5$, $h_{i,\ell}=0.2$]{%
    \includegraphics[width=0.48\textwidth]{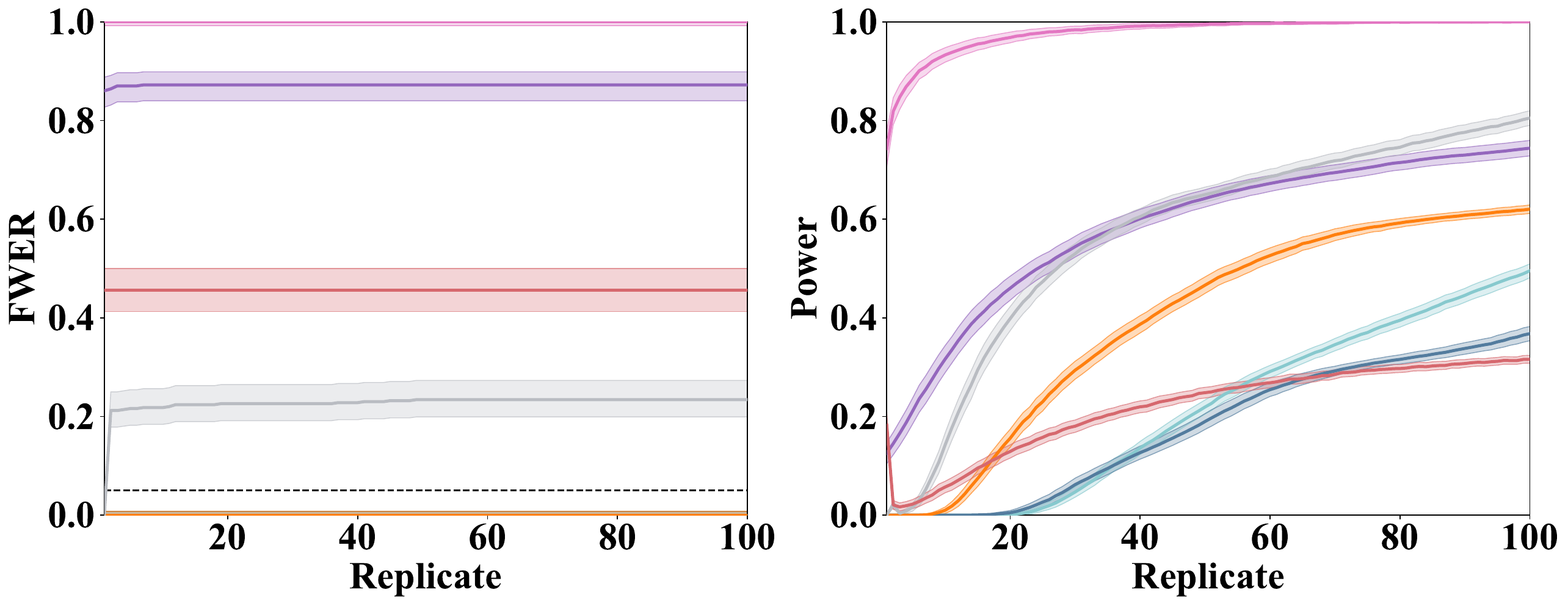}
}
\caption{Ablation study on prespecified margins ($\tau=0.01$).}
\label{figure:tau1}
\end{figure}

\begin{figure}[htbp]
\centering
\includegraphics[width=0.99\textwidth]{figure/Figure0.pdf}\\[-2ex]
\subfloat[$|C_m|=1$, $h_{i,\ell}=0$]{%
    \includegraphics[width=0.48\textwidth]{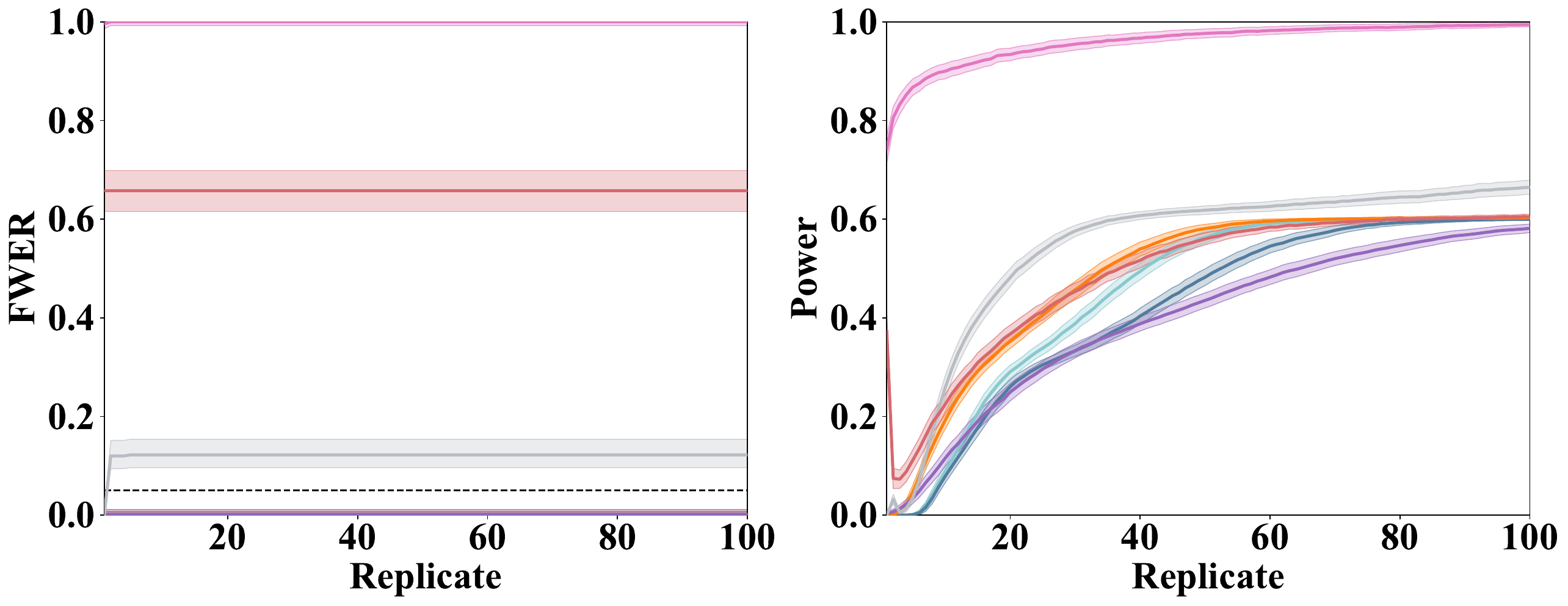}
}
\hfill
\subfloat[$|C_m|=5$, $h_{i,\ell}=0$]{%
    \includegraphics[width=0.48\textwidth]{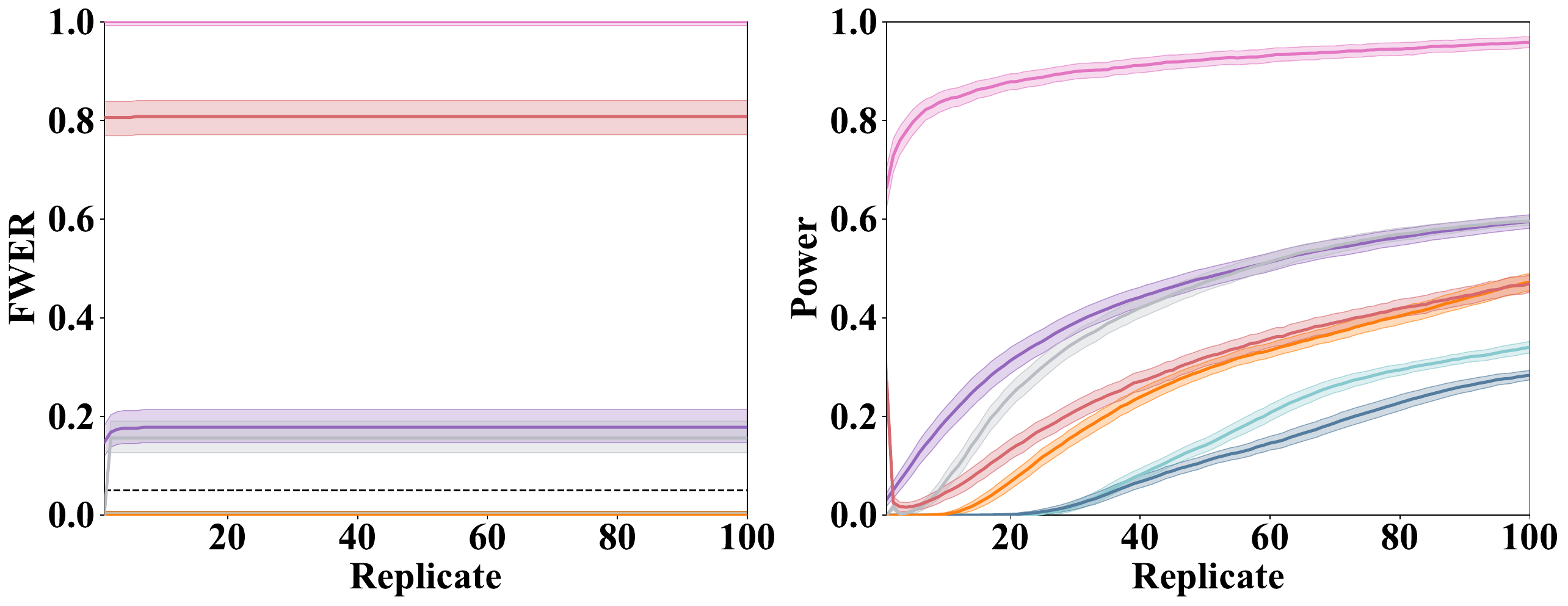}
}
\hfill
\subfloat[$|C_m|=1$, $h_{i,\ell}=0.2$]{%
    \includegraphics[width=0.48\textwidth]{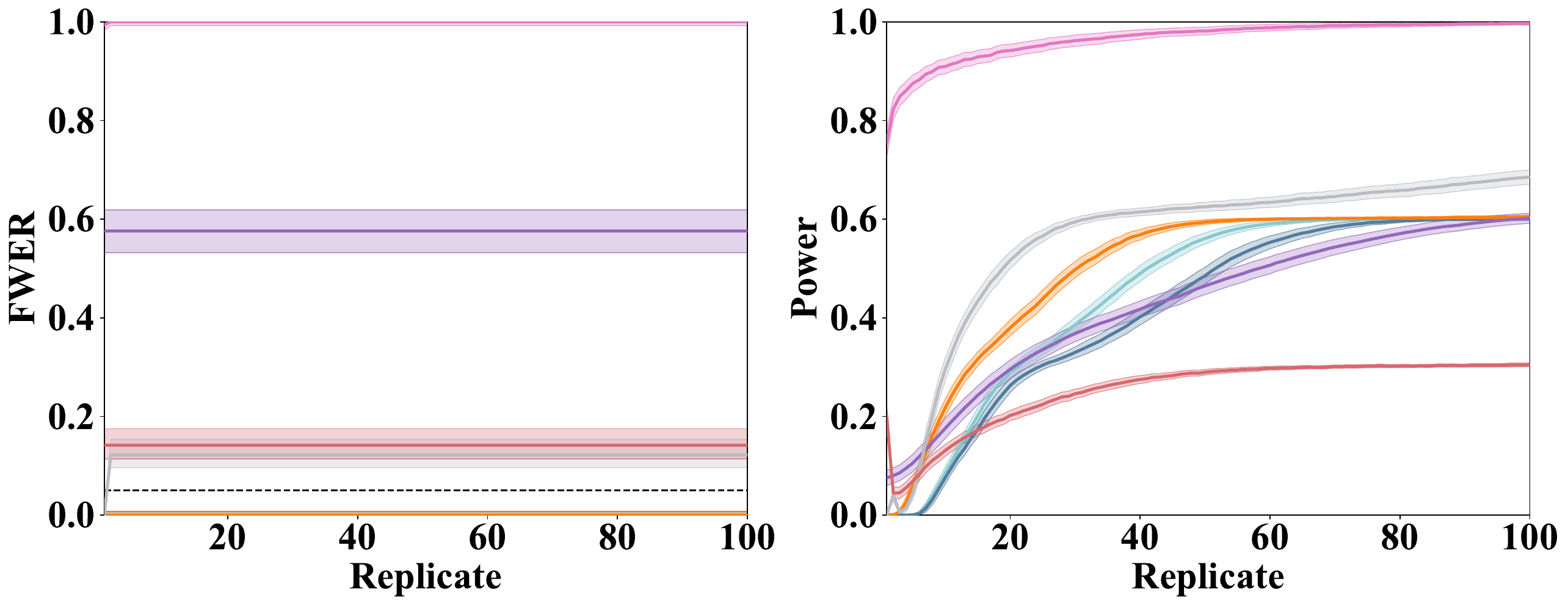}
}
\hfill
\subfloat[$|C_m|=5$, $h_{i,\ell}=0.2$]{%
    \includegraphics[width=0.48\textwidth]{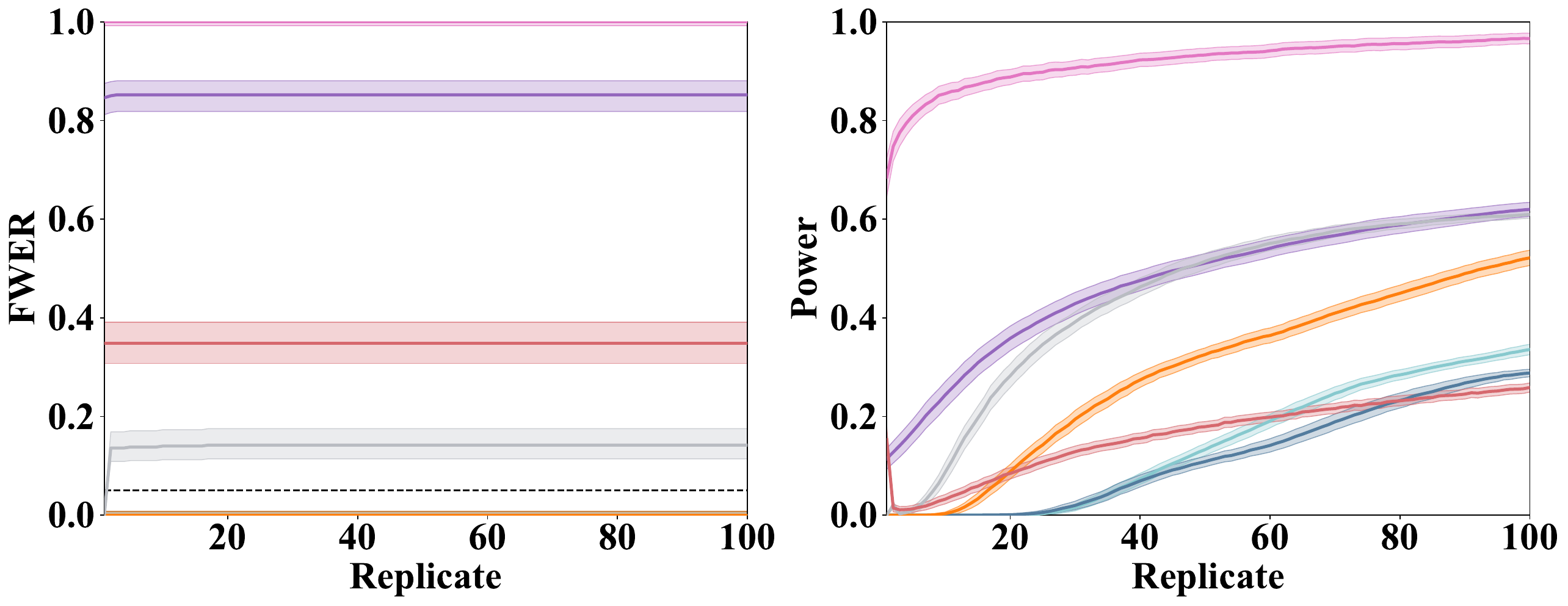}
}
\caption{Ablation study on prespecified margins ($\tau=0.03$).}
\label{figure:tau3}
\end{figure}


\begin{figure}[htbp]
\centering
\subfloat[$|C_m|=1$, $h_{i,\ell}=0$]{%
    \includegraphics[width=0.48\textwidth]{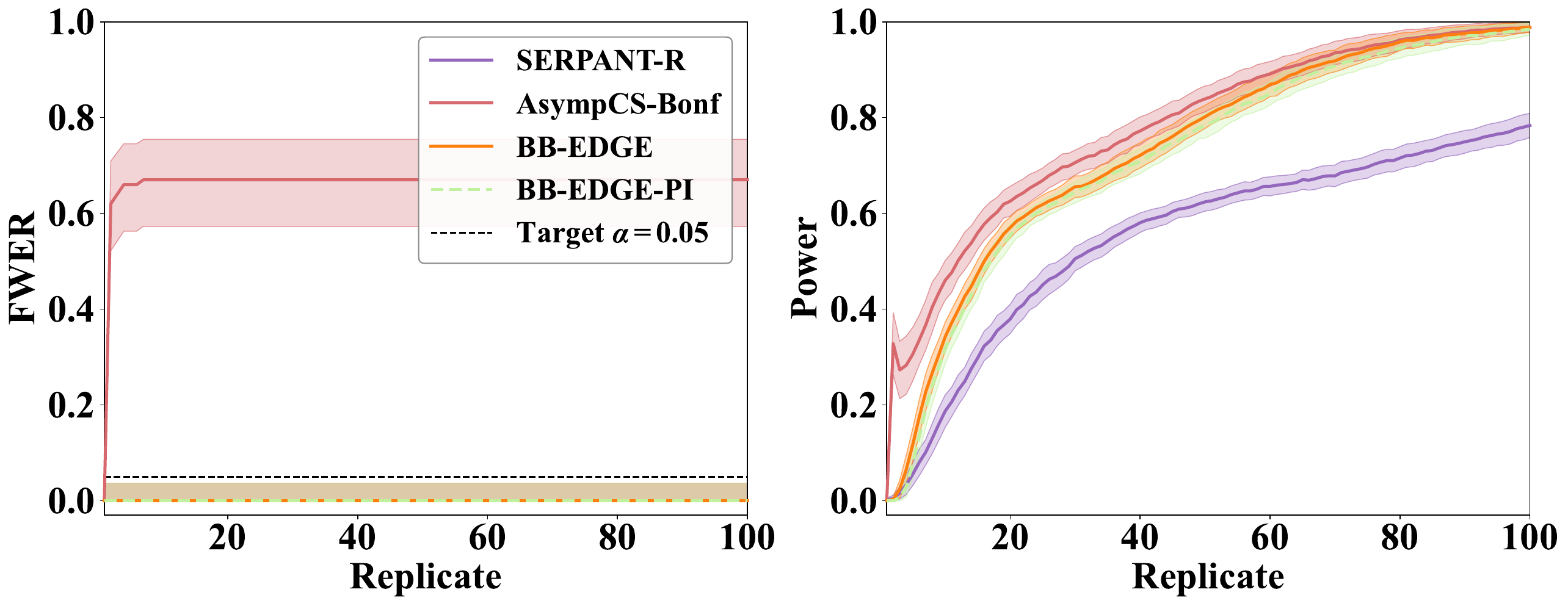}
}
\hfill
\subfloat[$|C_m|=5$, $h_{i,\ell}=0$]{%
    \includegraphics[width=0.48\textwidth]{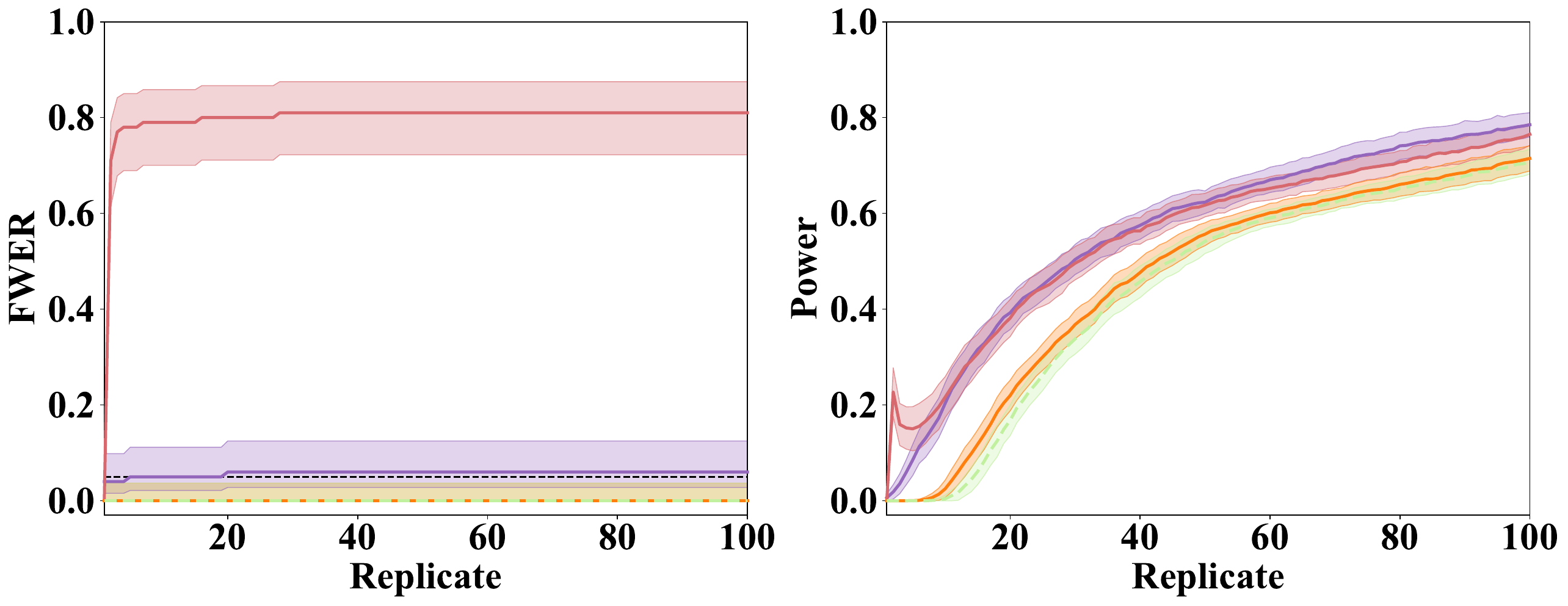}}
\hfill
\subfloat[$|C_m|=1$, $h_{i,\ell}=0.2$]{%
    \includegraphics[width=0.48\textwidth]{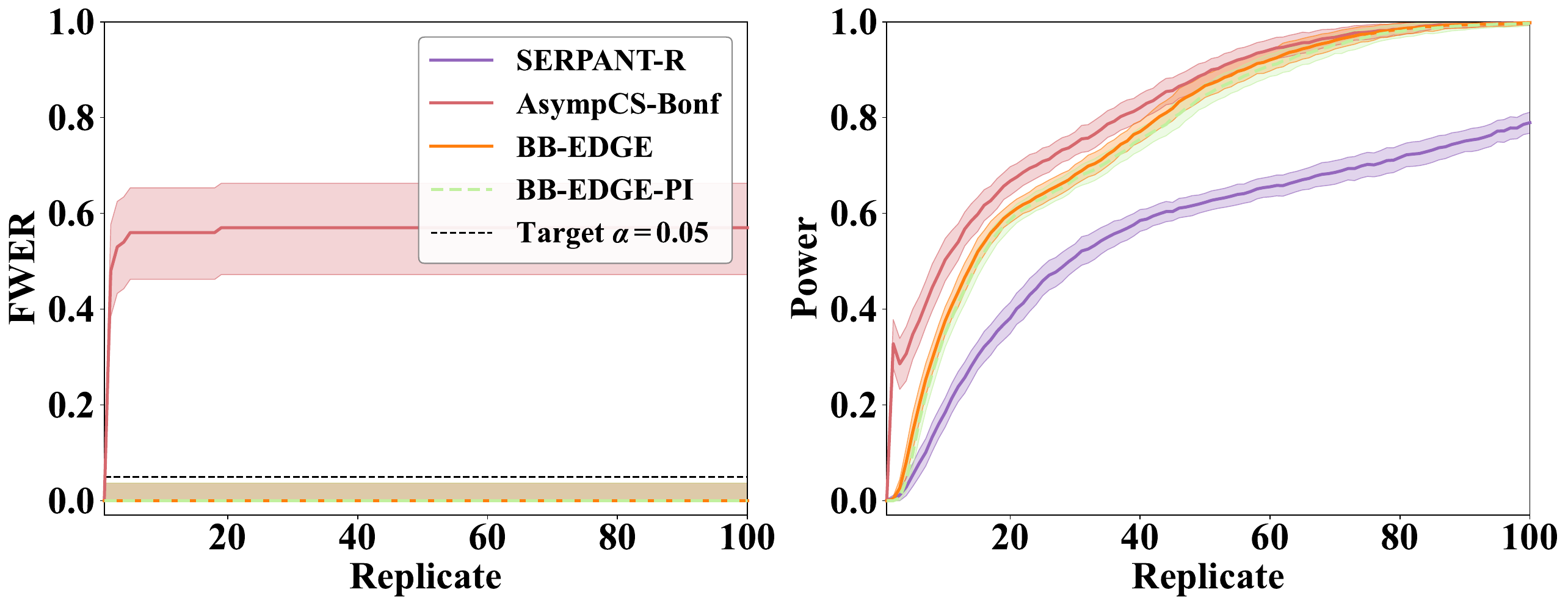}
}
\hfill
\subfloat[$|C_m|=5$, $h_{i,\ell}=0.2$]{%
    \includegraphics[width=0.48\textwidth]{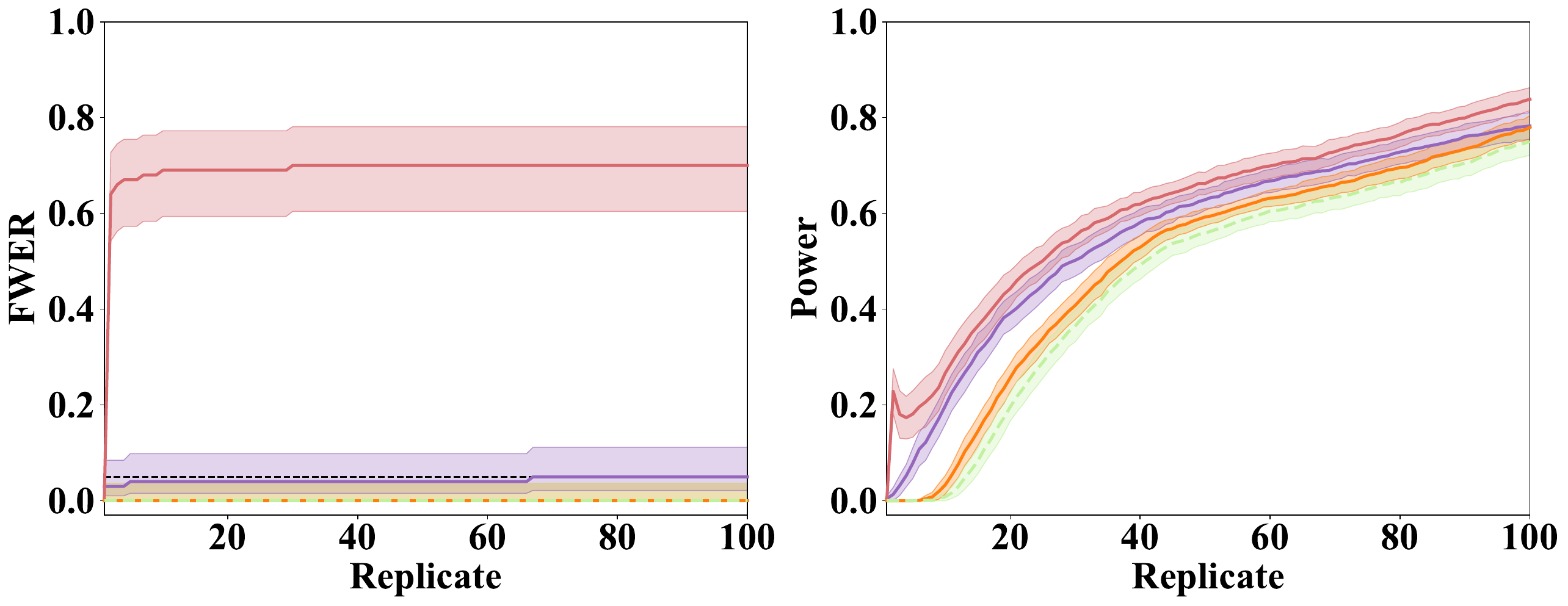}
}
\caption{Ablation study on more baselines: BB-EDGE versus BB-EDGE-PI, AsympCS-Bonf and SERPANT-R.}
\label{figure:grid}
\end{figure}

\begin{figure}[htbp]
\centering
\includegraphics[width=0.99\textwidth]{figure/Figure0.pdf}\\[-2ex]
\subfloat[Heterogeneous Block Correlations]{%
    \includegraphics[width=0.48\textwidth]{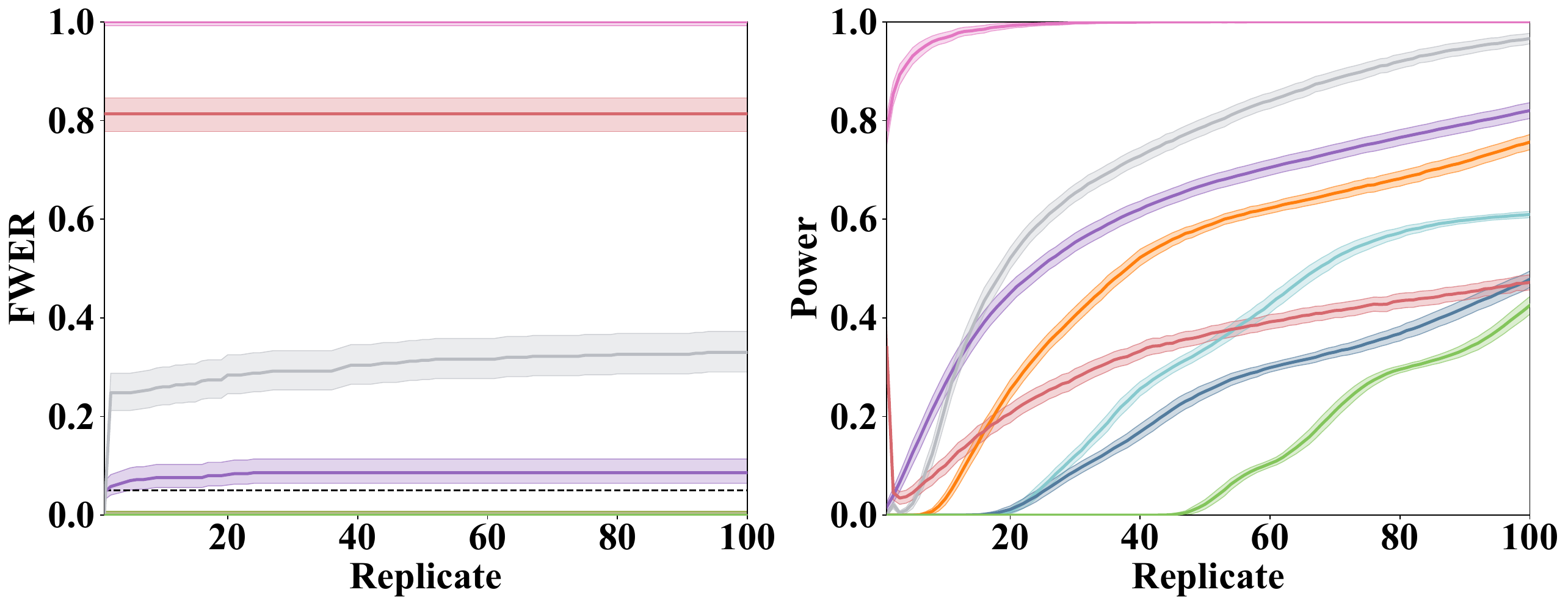}
}
\hfill
\subfloat[Heterogeneous Block Sizes]{%
    \includegraphics[width=0.48\textwidth]{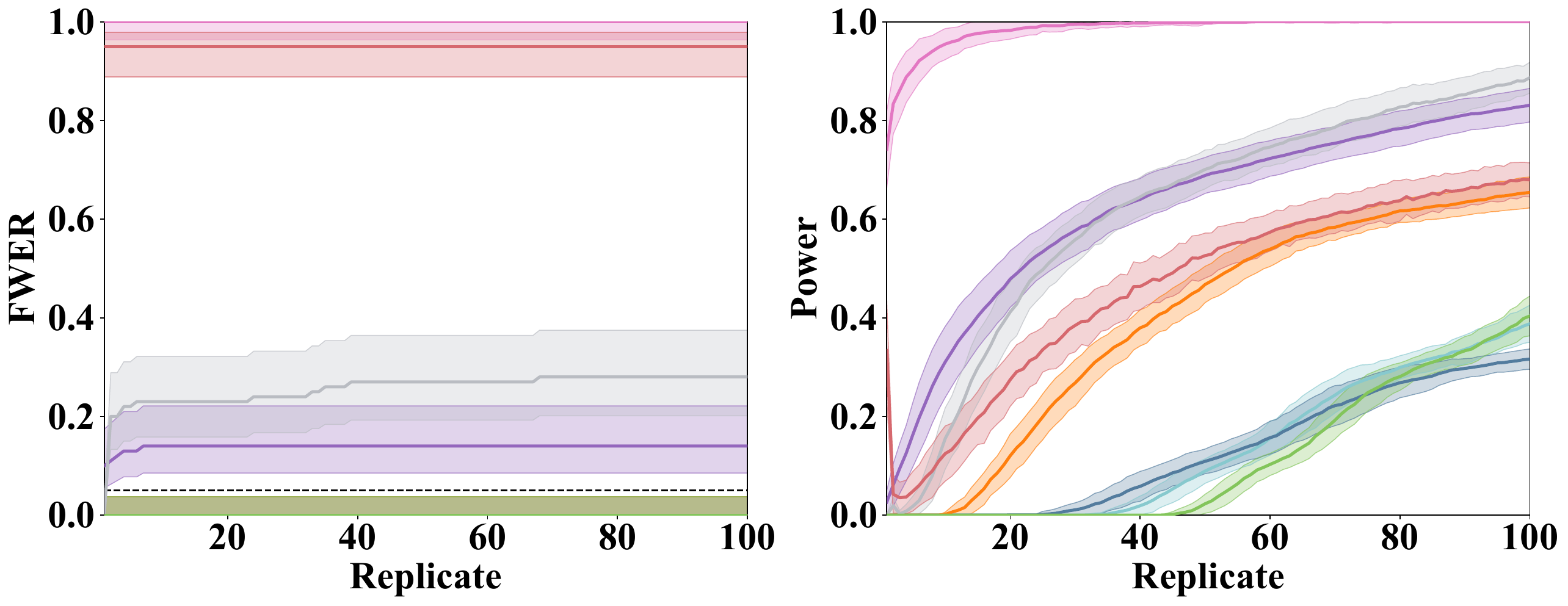}
}
\caption{Ablation study under heterogeneous block structures.
(a) The within-block correlation varies uniformly from $0.1$ to $0.7$ across blocks.
(b) The prespecified block sizes vary from $1$ to $10$.}
\label{figure:block_heterogeneity}
\end{figure}

\clearpage
\section{Additional Real Data Results}
\label{section_read_data_results}
\subsection{Leaderboard}
Tables~\ref{table:a2}--\ref{table:a5} report the mean benchmark score and its standard deviation for ten LLMs on four real datasets across the $20$ replicates.
Leaderboards offer an intuitive ranking of models, but a ranking based on point estimates alone cannot separate models whose mean scores are nearly identical.
For example, Gemma-4-E4B-It and Qwen3.5-9B score $0.7751$ and $0.7790$ on ACEBench, a gap smaller than either of their standard deviations across replicates ($0.0104$ and $0.0085$).
Consistently, BB-EDGE leaves this pair unresolved at $r=20$ (Figure~\ref{figure:2} (b)).
Additionally, Figure~\ref{figure:heterogeneity} also shows that item-level advantages are positive for some items and negative for others despite near-zero benchmark-average differences: $\widehat{\Delta}_{e}^{\mathcal B}=-0.008$ for Gemma-4-E4B-it versus MiMo-7B-RL on GSM8K.

\begin{table}[htbp]
    \centering
    \begin{tabular}{cc}
        \toprule
        Model & Accuracy \\
        \midrule
        Qwen3.5-9B                  & $0.6436 \pm 0.0031$ \\
        Phi-4                       & $0.6267 \pm 0.0043$ \\
        Granite-4.1-8B              & $0.6124 \pm 0.0028$ \\
        Falcon-H1R-7B               & $0.5492 \pm 0.0035$ \\
        Intern-S1-Mini              & $0.5427 \pm 0.0030$ \\
        MiMo-7B-RL                  & $0.5291 \pm 0.0030$ \\
        Gemma-4-E4B-It              & $0.5015 \pm 0.0029$ \\
        DeepSeek-R1-Distill-Qwen-7B & $0.4728 \pm 0.0028$ \\
        Llama-3.1-8B-Instruct       & $0.4223 \pm 0.0032$ \\
        Mistral-7B-Instruct-v0.3    & $0.3527 \pm 0.0030$ \\
        \bottomrule
    \end{tabular}
    \caption{Accuracy (mean $\pm$ std) on MMLU-Pro for each model, sorted in descending order of accuracy.}
    \label{table:a2}
\end{table}

\begin{table}[htbp]
    \centering
    \begin{tabular}{cc}
        \toprule
        Model & Exact Match \\
        \midrule
        Granite-4.1-8B               & $0.9138 \pm 0.0043$ \\
        Phi-4                        & $0.9067 \pm 0.0065$ \\
        Qwen3.5-9B                   & $0.8488 \pm 0.0079$ \\
        Intern-S1-Mini               & $0.7971 \pm 0.0076$ \\
        MiMo-7B-RL                   & $0.7444 \pm 0.0074$ \\
        Gemma-4-E4B-It               & $0.7362 \pm 0.0076$ \\
        Llama-3.1-8B-Instruct        & $0.7235 \pm 0.0090$ \\
        DeepSeek-R1-Distill-Qwen-7B  & $0.7102 \pm 0.0091$ \\
        Mistral-7B-Instruct-v0.3     & $0.5054 \pm 0.0097$ \\
        Falcon-H1R-7B                & $0.4765 \pm 0.0099$ \\
        \bottomrule
    \end{tabular}
    \caption{Exact Match (mean $\pm$ std) on GSM8K for each model, sorted in descending order of accuracy.}
    \label{table:a3}
\end{table}

\begin{table}[htbp]
    \centering
    \begin{tabular}{cc}
        \toprule
        Model & LLM Judge \\
        \midrule
        Gemma-4-E4B-It                & $0.7422 \pm 0.0076$ \\
        Qwen3.5-9B                    & $0.7418 \pm 0.0139$ \\
        Granite-4.1-8B                & $0.6960 \pm 0.0098$ \\
        Phi-4                         & $0.6614 \pm 0.0117$ \\
        Intern-S1-Mini                & $0.5945 \pm 0.0150$ \\
        Llama-3.1-8B-Instruct         & $0.5714 \pm 0.0146$ \\
        Mistral-7B-Instruct-v0.3      & $0.4961 \pm 0.0113$ \\
        Falcon-H1R-7B                 & $0.4497 \pm 0.0141$ \\
        DeepSeek-R1-Distill-Qwen-7B   & $0.4233 \pm 0.0102$ \\
        MiMo-7B-RL                    & $0.3824 \pm 0.0149$ \\
        \bottomrule
    \end{tabular}
    \caption{LLM judge scores (mean $\pm$ standard deviation) on MT-Bench for each model, sorted in descending order of mean score.}
    \label{table:a4}
\end{table}

\begin{table}[htbp]
    \centering
    \begin{tabular}{cc}
        \toprule
        Model & AST-based Accuracy \\
        \midrule
        Qwen3.5-9B & $0.7790 \pm 0.0104$ \\
        Gemma-4-E4B-It & $0.7751 \pm 0.0085$ \\
        Phi-4  & $0.7308 \pm 0.0093$ \\
        Intern-S1-Mini  & $0.7295 \pm 0.0140$ \\
        Granite-4.1-8B   & $0.6619 \pm 0.0083$ \\
        Mistral-7B-Instruct-v0.3 & $0.5465 \pm 0.0114$ \\
        Llama-3.1-8B-Instruct  & $0.4996 \pm 0.0167$ \\
        DeepSeek-R1-Distill-Qwen-7B   & $0.3769 \pm 0.0213$ \\
        MiMo-7B-RL  & $0.3700 \pm 0.0218$ \\
        Falcon-H1R-7B   & $0.2620 \pm 0.0388$ \\
        \bottomrule
    \end{tabular}
    \caption{AST-based accuracy (mean $\pm$ standard deviation) on ACEBench for each model, sorted in descending order of mean score.}
    \label{table:a5}
\end{table}

\begin{figure}[htbp]
\centering
\subfloat[GSM8K]{%
    \includegraphics[width=0.98\textwidth]{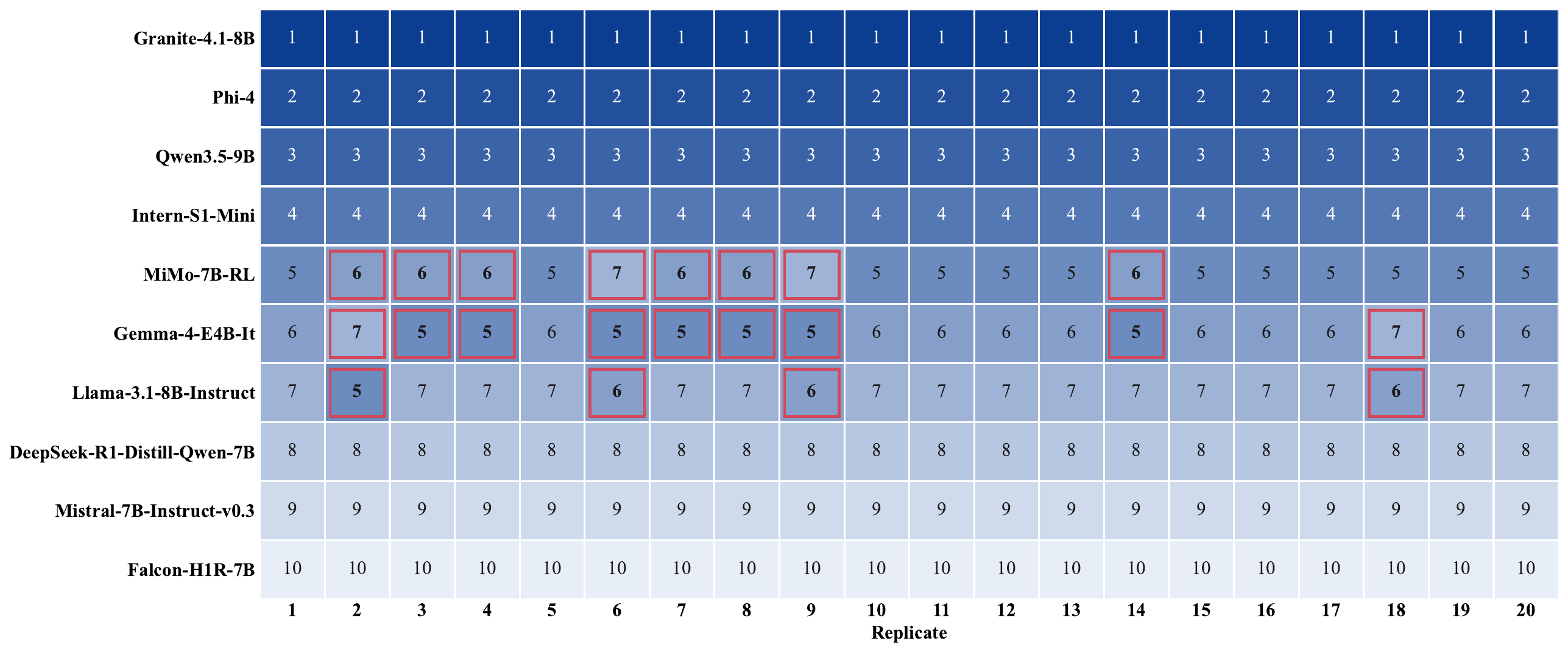}%
}
\caption{Single-run rankings of ten LLMs on GSM8K over 20 replicates. Rows are ordered by the 20-replicate mean score; outlined cells mark ranks that differ from the 20-replicate ranking.}
\label{fig:rank-stability}
\end{figure}

\begin{figure}[htbp]
\centering
\subfloat[GSM8K]{%
    \includegraphics[width=0.48\textwidth]{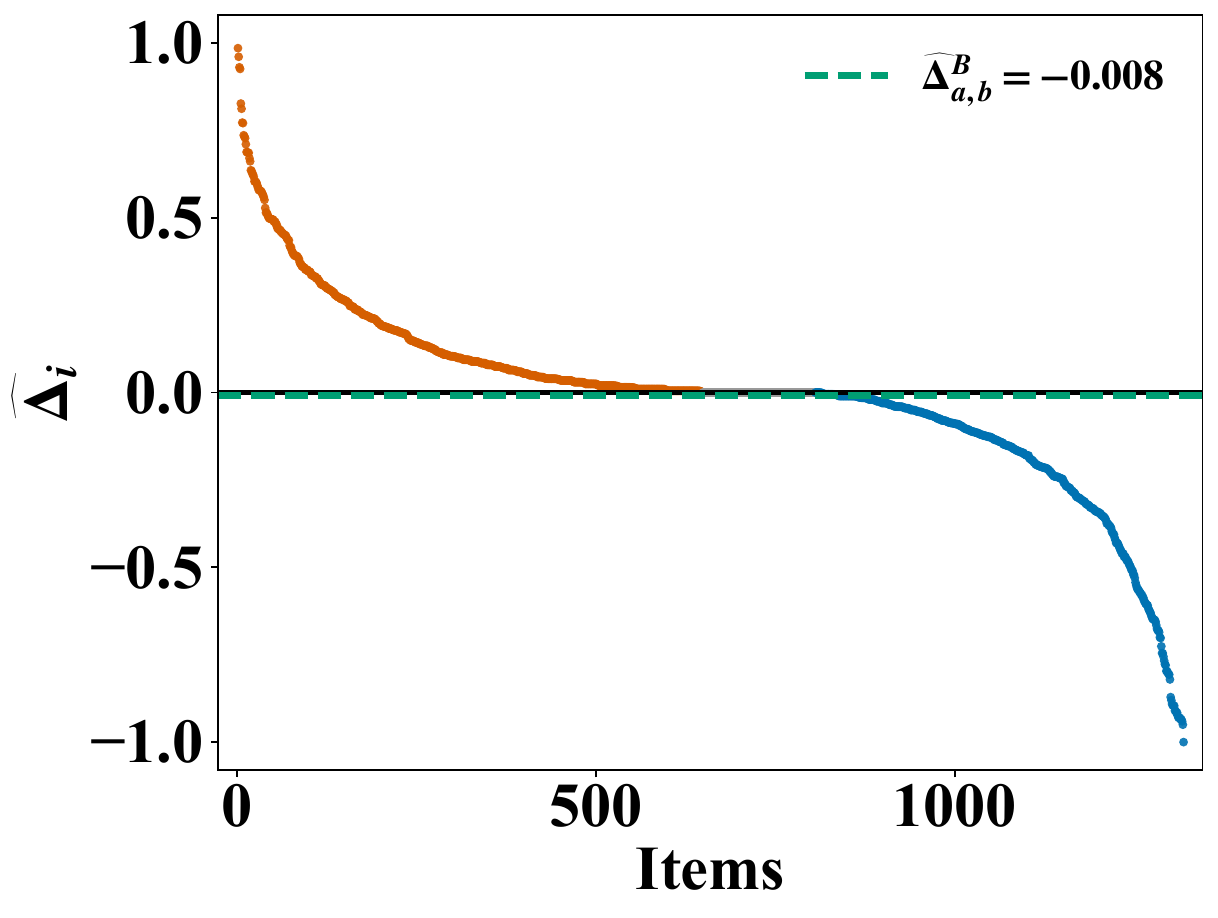}%
}
\caption{
Item-level heterogeneity on GSM8K. Different items favor different models despite near-zero benchmark-average differences: Gemma-4-E4B-it versus MiMo-7B-RL on GSM8K ($\widehat{\Delta}^{\mathcal B}_{\mathrm{Gemma},\mathrm{MiMo}}=-0.008$).
}
\label{figure:heterogeneity}
\end{figure}

\subsection{Full-Ranking Confidence Graph}
Figure~\ref{figure:a8} presents the directed acyclic confidence graphs at replicate $r=20$ for MMLU-Pro and MT-Bench.
The MMLU-Pro graph forms a fully resolved linear chain headed by Qwen3.5-9B, followed by Phi-4 and Granite-4.1-8B, with each displayed edge representing a statistically certified directional comparison.
The MT-Bench graph is nearly a total order, with only three pairs remaining unresolved after multiplicity correction: Gemma-4-E4B-It versus Qwen3.5-9B, Intern-S1-Mini versus Llama-3.1-8B-Instruct, and Falcon-H1R-7B versus DeepSeek-R1-Distill-Qwen-7B.
For the binary-score benchmarks, SERPANT's preference target is equivalent to the mean-score target after fair tie breaking.
On MT-Bench, which uses graded judge scores, SERPANT instead targets the induced pairwise-preference ordering and is included only as a reference.
Compared with $\tau=0$, Figure~\ref{figure:real_data_tau} shows that setting $\tau=0.01$ yields fewer certified edges because each comparison must establish an advantage greater than $0.01$. 
For example, increasing $\tau$ from $0$ to $0.01$ changes the fully resolved MMLU-Pro ranking into a partial order, leaving Phi-4 versus Granite-4.1-8B and Falcon-H1R-7B versus Intern-S1-Mini unresolved.
Nevertheless, BB-EDGE still identifies clear partial ranking structures across all four benchmarks.
\begin{figure*}[htbp]
\centering
\subfloat[MMLU-Pro]{%
    \includegraphics[width=0.43\textwidth]{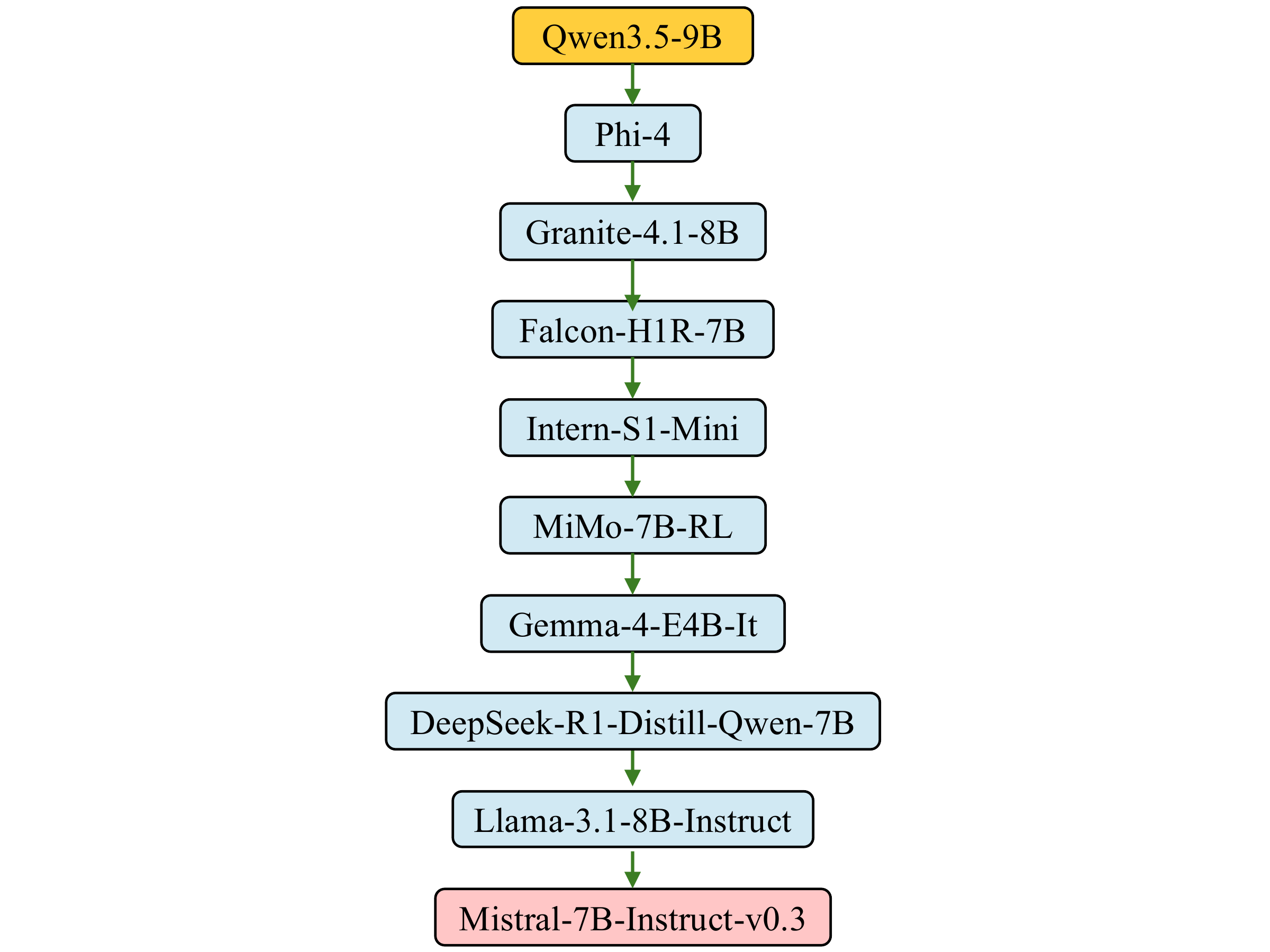}
}
\subfloat[MT-Bench]{%
    \includegraphics[width=0.43\textwidth]{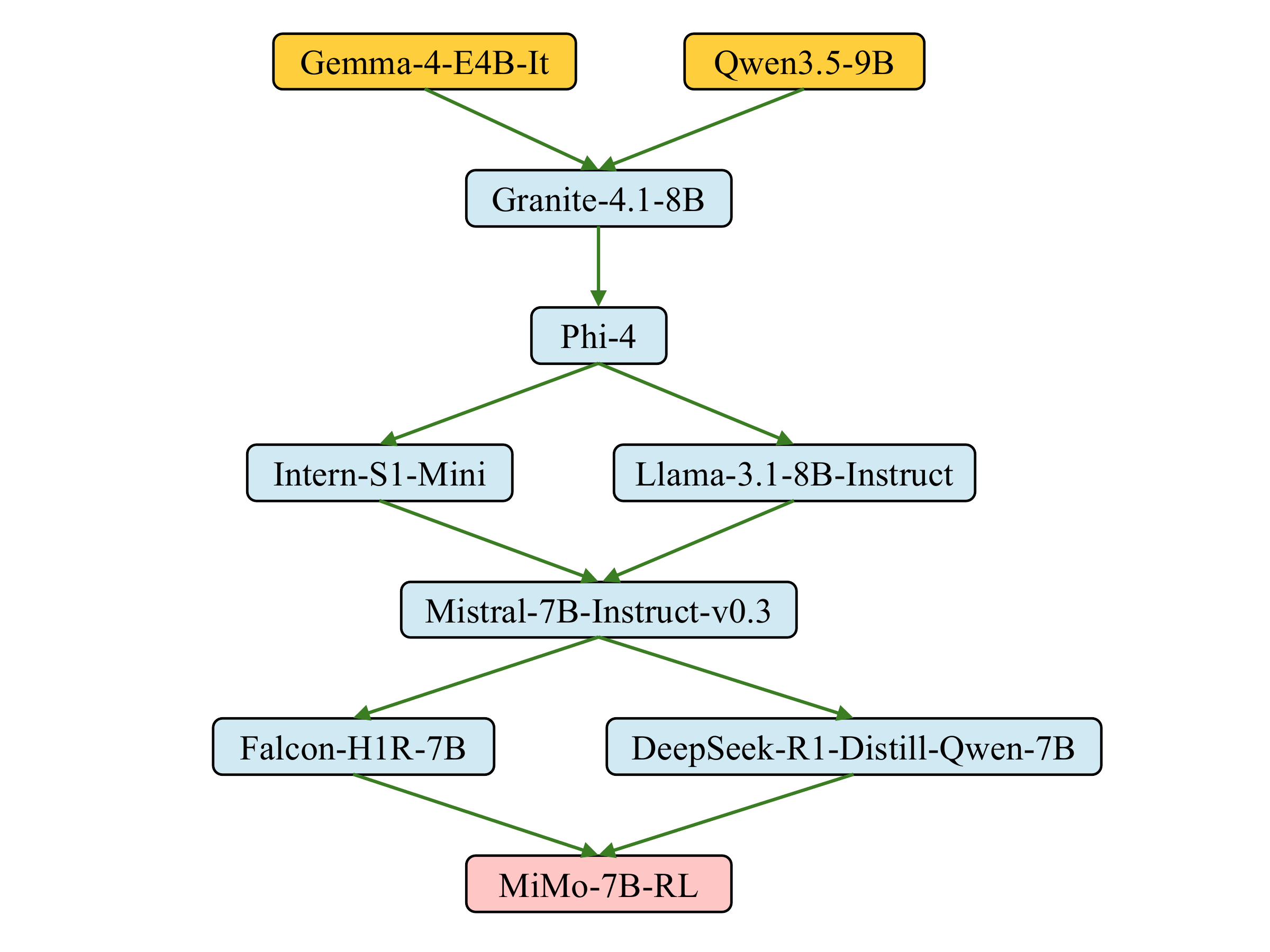}
}
\caption{
Certified comparison graphs at replicate $r=20$ on MMLU-Pro and MT-Bench, respectively.
An edge \(A\to B\) indicates that model $A$ is certified to outperform model $B$ on the benchmark under the evaluation protocol. Pairs not connected by a directed path remain unresolved.}
\label{figure:a8}
\end{figure*}

\begin{figure*}[htbp]
\centering
\subfloat[MMLU-Pro]{%
    \includegraphics[width=0.43\textwidth]{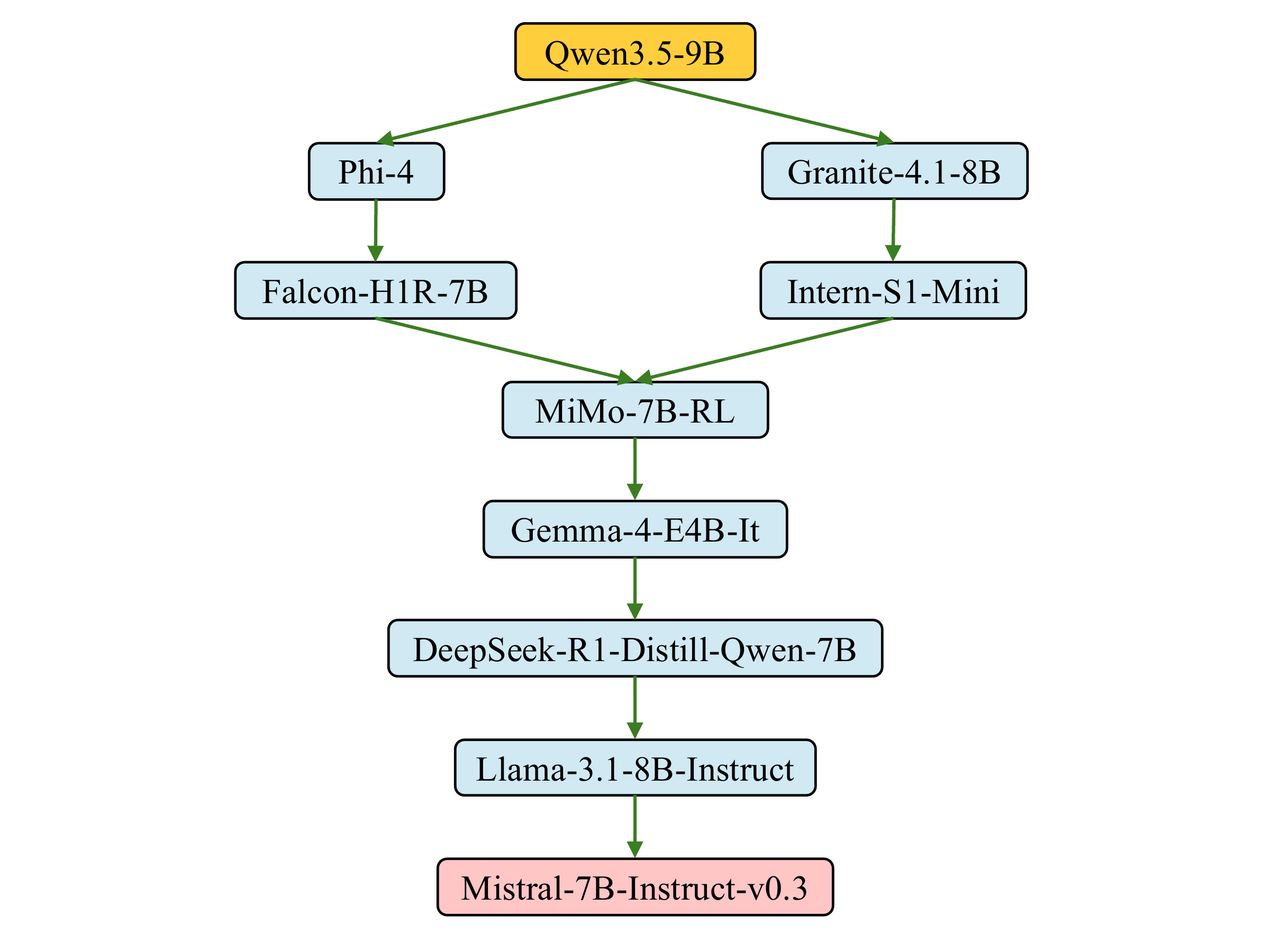}
}
\subfloat[GSM8K]{%
    \includegraphics[width=0.43\textwidth]{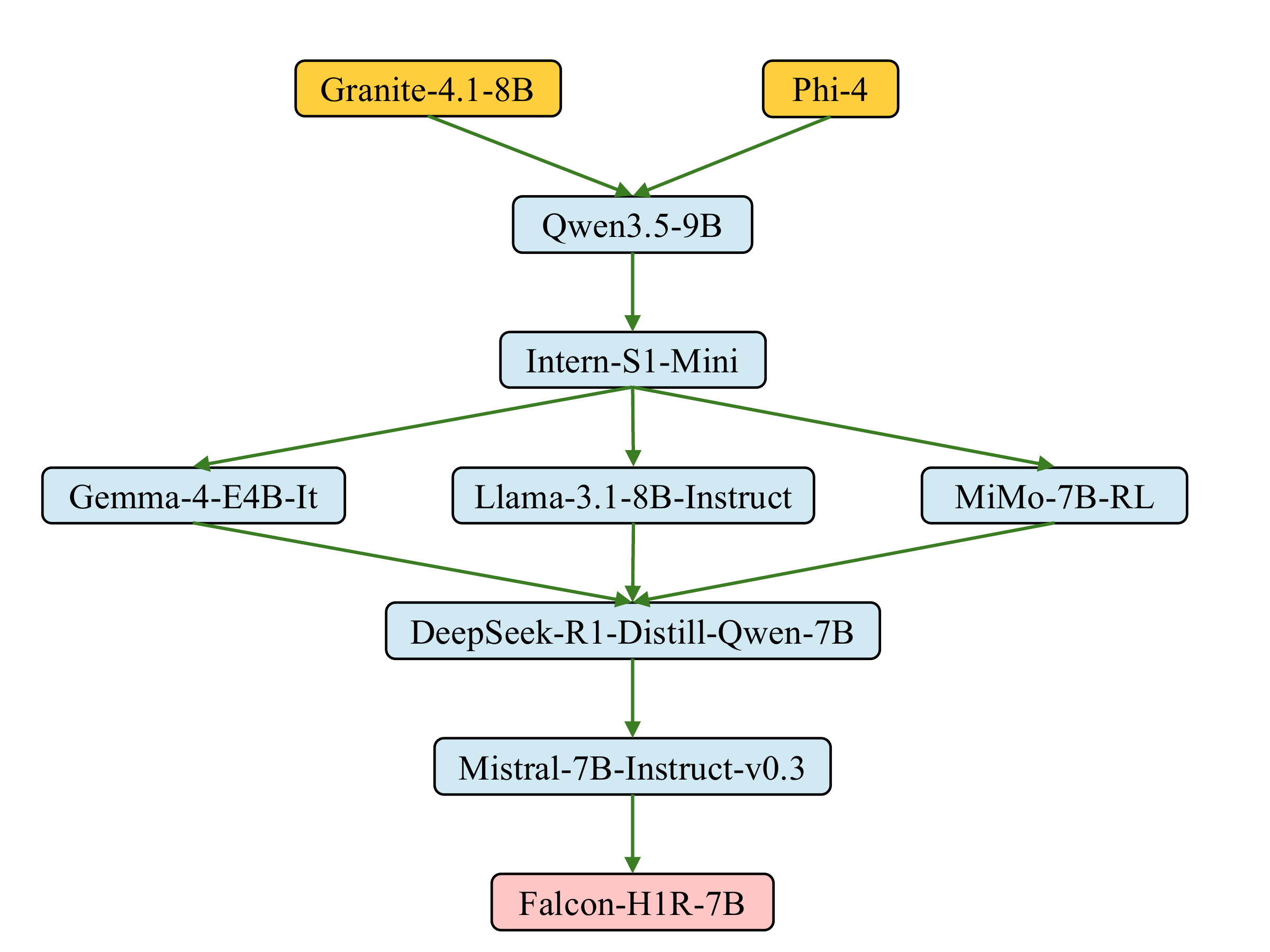}
}\\[1ex]
\subfloat[MT-Bench]{%
    \includegraphics[width=0.43\textwidth]{figure/MTBench_RANKING.pdf}
}
\subfloat[ACEBench]{%
    \includegraphics[width=0.43\textwidth]{figure/ACEBENCH_RANKING.pdf}
}
\caption{Certified comparison graphs at replicate $r=20$ on four datasets, with a prespecified margin of $\tau=0.01$. An edge \(A\to B\) indicates that model $A$ is certified to outperform model $B$ on the benchmark under the evaluation protocol. Pairs not connected by a directed path remain unresolved.}
\label{figure:real_data_tau}
\end{figure*}

Figure~\ref{figure:complete} shows the GSM8K confidence graph at $r = 100$ with a single block ($M = 1$), which requires no within-replicate independence.
It resolves 24 of 45 comparisons, certifying exactly the pairs with mean gaps above $0.14$, whereas the item-factorized graph ($M = N$) resolves 41 after only two replicates.
The one-block resolution requires roughly two orders of magnitude more replicates, so we recommend it only as a fallback when the protocol cannot justify conditional independence across blocks.

\begin{figure*}[htbp]
\centering
    \includegraphics[width=0.99\textwidth]{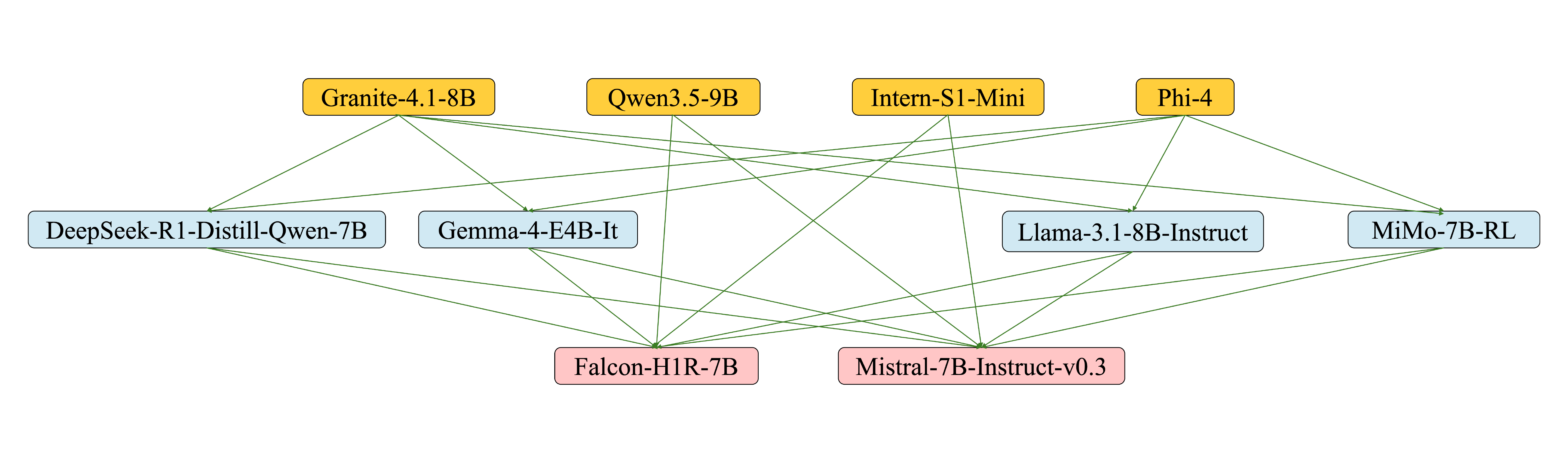}
\caption{Certified comparison graphs at replicate $r = 100$ on GSM8K under the one-block partition ($M = 1$) with $\tau = 0$. An edge \(A\to B\) indicates that model $A$ is certified to outperform model $B$ on the benchmark under the evaluation protocol. Pairs not connected by a directed path remain unresolved.}
\label{figure:complete}
\end{figure*}

\clearpage

\subsection{Top-$k$ Certification}
\label{sec:real_data_topk}
Table~\ref{table:a7} reports the certified Top-4 sets across all four benchmarks. Qwen3.5-9B and Phi-4 appear in every benchmark's Top-4, indicating consistently strong performance across diverse task types, while Granite-4.1-8B is certified in three of the four (all but ACEBench).
The remaining slot varies by benchmark: Falcon-H1R-7B on MMLU-Pro, Intern-S1-Mini on GSM8K and ACEBench, and Gemma-4-E4B-It on MT-Bench and ACEBench.

Table~\ref{table:a10} reports the minimum number of complete benchmark replicates required for Top-4 certification ($\alpha=0.05$, $\tau=0$) on the four benchmarks.
The results show that the two BB-EDGE variants always require the fewest replicates.
Full-graph BB-EDGE is fastest on MMLU-Pro and GSM8K, certifying the Top-4 set after one and two replicates, respectively.
On both benchmarks, the pilot-targeted variant requires one additional replicate for pilot selection, resulting in slightly later certification.
The pilot-targeted variant performs better on MT-Bench and ACEBench, where more replicates are otherwise required.
For example, on MT-Bench, full-graph BB-EDGE also certifies within 5 replicates, whereas the pilot-targeted variant only requires 3 replicates;

\begin{table}[htbp]
    \centering
    \begin{tabular}{ccccc} \toprule
        & MMLU-Pro & GSM8K & MT-Bench & ACEBench \\ \midrule
        \multirow{4}{*}{Top-4} & Falcon-H1R-7B   & Intern-S1-Mini & Gemma-4-E4B-It & Gemma-4-E4B-It \\
         & Granite-4.1-8B  & Granite-4.1-8B & Granite-4.1-8B & Intern-S1-Mini \\
         & Phi-4           & Phi-4          & Phi-4          & Phi-4 \\
         & Qwen3.5-9B      & Qwen3.5-9B     & Qwen3.5-9B     & Qwen3.5-9B \\ \bottomrule
    \end{tabular}
\caption{Certified Top-4 model sets on the four benchmarks at $\alpha=0.05$ and $\tau=0$.}
    \label{table:a7}
\end{table}

\begin{table}[htbp]
    \centering
    \begin{tabular}{lcccc}
        \toprule
        Method & MMLU-Pro & GSM8K & MT-Bench & ACEBench \\
        \midrule
        SERPANT       & \textbf{1} & 4 & 8& 20+ \\
        BFH-$e$-Holm  & 2 & 7 & 17& 17 \\
        CR-EDGE  & 20+ & 20+ & 20+& 20+ \\
        \midrule
        \textbf{BB-EDGE}        & \textbf{1} & \textbf{2} & 5& 4 \\
        \quad \textbf{+Pilot}  & 2 & 3 & \textbf{3}& \textbf{3} \\
        \bottomrule
    \end{tabular}
    \caption{Minimum number of complete benchmark replicates required for Top-4 certification at $\alpha=0.05$ and $\tau=0$. BB-EDGE and the baselines use full-graph certification; +pilot uses the pilot-targeted procedure with $R_0 =1$ pilot replicates, which are included in its counts. ``20+'' indicates that no certificate was issued within the budget of 20 replicates.
    Bold marks the fewest replicates among methods with anytime FWER control.}
    \label{table:a10}
\end{table}

\subsection{Rank Interval}
\label{sec:real_data_interval}
Figure~\ref{figure:a11} shows that ranking uncertainty varies across four benchmarks.
On MMLU-Pro, BB-EDGE uniquely determines every model's rank.
On GSM8K, most ranks are resolved, while Llama-3.1-8B-Instruct, Gemma-4-E4B-it, and MiMo-7B-RL have unresolved ranks within $5$--$7$.
On MT-Bench, uncertainty remains at ranks $1$--$2$, $5$--$6$, and $8$--$9$, corresponding to Gemma-4-E4B-it and Qwen3.5-9B, Intern-S1-Mini and Llama-3.1-8B-Instruct, and Falcon-H1R-7B and DeepSeek-R1-Distill-Qwen-7B, respectively.
On ACEBench, ranks $1$--$2$, $3$--$4$, and $8$--$9$ remain unresolved, while all other ranks are uniquely determined.
These results illustrate how BB-EDGE certifies precise ranks when supported by the evidence while retaining uncertainty in unresolved comparisons.

\begin{figure*}[htbp]
\centering
\subfloat[MMLU-Pro]{%
    \includegraphics[width=0.49\textwidth]{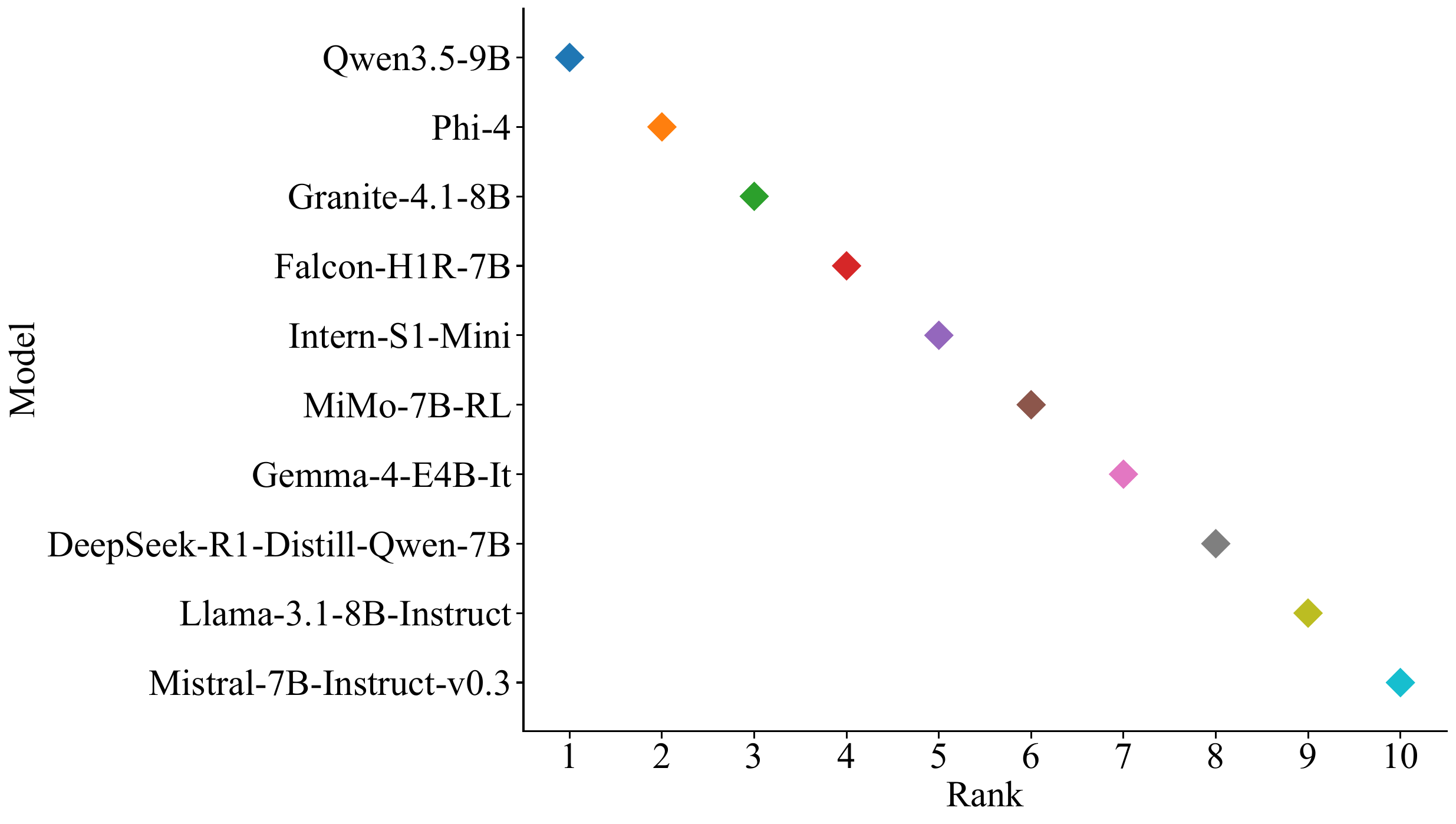}
}
\subfloat[GSM8K]{%
    \includegraphics[width=0.49\textwidth]{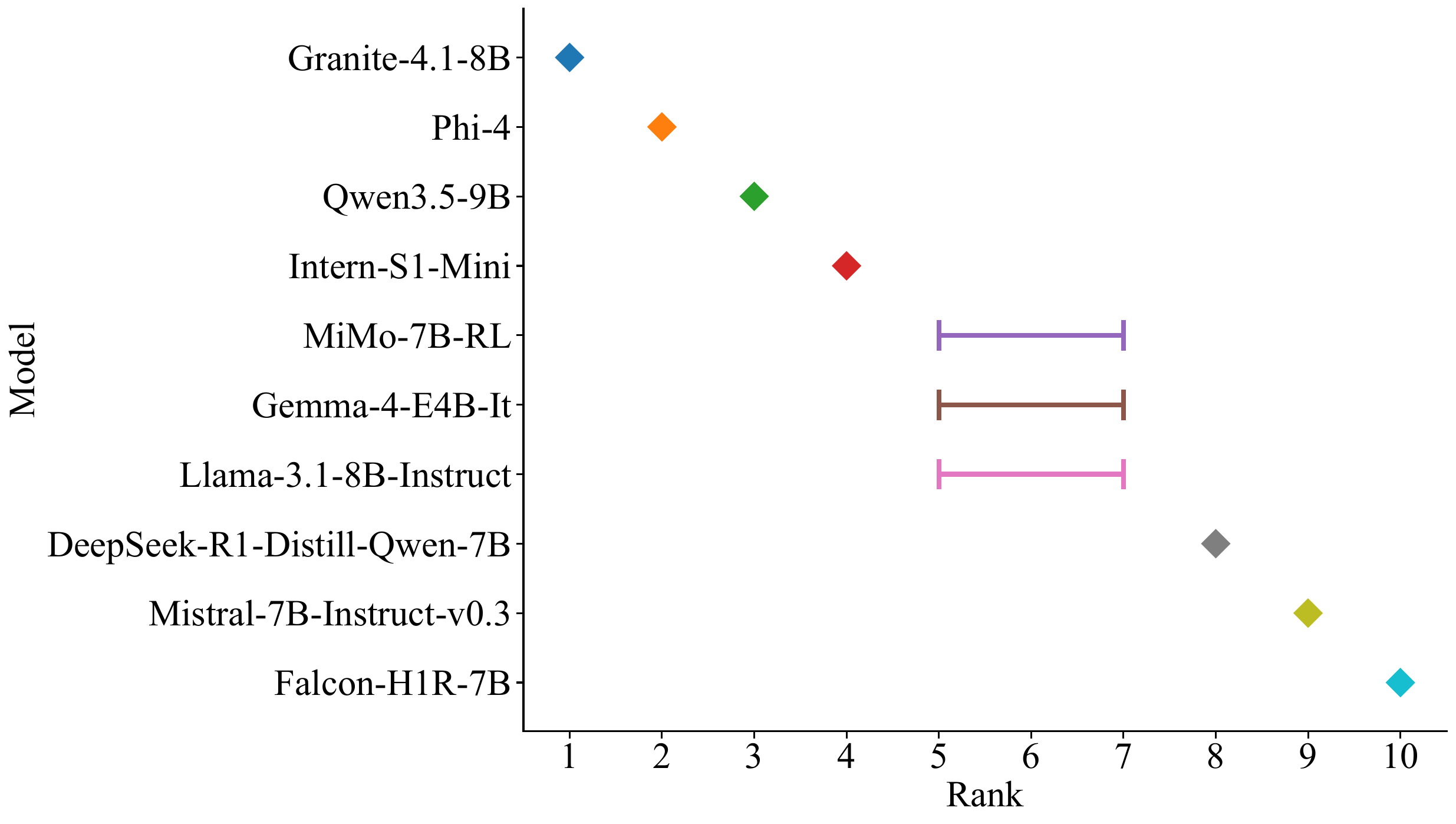}
} \\
\subfloat[MT-Bench]{%
    \includegraphics[width=0.49\textwidth]{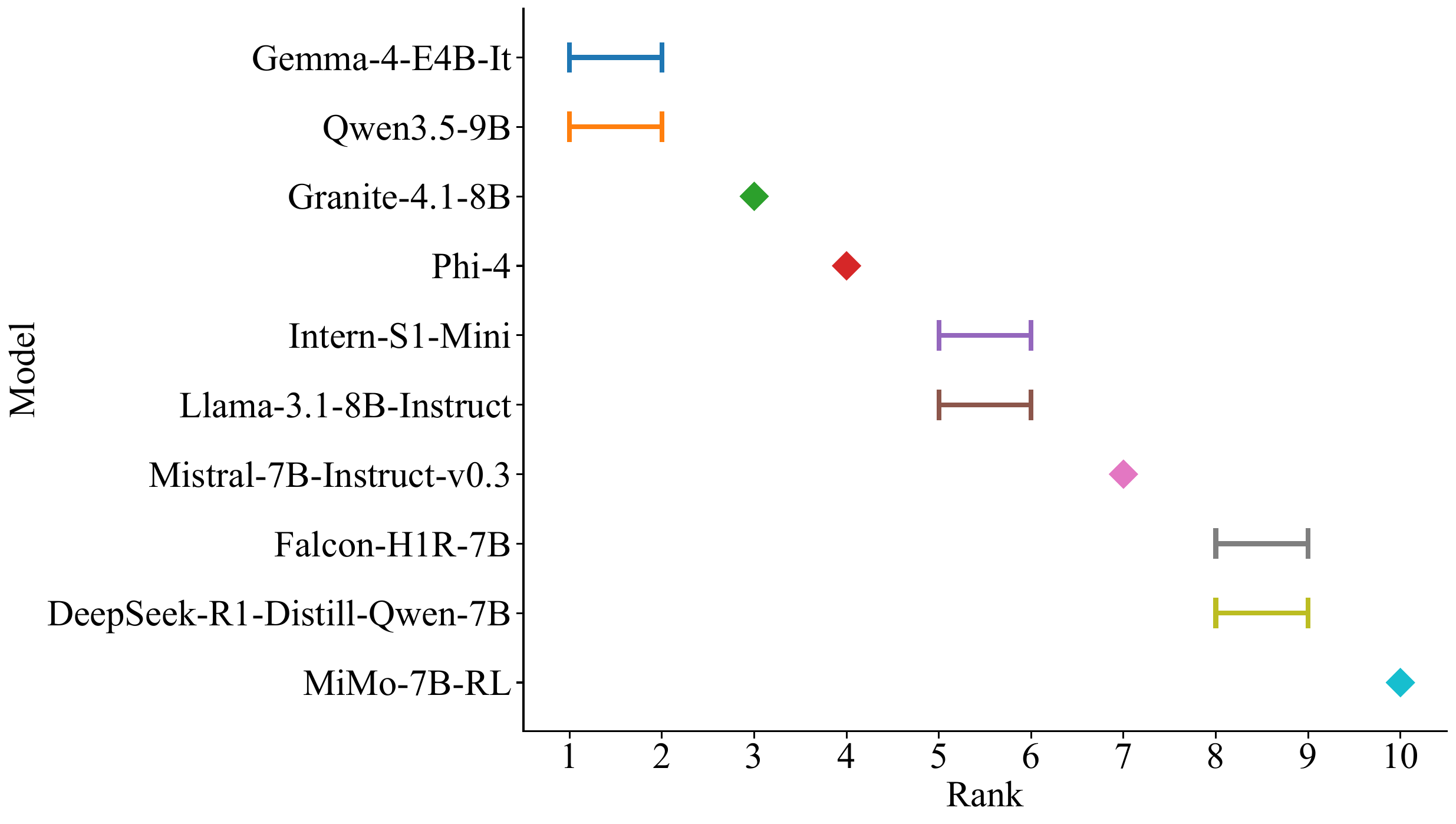}
}
\subfloat[ACEBench]{%
    \includegraphics[width=0.49\textwidth]{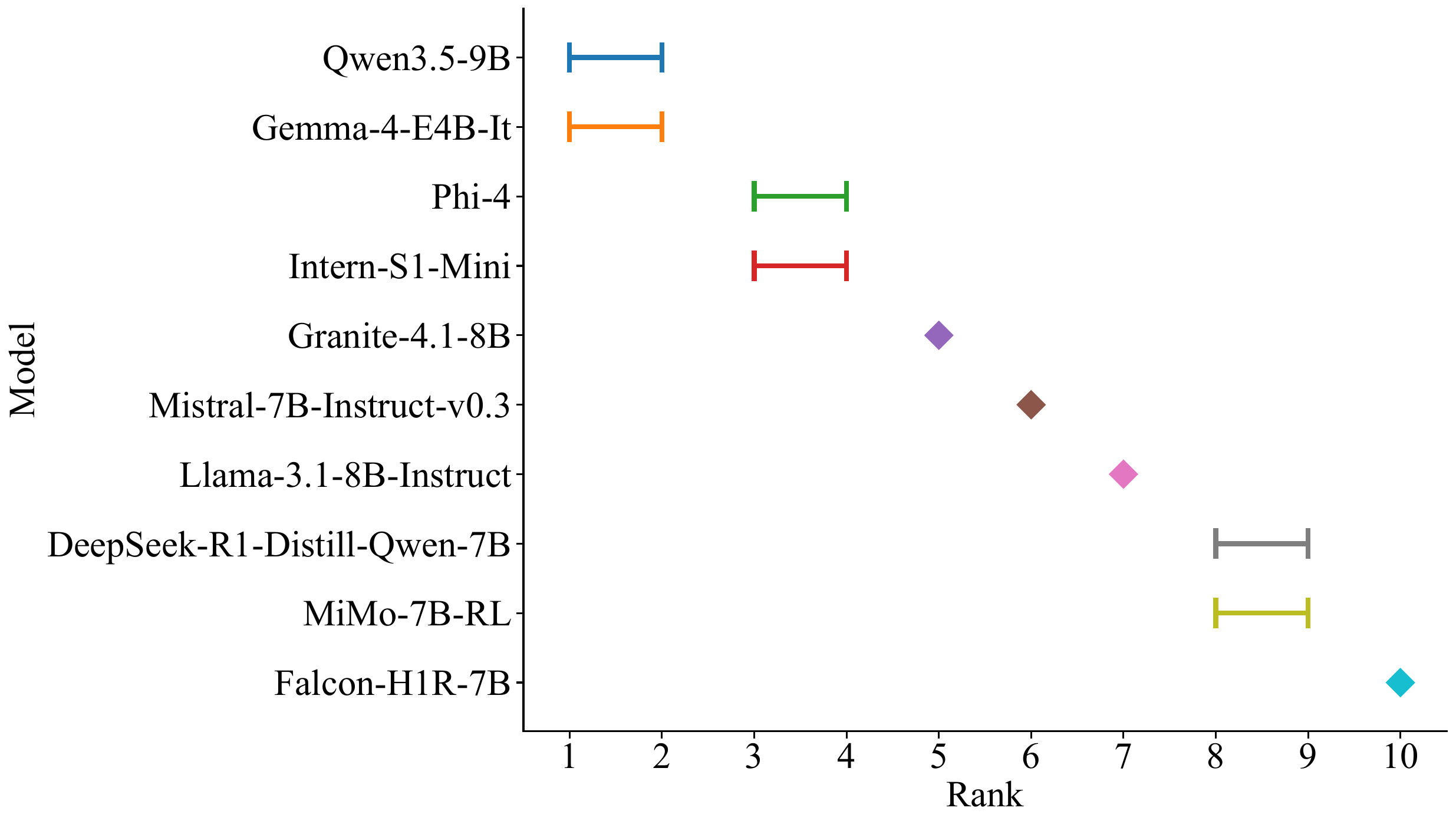}
}
\caption{Rank intervals for LLMs at replicate $r=20$ on four datasets. Each horizontal segment represents the inferred rank interval of a model. Overlapping intervals indicate unresolved ranking.}
\label{figure:a11}
\end{figure*}

\subsection{Closed-source APIs}
We further evaluate BB-EDGE on five API models (DeepSeek-V4-Flash \citep{xu2026deepseek}, GLM-5.3-Flash \citep{glm5team2026glm5vibecodingagentic}, GPT-5.6-Luna \citep{openai2026gpt56luna}, MiniMax-M3 \citep{lai2026minimax}, and Qwen3.8-Flash \citep{qwen3.8flashnext}).
Due to limited resources, this evaluation is restricted to ACEBench Normal Multi-Turn.
Table~\ref{table:a6} reports the mean scores, and Figure~\ref{figure:a9} presents the directed acyclic graph representing the inferred ranking of closed-source APIs at replicate $r=20$ on ACEBench.
GLM-5.3-Flash and Qwen3.8-Flash jointly occupy the top tier, each certified above GPT-5.6-Luna and DeepSeek-V4-Flash, respectively, both of which are in turn certified above MiniMax-M3.
The graph exhibits a symmetric two-branch structure that converges to a single lowest-ranked model.

\begin{table}[htbp]
    \centering
    \begin{tabular}{cc}
        \toprule
         Model & AST-based Accuracy \\
        \midrule
        GLM-5.3-Flash        & $0.8041 \pm 0.0089$ \\
        Qwen3.8-Flash        & $0.7982 \pm 0.0065$ \\
        GPT-5.6-Luna         & $0.7797 \pm 0.0087$ \\
        DeepSeek-V4-Flash    & $0.7756 \pm 0.0080$ \\
        MiniMax-M3           & $0.7198 \pm 0.0133$ \\
        \bottomrule
    \end{tabular}
    \caption{ACEBench scores (mean $\pm$ standard deviation) for each model, sorted
    in descending order of mean score.}
    \label{table:a6}
\end{table}

\begin{figure*}[htbp]
\centering
    \includegraphics[width=0.35\textwidth]{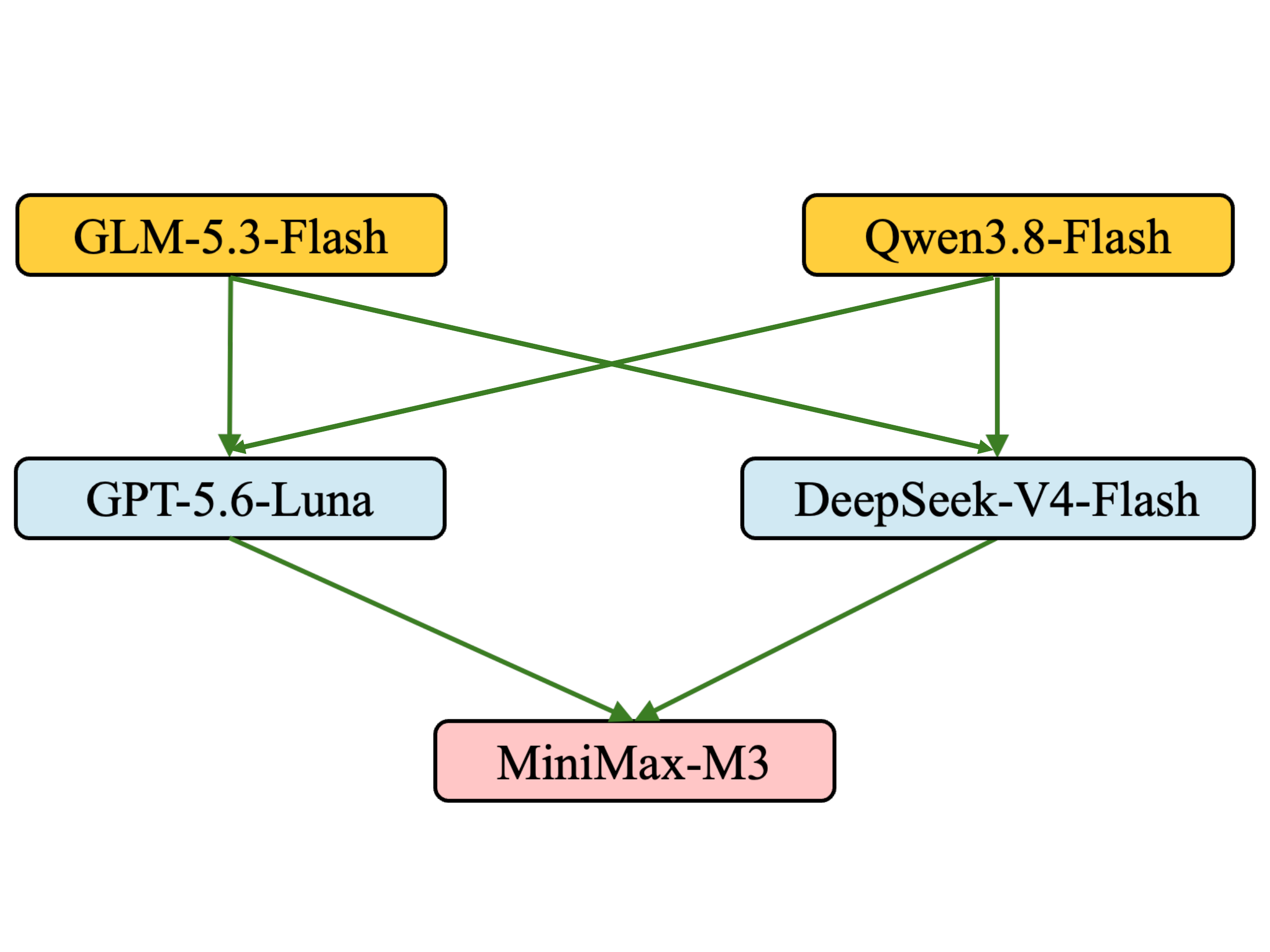}
\caption{Certified comparison graph representing the inferred ranking of APIs on ACEBench at replicate $r=20$.}
\label{figure:a9}
\end{figure*}

\section{Experimental Setup Details}
\label{section_experiments_details}

\subsection{Dataset}
\label{section_dataset}
We evaluate on four publicly available benchmarks: MMLU-Pro \citep{wang2024mmlu}, GSM8K \citep{cobbe2021gsm8k}, MT-Bench \citep{zheng2023judging}, and ACEBench \citep{chen2025acebench}.
Table \ref{table:a1}  summarizes the statistics, splits, evaluation metrics, and block sizes $|C_m|$ (the number of scored items per block); the four datasets span a range of block sizes and pool sizes.

\paragraph{MMLU-Pro} is an enhanced version of MMLU that expands each question's candidate options from 4 to up to 10, yielding a more challenging and discriminative benchmark.
Each MMLU-Pro item is evaluated through a separate model invocation and has no shared conversational context with other items. 
We therefore use singleton blocks under the working assumption that separate stateless item-level executions are conditionally independent given the evaluation protocol.

\paragraph{GSM8K} is a benchmark of grade-school-level math word problems that require multi-step arithmetic reasoning to arrive at a final numeric answer.
Each GSM8K problem is evaluated through a separate model invocation without cross-item context. 
Accordingly, we use singleton blocks under the working assumption that separate item-level executions are conditionally independent given the benchmark and protocol.

\paragraph{MT-Bench} is a benchmark designed to evaluate models' multi-turn conversational and instruction-following abilities. It consists of 80 two-turn questions. Following \citet{zheng2023judging}, the judge rates the two turns separately: the first-turn response is rated given the first question, and the second-turn response is rated given the full two-turn dialogue. Each question therefore contributes two scored items, giving $N = 160$. Ratings on the 1--10 scale are divided by 10. We group the two turns of each dialogue into one block, yielding $M=80$ dialogue-level blocks of size $|C_m|=2$.
This grouping allows arbitrary dependence between the two turns within each dialogue, while assuming conditional independence across separately executed dialogues given the fixed benchmark and evaluation protocol.

\paragraph{ACEBench} is a comprehensive benchmark for evaluating LLM tool usage. We focus on its Normal Multi-Turn subset, which contains $N = 223$ scored tool-call steps, each scored via AST-based function-call matching against the ground truth. Each dialogue contains two or three scored steps. We treat each self-contained dialogue as one block, so the blocks have sizes $|C_m| \in \{2, 3\}$ and weights $w_m = |C_m|/N$. We allow arbitrary dependence among steps within a dialogue and assume conditional independence across separately executed dialogues.

Permutation tests comparing the variance of replicate-level mean scores with that implied by independent blocks find no significant dependence across blocks for any model after Holm correction (smallest raw $p$: GSM8K $0.073$, MMLU-Pro $0.097$, MT-Bench $0.14$, ACEBench $0.37$), supporting the prespecified partitions.
For each model, we compare the variance of replicate-level mean scores with the sum of block-level score variances divided by $N^2$, which is its value under independence across blocks, and obtain a $p$-value by independently permuting each block's scores across replicates, which preserves within-block dependence while removing any dependence across blocks.

\begin{table}[htbp]
\caption{The statistics, split and evaluation metrics of each dataset.}
    \centering
    \begin{tabular}{ccccc}
    \toprule
         Data&  Train Set&  Test Set&Evaluation &$|C_m|$\\
    \midrule
         MMLU-Pro&  70&  12,032&Accuracy &1\\
         GSM8K& 7,470&  1,319 &Exact Match &1\\
         MT-Bench&  -&  160 (80 questions) &LLM Judge$^{*}$  &2\\
         ACEBench& -& 223& AST-based Accuracy &2--3\\
    \bottomrule
    \end{tabular}
    \begin{flushleft}
    \footnotesize
$^{*}$On MT-Bench, BB-EDGE certifies model comparisons based on the LLM-based scores generated by MiniMax-M3 under the given protocol; the reliability of the protocol itself is beyond the scope of this work.
    \end{flushleft}
    \label{table:a1}
\end{table}

\subsection{Baselines}
\label{section_appendix_baseline}

\paragraph{EMR} is the simplest baseline: it computes the empirical mean performance difference $\widehat{\Delta}_{a,b}$ over the $r$ observed replicates and certifies $(a,b)$ whenever $\widehat{\Delta}_{a,b}>\tau$. It accounts for neither sampling uncertainty nor multiplicity across the $H$ directional comparisons.

\paragraph{CBTL} \citep{wang2024confidence} models pairwise comparison probabilities as smooth functions of the item context using kernel smoothing. It certifies $(a,b)$ when the infimum of the fitted log-odds gap exceeds the transformed margin $\log\{(1+\tau)/(1-\tau)\}$, with the test calibrated by a Gaussian multiplier bootstrap over complete replicates.

\paragraph{$t$-Holm} \citep{chandrahas2026evalci} computes a paired one-sided $t$-test for each directional null $H_0:\Delta_{a,b}^{\mathcal B}\leq\tau$ using the complete-replicate mean differences. It then applies Holm's step-down procedure to the resulting $p$-values. This procedure provides fixed-time FWER control under its working assumptions, but repeated monitoring is not covered.

\paragraph{SERPANT} \citep{gu2026anytimevalid} is designed for online pairwise comparisons under an i.i.d.\ item stream. We convert paired item scores into binary preferences, breaking ties with complementary fair coins. Items are processed in replicate-by-item order; after every item, we update each direction's uniform-mixture likelihood-ratio \(e\)-process and apply direct \(e\)-Holm, with absorbing rejections. 
Thus, SERPANT has been monitored at all \(rN\) item-level times. Under a heterogeneous benchmark-average null, intermediate item prefixes need not satisfy the null even when the complete benchmark does, so this item-level procedure need not control the target anytime FWER.
This transformation preserves the benchmark mean-difference hypothesis
for binary scores, since the fair-tie preference probability equals
$(1+\Delta_{a,b}^{\mathcal B})/2$.
For the graded MT-Bench scores, this equivalence need not hold, so
SERPANT is included there only as a preference-based reference.
We use this adaptation primarily to diagnose the effect of
item-level monitoring on fixed-benchmark inference; its certification
times are descriptive when its item-stream assumptions or estimand
do not match the benchmark-average target.

\paragraph{BEB-Holm} is a fixed-time empirical-Bernstein baseline for the same benchmark-average directional hypotheses.
For the equal-sized blocks used in the synthetic experiments, it treats the \(n=rM\) block differences $D^e_{m,s}=\frac{1}{|C_m|}\sum_{i\in C_m}\bigl(S^{(a)}_{i,s}-S^{(b)}_{i,s}\bigr)$ with $s\le r$ and $m\in[M]$ as independent bounded observations.
Let \(\overline D^e_r\) and \(\widehat V^e_r\) denote their sample mean and variance.
We define $L^e_r(\delta)=\overline D^e_r-\sqrt{\frac{2\widehat V^e_r\log(4/\delta)}{n}}-\frac{16\log(4/\delta)}{3(n-1)}$ and obtain the directional \(p\)-value by inverting \(L^e_r(\delta)>\tau\). Holm's procedure is then applied across all directions. 
BEB-Holm is recomputed at each \(r\), but its guarantee is fixed-time and does not cover repeated monitoring.

\paragraph{BFH-$e$-Holm} uses the same block partition, proportional stakes, stake grid, mixture weights, and direct $e$-Holm procedure as BB-EDGE, replacing only the empirical-Bernstein penalty with a Hoeffding penalty.
This comparison isolates the benefit of variance adaptation.

\paragraph{CR-EDGE} uses the same empirical-Bernstein construction, stake grid, mixture weights, and direct \(e\)-Holm procedure as BB-EDGE, but treats each complete benchmark replicate as a single block.
This comparison isolates the efficiency gain from block factorization.

\textbf{BB-EDGE-PI} uses the same block observations, proportional stakes,
predictions, and direct $e$-Holm procedure as BB-EDGE, but replaces the grid mixture with a single predictable plug-in stake $\lambda_r = \min\{0.95,\ 0.5\,\hat g_{r-1}/\hat V_{r-1}\}$, where $\hat g_{r-1} = \max\{0, \hat\mu_{r-1} - \mu_0\}$ and $\hat V_{r-1}$ are regularized running estimates of the mean gap and the residual variance.
This isolates sensitivity to stake selection.

\textbf{AsympCS-Bonf} applies the one-sided Gaussian-mixture asymptotic
confidence sequence to the replicate-level mean differences of each direction, using the sample standard deviation across replicates, $t^\star = 10$, and a Bonferroni-corrected level $\alpha/H$.
No certificate is issued before a burn-in of two replicates. Its time-uniform coverage holds only asymptotically, so it tests whether a variance-adaptive but non-factorized procedure suffices in the small-budget regime.

\textbf{SERPANT-R} converts paired item scores into binary preferences as in
SERPANT, pools the preferences from all items of the first $r$ completed
replicates, and computes the same uniform-mixture likelihood-ratio $e$-values,
which treat items as i.i.d.\ Bernoulli observations. Direct $e$-Holm is applied
only at completed replicates. This tests whether restricting monitoring to
replicate boundaries alone removes the false certifications of item-level
monitoring.

\subsection{Evaluation Metric}
\label{section:evaluation_metric}
\paragraph{Top-$k$ certificate events}
Fix $1\le k<L$. Let $\mathcal S_s$ be the collection of sets certified by the procedure at completed run $s$. In full-graph certification,
\[
\mathcal S_s=\{T\subseteq\cV:|T|=k,\ a\leadsto_s b\text{ for every }a\in T,\ b\notin T\}.
\]
In the pilot-targeted procedure, $\mathcal S_s=\{\mathcal T_0\}$ if all hypotheses in $\cH(\mathcal T_0)$ are rejected, and $\mathcal S_s=\varnothing$ otherwise. 
These are different testing procedures and must be identified separately in an experiment. 
If evaluation stops at the first certificate, set $\mathcal S_s=\varnothing$ after that stopping time. 
Define
\begin{align}
C_{\le r}&=\{\exists s\le r:\mathcal S_s\ne\varnothing\},\nonumber\\
A_{\le r}&=\left\{\exists s\le r,\ \exists T\in\mathcal S_s:\min_{a\in T}\theta_a^{\cB}\le\max_{b\notin T}\theta_b^{\cB}+\tau\right\}. 
\label{eq:metric-certificate-events}
\end{align}
Thus $C_{\le r}$ records any issuance and $A_{\le r}\subseteq C_{\le r}$ records any incorrect issuance by replicate $R$. 
The error definition includes a failure to attain the required margin, even when the selected set contains the $k$ largest means. 
It also handles ties at the true Top-$k$ boundary without choosing an arbitrary true set.

\paragraph{$\operatorname{FCP}_r$}
The false Top-$k$ certification probability is
\begin{equation}
\operatorname{FCP}_r=\Pp(A_{\le r}\mid\cB,\pi). 
\label{eq:metric-fcp}
\end{equation}
On the no-false-edge event, every reported path from \(a\) to \(b\) satisfies \(\Delta_{a,b}^{\mathcal B}>\tau\). 
Consequently,
\[
\mathcal A_{\le r}\subseteq\left\{\exists s\le r,\ \exists(a\to b)\in\mathcal E_s:\Delta_{a,b}^{\mathcal B}\le\tau\right\}.
\]
For full-graph BB-EDGE, Theorem~\ref{thm:bedge-main-iclr} therefore gives $\operatorname{FCP}_r\le\operatorname{FWER}_r\le\alpha$. For pilot-targeted BB-EDGE, $A_{\le r}=C_{\le r}\cap B_0$, where $B_0$ is the candidate's margin-failure event in Appendix~\ref{app:proof-pilot-topk}. Theorem~\ref{thm:pilot-topk} gives the same bound, both conditional on the pilot and after averaging over it.

\paragraph{$\operatorname{CertProb}_r$} 
$\operatorname{CertProb}_r$ measures the probability of issuing any Top-$k$ certificate at replicate $r$:
\begin{equation}
\operatorname{CertProb}_r=\Pp(C_{\le r}\mid\cB,\pi). \nonumber
\end{equation}

\paragraph{Monte Carlo estimation.}
For $B_{\mathrm{MC}}$ independent simulation repetitions under the same setting, let $I_b(r)=\mathbbm1\{A_{\le r}\text{ occurs in repetition }b\}$ and $J_b(r)=\mathbbm1\{C_{\le r}\text{ occurs in repetition }b\}$. Compute
\[
\widehat{\operatorname{FCP}}_r=\frac{\sum_{b=1}^{B_{\mathrm{MC}}} I_b(r)}{B_{\mathrm{MC}}},
\qquad
\widehat{\operatorname{CertProb}}_r=\frac{\sum_{b=1}^{B_{\mathrm{MC}}} J_b(r)}{B_{\mathrm{MC}}}.
\]
Both denominators include uncertified repetitions.
The optional conditional estimate is $\sum_b I_b(r)/\sum_b J_b(r)$ when $\sum_b J_b(r)>0$, and is reported as undefined otherwise. 
Report the event counts and pointwise 95\% binomial Monte Carlo confidence intervals for FCP and certification probability.
These intervals quantify simulation uncertainty, not a second anytime guarantee.

\paragraph{$\operatorname{CR}$.}
Under the specified certification procedure, define
\[
\operatorname{CR}=\inf\{s\in\{1,\ldots,R\}:\mathcal S_s\ne\varnothing\},
\qquad\inf\varnothing=+\infty.
\]
Then $C_{\le r}=\{\operatorname{CR}\le r\}$ for $r\le R$.
For finite-budget experimental summaries, we censor $\operatorname{CR}$ at $R$: repetitions without a certificate are retained and recorded as $\operatorname{CR}=R$.
For the pilot-targeted procedure, the reported value includes the $R_0$ pilot replicate, and an uncertified repetition is recorded as $\operatorname{CR}=R+R_0$.
We report $\operatorname{CertProb}_R$ alongside the mean $\operatorname{CR}$ to distinguish successful certification from censoring.
The full-graph procedure searches all size-$k$ sets, whereas the pilot-targeted procedure checks only the fixed candidate $\mathcal T_0$.

\paragraph{$\operatorname{Coverage}_r$.}
Let $\mathcal I_{\ell,s}=[L_{\ell,s},U_{\ell,s}]$ denote the rank interval of model $\ell$ after replicate $s$.
The anytime coverage through replicate $r$ is defined as
\begin{equation}
\operatorname{Coverage}_r=\mathbb{P}\left(\forall s\le r,\ \forall \ell\in\mathcal V:\operatorname{rank}_{B}(\ell)\in\mathcal I_{\ell,s}\,\middle|\,B,\pi\right).\nonumber
\end{equation}

\paragraph{$\operatorname{Width}_r$.}
The mean rank-interval width at replicate $r$ is defined as
\begin{equation}
\operatorname{Width}_r=\mathbb{E}\left[\frac{1}{|\mathcal V|}\sum_{\ell\in\mathcal V}\left(U_{\ell,r}-L_{\ell,r}\right)\,\middle|\,B,\pi\right].\nonumber
\end{equation}
Smaller values of $\operatorname{Width}_r$ indicate that the rank interval contains fewer candidate models.

\subsection{Experiment details}
For simulated experiments, the five benchmark-level mean values are equally spaced over
$[0.41,0.59]$, each assigned to a tied pair of models; model labels
are shuffled once. Item offsets are fixed and equally
spaced over $[-0.08,0.08]$. Within each tied pair, one model has
item effects $+H$ on the first half of the benchmark and $-H$ on the
second half, while the other has the opposite effects, with
$H\in\{0,0.2\}$. These effects remain fixed across replicates;
Gaussian factors are independently regenerated for each replicate
and Monte Carlo repetition. Probabilities
outside $[0,1]$ are rejected rather than clipped.

We ran locally hosted models on an NVIDIA L40 GPU with temperature $0.7$, top-$p$ $0.95$, and a maximum generation length of 4000 tokens. 
Model-generation and judge calls were executed separately and sequentially using independent stateless requests and random seeds, with no randomness, state, context, batching, or judge output shared across items or models.
For GSM8K and MMLU-Pro, we adopt the evaluation protocol from the lm-evaluation-harness \citep{eval-harness}.
For MT-Bench, we follow \citet{zheng2023judging}, and for ACEBench, we follow \citet{chen2025acebench}.
Each evaluation block was processed through a separate stateless request.
We fixed decoding configurations, randomized block order within each replicate, and used independent seeds. 
These design choices support, but do not by themselves establish, the conditional independence assumptions.
For BB-EDGE, we used 41 equally spaced stakes over $[0,0.95]$ with uniform mixture weights and initialized each block-specific running-mean prediction at $\mu_0=(1+\tau)/2$.

\subsection{Query template}
\begin{table}[htbp]
\centering
\caption{Query template for the multi-task knowledge and reasoning benchmark (MMLU-Pro).}
\label{tab:mmlupro_cot_template}
\begin{promptbox}
<initial_instruction for subject: $\{$subject$\}$>

... (4 more few-shot exemplars omitted for brevity) ...

Question:
<question>
Options:
A. <option_1>
B. <option_2>
...
Answer: Let's think step by step.
\end{promptbox}
\end{table}

\begin{table}[htbp]
\centering
\caption{Query template for the math reasoning task (GSM8K).}
\label{tab:gsm8k_cot_template}
\begin{promptbox}

... (4 more few-shot exemplars omitted for brevity) ...

Question:
<question>
Answer: Let's think step by step.
\end{promptbox}
\end{table}

\begin{table}[htbp]
\centering
\caption{Query template for the multi-turn dialogues task (MT-Bench).}
\label{tab:qmsum_template}
\begin{promptbox}
Turn 1:
User: <question_turn_1>
Assistant: <model response>

Turn 2:
User: <question_turn_2>
Assistant: <model response>
\end{promptbox}
\end{table}

\begin{table}[htbp]
\centering
\caption{Judge prompt structure for MT-Bench single-answer grading.}
\label{tab:mtbench_judge_template}
\begin{promptbox}
[System]: You are a helpful assistant.

[Instruction]
Act as an impartial judge and evaluate the quality of the AI assistant's response, considering helpfulness, relevance, accuracy, depth, creativity, and level of detail.
(For math/reasoning/coding categories: additionally compare against the provided reference answer and identify any mistakes.)
Provide a brief explanation, then output a rating from 1 to 10 in the strict format "[[rating]]".

--- Turn 1 grading (rate the first-turn response) ---
[Question]
<question_turn_1>

[The Start of Assistant's Answer]
<answer_turn_1>
[The End of Assistant's Answer]

--- Turn 2 grading (rate only the second-turn response, given the full dialogue) ---
[The Start of Assistant's Conversation with User]
### User: <question_turn_1>
### Assistant: <answer_turn_1>
### User: <question_turn_2>
### Assistant: <answer_turn_2>
[The End of Assistant's Conversation with User]
\end{promptbox}
\end{table}

\begin{table}[htbp]
\centering
\caption{Query template for the function-calling task (ACEBench).}
\label{tab:acebench_template}
\begin{promptbox}
You are an AI assistant with the role name "assistant." Based on the provided API specifications and conversation history from steps 1 to t, generate the API requests that the assistant should call in step t+1. The API requests should be output in the format [ApiName(key1='value1', key2='value2', ...)], replacing ApiName with the actual API name, key1, key2, etc., with the actual parameter names, and value1, value2, etc., with the actual parameter values. The output should start with a square bracket "[" and end with a square bracket "]".
If there are multiple API requests, separate them with commas, for example: [ApiName(key1='value1', ...), ApiName(key1='value1', ...), ...]. Do not include any other explanations, prompts, or API call results in the output.
... (formatting and parameter rules omitted for brevity) ...

Role Descriptions:
user: User
assistant: The AI assistant role that makes API requests
tool: Provides the results returned from tool calls

API Specifications:
<function>

Conversation history 1..t:
<question>
\end{promptbox}
\end{table}

\end{document}